\renewenvironment{proof}{{\bfseries Proof}}{}
\newtheorem{Theorem}{Theorem}
\newtheorem{Lemma}{Lemma}
\newtheorem{Proposition}[Lemma]{Proposition}
\newtheorem{corollary}[Lemma]{Corollary}
\theoremstyle{definition}
\def\ppc{^{\prime \vss \textnormal{c}}}
\def\pc{^{\textnormal{c}}}
\def\dx{{\dd\rvx}}
\def\dy{{\dd\rvy}}
\def\dz{{\dd\rvz}}
\def\uPHI{{_{{\phi}}}}
\def\uBN{{_{{\BN}}}}
\def\uphi{{_{{\phi}}}}
\def\ubn{^{}_{\BN}}
\def\cbullet{\rvv}
\newcommand{\dd}[1]{\mathop{}\!\mathrm{d}#1}
\def\bigO{{\mathcal{O}}}
\def\eqref#1{equation~\ref{#1}}
\def\1{\bm{1}}
\newcommand{\erfc}{\mathrm{erfc}\,}
\def\reff{{r_\textnormal{eff}}}
\def\udmax{_{2,\textnormal{max}}}
\def\udmin{_{2,\textnormal{min}}}
\def\umax{_\textnormal{max}}
\def\umin{_\textnormal{min}}
\def\largefrac#1#2{\mbox{\large{$\frac{#1}{#2}$}}}
\def\eq#1{Eq.~(\ref{#1})}
\def\Eq#1{Eq.~(\ref{#1})}
\def\fig#1{Fig.~\ref{#1}}
\def\Section#1{Section~\ref{#1}}
\def\theorem#1{Theorem~\ref{#1}}
\def\lemma#1{Lemma~\ref{#1}}
\def\tl{{\theta^l}}
\def\Tl{{\Theta^l}}
\def\Tlm{{\Theta^{l-1}}}
\def\Tlh{{\Theta^{l,h}}}
\def\tlh{{\theta^{l,h}}}
\def\Tlhm{{\Theta^{l,h-1}}}
\def\tlH{{\theta^{l,H}}}
\def\Tkm{{\Theta^{k-1}}}
\def\Tk{{\Theta^k}}
\def\tk{{\theta^k}}
\def\kp{{k^\prime}}
\def\gI{{\mathcal{I}}}
\def\S{{\chi}}
\def\vss{{\hspace{0.5pt}}}
\def\tx{\tilde{\rvx}}
\def\ty{\tilde{\rvy}}
\def\tnu{{\tilde{\vnu}}}
\def\bx{\bar{{\mathbf{x}}}}
\def\by{\bar{{\mathbf{y}}}}
\def\bz{\bar{{\mathbf{z}}}}
\def\bdx{\dd\bar{\rvx}}
\def\bdy{\dd\bar{\rvy}}
\def\bdz{\dd\bar{\rvz}}
\def\plminus{^{l,-}}
\def\plplus{^{l,+}}
\def\db{{\operatorname{dB}}}
\def\BN{{\operatorname{BN}}}
\def\RF{{\operatorname{RF}}}
\def\vec{{\operatorname{vec}}}
\def\snr{{\operatorname{SNR}}}
\def\relu{{\operatorname{ReLU}}}
\def\tanh{{\operatorname{tanh}}}
\def\plm{^{l-1}}
\def\uk{_{k}}
\def\ukN{_{k\in \sN}}
\def\ukm{_{k-1}}
\def\ul{_{l}}
\def\ulm{_{l-1}}
\def\lpar{\left(}
\def\rpar{\right)}
\def\uo{_{0}}
\def\po{^{0}}
\def\pl{^{l}}
\def\pk{^{k}}
\def\pkm{^{k-1}}
\def\plk{^{l,k}}
\def\plmk{^{l-1,k}}
\def\ulk{_{l,k}}
\def\ulmk{_{l-1,k}}
\def\pok{^{0,k}}
\def\plik{^{l,i_k}}
\def\ulik{_{l,i_k}}
\def\umaxk{_{\textnormal{max},k}}
\def\umink{_{\textnormal{min},k}}
\def\pkl{^{k_l}}
\def\ukl{_{k_l}}
\def\pklp{^{k_l+1}}
\def\pklpminus{^{k_l+1,-}}
\def\pklpplus{^{k_l+1,+}}
\def\tklp{{\theta^{k_l+1}}}
\def\pp{^{\prime}}
\def\ppp{^{\prime\prime}}
\def\pppp{^{\prime\prime\prime}}
\def\ppppp{^{\prime\prime\prime\prime}}
\def\plH{^{l,H}}
\def\plHm{^{l,H-1}}
\def\plh{^{l,h}}
\def\plhm{^{l,h-1}}
\def\rf{{\rvr_\textnormal{f}}}
\def\rfc{{\hat{\rvr}_\textnormal{f}}}
\def\fm{{\rvf_\textnormal{m}}}
\def\fmc{{\hat{\rvf}_\textnormal{m}}}
\def\rf{{\rho}}
\def\rfc{{\hat{\rho}}}
\def\fm{{\varphi}}
\def\fmc{{\hat{\varphi}}}
\def\rc{{\textnormal{c}}}
\def\rvb{{\mathbf{b}}}
\def\rvu{{\mathbf{i}}}
\def\rvt{{\mathbf{t}}}
\def\rvu{{\mathbf{u}}}
\def\rvv{{\mathbf{v}}}
\def\rvw{{\mathbf{w}}}
\def\rvx{{\mathbf{x}}}
\def\rvy{{\mathbf{y}}}
\def\rvz{{\mathbf{z}}}
\def\vnu{{\bm{\nu}}}
\def\vomega{{\bm{\omega}}}
\def\vbeta{{\bm{\beta}}}
\def\valpha{{\bm{\alpha}}}
\def\vkappa{{\bm{\kappa}}}
\def\ve{{\bm{e}}}
\def\mC{{\bm{C}}}
\def\mG{{\bm{G}}}
\def\mI{{\bm{I}}}
\def\mW{{\bm{W}}}
\DeclareMathAlphabet{\mathsfit}{\encodingdefault}{\sfdefault}{m}{sl}
\SetMathAlphabet{\mathsfit}{bold}{\encodingdefault}{\sfdefault}{bx}{n}
\def\gI{{\mathcal{I}}}
\def\sN{{\mathbb{N}}}
\def\sP{{\mathbb{P}}}
\def\sR{{\mathbb{R}}}
\newcommand{\X}{\mathcal{X}}
\newcommand{\E}{\mathbb{E}}
\newcommand{\R}{\mathbb{R}}
\newcommand{\Z}{\mathbb{Z}}
\newcommand{\Var}{\mathrm{Var}}
\newcommand{\normltwo}{L^2}
\DeclareMathOperator{\Tr}{Tr}
\newlength\wantedwidth
\icmltitlerunning{Characterizing Well-Behaved vs. Pathological Deep Neural Networks}
\begin{document}

\twocolumn[
\icmltitle{Characterizing Well-Behaved vs. Pathological Deep Neural Networks}



\icmlsetsymbol{equal}{*}

\begin{icmlauthorlist}
\icmlauthor{Antoine Labatie}{labatie}
\end{icmlauthorlist}

\icmlaffiliation{labatie}{Correspondence to}

\icmlcorrespondingauthor{Antoine Labatie}{antoine.labatie@centra-\\liens.net}

\icmlkeywords{Machine Learning, ICML}

\vskip 0.3in
]



\printAffiliationsAndNotice{} 

\begin{abstract}
We introduce a novel approach, requiring only mild assumptions, for the characterization of deep neural networks at initialization. Our approach applies both to fully-connected and convolutional networks and easily incorporates batch normalization and skip-connections. Our key insight is to consider the evolution with depth of statistical moments of signal and noise, thereby characterizing the presence or absence of pathologies in the hypothesis space encoded by the choice of hyperparameters. We establish: (i) for feedforward networks, with and without batch normalization, the multiplicativity of layer composition inevitably leads to ill-behaved moments and pathologies; (ii) for residual networks with batch normalization, on the other hand, skip-connections induce power-law rather than exponential behaviour, leading to well-behaved moments and no pathology.\footnote{Code to reproduce all results in this paper is available at \url{https://github.com/alabatie/moments-dnns}.}
\end{abstract}

\section{Introduction}

The feverish pace of practical applications has led in the recent years to many advances in neural network architectures, initialization and regularization. At the same time, theoretical research has not been able to follow the same pace. In particular, there is still no mature theory able to validate the full choices of hyperparameters leading to state-of-the-art performance. This is unfortunate since such theory could also serve as a guide towards further improvement.

Amidst the research aimed at building this theory, an important branch has focused on neural networks at initialization. Due to the randomness of model parameters at initialization, characterizing neural networks at that time can be seen as characterizing the hypothesis space of input-output mappings that will be favored or reachable during training, i.e. the inductive bias encoded by the choice of hyperparameters. This view has received strong experimental support, with well-behaved input-output mappings at initialization extensively found to be predictive of trainability and post-training performance \citep{Schoenholz17,Yang17,Xiao18,Philipp18b,Yang19a}.

Yet, even this simplifying case of neural networks at initialization is challenging as it notably involves dealing with: (i) the complex interplay of the randomness from input data and from model parameters; (ii) the broad spectrum of potential pathologies; (iii) the finite number of units in each layer; (iv) the difficulty to incorporate convolutional layers, batch normalization and skip-connections. Complexities (i), (ii) typically lead to restricting to specific cases of input data and pathologies, e.g. exploding complexity of data manifolds \citep{Poole16,Raghu17}, exponential correlation or decorrelation of two data points \citep{Schoenholz17,Balduzzi17,Xiao18}, exploding and vanishing gradients \citep{Yang17,Philipp18a,Hanin18a,Yang19a}, exploding and vanishing activations \citep{Hanin18b}. Complexity (iii) commonly leads to making simplifying assumptions, e.g. convergence to Gaussian processes for infinite width \citep{Neal96,Leroux07,Lee18,Matthews18,Borovykh18,Garriga18,Novak19,Yang19b}, “typical” activation patterns \citep{Balduzzi17}. Finally complexity (iv) most often leads to limiting the number of hard-to-model elements incorporated at a time. To the best of our knowledge, all attempts have thus far been limited in either their scope or their simplifying assumptions.

As the first contribution of this paper, we introduce a novel approach for the characterization of deep neural networks at initialization. This approach: (i) offers a unifying treatment of the broad spectrum of pathologies without any restriction on the input data; (ii) requires only mild assumptions; (iii) easily incorporates convolutional layers, batch normalization and skip-connections. 

As the second contribution, we use this approach to characterize deep neural networks with the most common choices of hyperparameters. We identify the multiplicativity of layer composition as the driving force towards pathologies in \mbox{feedforward} networks: either with the neural network having its signal shrunk into a single point or line; or with the neural network behaving as a noise amplifier with sensitivity exploding with depth. In contrast, we identify the combined action of batch normalization and skip-connections as responsible for bypassing this multiplicativity and relieving from pathologies in batch-normalized residual networks.

\section{Propagation}
\label{S:equation_propagation}

We start by formulating the propagation for neural networks with neither batch normalization nor skip-connections, that we refer as \emph{vanilla nets}. We will slightly adapt this formulation in \Section{S:BN_feedforward_nets} with \emph{batch-normalized feedforward nets} and in \Section{S:BN_resnets} with \emph{batch-normalized resnets}. 

\textbf{Clean Propagation.} We first consider a random tensorial input $\rvx \equiv\rvx\po\in \R^{n \times \dots \times n \times N\uo}$, spatially $d$-dimensional with extent $n$ in all spatial dimensions and $N\uo$ channels. This input $\rvx$ is fed into a $d$-dimensional convolutional neural network with periodic boundary conditions, fixed spatial extent $n$, and activation function $\phi$.\footnote{It is possible to relax the assumptions of periodic boundary conditions and constant spatial extent $n$ [\ref{S:relaxing_assumption}]. These assumptions, as well as the assumption of constant width $N\ul$ in \Section{S:BN_resnets}, are only made for simplicity of the analysis.} At each layer $l\geq1$, we denote $N\ul$ the number of channels or \emph{width}, $K\ul$ the convolutional spatial extent, $\rvx\pl, \rvy\pl \in \R^{n \times \dots \times n \times N\ul}$ the post-activations and pre-activations, \mbox{$\vomega\pl \in \R^{K\ul \times \dots \times K\ul \times N\ulm \times N\ul}$} the weights, and $\rvb\pl \in \R^{N\ul}$ the biases. Later in our analysis, the model parameters $\vomega\pl$, $\rvb\pl$ will be considered as random, but for now they are considered as \emph{fixed}. At each layer, the propagation is given by
\begin{align*}
\rvy\pl & = \vomega\pl \ast \rvx\plm + \vbeta\pl, \\
\rvx\pl & = \phi ( \rvy\pl ),
\end{align*}
with $\ast$ the convolution and $\vbeta\pl \in \R^{n \times \dots \times n \times N\ul}$ the tensor with repeated version of $\rvb\pl$ at each spatial position. From now on, we refer to the propagated tensor $\rvx\pl$ as the \emph{signal}. 

\textbf{Noisy Propagation.} To make our setup more realistic, we next suppose that the input signal $\rvx$ is corrupted by an input noise $\dx \equiv \dx\po \in \R^{n \times \dots \times n \times N\uo}$ having small \emph{iid} components such that $\E_{\dx}[ \dx_{i} \dx_{j}]= \sigma^2_{\dx} \delta_{ij}$, with \mbox{$\sigma_{\dx}\ll1$} and $\delta_{ij}$ the Kronecker delta for \mbox{multidimensional indices $i,j$}. We denote $\Phi\ul(\rvx)\equiv\rvx\pl$, with $\Phi\ul$ the neural network mapping from layer $0$ to $l$, and we consider the simultaneous propagation of the signal $\Phi\ul(\rvx)$ and the noise \mbox{$\Phi\ul(\rvx+\dx) - \Phi\ul(\rvx)$}. At each layer, this simultaneous propagation is given at first order by
 \begin{alignat}{4}
\label{E:noisy_propagation1}
\rvy\pl & = \vomega\pl \ast \rvx\plm + \vbeta\pl,& \qquad \dy\pl & = \vomega\pl \ast \dx\plm, \\
\label{E:noisy_propagation2}
\rvx\pl & = \phi ( \rvy\pl ),& \qquad \dx\pl & = \phi\pp ( \rvy\pl ) \odot \dy\pl,\\[-6pt] \nonumber
\end{alignat}

with $\odot$ the element-wise tensor multiplication. The tensor $\dx\pl$ resulting from the simultaneous propagation of $(\rvx\pl,\dx\pl)$ in \eq{E:noisy_propagation1} and \eq{E:noisy_propagation2} approximates arbitrarily well the noise \mbox{$\Phi\ul(\rvx+\dx) - \Phi\ul(\rvx)$} as $\sigma_{\dx}\to0$ [\ref{S:assumption_noise}]. For simplicity, we will keep the terminology of \emph{noise} when referring to $\dx\pl$. 

From \eq{E:noisy_propagation1} and \eq{E:noisy_propagation2}, we see that $\rvx\pl$, $\rvy\pl$ only depend on the input signal $\rvx$, and that $\dx\pl$ depends linearly on the input noise $\dx$ when $\rvx$ is \emph{fixed}. As a consequence, $\dx\pl$ stays centered with respect to $\dx$ such that $\forall \rvx,\valpha,\rc$: $\E_{\dx} [\dx\pl_{\valpha,\rc}]=0$, where from now on the spatial position is denoted as $\alpha$ and the channel as $\rc$. 

\textbf{Scope.} We require two mild assumptions: \mbox{(i) $\rvx$ is not} trivially zero: $\E_{\rvx,\valpha,\rc}[\rvx^2_{\valpha,\rc}]>0$;\footnote{Whenever $\valpha$ and $\rc$ are considered as random variables, they are supposed uniformly sampled among all spatial positions $\{1, \dots, n \}^d$ and all channels $\{1, \dots, N\ul \}$.} (ii) the width $N\ul$ is bounded.

Some results of our analysis will apply for any choice of $\phi$, but unless otherwise stated, we restrict to the most common choice: $\phi(\cdot)\equiv\relu(\cdot)=\max(\cdot,0)$. Even though $\relu$ is not differentiable at $0$, we still define $\dx\pl$ as the result of the simultaneous propagation of $(\rvx\pl,\dx\pl)$ in \eq{E:noisy_propagation1} and \eq{E:noisy_propagation2} with the convention $\phi\pp(0) \equiv 1/2$ [\ref{S:assumption_differentiable}].

Note that fully-connected networks are included in our analysis as the subcase $n=1$. 

\section{Data Randomness}
\label{S:data_randomness}

Now we may turn our attention to the data distributions of signal and noise: $P_{\rvx,\valpha}(\rvx\pl)$, $P_{\rvx,\dx,\valpha}(\dx\pl)$. To outline the importance of these distributions, the output of an \mbox{$L$-layer} neural network can be expressed by layer composition as $(\rvx^L,\dx^L)=\tilde{\Phi}_{l,L}(\rvx\pl, \dx\pl)$, with $\tilde{\Phi}_{l,L}$ the mapping of the signal and noise by the \emph{upper neural network} from layer $l<L$ to layer $L$. The upper neural network thus receives $\rvx\pl$ as input signal and $\dx\pl$ as input noise, implying that it can only have a chance to do any better than random guessing when: (i) $\smash{\rvx\pl}$ is meaningful; (ii) $\smash{\dx\pl}$ is under control. Namely, when $P_{\rvx,\valpha}(\rvx\pl)$, $P_{\rvx,\dx,\valpha}(\dx\pl)$ are not affected by pathologies. We will make this argument as well as the notion of \emph{pathology} more precise in \Section{S:pathologies} after a few prerequisite definitions.

\subsection{Characterizing Data Distributions} 
\label{S:characterizing_distribution}
Using $\cbullet\pl$ as a placeholder for any tensor of layer $l$ in the simultaneous propagation of $(\rvx\pl,\dx\pl)$ -- e.g. $\rvy\pl$, $\rvx\pl$, $\dy\pl$, $\dx\pl$ in \eq{E:noisy_propagation1} and \eq{E:noisy_propagation2} -- we define:

-- The \emph{feature map vector} and \emph{centered feature map vector},\addtocounter{footnote}{+1} \footnotetext{Slightly abusively, the notation $\rvx,\dx,\valpha,\rvv\pl$ is overloaded in the expectation.}\addtocounter{footnote}{-1}
\begin{align*}
\fm(\cbullet\pl,\valpha) \equiv \cbullet\pl_{\valpha,:}, \qquad \fmc(\cbullet\pl,\valpha) \equiv \cbullet\pl_{\valpha,:} - \E_{\rvx,\dx,\valpha}[ \cbullet\pl_{\valpha,:} ],\footnotemark \\[-12pt] \nonumber
\end{align*}
with $\cbullet\pl_{\valpha,:}$ the vectorial slice of $\cbullet\pl$ at spatial position~$\valpha$. Note that $\fm(\cbullet\pl,\valpha)$, $\fmc(\cbullet\pl,\valpha)$ aggregate both the randomness from $(\rvx, \dx)$ which determines the propagation \mbox{up to $\cbullet\pl$}, and the randomness from $\valpha$ which determines the spatial position in $\cbullet\pl$. These random vectors will enable us to circumvent the tensorial structure of $\cbullet\pl$. 

-- The \emph{non-central moment} and \emph{central moment} of order $p$ for given channel $\rc$ and averaged over channels,
\begin{alignat*}{4}
\nu_{p,\rc}(\cbullet\pl) &\equiv \E_{\rvx,\dx,\valpha} \big[ \fm(\cbullet\pl,\valpha)^p_\rc \big], &\quad \nu_{p}(\cbullet\pl) &\equiv \E_{\rc} \big[ \nu_{p,\rc}(\cbullet\pl) \big], \\
\mu_{p,\rc}(\cbullet\pl) &\equiv \E_{\rvx,\dx,\valpha}  \big[ \fmc(\cbullet\pl,\valpha)^p_\rc \big], &\quad \mu_{p}(\cbullet\pl) &\equiv \E_{\rc} \big[ \mu_{p,\rc}(\cbullet\pl) \big].
\end{alignat*}
In the particular case of the noise $\dx\pl$, centered with respect to $\dx$, feature map vectors and centered feature map vectors coincide: $\fm(\dx\pl,\valpha)=\fmc(\dx\pl,\valpha)$, such that non-central moments and central moments also coincide: \mbox{$\nu_{p,\rc}(\dx\pl)=\mu_{p,\rc}(\dx\pl)$} and $\nu_{p}(\dx\pl)=\mu_{p}(\dx\pl)$.

-- The \emph{effective rank} \citep{Vershynin12},
\begin{align*}
\reff(\cbullet\pl) \equiv \frac{\Tr \mC_{\rvx,\dx,\valpha} \big[\fm(\cbullet\pl,\valpha) \big]}{|| \mC_{\rvx,\dx,\valpha}\big[ \fm(\cbullet\pl,\valpha)\big] ||},
\end{align*}
with $\mC_{\rvx,\dx,\valpha}$ the covariance matrix and $|| \cdot ||$ the spectral norm. If we further denote $(\lambda_i)$ the eigenvalues of $\mC_{\rvx,\dx,\valpha} [\fm(\cbullet\pl,\valpha) ]$, then $\reff(\cbullet\pl)= \sum_i \lambda_i / \max_i \lambda_i\geq1$. Intuitively, $ \reff(\cbullet\pl)$ measures the number of effective directions which concentrate the variance of $\fm(\cbullet\pl,\valpha)$.

-- The \emph{normalized sensitivity} -- our key metric -- derived from the moments of $\rvx\pl$ and $\dx\pl$,
\begin{align} 
\S\pl \equiv \Bigg(\frac{ \mu_2(\dx\pl) }{ \mu_2(\rvx\pl) }\Bigg)^\frac{1}{2} \Bigg(\frac{\mu_2(\dx\po)}{\mu_2(\rvx\po)}\Bigg)^{-\frac{1}{2}}. \label{E:definition_S}
\end{align}
To grasp the definition of $\S\pl$, we may consider the signal-to-noise ratio $\snr\pl$ and the noise factor $F\pl$,
\begin{align}
\snr\pl \equiv \frac{ \mu_2(\rvx\pl)}{ \mu_2(\dx\pl) }, \qquad \label{E:factor_noise}
F\pl \equiv \frac{\snr\po}{\snr\pl} =(\S\pl)^2.
\end{align}
We obtain $\snr\pl_\db=\snr\po_\db- 20\log_{10} \S\pl$ in logarithmic decibel scale, i.e. that $\S\pl$ measures how the neural network from layer $0$ to $l$ degrades ($\S\pl > 1$) or enhances ($\S\pl < 1$) the input signal-to-noise ratio. Neural networks with $\S\pl > 1$ are noise amplifiers, while neural networks with $\S\pl < 1$ are noise reducers.

Now, to justify our choice of terminology, let us reason in the case where $\rvx\pl=\Phi\ul(\rvx\po)$ is the output signal at the final layer. Then: (i) the variance $\mu_2(\rvx\pl)$ is typically constrained by the task (e.g. binary classification constrains $\mu_2(\rvx\pl)$ to be roughly equal to $1$); (ii) the constant rescaling $\Psi\ul(\rvx\po)=\sqrt{\mu_2(\rvx\pl)} / \sqrt{\mu_2(\rvx\po)} \cdot \rvx\po$ leads to the same constrained variance: $\mu_2(\Psi\ul(\rvx\po))=\mu_2(\Phi\ul(\rvx\po))$. The normalized sensitivity $\S\pl$ exactly measures the excess root mean square sensitivity of the neural network mapping $\Phi\ul$ relative to the constant rescaling $\Psi\ul$ [\ref{S:S_sensitivity}]. This property is illustrated in \fig{F:chi_visualization}.

\begin{figure}[h]
    \begin{center}
    \includegraphics[width=1.0\linewidth]{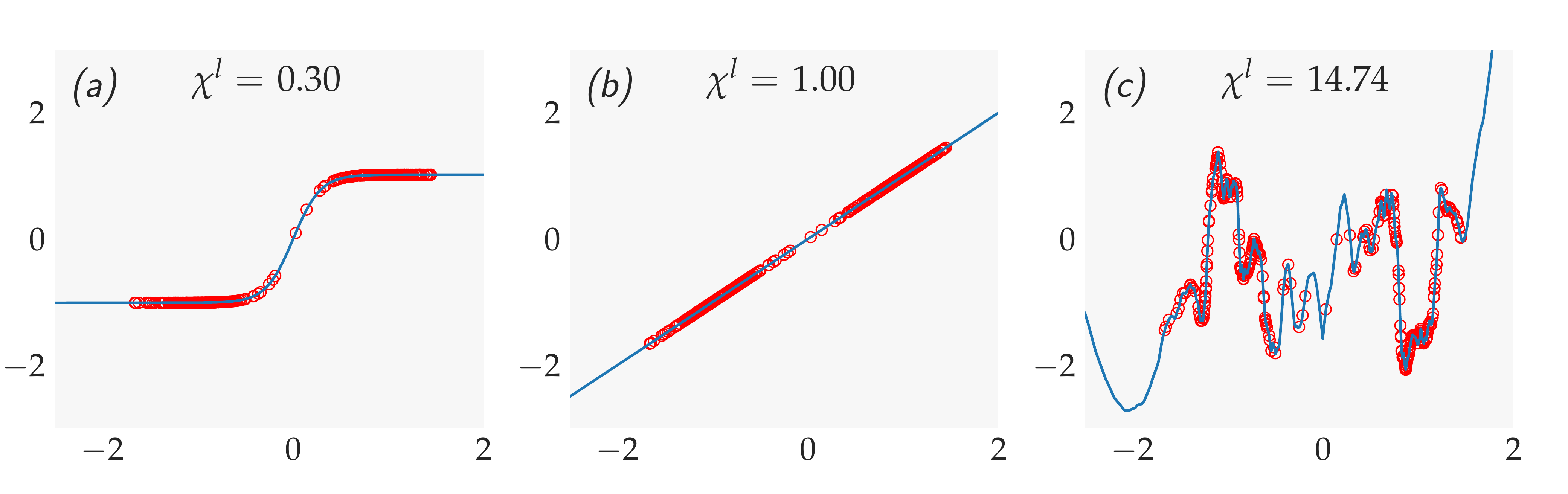}
   \caption{\emph{Illustration of $\S\pl$} in the fully-connected case with one-dimensional input and output, $N\uo=1$, \mbox{$N\ul=1$}. We show the full input-output mapping $\Phi\ul$ \emph{(blue curves)} and randomly sampled input-output data points $\big(\rvx\po,\Phi\ul(\rvx\po)\big)$ \emph{(red circles)} for three different neural networks sharing the same input signal $\rvx\po$ and the same variance in their output signal $\mu_2(\Phi\ul(\rvx\po))$. \emph{(a)} Since input data points $\rvx\po$ appear in flat regions of $\Phi\ul$, the sensitivity is low: $\S\pl<1$. \emph{(b)} $\Phi\ul$ is a constant rescaling: $\S\pl=1$. \emph{(c)} Since $\Phi\ul$ is highly chaotic, the sensitivity is high: $\S\pl >1$.}
    \label{F:chi_visualization}
    \end{center}
\end{figure}

As outlined, $\S\pl$ measures the sensitivity to signal perturbation, which is known for being connected to generalization \citep{Rifai11,Arpit17,Sokolic17,Arora18,Morcos18,Novak18,Philipp18b}. A tightly connected notion is the sensitivity to weight perturbation, also known for being connected to generalization \citep{Hochreiter97,Langford02,Keskar17,Chaudhari17,Smith18,Dziugaite17,Neyshabur17,Neyshabur18,Li18}. The connection is seen by noting the equivalence between a noise $\dd\vomega\pl$ on the weights and a noise $\dy\pl=\dd\vomega\pl \ast \rvx\plm$ and $\dx\pl=\phi\pp(\rvy\pl) \odot \dy\pl$ on the signal in \eq{E:noisy_propagation1} and \eq{E:noisy_propagation2}.

\subsection{Characterizing Pathologies}
\label{S:pathologies}
We are now able to characterize the pathologies, with ill-behaved data distributions, $P_{\rvx,\valpha}(\rvx\pl)$, $P_{\rvx,\dx,\valpha}(\dx\pl)$, that we will encounter: 

-- \emph{Zero-dimensional signal}: $\mu_2(\rvx\pl) / \nu_2(\rvx\pl)\xrightarrow{l\rightarrow\infty}0$. To understand this pathology, let us consider the following mean vectors and rescaling of the signal:
\begin{align*}
 \vnu\pl\equiv\big(\nu_{1,\rc}(\rvx\pl)\big)_{\rc}, 
 \quad \tx\pl\equiv \frac{\rvx\pl}{||\vnu\pl||_2}, 
 \quad \tilde{\vnu}\pl\equiv\big(\nu_{1,\rc}(\tx\pl)\big)_{\rc}.
\end{align*}
The pathology $\mu_2(\rvx\pl) / \nu_2(\rvx\pl)\to0$ implies $\mu_2(\tx\pl)\to0$, meaning that $\fm(\tx\pl,\valpha)$ becomes point-like concentrated at the point $\tilde{\vnu}\pl$ of unit $\normltwo$ norm: $||\tilde{\vnu}\pl||_2=1$ [\ref{S:distributional_pathologies_proof}]. In the limit of strict point-like concentration, the upper neural network from layer $l$ to $L$ is limited to random guessing since it “sees” all inputs the same and cannot distinguish between them.

-- \emph{One-dimensional signal}: $\reff(\rvx\pl)\xrightarrow{l\rightarrow\infty}1$. This pathology implies that the variance of $\fm(\rvx\pl,\valpha)$ becomes concentrated in a single direction, meaning that $\fm(\rvx\pl,\valpha)$ becomes line-like concentrated. In the limit of strict line-like concentration, the upper neural network from layer $l$ to $L$ only “sees” a single feature \mbox{from $\rvx$}.

-- \emph{Exploding sensitivity}: $\S\pl \geq \exp(\gamma l) \xrightarrow{l\rightarrow\infty} \infty$ for some $\gamma >0$. Given the noise factor equivalence of \eq{E:factor_noise}, the pathology $\S\pl \to \infty$ implies $\snr\pl \to 0$, meaning that the clean signal $\rvx\pl$ becomes drowned in the noise $\dx\pl$. In the limit of strictly zero signal-to-noise ratio, the upper neural network from layer $l$ to $L$ is limited to random guessing since it only “sees” noise.

\section{Model Parameters Randomness}
\label{S:model_parameters_randomness}
We now introduce model parameters as the second source of randomness. We consider networks at initialization, which we suppose is \emph{standard} following \citet{He15}: (i) weights are initialized with $\vomega\pl \sim \mathcal{N}\big(0,2\,/\,(K\ul^d N\ulm) \, \mI\big)$, biases are initialized with zeros; (ii) when pre-activations are batch-normalized, scale and shift batch normalization parameters are initialized with ones and zeros respectively.

Considering networks at initialization is justified in two respects. As the first justification, in the context of Bayesian neural networks, the distribution on model parameters at initialization induces a distribution on input-output mappings which can be seen as the prior encoded by the choice of hyperparameters \citep{Neal96,Williams97}. 


As the second justification, even in the standard context of non-Bayesian neural networks, it is likely that pathologies at initialization penalize training by hindering optimization. Let us illustrate this argument in two cases:

-- In the case of zero-dimensional signal, the upper neural network from layer $l$ to $L$ must adjust its bias parameters very precisely in order to center the signal and distinguish between different inputs. This case -- further associated with vanishing gradients for bounded $\phi$ \citep{Schoenholz17} -- is known as the “ordered phase” with unit correlation between different inputs, resulting in untrainability \citep{Schoenholz17,Xiao18}.

-- In the case of exploding sensitivity, the upper neural network from layer $l$ to $L$ only “sees” noise and its backpropagated gradient is purely noise. Gradient descent then performs random steps and training loss is not decreased. This case -- further associated with exploding gradients for batch-normalized $\phi=\relu$ or bounded $\phi$ \citep{Schoenholz17} -- is known as the “chaotic phase” with decorrelation between different inputs, also resulting in untrainability \citep{Schoenholz17,Yang17,Xiao18,Philipp18b,Yang19a}. 

 From now on, our methodology is to consider all moment-related quantities, e.g. $\nu_p(\rvx\pl)$, $\mu_p(\rvx\pl)$, $\mu_p(\dx\pl)$, $\reff(\rvx\pl)$, $\reff(\dx\pl)$, $\S\pl$, as random variables which depend on model parameters. We denote the model parameters as \mbox{$\Tl\equiv(\vomega^{1}, \vbeta^{1}, \dots, \vomega\pl, \vbeta\pl)$} and use $\tl$ as shorthand for $\Tl|\Tlm$. We further denote the geometric increments of $\nu_2(\rvx\pl)$ as $\delta \nu_2(\rvx\pl) \equiv \nu_2(\rvx\pl) / \nu_2(\rvx\plm)$.

\textbf{Evolution with Depth.} The evolution with depth of $\nu_2(\rvx\pl)$ can be written as 
\begin{align*}
\log \Bigg(\frac{\nu_2(\rvx\pl)}{\nu_2(\rvx\po)}\Bigg) =  
\sum_{k\leq l} \underbrace{\log  \delta \nu_2(\rvx\pk) - \E_{\tk}[ \log \delta \nu_2(\rvx\pk)]}_{ \underline{s}[\nu_2(\rvx\pk)] } + \\
\underbrace{\E_{\tk}[ \log  \delta \nu_2(\rvx\pk)] - \log \E_{\tk}[ \delta \nu_2(\rvx\pk)]}_{ \underline{m}[\nu_2(\rvx\pk)] }
+ \underbrace{\log \E_{\tk}[ \delta \nu_2(\rvx\pk)]}_{ \overline{m}[\nu_2(\rvx\pk)] },
\end{align*}
where we used $\log \Big(\frac{\nu_2(\rvx\pl)}{\nu_2(\rvx\po)}\Big) = \log \nu_2(\rvx\pl) - \log \nu_2(\rvx\po)=\sum_{k\leq l} \log \delta \nu_2(\rvx\pk)$ and expressed $\log \delta \nu_2(\rvx\pk)$ with telescoping terms. Denoting $\underline{\delta} \nu_2(\rvx\pk) \equiv \delta \nu_2(\rvx\pk) / \E_{\tk}[ \delta \nu_2(\rvx\pk)]$ the multiplicatively centered increments of $\nu_2(\rvx\pk)$, we get [\ref{S:derivation_mx_sx}]
\begin{align} 
 \overline{m}[\nu_2(\rvx\pk)] & = \log \E_{\tk}[ \delta \nu_2(\rvx\pk)], \label{E:overline_mx}  \\
 \underline{m}[\nu_2(\rvx\pk)] & = \E_{\tk}[ \log \underline{\delta} \nu_2(\rvx\pk) ],\label{E:underline_mx} \\
 \underline{s}[\nu_2(\rvx\pk)] & = \log  \underline{\delta} \nu_2(\rvx\pk) - \E_{\tk}[ \log \underline{\delta} \nu_2(\rvx\pk)].\label{E:underline_sx}
\end{align}
\textbf{Discussion.} We directly note that: (i) $\overline{m}[\nu_2(\rvx\pk)]$ and $\underline{m}[\nu_2(\rvx\pk)]$ are random variables which depend on $\Tkm$, while $\underline{s}[\nu_2(\rvx\pk)]$ is a random variable which depends \mbox{on $\Tk$}; (ii) $\underline{m}[\nu_2(\rvx\pk)] < 0$ by log-concavity; (iii) $\underline{s}[\nu_2(\rvx\pk)]$ is centered with $\E_{\tk}[ \vss \underline{s}[\nu_2(\rvx\pk)] \vss]=0$ and $\E_{\Tk}[ \vss \underline{s}[\nu_2(\rvx\pk)] \vss]=0$. 

We further note that each channel provides an independent contribution to $\nu_2(\rvx\pk)= \frac{1}{N_k}\sum_{\rc}\nu_{2,\rc}(\rvx\pk)$, implying for large $N_k$ that $\underline{\delta} \nu_2(\rvx\pk)$ has low expected deviation to $1$ and that $|\log \underline{\delta} \nu_2(\rvx\pk)| \ll 1$, $|\underline{m}[\nu_2(\rvx\pk)]| \ll 1$, $|\underline{s}[\nu_2(\rvx\pk)]| \ll 1$ with high probability. The term $\overline{m}[\nu_2(\rvx\pk)]$ is thus dominating as long as it is not vanishing. The same reasoning applies to other positive moments, e.g. $\mu_2(\rvx\pl)$, $\mu_2(\dx\pl)$.

\textbf{Further Notation.} From now on, the geometric increment of any quantity is denoted with $\delta$. The definitions of $\overline{m}$, $\underline{m}$ and $\underline{s}$ in \eq{E:overline_mx},~(\ref{E:underline_mx})~and~(\ref{E:underline_sx}) are extended to other positive moments of signal and noise, as well as $\S\pl$ with  
\begin{align*}
\overline{m}[\S\pl] &\equiv\tfrac{1}{2}\big(\overline{m}[\mu_2(\dx\pl)]   - \overline{m}[\mu_2(\rvx\pl)]  \big), \\ 
\underline{m}[\S\pl] &\equiv\tfrac{1}{2}\big(\underline{m}[\mu_2(\dx\pl)]  - \underline{m}[\mu_2(\rvx\pl)] \big), \\
\underline{s}[\S\pl] &\equiv\tfrac{1}{2}\big(\underline{s}[\mu_2(\dx\pl)] - \underline{s}[\mu_2(\rvx\pl)]  \big).
\end{align*}
We introduce the notation $a\simeq b$ when $a(1+\epsilon_a) = b (1+\epsilon_b)$ with $|\epsilon_a|\ll 1$, $|\epsilon_b| \ll 1$ with high probability. And the notation $a\lesssim b$ when $a(1+\epsilon_a) \leq b (1+\epsilon_b)$ with $|\epsilon_a|\ll 1$, $|\epsilon_b| \ll 1$ with high probability. From now on, we assume that the \emph{width is large}, implying
\begin{align*}
\delta \S\pl = \exp\big( \overline{m}[\S\pl] + \underline{m}[\S\pl] + \underline{s}[\S\pl]\big) \simeq \exp\big( \overline{m}[\S\pl]\big).
\end{align*}


We stress the \emph{layer-wise} character of this approximation, whose validity only requires $N\ul\gg1$, independently of the depth $l$. This contrasts with the \emph{aggregated} character \mbox{(up to layer $l$)} of the mean field approximation of $\rvy\pl$ as a Gaussian process, whose validity requires not only $N\ul\gg1$ but also -- as we will see -- that the depth $l$ remains sufficiently small with respect to $N\ul$.

\section{Vanilla Nets}
\label{S:vanilla_nets}
We are fully equipped to characterize deep neural networks at initialization. We start by analyzing vanilla nets which correspond to the propagation introduced in \Section{S:equation_propagation}.

\begin{Theorem}[moments of vanilla nets]
\emph{[\ref{S:proof_moments_vanilla}]}
\label{T:moments_vanilla}
There exist small constants $1\gg m\umin, m\umax, v\umin, v\umax >0$, random variables $m\ul, m\pp\ul$, $s\ul,s\pp\ul$ and events $A\ul$, $A\pp\ul$ of probabilities equal to $\prod^l_{k=1} ( 1 - 2^{-N_k})$ such that 
\begin{alignat*}{3}
& \textnormal{Under $A\ul$: } &  \quad  \log \nu_2(\rvx\pl)  & = - l m\ul + \sqrt{l} s\ul + \log \nu_2(\rvx\po), \\
& \textnormal{Under $A\pp\ul$: } & \quad \log \mu_2(\dx\pl) & = - l m\pp\ul + \sqrt{l} s\pp\ul + \log \mu_2(\dx\po). \\[-38pt]
\end{alignat*}
\begin{align*}
\hspace{27pt}\underbrace{\qquad\qquad\quad\qquad\qquad\qquad\qquad\qquad\qquad\;\;\;}_\text{\parbox{8.2cm}{
\hspace{-24pt} $m\umin \leq m\ul \leq m\umax$, \;\;
$\E_{\Tl|A\ul} \big[s\ul\big]=0$,  \;\;
$v\umin \leq \Var_{\Tl|A\ul}\big[s\ul\big]\leq v\umax$ \\ 
\hspace*{-24pt} $m\umin \leq m\pp\ul \leq m\umax$, \;\;
$\E_{\Tl|A\pp\ul}\big[s\pp\ul\big]=0$,  \;\;
$v\umin \leq \Var_{\Tl|A\pp\ul}\big[s\pp\ul\big]\leq v\umax$ }} \\[-21pt]
\end{align*}

\end{Theorem}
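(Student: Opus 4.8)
The plan is to exploit the multiplicative decomposition set up in \Section{S:model_parameters_randomness}. On the event $A\ul$ that the signal never fully vanishes up to layer $l$, all increments $\delta\nu_2(\rvx\pk)$, $k\le l$, are well-defined and positive, so $\log\nu_2(\rvx\pl)-\log\nu_2(\rvx\po)=\sum_{k\le l}\log\delta\nu_2(\rvx\pk)=\sum_{k\le l}\big(\overline m+\underline m+\underline s\big)[\nu_2(\rvx\pk)]$, and likewise for $\mu_2(\dx\pl)$ on $A\pp\ul$. I would then simply \emph{set} $m\ul:=-\tfrac1l\E_{\Tl|A\ul}\big[\log\nu_2(\rvx\pl)-\log\nu_2(\rvx\po)\big]$ and $s\ul:=\tfrac1{\sqrt l}\big(\log\nu_2(\rvx\pl)-\log\nu_2(\rvx\po)+lm\ul\big)$, and analogously $m\pp\ul,s\pp\ul$ from $\mu_2(\dx\pl)$; then $\E_{\Tl|A\ul}[s\ul]=0$ and $\Var_{\Tl|A\ul}[s\ul]=\tfrac1l\Var_{\Tl|A\ul}[\log\nu_2(\rvx\pl)]$ hold by construction, and the theorem reduces to: (a) the events have the stated probability; (b) $m\umin\le -\tfrac1l\E_{\Tl|A\ul}[\textstyle\sum_k\log\delta\nu_2(\rvx\pk)]\le m\umax$; (c) $v\umin\le\tfrac1l\Var_{\Tl|A\ul}[\sum_k\log\delta\nu_2(\rvx\pk)]\le v\umax$. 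All three follow from $k$-uniform, two-sided control of a single layer's increment conditioned on $\Tkm$.

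\textbf{Step 1: the drift term vanishes.} I claim $\overline m[\nu_2(\rvx\pk)]=\overline m[\mu_2(\dx\pk)]=0$ exactly. Conditioning on $\Tkm$, He initialization makes $\rvy\pk\,|\,\rvx\pkm$ a centered Gaussian with $\E_{\vomega\pk}[(y^k_{\valpha,\rc})^2\,|\,\rvx\pkm]=\tfrac{2}{K\uk^d N\ukm}\sum_{\beta,\rc'}(x^{k-1}_{\valpha+\beta,\rc'})^2$; the identity $\E[\max(g,0)^2]=\tfrac12\E[g^2]$ for centered Gaussian $g$ halves this, and averaging over $\valpha$ under periodic boundary conditions collapses the patch sum to $K\uk^d N\ukm\,\nu_2(\rvx\pkm)$, so $\E_{\tk}[\delta\nu_2(\rvx\pk)]=1$. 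For the noise, $(\dx^k_{\valpha,\rc})^2=\1\{y^k_{\valpha,\rc}>0\}(\dy^k_{\valpha,\rc})^2$ with $(y^k_{\valpha,\rc},\dy^k_{\valpha,\rc})$ jointly centered Gaussian given $(\rvx\pkm,\dx\pkm)$; decomposing $\dy^k$ along $y^k$ plus an independent Gaussian yields $\E_{\vomega\pk}[\1\{y^k>0\}(\dy^k)^2]=\tfrac12\E_{\vomega\pk}[(\dy^k)^2]$, and the same averaging gives $\E_{\tk}[\delta\mu_2(\dx\pk)]=1$. (The kink of ReLU at $0$ is harmless: $\{y^k_{\valpha,\rc}=0\}$ is $\vomega\pk$-null.) Hence the entire drift $-lm\ul$ is carried by the strictly negative Jensen term $\sum_k\underline m[\nu_2(\rvx\pk)]$, and likewise for the noise.

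\textbf{Step 2: the events.} Given $\Tkm$, $\delta\nu_2(\rvx\pk)=\tfrac1{N_k}\sum_{\rc}Z_{k,\rc}$ with $Z_{k,\rc}:=\nu_{2,\rc}(\rvx\pk)/\nu_2(\rvx\pkm)\ge0$ i.i.d.\ over $\rc$ and mean $1$ (Step 1). The increment is $0$ iff every channel is dead; since post-activations are nonnegative, channel $\rc$ is dead only if $\langle w_{\rc},v\rangle\le0$ for a fixed nonzero patch $v$ (which exists since the input is a.s.\ nonzero on the relevant event), an event of probability $\le\tfrac12$ by Gaussian symmetry, so $\Pr[\delta\nu_2(\rvx\pk)=0\,|\,\Tkm]\le2^{-N_k}$ by channel independence. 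Taking $A\ul$ to be the absorbing event $\{\nu_2(\rvx\pk)>0\ \forall k\le l\}$ and chaining over $k$ gives $\Pr[A\ul]=\prod_{k=1}^l(1-2^{-N_k})$ (with the per-layer events chosen to realize the death probability $2^{-N_k}$ exactly); $A\pp\ul$ is the analogue for $\mu_2(\dx\pk)$, the per-layer noise-death probability being again $\le2^{-N_k}$ via a position where $y^k_{\valpha,\rc}>0$ and $\dy^k_{\valpha,\rc}\ne0$ simultaneously (probability $\tfrac12$), and $A\pp\ul$ forces the signal alive too. On these events the decomposition above is legitimate.

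\textbf{Step 3: the two-sided bounds, and the obstacles.} Write $\bar Z_k:=\delta\nu_2(\rvx\pk)$ (mean $1$ given $\Tkm$). The lower Jensen bound $-\E[\log\bar Z_k\,|\,\Tkm]\ge m\umin>0$ follows from $\Var[\bar Z_k\,|\,\Tkm]\ge v^\ast$ for a positive architecture-dependent constant: the atom of $Z_{k,\rc}$ at $0$ has mass in $[2^{-K\uk^d N\ukm},\tfrac12]$, which with $\E[Z_{k,\rc}]=1$ forces $\Var[Z_{k,\rc}]$ below, and here it is \emph{crucial that the width is bounded}, so $\Var[\bar Z_k]=\Var[Z_{k,\rc}]/N_k$ cannot vanish; combine with a bounded fourth moment $\E[\bar Z_k^4\,|\,\Tkm]\le C_4$ (a Gaussian quadratic-form estimate using $\nu_{2,\rc}(\rvx\pk)\le\E_{\rvx,\valpha}[\langle w_{\rc},\psi_{\valpha}\rangle^2]$) and the elementary inequality $-\log z\ge(1-z)+c_T(1-z)^2$ on $(0,T]$, split at $T\asymp\sqrt{C_4/v^\ast}$. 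The upper bound $-\E[\log\bar Z_k\,|\,\Tkm,A\ul]\le m\umax$ needs a lower-tail anti-concentration estimate $\Pr[0<\bar Z_k<\eta\,|\,\Tkm]\lesssim\sqrt\eta$, which holds because $\nu_{2,\rc}(\rvx\pk)$ grows quadratically as $w_{\rc}$ leaves its polyhedral ``death cone'', so $\{0<Z_{k,\rc}<t\}$ is a Gaussian-thin shell of width $\asymp\sqrt t$; then $\E[-\log\bar Z_k\,\1\{0<\bar Z_k<\eta\}]$ is finite and bounded. The same $v^\ast,C_4$ and anti-concentration give $v\umin\le\Var[\log\bar Z_k\,|\,\Tkm]\le v\umax$ ($\log$ is bi-Lipschitz on $[\eta,T]$, on which $\bar Z_k$ carries $\Omega(1)$ mass spread on both sides of $1$). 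Finally $\underline s[\nu_2(\rvx\pk)]$ is conditionally centered given $\Tkm$, so $\Var_{\Tl|A\ul}[\sum_k\log\delta\nu_2(\rvx\pk)]=\sum_k\E[\Var[\log\delta\nu_2(\rvx\pk)\,|\,\Tkm]]$ up to the cross-term residuals from conditioning on $A\ul$; summing the per-layer bounds yields the $\Theta(l)$ behaviour of mean and variance, hence the claimed bounds on $m\ul$ and $s\ul$, and the noise statement is identical with $\mu_2(\dx\pk)$ in place of $\nu_2(\rvx\pk)$. The two hard points I anticipate are (i) making the mean and variance bookkeeping exact despite conditioning on the future-dependent events $A\ul,A\pp\ul$ — each layer's increment is reweighted by a factor $1+O(2^{-N\umin})$ and one must show this perturbs the per-layer estimates negligibly — and (ii) the uniform anti-concentration of $\bar Z_k$ underlying $m\umax$ and $v\umax$, which is precisely where boundedness, not merely finiteness, of the widths is indispensable.
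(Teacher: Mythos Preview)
Your high-level strategy---per-layer two-sided control of $\E_{\tk}[\log\delta\nu_2(\rvx\pk)]$ and $\Var_{\tk}[\log\delta\nu_2(\rvx\pk)]$, then a martingale-difference aggregation---is the same skeleton as the paper's. However, your execution of the \emph{variance lower bound} has a genuine gap. You claim the atom of $Z_{k,\rc}=\nu_{2,\rc}(\rvx\pk)/\nu_2(\rvx\pkm)$ at $0$ has mass at least $2^{-K\uk^d N\ukm}$; this is false uniformly over input distributions. The event $\{Z_{k,\rc}=0\}$ is $\{\langle w_\rc,\rho(\rvx\pkm,\valpha)\rangle\le0\ \forall(\rvx,\valpha)\}$, and if the receptive-field vectors are not contained in any closed half-space (e.g.\ the input is symmetric, or merely takes more than $R\uk$ linearly independent values), this set reduces to $\{0\}$ and has Gaussian measure zero. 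So the atom-at-$0$ route cannot yield a uniform $\Var[Z_{k,\rc}]\ge v^\ast>0$, and your $m\umin,v\umin$ collapse.

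The paper extracts the lower bound from a different, input-independent source: the \emph{radial} randomness of $\mW\pl$. It constructs $B\ul$ via auxiliary Bernoulli variables so that $\sP[B\ul]=1-2^{-N\ul}$ exactly \emph{and} $B\ul\perp\|\mW\pl\|_F$; then, since $\delta\nu_2(\rvx\pl)/\|\mW\pl\|_F^2$ depends only on the direction $\mW\pl/\|\mW\pl\|_F$, one factors $\Var_{\tl|A\ul}[\delta\nu_2(\rvx\pl)]\ge\E\big[\delta\nu_2(\rvx\pl)/\|\mW\pl\|_F^2\big]^2\Var[\|\mW\pl\|_F^2]\ge(4N\ul)^{-2}\cdot 8N\ul/R\ul=1/(2N\ul R\ul)$, uniformly over all $\Tlm$ and all input laws. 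This also explains why the events must be built the way they are: your simpler death-event gives only an inequality on the probability and, crucially, no independence from $\|\mW\pl\|_F$, so the factorization above is unavailable. Two smaller points: (i) the paper takes $m\ul$ \emph{random}, namely $m\ul=\tfrac1l\sum_k\E_{\tk|A\uk}[-\log\delta\nu_2(\rvx\pk)]$, and $s\ul$ as the normalized sum of the conditionally centered increments; with your deterministic $m\ul$, the fluctuations of those conditional means are pushed into $s\ul$ and you would have to argue separately that $\Var[\sum_k\underline m[\nu_2(\rvx\pk)]]=O(l)$ rather than $O(l^2)$, which you do not do; (ii) for the \emph{uniform} upper bounds $m\umax,v\umax$ the paper does not rely on an ad hoc anti-concentration estimate but sandwiches $\delta\nu_2(\rvx\pl)$ between explicit chi-squared variables and then uses tightness/Prokhorov plus uniform integrability to pass to the supremum over all $\Tlm$ and input laws.
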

\textbf{Discussion.} The conditionality on $A\ul$, $A\pp\ul$ is necessary to exclude the collapse: $\nu_2(\rvx\pl)=0$, $\mu_2(\dx\pl)=0$, with undefined $\log \nu_2(\rvx\pl)$, $\log \mu_2(\dx\pl)$, occurring e.g. when all elements of $\vomega\pl$ are strictly negative \citep{Lu18}. In practice, this conditionality is highly negligible since the probabilities of the complementary events $A\pc\ul$, $A\ppc\ul$ decay exponentially in the width $N\ul$ [\ref{S:negligible_conditionality}].

Now let us look at the evolution of $\log \nu_2(\rvx\pl)$, $\log \mu_2(\dx\pl)$ under $A\ul$, $A\pp\ul$. The initialization \citet{He15} enforces $\E_\tl[\nu_2(\rvx\pl)]=\nu_2(\rvx\plm)$ and $\E_\tl[\mu_2(\dx\pl)]=\mu_2(\dx\plm)$ such that: (i) $\E_\Tl[\nu_2(\rvx\pl)]$, $\E_\Tl[\mu_2(\dx\pl)]$ are kept stable during propagation; (ii) $\overline{m} [ \nu_2(\rvx\pl) ]$, $\overline{m} [ \mu_2(\dx\pl) ]$ vanish and $\log \nu_2(\rvx\pl)$, $\log \mu_2(\dx\pl)$ are subject to a slow diffusion with small negative drift terms: \mbox{$\underline{m} [\nu_2(\rvx\pl) ]<0$, $\underline{m} [\mu_2(\dx\pl) ]<0$}, and small diffusion terms: $\underline{s} [\nu_2(\rvx\pl) ]$, $\underline{s} [\mu_2(\dx\pl) ]$ [\ref{S:moments_vanilla_terms_evolution}].\footnote{Any deviation from \citet{He15} leads, on the other hand, to pathologies orthogonal to the pathologies of Section \ref{S:pathologies}, with either exploding or vanishing constant scalings of $(\rvx\pl,\dx\pl)$.} The diffusion happens in log-space since layer composition amounts to a multiplicative random effect in real space. It is a finite-width effect since the terms $\underline{m} [ \nu_2(\rvx\pl) ]$, $\underline{m} [ \mu_2(\dx\pl) ]$, $\underline{s} [\nu_2(\rvx\pl) ]$, $\underline{s} [\mu_2(\dx\pl) ]$ also vanish for infinite width.

\fig{F:vanilla_histo} illustrates the slowly decreasing negative expectation and slowly increasing variance of $\log \nu_2(\rvx\pl)$, $\log \mu_2(\dx\pl)$, caused by the small negative drift and diffusion terms. \fig{F:vanilla_histo} also indicates that $\log \nu_2(\rvx\pl)$, $\log \mu_2(\dx\pl)$ are nearly Gaussian, implying that $\nu_2(\rvx\pl)$, $\mu_2(\dx\pl)$ are nearly lognormal. Two important insights are then provided by the expressions of the expectation: $\exp(\mu+ \sigma^2 / 2)$ and the kurtosis:~$\exp(4\sigma^2)+2\exp(3\sigma^2)+3\exp(2\sigma^2)-3$ of a lognormal variable $\exp(X)$ with \mbox{$X\sim\mathcal{N}(\mu,\sigma^2)$}. Firstly, the decreasing negative expectation and increasing variance of $\log \nu_2(\rvx\pl)$, $\log \mu_2(\dx\pl)$ act as opposing forces in order to ensure the stabilization of $\E_\Tl[\nu_2(\rvx\pl)]$, $\E_\Tl[\mu_2(\dx\pl)]$. Secondly, $\nu_2(\rvx\pl)$, $\mu_2(\dx\pl)$ are stabilized only in terms of expectation and they become fat-tailed distributed as $l\to\infty$.

\begin{figure}[h]
    \begin{center}
       \captionsetup[subfigure]{labelformat=empty}
  \begin{subfigure}{0pt}
  \caption{\label{F:vanilla_histo_a}}
  \end{subfigure}
  \begin{subfigure}{0pt}
  \caption{\label{F:vanilla_histo_b}}
  \end{subfigure}
    \includegraphics[width=1.0\linewidth]{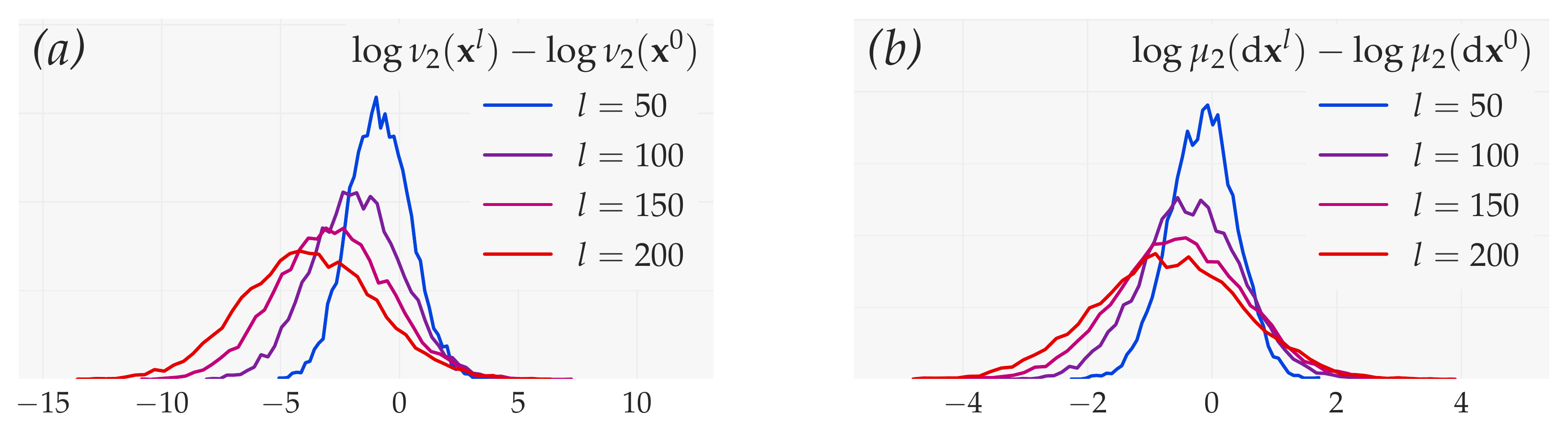}
    \caption{\emph{Slowly diffusing moments of vanilla nets} with $L=200$ layers of width $N\ul=128$. \emph{(a)} Distribution of $\log \nu_2(\rvx\pl)-\log \nu_2(\rvx\po)$ for $l=50,100,150,200$. \emph{(b)} Same for $\log \mu_2(\dx\pl)-\log \mu_2(\dx\po)$.}
    \label{F:vanilla_histo}
    \end{center}
\end{figure}

\begin{Theorem}[normalized sensitivity increments of vanilla nets]
\label{T:S_increments_vanilla}
\emph{[\ref{S:proof_S_increments_vanilla}]} Denoting $\rvy^{l,\pm}\equiv\max\big(\pm\rvy\pl,0\big)$, the dominating term under $\{ \mu_2(\rvx\plm) > 0\}$ \mbox{in the evolution of $\S\pl$ is}
\begin{align}
\delta \S\pl & \simeq \Bigg( 1 - \E_{\rc,\tl} \Bigg[ \frac{\nu_{1,\rc}  \big( \rvy\plplus \big) \nu_{1,\rc} \big( \rvy\plminus \big)}{\mu_{2}(\rvx\plm)} \Bigg] \Bigg)^{-\frac{1}{2}}.\label{E:S_increments_vanilla} \\[-36pt] \nonumber
\end{align}
\begin{align*}
\hspace{14pt}\underbrace{\qquad\qquad\qquad\qquad\qquad\qquad\qquad\quad\;\;}_{
\hspace{2pt} \in [1,\sqrt{2}]} \\[-21pt]
\end{align*}
\end{Theorem}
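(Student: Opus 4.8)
The plan is to push the geometric increment $\delta\S\pl=\S\pl/\S^{l-1}$ through the large-width approximation already in force and reduce it to two conditional expectations over the fresh weights $\tl$, which I then evaluate via the reflection symmetry $w\mapsto-w$ of the Gaussian weights rather than explicit arc-cosine-kernel integrals. Since $\S\pl=\big(\mu_2(\dx\pl)/\mu_2(\rvx\pl)\big)^{1/2}\big(\mu_2(\dx\po)/\mu_2(\rvx\po)\big)^{-1/2}$, the increment telescopes as $\delta\S\pl=\big(\delta\mu_2(\dx\pl)/\delta\mu_2(\rvx\pl)\big)^{1/2}$; using $\overline m[\mu_2(\cbullet\pl)]=\log\E_\tl[\delta\mu_2(\cbullet\pl)]$ together with the extended definition of $\overline m[\S\pl]$, the relation $\delta\S\pl\simeq\exp(\overline m[\S\pl])$ reads
\[
\delta\S\pl\;\simeq\;\Bigg(\frac{\E_\tl[\delta\mu_2(\dx\pl)]}{\E_\tl[\delta\mu_2(\rvx\pl)]}\Bigg)^{\!1/2},
\]
so it suffices to compute the two numerators. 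Throughout I use that biases vanish at initialization; that the (flattened) weight sub-tensor $w$ feeding a given output channel $\rc$ is an i.i.d.\ centred Gaussian vector of dimension $K\ul^dN\ulm$ with per-coordinate variance $2/(K\ul^dN\ulm)$, independent across channels; that this same $w$ yields both $\rvy\pl_{\valpha,\rc}=w^\top u$ and $\dy\pl_{\valpha,\rc}=w^\top v$, where $u,v$ are the flattened patches of $\rvx\plm,\dx\plm$ at $\valpha$; and that periodic boundary conditions make the $\valpha$-average of $\|u\|^2/(K\ul^dN\ulm)$ equal $\nu_2(\rvx\plm)$ (similarly for the noise patch, and for the mean patch, whose spatial slices are all equal to the channel-mean vector $(\nu_{1,\rc}(\rvx\plm))_\rc$).

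For the noise, the reflection $w\mapsto-w$ preserves the Gaussian law, flips the sign of $\rvy\pl_{\valpha,\rc}$ — hence of $\phi\pp(\rvy\pl_{\valpha,\rc})=\mathbf 1\{\rvy\pl_{\valpha,\rc}>0\}$; the convention $\phi\pp(0)=1/2$ is immaterial since a Gaussian charges no point — and leaves $(\dy\pl_{\valpha,\rc})^2$ fixed, so $\E_\tl\big[\phi\pp(\rvy\pl_{\valpha,\rc})^2(\dy\pl_{\valpha,\rc})^2\big]=\tfrac{1}{2}\E_\tl\big[(\dy\pl_{\valpha,\rc})^2\big]=\|v\|^2/(K\ul^dN\ulm)$. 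Averaging over $\valpha$ and $\rc$ gives $\E_\tl[\mu_2(\dx\pl)]=\nu_2(\dx\plm)=\mu_2(\dx\plm)$ (the noise is centred), hence $\E_\tl[\delta\mu_2(\dx\pl)]=1$.

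For the signal, I write $\rvy\pl=\rvy\plplus-\rvy\plminus$ with $\rvy\plplus\odot\rvy\plminus=0$ pointwise, note $\rvx\pl=\rvy\plplus$, and expand per channel $\mu_{2,\rc}(\rvy\pl)=\mu_{2,\rc}(\rvy\plplus)+\mu_{2,\rc}(\rvy\plminus)+2\,\nu_{1,\rc}(\rvy\plplus)\nu_{1,\rc}(\rvy\plminus)$. The same reflection $w\mapsto-w$ swaps the random fields $\rvy\plplus\leftrightarrow\rvy\plminus$, so $\E_\tl[\mu_{2,\rc}(\rvy\plplus)]=\E_\tl[\mu_{2,\rc}(\rvy\plminus)]$ and thus $\E_\tl[\mu_{2,\rc}(\rvx\pl)]=\tfrac{1}{2}\E_\tl[\mu_{2,\rc}(\rvy\pl)]-\E_\tl[\nu_{1,\rc}(\rvy\plplus)\nu_{1,\rc}(\rvy\plminus)]$. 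Averaging over $\rc$ and using the Gaussian-moment identities $\E_{\rc,\tl}[\nu_{2,\rc}(\rvy\pl)]=2\nu_2(\rvx\plm)$ and $\E_{\rc,\tl}[\nu_{1,\rc}(\rvy\pl)^2]=2\,\E_\rc[\nu_{1,\rc}(\rvx\plm)^2]$ (the latter from the constant-slice mean patch), together with $\mu_2=\nu_2-\E_\rc[\nu_{1,\rc}^2]$, gives $\E_{\rc,\tl}[\mu_{2,\rc}(\rvy\pl)]=2\mu_2(\rvx\plm)$; hence $\E_\tl[\delta\mu_2(\rvx\pl)]=\E_{\rc,\tl}[\mu_{2,\rc}(\rvx\pl)]/\mu_2(\rvx\plm)=1-\E_{\rc,\tl}\big[\nu_{1,\rc}(\rvy\plplus)\nu_{1,\rc}(\rvy\plminus)/\mu_2(\rvx\plm)\big]$. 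Substituting into the display above produces \eq{E:S_increments_vanilla}.

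It remains to confine the right-hand side to $[1,\sqrt2]$. Both $\nu_{1,\rc}(\rvy\plplus),\nu_{1,\rc}(\rvy\plminus)\geq0$, so the subtracted term is nonnegative, $\E_\tl[\delta\mu_2(\rvx\pl)]\leq1$, and $\delta\S\pl\geq1$. For the upper bound I use that for any real random variable $Z$ with mean $m$, $\E[Z_+]\,\E[Z_-]=\tfrac{1}{4}\big((\E|Z|)^2-m^2\big)\leq\tfrac{1}{4}\big(\E[Z^2]-m^2\big)=\tfrac{1}{4}\Var Z$ (Jensen); applied to $Z=\rvy\pl_{\valpha,\rc}$ under $P_{\rvx,\valpha}$ at frozen weights, this is $\nu_{1,\rc}(\rvy\plplus)\nu_{1,\rc}(\rvy\plminus)\leq\tfrac{1}{4}\mu_{2,\rc}(\rvy\pl)$ pointwise in $\tl$, so $\E_{\rc,\tl}[\nu_{1,\rc}(\rvy\plplus)\nu_{1,\rc}(\rvy\plminus)]\leq\tfrac{1}{4}\E_{\rc,\tl}[\mu_{2,\rc}(\rvy\pl)]=\tfrac{1}{2}\mu_2(\rvx\plm)$, whence $\delta\S\pl\leq(1-\tfrac{1}{2})^{-1/2}=\sqrt2$. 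I expect the main obstacle to be organizational rather than conceptual: marshalling the $w\mapsto-w$ reflection both to sidestep the arc-cosine kernel and to equate the two half-rectifications in expectation, while keeping the convolutional patch bookkeeping, the periodic-boundary spatial averaging, and the He factor of $2$ aligned so that the non-central and central second moments recombine exactly into $\mu_2(\rvx\plm)$.
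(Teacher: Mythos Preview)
Your proposal is correct and follows essentially the same route as the paper's proof. Both start from $\delta\S\pl\simeq\big(\E_\tl[\delta\mu_2(\dx\pl)]/\E_\tl[\delta\mu_2(\rvx\pl)]\big)^{1/2}$, both exploit the reflection $w\mapsto-w$ (which the paper packages as ``symmetric propagation'' with auxiliary tensors $\bx\pl,\bdx\pl$) to obtain $\E_\tl[\delta\mu_2(\dx\pl)]=1$ and $\E_\tl[\mu_{2,\rc}(\rvy\plplus)]=\E_\tl[\mu_{2,\rc}(\rvy\plminus)]$, and both use the same Jensen step $(\E|Z|)^2\leq\E[Z^2]$ for the $\sqrt2$ bound. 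The only cosmetic difference is in computing $\E_{\rc,\tl}[\mu_{2,\rc}(\rvy\pl)]=2\mu_2(\rvx\plm)$: the paper diagonalizes $\mC_{\rvx,\valpha}[\rf(\rvx\plm,\valpha)]$ and uses the trace identity from its Corollary~\ref{C:statistics_preserving_constant} directly on the centred receptive-field vector, whereas you compute $\E_{\rc,\tl}[\nu_{2,\rc}(\rvy\pl)]$ and $\E_{\rc,\tl}[\nu_{1,\rc}(\rvy\pl)^2]$ separately and subtract; both reach the same conclusion via the same statistics-preserving property of the receptive-field mapping.
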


\textbf{Discussion.}
A first consequence is that $\S\pl$ always increases with depth. Another consequence is that only two possibilities of evolution which both lead to pathologies are allowed:

-- If sensitivity is exploding: $\S\pl \geq \exp(\gamma l) \to \infty$ with exponential drift $\gamma$ stronger than the slow diffusion of \mbox{\theorem{T:moments_vanilla}} and if $\nu_2(\rvx\pl)$, $\mu_2(\dx\pl)$ are lognormally distributed as supported by \fig{F:vanilla_histo}, then \mbox{\theorem{T:moments_vanilla}} implies the a.s. convergence to the pathology of zero-dimensional signal: $\mu_2(\rvx\pl)/\nu_2(\rvx\pl) \to 0$ [\ref{S:ratio_cv_as}].

-- Otherwise, geometric increments $\delta \S\pl$ are strongly limited. In the limit \mbox{$\delta \S\pl \simeq \exp\big( \overline{m}[\S\pl]\big)\to1$}, if the moments of $\tx\pl\equiv\rvx\pl/\sqrt{\mu_2(\rvx\pl)}$ remain bounded, then \theorem{T:S_increments_vanilla} implies the convergence to the pathology of one-dimensional signal: $\reff(\rvx\pl) \to1$~[\ref{S:one_dimensional_pathology}] and the convergence to pseudo-linearity, with each additional layer $l$ becoming arbitrarily well approximated by a linear mapping~[\ref{S:pseudo_linearity}].

\begin{figure}[h]
    \begin{center}
       \captionsetup[subfigure]{labelformat=empty}
	\begin{subfigure}{0pt}
	\caption{\label{F:vanilla_a}}
	\end{subfigure}
	\begin{subfigure}{0pt}
	\caption{\label{F:vanilla_b}}
	\end{subfigure}
    \includegraphics[width=1.0\linewidth]{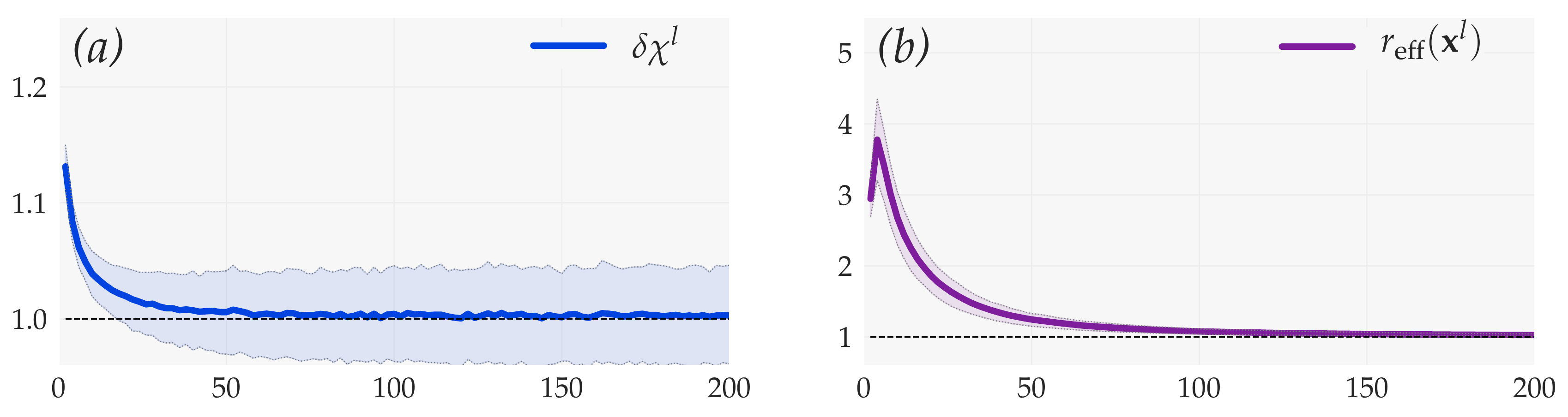}
    \caption{\emph{Pathology of one-dimensional signal for vanilla nets} with $L=200$ layers of width $N\ul=512$. \emph{(a)} $\delta \S\pl$ such that $\delta \S\pl \simeq \exp\big( \overline{m}[\S\pl]\big)\to1$. \emph{(b)} $\reff(\rvx\pl)$ indicates one-dimensional signal pathology: $\reff(\rvx\pl)\to 1$.}
    \label{F:vanilla}
    \end{center}
\end{figure}

\textbf{Experimental Verification.} The evolution with depth of vanilla nets is shown in \fig{F:vanilla}. From the two possibilities, we observe the case with limited geometric increments: \mbox{$\delta \S\pl \simeq \exp\big( \overline{m}[\S\pl]\big) \to 1$}, the convergence to the pathology of one-dimensional signal: $\reff(\rvx\pl) \to1$, and the convergence to pseudo-linearity. 

The only way that the neural network can achieve pseudo-linearity is by having each one of its $\relu$ units either always active or always inactive, i.e. behaving either as zero or as the identity. Our analysis offers theoretical insight into this coactivation phenomenon, previously observed experimentally \citep{Balduzzi17,Philipp18a}.

\section{Batch-Normalized Feedforward Nets}
\label{S:BN_feedforward_nets}
Next we incorporate batch normalization \citep{Ioffe15}, which we denote as $\BN$. For simplicity, we only consider the test mode which consists in subtracting $\nu_{1,\rc}(\rvy\pl)$ and dividing by $\sqrt{\mu_{2,\rc}(\rvy\pl)}$ for each channel $\rc$ in $\rvy\pl$. The propagation is given by
\begin{alignat}{4}
& \rvy\pl && = \vomega\pl \ast \rvx\plm + \vbeta\pl, \qquad &&  \dy\pl && = \vomega\pl \ast \dx\plm, \label{E:bn_propagation1} \\
& \rvz\pl && = \BN(\rvy\pl), \qquad && \dz\pl && = \BN\pp (\rvy\pl) \odot \dy\pl, \label{E:bn_propagation2} \\
& \rvx\pl && = \phi(\rvz\pl), \qquad &&  \dx\pl && = \phi\pp(\rvz\pl) \odot \dz\pl.  \label{E:bn_propagation3}
\end{alignat}

\begin{Theorem}[normalized sensitivity increments of batch-normalized feedforward nets]
\label{T:S_increments_bn_ff}
\emph{[\ref{S:proof_S_increments_bn_ff}]} 
The dominating term in the evolution of $\S\pl$ can be decomposed as
\begin{align*}
\delta \S\pl = \delta\ubn \S\pl \;\cdot\; &  \delta\uphi \S\pl \simeq \exp\big( \overline{m}\uBN[ \S\pl ] \big) \cdot \exp\big( \overline{m}\uPHI[ \S\pl ] \big), \\
\exp\big( \overline{m}\uBN[ \S\pl ] \big) & \equiv \Bigg(\frac{\mu_2(\dx\plm)}{\mu_2(\rvx\plm)}\Bigg)^{-\frac{1}{2}} \E_{\rc,\tl} \Bigg[ \frac{\mu_{2,\rc}(\dy\pl)}{\mu_{2,\rc}(\rvy\pl)} \Bigg]^\frac{1}{2}, \\
\exp\big( \overline{m}\uPHI[ \S\pl ] \big) & \equiv \Big( 1- 2 \vss \E_{\rc,\tl} [ \nu_{1,\rc}  ( \rvz\plplus  ) \nu_{1,\rc} ( \rvz\plminus ) ] \Big)^{-\frac{1}{2}}.\\[-38pt]
\end{align*} 
\begin{align*}
\hspace{72pt}\underbrace{\qquad\qquad\qquad\qquad\qquad\qquad\qquad\;\;}_{
\hspace{2pt} \in [1,\sqrt{2}]} \\[-15pt]
\end{align*}
\end{Theorem}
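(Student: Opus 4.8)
The plan is to route the sensitivity increment through the post-$\BN$ tensors $(\rvz\pl,\dz\pl)$, viewing $\rvx\plm\mapsto\rvz\pl$ (convolution then test-mode $\BN$) and $\rvz\pl\mapsto\rvx\pl$ (pointwise $\relu$) as two sub-layers. Their sensitivity increments are $\delta\ubn\S\pl\equiv\big(\tfrac{\mu_2(\dz\pl)/\mu_2(\rvz\pl)}{\mu_2(\dx\plm)/\mu_2(\rvx\plm)}\big)^{1/2}$ and $\delta\uphi\S\pl\equiv\big(\tfrac{\mu_2(\dx\pl)/\mu_2(\rvx\pl)}{\mu_2(\dz\pl)/\mu_2(\rvz\pl)}\big)^{1/2}$, and $\delta\S\pl=\delta\ubn\S\pl\cdot\delta\uphi\S\pl$ by telescoping. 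It then suffices to evaluate each factor under the large-width approximation, which — exactly as for $\delta\S\pl$ itself — lets us replace layer-$l$ moments by their $\tl$-conditional means and channel averages by $\E_\rc[\cdot]$.

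For the $\BN$ factor: test-mode $\BN$ rescales each channel of $\rvy\pl$ to zero mean and unit variance, so $\mu_{2,\rc}(\rvz\pl)=1$ exactly and $\mu_2(\rvz\pl)=1$; being affine with per-channel scale $1/\sqrt{\mu_{2,\rc}(\rvy\pl)}$, its first-order action on the noise is $\dz\pl_{\valpha,\rc}=\dy\pl_{\valpha,\rc}/\sqrt{\mu_{2,\rc}(\rvy\pl)}$, so $\mu_{2,\rc}(\dz\pl)=\mu_{2,\rc}(\dy\pl)/\mu_{2,\rc}(\rvy\pl)$ and $\mu_2(\dz\pl)=\E_\rc[\mu_{2,\rc}(\dy\pl)/\mu_{2,\rc}(\rvy\pl)]$. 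Substituting these and replacing the channel average by $\E_{\rc,\tl}[\cdot]$ gives $\delta\ubn\S\pl\simeq\big(\mu_2(\dx\plm)/\mu_2(\rvx\plm)\big)^{-1/2}\E_{\rc,\tl}[\mu_{2,\rc}(\dy\pl)/\mu_{2,\rc}(\rvy\pl)]^{1/2}=\exp(\overline{m}\uBN[\S\pl])$.

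For the $\phi$ factor, with $\rvz\plplus\equiv\max(\rvz\pl,0)$ and $\rvz\plminus\equiv\max(-\rvz\pl,0)$, I would use the invariance of the He initialization under $\vomega\pl\mapsto-\vomega\pl$, which sends $\rvy\pl\mapsto-\rvy\pl$, $\dy\pl\mapsto-\dy\pl$ and fixes $\nu_{1,\rc}(\rvy\pl)$, $\mu_{2,\rc}(\rvy\pl)$, hence $\rvz\pl\mapsto-\rvz\pl$, $\dz\pl\mapsto-\dz\pl$, i.e. $\rvz\plplus\leftrightarrow\rvz\plminus$. \emph{Signal side:} $\relu$ gives $\nu_{1,\rc}(\rvx\pl)=\nu_{1,\rc}(\rvz\plplus)$, $\nu_{2,\rc}(\rvx\pl)=\nu_{2,\rc}(\rvz\plplus)$, hence $\mu_{2,\rc}(\rvx\pl)=\mu_{2,\rc}(\rvz\plplus)$; combining $\nu_{1,\rc}(\rvz\pl)=0$ (so $\nu_{1,\rc}(\rvz\plplus)=\nu_{1,\rc}(\rvz\plminus)$) with $(\rvz\pl)^2=(\rvz\plplus)^2+(\rvz\plminus)^2$ (so $\nu_{2,\rc}(\rvz\plplus)+\nu_{2,\rc}(\rvz\plminus)=\nu_{2,\rc}(\rvz\pl)=1$) yields $\mu_{2,\rc}(\rvz\plplus)+\mu_{2,\rc}(\rvz\plminus)=1-2\nu_{1,\rc}(\rvz\plplus)\nu_{1,\rc}(\rvz\plminus)$, and the symmetry makes the two summands have equal $\tl$-mean, so $\mu_2(\rvx\pl)\simeq\E_\tl[\mu_2(\rvx\pl)]=\tfrac12\big(1-2\E_{\rc,\tl}[\nu_{1,\rc}(\rvz\plplus)\nu_{1,\rc}(\rvz\plminus)]\big)$. \emph{Noise side:} $\dx\pl_{\valpha,\rc}=\phi\pp(\rvz\pl_{\valpha,\rc})\dz\pl_{\valpha,\rc}$ with $\phi\pp(z)^2+\phi\pp(-z)^2=1$ off the null tie set $\{\rvz\pl=0\}$ (convention $\phi\pp(0)=1/2$), so the same symmetry gives $2\E_\tl[\mu_{2,\rc}(\dx\pl)]=\E_\tl[\mu_{2,\rc}(\dz\pl)]$, whence $\mu_2(\dx\pl)\simeq\tfrac12\mu_2(\dz\pl)$. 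Plugging into $\delta\uphi\S\pl$ (where also $\mu_2(\rvz\pl)=1$), the $\mu_2(\dz\pl)$ cancels and $\delta\uphi\S\pl\simeq\big(1-2\E_{\rc,\tl}[\nu_{1,\rc}(\rvz\plplus)\nu_{1,\rc}(\rvz\plminus)]\big)^{-1/2}=\exp(\overline{m}\uPHI[\S\pl])$. The bracket $[1,\sqrt2]$ then follows from $\nu_{1,\rc}(\rvz\plplus)=\tfrac12\E_{\rvx,\dx,\valpha}[\,|\rvz\pl_{\valpha,\rc}|\,]\le\tfrac12\sqrt{\mu_{2,\rc}(\rvz\pl)}=\tfrac12$, so $0\le2\nu_{1,\rc}(\rvz\plplus)\nu_{1,\rc}(\rvz\plminus)\le\tfrac12$.

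The main obstacle I anticipate is making the layer-wise large-width steps rigorous and compositional: one needs $\mu_2(\dx\pl)$, $\mu_2(\rvx\pl)$, $\mu_2(\dz\pl)$ and $\E_\rc[\mu_{2,\rc}(\dy\pl)/\mu_{2,\rc}(\rvy\pl)]$ to each concentrate around their $\tl$-conditional expectations with controlled $\simeq$/$\lesssim$ errors uniformly in $l$, and the several $\simeq$-substitutions to combine into one; the handling of the $\phi\pp(0)=1/2$ convention (tie set null) and the conditioning guaranteeing $\mu_2(\rvx\pl),\mu_2(\dz\pl),\mu_{2,\rc}(\rvy\pl)>0$ — in the spirit of the events $A\ul$ of Theorem~\ref{T:moments_vanilla} — must also be done carefully. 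The weight-symmetry identities and the $\relu$ algebra that yield the closed forms are, by contrast, elementary.
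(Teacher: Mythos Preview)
Your proposal is correct and follows essentially the same route as the paper's proof: the same telescoping split through $(\rvz\pl,\dz\pl)$, the same per-channel $\BN$ computation giving $\mu_{2,\rc}(\dz\pl)=\mu_{2,\rc}(\dy\pl)/\mu_{2,\rc}(\rvy\pl)$ and $\mu_2(\rvz\pl)=1$, and the same weight-sign-flip symmetry (which the paper packages as ``symmetric propagation'' with $\bz\pl=-\rvz\pl$, $\bdz\pl=-\dz\pl$) to obtain $2\E_\tl[\mu_2(\rvx\pl)]=1-2\E_{\rc,\tl}[\nu_{1,\rc}(\rvz\plplus)\nu_{1,\rc}(\rvz\plminus)]$ and $2\E_\tl[\mu_2(\dx\pl)]=\E_\tl[\mu_2(\dz\pl)]$. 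Your bound argument $\nu_{1,\rc}(\rvz\plplus)=\tfrac12\nu_{1,\rc}(|\rvz\pl|)\le\tfrac12$ is a minor repackaging of the paper's $4\nu_{1,\rc}(\rvz\plplus)\nu_{1,\rc}(\rvz\plminus)=\nu_{1,\rc}(|\rvz\pl|)^2\le\mu_{2,\rc}(\rvz\pl)=1$, and the concentration caveats you flag are exactly the ones the paper leaves at the level of the layer-wise $\simeq$ convention.
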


\textbf{Effect of Batch Normalization.} The batch normalization term is such that $\exp(\overline{m}\uBN[\S\pl]) \simeq \delta\ubn \S\pl$, with $\delta\ubn \S\pl$ defined as the increment of $\S\pl$ in the convolution and batch normalization steps of \eq{E:bn_propagation1} and \eq{E:bn_propagation2}. The expression of $\exp(\overline{m}\uBN[\S\pl])$ holds for any choice of $\phi$.

This term can be understood intuitively by seeing the different channels $\rc$ in $\rvy\pl$ as $N\ul$ random projections of $\rvx\plm$ and batch normalization as a modulation of the magnitude for each projection. Since batch normalization uses $\sqrt{\mu_{2,\rc}(\rvy\pl)}$ as normalization factor, directions of high signal variance are dampened, while directions of low signal variance are amplified. This preferential exploration of low signal directions naturally deteriorates the signal-to-noise ratio and amplifies $\S\pl$ owing to the noise factor equivalence of \eq{E:factor_noise}.

Now let us look directly at $\exp(\overline{m}\uBN[\S\pl])$ in \theorem{T:S_increments_bn_ff}. If we define the event under which the vectorized weights in channel $\rc$ have $\normltwo$ norm equal to $r$: $W^{l,c}_{r}\equiv\big\{ ||\vec(\vomega\pl_{:,:,\rc})||_2=r \big\}$, then spherical symmetry implies that variance increments in channel $\rc$ from $\rvx\plm$ to $\rvy\pl$ and from $\dx\plm$ to $\dy\pl$ have equal expectation under $W^{l,c}_{r}$:
\begin{align*}
\frac{\E_{\tl | W^{l,c}_{r}} [\mu_{2,\rc}(\rvy\pl)]}{ \mu_{2}(\rvx\plm)}=\frac{\E_{\tl | W^{l,c}_{r}} [\mu_{2,\rc}(\dy\pl)]}{\mu_{2}(\dx\plm)}. 
\end{align*} 
On the other hand, the variance of these increments depends on the fluctuation of signal and noise in the random direction generated by $\vec(\vomega\pl_{:,:,\rc})/||\vec(\vomega\pl_{:,:,\rc})||_2$. This depends on the conditioning of signal and noise, i.e. on the magnitude of $\reff(\rvx\plm)$, $\reff(\dx\plm)$. If we assume that $\dx\plm$ is well-conditioned, then $\mu_{2,\rc}(\dy\pl)/ \mu_{2}(\dx\plm)$ can be treated as a constant and by convexity of the function $x\mapsto1/x$:
\begin{align*}
\Bigg(\frac{\mu_2(\dx\plm)}{\mu_2(\rvx\plm)} \Bigg)^{-1} \E_{\tl| W^{l,c}_{r}} \Bigg[ \frac{\mu_{2,\rc}(\dy\pl)}{\mu_{2,\rc}(\rvy\pl)} \Bigg] \gtrsim 1, 
\end{align*} 
which in turn implies $\exp( \overline{m}\uBN[\S\pl]) \gtrsim 1$. The worse the conditioning of $\rvx\plm$, i.e. the smaller $\reff(\rvx\plm)$, the larger the variance of $\mu_{2,\rc}(\rvy\pl)$ at the denominator and the impact of the convexity. Thus the smaller $\reff(\rvx\plm)$ and the larger $\exp( \overline{m}\uBN[\S\pl])$. This argument is strictly valid for the first step of the propagation wherein the noise has perfect conditioning, resulting in $\exp( \overline{m}\uBN[\S^1]) \geq 1$ [\ref{S:bn_first_step}]. 

\textbf{Effect of the Nonlinearity.} The nonlinearity term is such that $\exp(\overline{m}\uPHI[\S\pl])\simeq\delta\uphi \S\pl$, with $\delta\uphi \S\pl$ defined as the increment of $\S\pl$ in the nonlinearity step of \eq{E:bn_propagation3}. This term is analogous to the term of \eq{E:S_increments_vanilla} for vanilla nets, except that $\nu_{1,\rc}  ( \rvz\plplus  ) \nu_{1,\rc} ( \rvz\plminus )$ is less likely to vanish than $\nu_{1,\rc}(\rvy\plplus) \nu_{1,\rc}(\rvy\plminus)/\mu_{2}(\rvx\plm)$ in \eq{E:S_increments_vanilla} since batch normalization now keeps the signal centered around zero.

\textbf{Experimental Verification.} In \fig{F:BN-FF}, we confirm experimentally the pathology of exploding sensitivity: $\S\pl\geq\exp(\gamma l) \to\infty$ for some $\gamma >0$. We also confirm that: (i) $\dx\pl$ remains well-conditioned, while $\rvx\pl$ becomes ill-conditioned; (ii) $\reff(\rvx\pl)$ and $\delta\ubn \S\pl$ are inversely correlated. 

Interestingly, $\delta\uphi \S\pl$ becomes subdominant with respect to $\delta\ubn \S\pl$ at large depth. This stems from the fact that $\rvz\pl$ becomes fat-tailed distributed with respect to $\rvx$, $\valpha$, with large $\mu_4(\rvz\pl)$ and small $\nu_1(|\rvz\pl|)$. Combined with \mbox{$\nu_1(\rvz\plplus)\leq\nu_1(|\rvz\pl|)$} and \mbox{$\nu_1(\rvz\plminus)\leq\nu_1(|\rvz\pl|)$}, this explains the decay of $|\exp( \overline{m}\uPHI[ \S\pl ])-1|$ and thus of $|\delta\uphi \S\pl-1|$.

\begin{figure*}
\begin{minipage}[t]{0.475\linewidth}
    \begin{center}
      \captionsetup[subfigure]{labelformat=empty}
  \begin{subfigure}{0pt}
  \caption{\label{F:BN-FF_a}}
  \end{subfigure}
  \begin{subfigure}{0pt}
  \caption{\label{F:BN-FF_b}}
  \end{subfigure}
  \begin{subfigure}{0pt}
  \caption{\label{F:BN-FF_c}}
  \end{subfigure}
  \begin{subfigure}{0pt}
  \caption{\label{F:BN-FF_d}}
  \end{subfigure}
    \includegraphics[width=1.\linewidth]{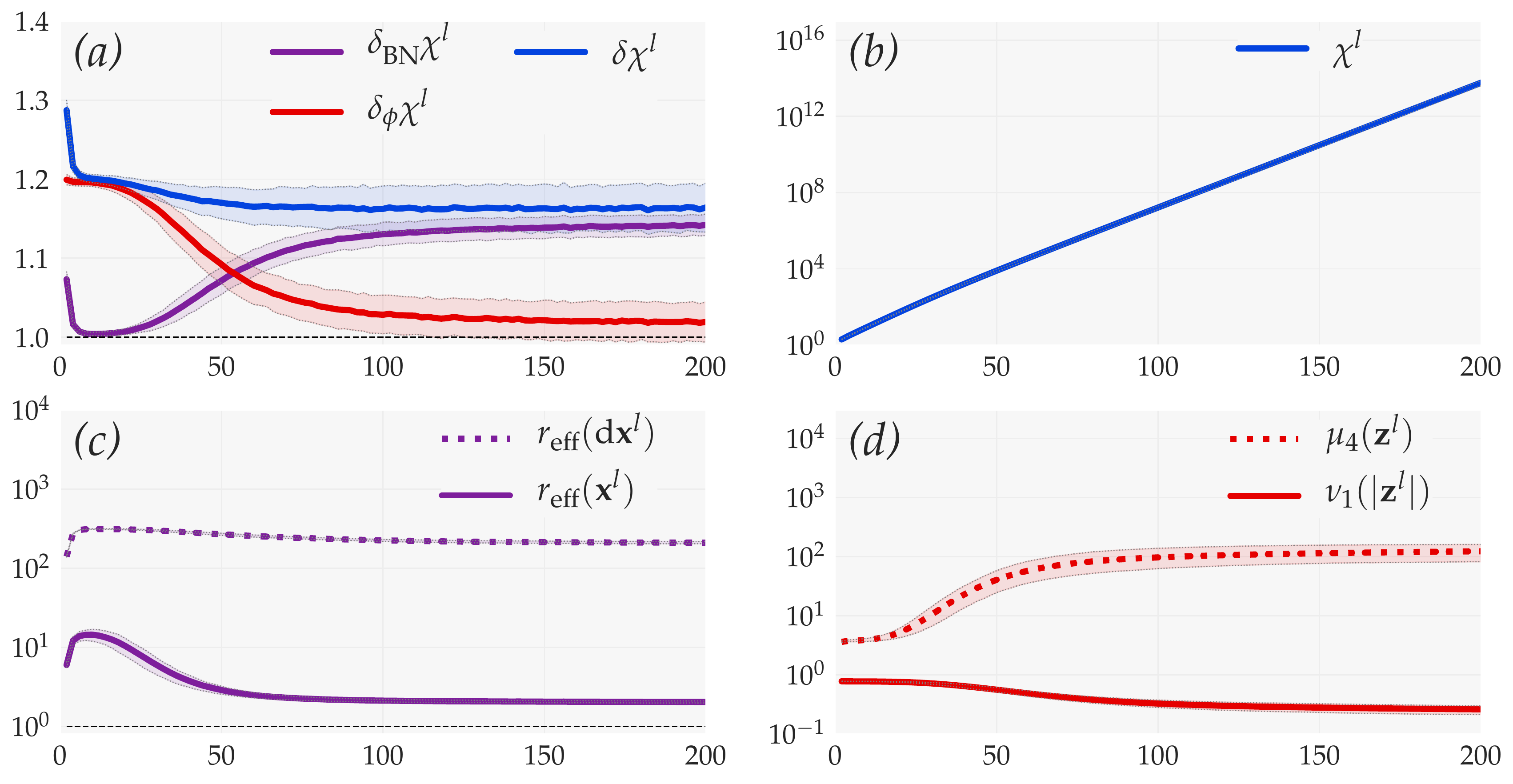}
    \caption{\emph{Pathology of exploding sensitivity for batch-normalized feedforward nets} with $L=200$ layers of width $N\ul=512$. \emph{(a)} Geometric increments $\delta \S\pl$ decomposed as the product of $\delta\ubn \S\pl$ defined as the increment from $(\rvx\plm,\dx\plm)$ to $(\rvz\pl,\dz\pl)$, and $\delta\uphi \S\pl$ defined as the increment from $(\rvz\pl,\dz\pl)$ to $(\rvx\pl,\dx\pl)$. \emph{(b)} The growth of $\S\pl$ indicates exploding sensitivity pathology: $\S\pl\geq \exp(\gamma l) \to\infty$ for some $\gamma >0$. \emph{(c)} $\rvx\pl$ becomes ill-conditioned with small $\reff(\rvx\pl)$. \emph{(d)} $\rvz\pl$ becomes fat-tailed distributed with respect to $\rvx$, $\valpha$, with large $\mu_4(\rvz\pl)$ and small $\nu_1(|\rvz\pl|)$.}
    \label{F:BN-FF}
    \end{center}
\end{minipage}\qquad
\begin{minipage}[t]{0.475\linewidth}
    \begin{center}
      \captionsetup[subfigure]{labelformat=empty}
	\begin{subfigure}{0pt}
	\caption{\label{F:BN-Res_a}}
	\end{subfigure}
	\begin{subfigure}{0pt}
	\caption{\label{F:BN-Res_b}}
	\end{subfigure}
	\begin{subfigure}{0pt}
	\caption{\label{F:BN-Res_c}}
	\end{subfigure}
	\begin{subfigure}{0pt}
	\caption{\label{F:BN-Res_d}}
	\end{subfigure}
    \includegraphics[width=1.\linewidth]{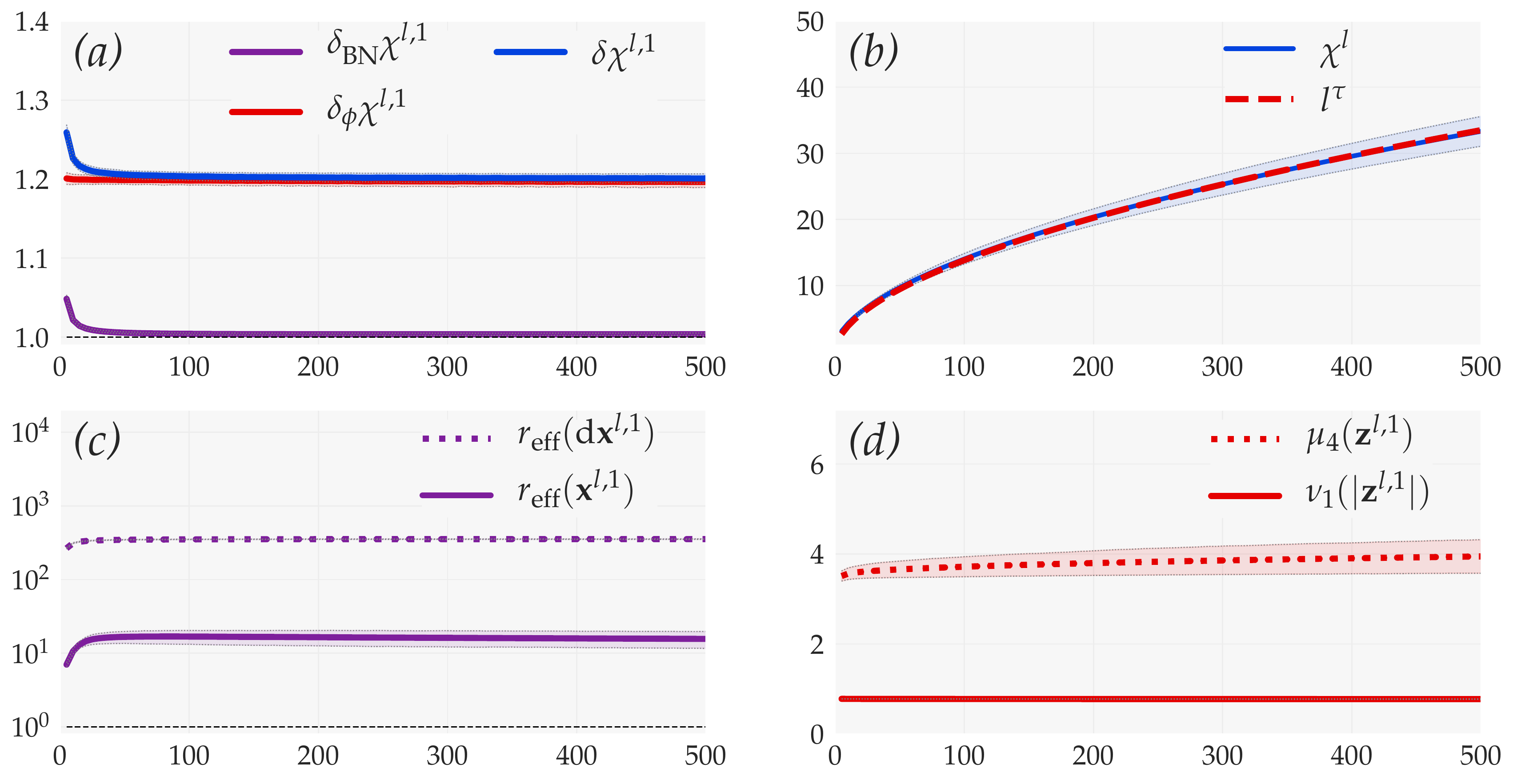}
      \caption{\emph{Well-behaved evolution of batch-normalized resnets} with $L=500$ residual units comprised of $H=2$ layers of width $N=512$. \emph{(a)} Geometric feedforward increments $\delta \S^{l,1}$ decomposed as the product of $\delta\ubn \S^{l,1}$ defined as the increment from $(\rvy^{l,0},\dy^{l,0})$ to $(\rvz^{l,1},\dz^{l,1})$, and $\delta\uphi \S^{l,1}$ defined as the increment from $(\rvz^{l,1},\dz^{l,1})$ to $(\rvy^{l,1},\dy^{l,1})$. \emph{(b)} $\S\pl$ has power-law growth. \emph{(c)} $\reff(\rvx^{l,1})$ indicates that many directions of signal variance are preserved. \emph{(d)} $\mu_4(\rvz^{l,1})$, $\nu_1(|\rvz^{l,1}|)$ indicate that $\rvz^{l,1}$ has close to Gaussian data distribution.}
      \label{F:BN-Res}
    \end{center}
\end{minipage}
\end{figure*}

\section{Batch-Normalized ResNets}
\label{S:BN_resnets}
We finish our exploration of deep neural network architectures with the incorporation of skip-connections. From now on, we assume that the width is constant, $N\ul=N$, and following \citet{He16}, we adopt the perspective of pre-activation units. The propagation is given by
\begin{align}
 (\rvy^{l},\dy^{l}) & = (\rvy^{l-1}, \dy^{l-1}) +  (\rvy^{l,H},\dy^{l,H}), \label{E:bn_res_propagation1}\\[-22pt] \nonumber
\end{align}
\begin{alignat*}{4}
&  \rvz\plh && \hspace{1.2pt}{=}\hspace{2.5pt} \BN(\rvy\plhm), \quad &&  \dz\plh && \hspace{1.2pt}{=}\hspace{2.5pt} \BN\pp(\rvy\plhm) \odot \dy\plhm,  \\
&  \rvx\plh && \hspace{1.2pt}{=}\hspace{2.5pt} \phi(\rvz\plh), \quad && \dx\plh && \hspace{1.2pt}{=}\hspace{2.5pt} \phi\pp (\rvz\plh) \odot \dz\plh, \\
& \rvy\plh && \hspace{1.2pt}{=}\hspace{2.5pt} \vomega\plh \ast \rvx\plh \hspace{1.2pt}{+}\hspace{2.5pt} \vbeta\plh,\;\; && \dy\plh && \hspace{1.2pt}{=}\hspace{2.5pt} \vomega\plh \ast \dx\plh.\\[-32pt]
\end{alignat*}
\begin{align*}
\underbrace{\qquad\qquad\qquad\qquad\qquad\qquad\qquad\qquad\qquad\qquad\qquad\quad\;\;}_\text{\parbox{6.5cm}{\centering $1 \leq h \leq H$, with $H$ the number of layers inside residual units and with $(\rvy^{l,0},\dy^{l,0})\equiv(\rvy\plm,\dy\plm)$}}
\end{align*}
If we adopt the convention $(\rvy^{0,H},\dy^{0,H}) \equiv (\rvy\po,\dy\po)$, then \eq{E:bn_res_propagation1} can be expanded as 
\begin{align}
  (\rvy^{l},\dy^{l}) & = \sum\nolimits^l_{k=0} (\rvy^{k,H},\dy^{k,H}). \label{E:bn_res_propagation4} 
\end{align}
For consistency reasons, we redefine the inputs of the propagation as $(\rvy,\dy) \equiv (\rvy\po,\dy\po)$ and the normalized sensitivity and its increments as
\begin{alignat*}{5}
  \S\plh & \hspace{2.2pt}{\equiv}\hspace{2.2pt}\Bigg( \frac{\mu_2(\dy\plh)}{\mu_2(\rvy\plh)} \Bigg)^\frac{1}{2} & \Bigg( \frac{\mu_2(\dy\po)}{\mu_2(\rvy\po)} \Bigg)^{-\frac{1}{2}}&, & \quad \delta \S\plh  & \hspace{2.2pt}{\equiv}\hspace{3.0pt} \frac{\S\plh}{\S\plhm}, \\
    \S\pl  & \hspace{2.5pt}{\equiv}\hspace{2.2pt}\Bigg( \frac{\mu_2(\dy\pl)}{\mu_2(\rvy\pl)} \Bigg)^\frac{1}{2} & \Bigg( \frac{\mu_2(\dy\po)}{\mu_2(\rvy\po)} \Bigg)^{-\frac{1}{2}}&, & \quad \delta \S\pl  & \hspace{2.5pt}{\equiv}\hspace{3.0pt} \frac{\S\pl}{\S\plm}.
\end{alignat*}

\begin{Theorem}[normalized sensitivity increments of batch-normalized resnets]
\label{T:S_increments_bn_res}
\emph{[\ref{S:proof_S_increments_bn_res}]} 
Suppose that we can bound signal variances: $\mu\udmin \lesssim \mu_2(\rvy\plH) \lesssim \mu\udmax$ and feedforward increments: $\delta\umin \lesssim \delta \S^{l,h} \lesssim \delta\umax$ for all $l,h$. Further denote $\eta\umin\equiv\big( (\delta\umin)^{2H} \mu\udmin - \mu\udmax \big) / \mu\udmax$ and $\eta\umax\equiv\big( (\delta\umax)^{2H} \mu\udmax - \mu\udmin \big)/ \mu\udmin$, as well as $\tau\umin\equiv\tfrac{1}{2}\eta\umin$ and $\tau\umax\equiv\tfrac{1}{2}\eta\umax$. Then there exist positive constants $C\umin$, $C\umax>0$ such that
\begin{alignat}{4}
 \Big( 1 + \frac{\eta\umin}{l+1} \Big)^\frac{1}{2}
 & \lesssim && \,\, \delta \S\pl && \lesssim \Big( 1 + \frac{\eta\umax}{l+1}  \Big)^\frac{1}{2},  \label{E:S_increments_bn_res_Theorem1}  \\
C\umin l^{\tau\umin} & \lesssim && \;\; \S\pl \, && \lesssim C\umax l^{\tau\umax}. \label{E:S_increments_bn_res_Theorem2}
\end{alignat}
\end{Theorem}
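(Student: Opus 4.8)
I would first exploit the additive skip-connection structure to reduce the second moments of $\rvy\pl,\dy\pl$ to a one-step recursion, then convert this into a closed-form recursion for $(\delta\S\pl)^2$, bound that recursion with the two hypotheses, and finally integrate over depth. \textbf{Step 1 (moment additivity).} The terminal operation of the $l$-th residual branch is $\rvy\plH=\vomega\plH\ast\rvx\plH$ (the convolution bias $\vbeta\plH$ and the $\BN$ shift both vanishing at the standard initialization of \citet{He15}), where $\vomega\plH$ is centered and independent of everything computed below; likewise $\dy\plH=\vomega\plH\ast\dx\plH$. Hence, conditionally on all weights but $\vomega\plH$, the tensors $\rvy\plH$ and $\dy\plH$ are centered, so their cross-covariance with the accumulated $\rvy\plm=\sum_{k<l}\rvy^{k,H}$, resp.\ $\dy\plm$, has zero conditional expectation over $\vomega\plH$ and is negligible relative to $\mu_2(\rvy\plH)$, resp.\ $\mu_2(\dy\plH)$, for large width. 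This gives the layer-wise recursions $\mu_2(\rvy\pl)\simeq\mu_2(\rvy\plm)+\mu_2(\rvy\plH)$ and $\mu_2(\dy\pl)\simeq\mu_2(\dy\plm)+\mu_2(\dy\plH)$; unrolling them (with $(\rvy^{0,H},\dy^{0,H})\equiv(\rvy\po,\dy\po)$, cf.\ \eq{E:bn_res_propagation4}) yields $\mu_2(\rvy\pl)\simeq\sum_{k=0}^{l}\mu_2(\rvy^{k,H})$ and likewise for the noise, and combined with the hypothesis $\mu\udmin\lesssim\mu_2(\rvy\plH)\lesssim\mu\udmax$ this gives $(l+1)\mu\udmin\lesssim\mu_2(\rvy\pl)\lesssim(l+1)\mu\udmax$.

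\textbf{Steps 2--3 (recursion and bounds).} Writing $R_0\equiv\mu_2(\dy\po)/\mu_2(\rvy\po)$ we have $\mu_2(\dy\pl)=(\S\pl)^2R_0\,\mu_2(\rvy\pl)$ and the analogous identities at level $l-1$ and for the branch output; moreover $\S^{l,0}=\S\plm$ (since $(\rvy^{l,0},\dy^{l,0})=(\rvy\plm,\dy\plm)$), so $\S\plH/\S\plm=\prod_{h=1}^H\delta\S\plh$. Setting $P_l\equiv\prod_{h=1}^H(\delta\S\plh)^2$, substituting these identities together with the recursions of Step~1 into $(\S\pl)^2=\mu_2(\dy\pl)/(R_0\mu_2(\rvy\pl))$ and dividing by $(\S\plm)^2$ gives, after simplification,
\begin{align*}
(\delta\S\pl)^2\;\simeq\;1+\frac{(P_l-1)\,\mu_2(\rvy\plH)}{\mu_2(\rvy\pl)}.
\end{align*}
Then I would plug in $(\delta\umin)^{2H}\lesssim P_l\lesssim(\delta\umax)^{2H}$ and $\mu\udmin\lesssim\mu_2(\rvy\plH)\lesssim\mu\udmax$: the numerator $(P_l-1)\mu_2(\rvy\plH)=P_l\mu_2(\rvy\plH)-\mu_2(\rvy\plH)$ then lies between $(\delta\umin)^{2H}\mu\udmin-\mu\udmax=\eta\umin\mu\udmax$ and $(\delta\umax)^{2H}\mu\udmax-\mu\udmin=\eta\umax\mu\udmin$; dividing the lower (possibly negative) endpoint by the largest admissible $\mu_2(\rvy\pl)\simeq(l+1)\mu\udmax$ and the upper endpoint by the smallest $\mu_2(\rvy\pl)\simeq(l+1)\mu\udmin$ yields $1+\eta\umin/(l+1)\lesssim(\delta\S\pl)^2\lesssim1+\eta\umax/(l+1)$, which is \eq{E:S_increments_bn_res_Theorem1}. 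For \eq{E:S_increments_bn_res_Theorem2} I would telescope $\S\pl=\prod_{m=1}^l\delta\S^m$ (with $\S\po=1$), take logarithms to get $\log\S\pl=\tfrac12\sum_{m=1}^l\log(\delta\S^m)^2$, and use the previous display to sandwich this between $\tfrac12\sum_{m=1}^l\log(1+\tfrac{\eta\umin}{m+1})$ and $\tfrac12\sum_{m=1}^l\log(1+\tfrac{\eta\umax}{m+1})$; a standard estimate (telescoping $\sum_{m=1}^l\log(1+\tfrac{\eta}{m+1})$ into a ratio of Gamma functions and applying Stirling, valid since $\eta\umin>-1$) shows $\sum_{m=1}^l\log(1+\tfrac{\eta}{m+1})=\eta\log l+\bigO(1)$ with the $\bigO(1)$ depending only on $\eta$, hence $\tau\umin\log l-\bigO(1)\lesssim\log\S\pl\lesssim\tau\umax\log l+\bigO(1)$; exponentiating and absorbing the constants into $C\umin,C\umax>0$ gives \eq{E:S_increments_bn_res_Theorem2}.

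\textbf{Main obstacle.} The delicate part is Step~1 --- justifying that the cross-covariance between the freshly added branch output and the accumulated signal (resp.\ noise) is negligible for large width, \emph{while} keeping the width requirement independent of depth so that the recursions remain genuinely layer-wise. This rests on the centeredness of the He-initialized terminal convolution of each residual branch together with a concentration estimate exploiting that the cross terms are mean-zero; once that is in place, the remainder is elementary algebra on the resulting recursion, as sketched above.
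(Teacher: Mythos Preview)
Your approach is the paper's approach: exploit the additive skip-connection to write $\mu_2(\rvy\pl)=\mu_2(\rvy\plm)+\mu_2(\rvy\plH)+2Y_l$ (and the analogue for $\dy$), show the cross terms are negligible, obtain the recursion for $(\delta\S\pl)^2$, and telescope. Your compact expression $(\delta\S\pl)^2\simeq 1+(P_l-1)\mu_2(\rvy\plH)/\mu_2(\rvy\pl)$ is exactly the paper's \eq{E:S_increments_bn_res_proof8} after simplification (the paper's $\tilde T_{l,l}$ is precisely your $P_l\,\mu_2(\rvy\plH)$), and your Gamma/Stirling estimate replaces the paper's explicit integral bounds to the same effect.

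One substantive correction to Step~1, which bears directly on your stated ``main obstacle'': the cross term $Y_l$ is \emph{not} in general negligible relative to $\mu_2(\rvy\plH)$. The paper's dot-product lemma gives $\E_{\tl}[Y_l^2]\leq \tfrac{1}{N\,\reff(\rvy\plm)}\,\mu_2(\rvy\plm)\,\E_{\tl}[\mu_2(\rvy\plH)]\lesssim \tfrac{l}{N}\mu\udmax^2$, so $|Y_l|$ is of order $\sqrt{l/N}\,\mu\udmax$, which for $l\gtrsim N$ is comparable to $\mu_2(\rvy\plH)\sim\mu\udmax$. What \emph{is} true --- and what the paper actually uses --- is that $|Y_l|\ll \mu_2(\rvy\plm)\sim l\mu\udmin$ (relative size $\sim 1/\sqrt{Nl}$), which suffices for the recursion and makes the width requirement depth-independent, as you wanted. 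The paper runs this as an induction on $l$; if you insist on unrolling to $\mu_2(\rvy\pl)\simeq\sum_k\mu_2(\rvy^{k,H})$, you additionally need the uncorrelation of the $Y_k$ (established in the paper via spherical symmetry) so that $\Var\big(\sum_k Y_k\big)=\sum_k\Var(Y_k)\lesssim l^2/N$. A smaller point: dividing a \emph{negative} numerator by the \emph{largest} admissible denominator gives an upper bound, not a lower bound; the paper avoids this by explicitly assuming $(\delta\umin)^{2H}\mu\udmin\geq\mu\udmax$, i.e.\ $\eta\umin\geq 0$, which it justifies from the feedforward analysis.
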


\textbf{Discussion.}
First let us note that \theorem{T:S_increments_bn_res} remarkably holds for any choice of $\phi$, with and without batch normalization, as long as the existence of $\mu\udmin$, $\mu\udmax$, $\delta\umin$, $\delta\umax$ is ensured. In the case $\phi=\relu$, the existence of $\delta\umin$, $\delta\umax$ is always ensured but the existence of $\mu\udmin$, $\mu\udmax$ is only ensured when batch normalization controls signal variance inside residual units: $\mu_{2,\rc}(\rvz\plH)=1$ [\ref{S:increments_bn_res_other_phi}].

Now let us get a better grasp of \theorem{T:S_increments_bn_res}. We see in \eq{E:S_increments_bn_res_Theorem1} that the evolution remains exponential inside residual units since $\eta\umin$, $\eta\umax$ have an exponential dependence in $H$. However, it is slowed down by the factor $1/(l+1)$ between successive residual units. This stems from the dilution \citep{Philipp18a} of the residual path $(\rvy^{l,H},\dy^{l,H})$ into the skip-connection path $(\rvy^{l-1}, \dy^{l-1})$ with ratio of signal variances $\mu_2(\rvy^{l,H}) / \big( \mu_2(\rvy^{l,H})+\mu_2(\rvy^{l-1}) \big)$, decaying as $1/(l+1)$. If we remove the dilution effect by multiplying the skip-connection branch by $0$ (i.e. replacing the scaling in $1/(l+1)$ by a scaling in $1$) and if we set $\mu\udmin=\mu\udmax$, then \eq{E:S_increments_bn_res_Theorem1} recovers the feedforward evolution $(\delta\umin)^H \lesssim \delta \S\pl \lesssim (\delta\umax)^H$. The dilution is clearly visible in \eq{E:bn_res_propagation4}. Namely, each residual unit adds a term $(\rvy^{l,H},\dy^{l,H})$ of increased $\S^{l,H}$ but its relative contribution to the aggregation gets smaller and smaller with~$l$, so that the growth of $\S\pl$ gets slower and slower with~$l$. 

Since $\frac{1}{2}\log(1+\frac{\eta}{x})\simeq\frac{\eta}{2x}$ and $\int_{x_0}^x \frac{\eta}{2x\pp} dx\pp\simeq\log x^\frac{\eta}{2}$ for $x\gg1$, the bounds on $\S\pl=\prod_k \delta \S\pk=\exp\big(\sum_k \log \delta \S\pl \big)$ in \eq{E:S_increments_bn_res_Theorem2} are obtained by integrating the bounds on the logarithm of \eq{E:S_increments_bn_res_Theorem1}. A direct consequence of the dilution is thus the power-law evolution of $\S\pl$ instead of the exponential evolution for feedforward nets. Equivalently, when rewriting \eq{E:S_increments_bn_res_Theorem2} as
\begin{align*}
C\umin \exp(\tau\umin\log l )\lesssim\S\pl\lesssim C\umax \exp(\tau\umax \log l), 
\end{align*}
the evolution of $\S\pl$ for resnets is equivalent to the evolution of $\S^{\tau \log l}$ for some $\tau>0$ for feedforward nets. In other words, the evolution with depth of resnets is the \emph{logarithmic} version of the evolution with depth of feedforward nets.

\textbf{Experimental Verification.} The evolution with depth of batch-normalized resnets is shown in \fig{F:BN-Res}. There is a clear parallel between the evolution for $l\leq 500$ in \fig{F:BN-Res} and the evolution for $l\lesssim 15$ in \fig{F:BN-FF}. This confirms that batch-normalized resnets are slower-to-evolve variants of batch-normalized feedforward nets.

The exponent in the power-law fit of \fig{F:BN-Res_b} is notably set to $\tau\equiv\tfrac{1}{2}(\langle \delta \chi^{l,1} \rangle^{2H}-1)$, with the feedforward increment $\langle \delta \chi^{l,1} \rangle$ averaged over the whole evolution. This means that \eq{E:S_increments_bn_res_Theorem2} very well describes the evolution of $\S\pl$ in practice. 

Contrary to batch-normalized feedforward nets, the signal remains well-behaved with: (i) many directions of signal variance preserved in $\reff(\rvx^{l,1})$; (ii) close to Gaussian data distribution, as indicated e.g. by $\mu_4(\rvz^{l,1})$ close to the Gaussian kurtosis of $3$. No pathology occurs.

\section{Discussion and Summary}

The novel approach that we introduced for the characterization of deep neural networks at initialization brings three main contributions: (i) it offers a unifying treatment of the broad spectrum of pathologies; (ii) it relies on mild assumptions; (iii) it easily incorporates convolutional layers, batch normalization and skip connections.

Most studies on the convergence of neural networks to Gaussian processes have until now considered the maximal depth $L$ as constant and the width in the limit $N\ul\to\infty$ for $l\leq L$. We reversed this perspective by considering the width $N\ul$ as large but still bounded and the depth in the limit $l\to \infty$. Then the mean-field approximation of $\rvy\pl$ as a Gaussian process indexed by $\rvx,\valpha$ eventually becomes invalid: 

-- In the context of vanilla nets, with e.g. an input $\fm(\rvx,\valpha)$ constant with respect to $\valpha$ and reduced to a single point of $\R^{N\uo}$ such that $\fm(\rvx\pl,\valpha)$ remains a single point of $\R^{N\ul}$. Given the evolution of \fig{F:vanilla_histo}, the $\normltwo$ norm $||\fm(\rvx\pl,\valpha)||_2^2=N\ul\nu_2(\rvx\pl)$ becomes fat-tailed distributed as $l\to\infty$. For given $\rvx,\valpha,\rc$, this means that $\rvx\pl_{\valpha,\rc}$ and $\rvy\pl_{\valpha,\rc}$ become fat-tailed distributed as $l\to\infty$.

-- In the context of batch-normalized feedforward nets, with e.g. an input $\fm(\rvx,\valpha)$ constant with respect to $\valpha$ and uniformly sampled among $M$ points positioned spherically symmetrically in $\R^{N\uo}$. Given the evolution of \fig{F:BN-FF}, spherical symmetry, together with batch normalization, implies that for any given $\rvx,\valpha,\rc$: $\E_\Tl[\rvz\pl_{\valpha,\rc}]=\E_\Tl[\nu_{1,\rc}(\rvz\pl)]=0$, $\E_\Tl[(\rvz\pl_{\valpha,\rc})^2]=\E_\Tl[\mu_{2,\rc}(\rvz\pl)]=1$, and $\E_\Tl[(\rvz\pl_{\valpha,\rc})^4]=\E_\Tl[\mu_{4}(\rvz\pl)]\gg1$. For given $\rvx,\valpha,\rc$, this means that $\rvz\pl_{\valpha,\rc}$ and $\rvy\pl_{\valpha,\rc}$ become fat-tailed distributed as $l\to\infty$.

Similar observations were made in previous works. \citet{Duvenaud14} found that the composition of Gaussian processes eventually leads to lognormal and ill-behaved derivatives; \citet{Matthews18} found that the convergence to Gaussianity as $N\ul\to\infty$ becomes slower with respect to $N\ul$ as the depth $l$ grows. This stems from the fact that the affine transform at each layer is \emph{additive} with respect to the width dimension, but layer composition is \emph{multiplicative} with respect to the depth dimension. Intuitively, the Central Limit Theorem implies that $\rvy\pl$ becomes normally distributed as $N\ul\to \infty$, but lognormally distributed (with fat-tail) as $l\to\infty$.

Beside from this insight, our approach enabled us to characterize deep neural networks with the most common choices of hyperparameters:

-- In the case of vanilla nets, the initialization \citet{He15} limits the evolution of second-order moments of signal and noise. Combined with the limited growth of $\S\pl$, this results in the convergence to the pathology of one-dimensional signal: $\reff(\rvx\pl) \to 1$ and the convergence to neural network pseudo-linearity, with each additional layer $l$ becoming arbitrarily well approximated by a linear mapping.

-- In the case of batch-normalized feedforward nets, the pathology of exploding sensitivity: $\S\pl \geq \exp(\gamma l) \to \infty$ for some $\gamma>0$ has two origins: on the one hand, batch normalization which upweights low-signal pre-activation directions; on the other hand, the nonlinearity $\phi$. 

-- Finally in the case of resnets, $\S\pl$ only grows as a power-law. Equivalently, the evolution with depth of resnets is the logarithmic version of the evolution with depth of feedforward nets. The underlying phenomenon is the dilution of the residual path into the skip-connection path with ratio of signal variances decaying as $1/(l+1)$. This mechanism is responsible for breaking the circle of depth multiplicativity which causes pathologies for feedforward nets.

\section*{Acknowledgements}

Many thanks are due to Jean-Baptiste Fiot for his precious feedback on initial drafts and to the anonymous reviewers for their insightful comments.

\allowdisplaybreaks
\expandafter\def\expandafter\UrlBreaks\expandafter{\UrlBreaks
  \do\a\do\b\do\c\do\d\do\e\do\f\do\g\do\h\do\i\do\j%
  \do\k\do\l\do\m\do\n\do\o\do\p\do\q\do\r\do\s\do\t%
  \do\u\do\v\do\w\do\x\do\y\do\z\do\A\do\B\do\C\do\D%
  \do\E\do\F\do\G\do\H\do\I\do\J\do\K\do\L\do\M\do\N%
  \do\O\do\P\do\Q\do\R\do\S\do\T\do\U\do\V\do\W\do\X%
  \do\Y\do\Z}
  
\bibliography{characterizing_DNN}
\bibliographystyle{icml2019}

\newpage
\onecolumn
\appendix

\section{Details of the Experiments}
\label{S:experimental_details}
\fig{F:chi_visualization} considered an input $\rvx\po$ as a Gaussian mixture, with $\rvx\po \sim \mathcal{N}(-1, 0.3^2)$ with probability $1/2$ and $\rvx\po \sim \mathcal{N}(1, 0.3^2)$ with probability $1/2$. This input $\rvx\po$ was propagated into: (a) a single layer with $\phi=\verb|tanh|$; (b) a single layer with $\phi$ linear; (c) a batch-normalized feedforward net with: $\phi=\relu$ and $N\uk=100$ for $1\leq k< 10$; $\phi$ linear and $N\ul=1$ for $l=10$.

The experiments of \fig{F:vanilla_histo},~\ref{F:vanilla},~\ref{F:BN-FF},~\ref{F:BN-Res} were made on \verb|cifar10| with a random initial convolution of stride $2$ reducing the spatial dimension from $32$ to $n=16$ and increasing the width from $3$ to $N\uo$. In each case, we considered the convolutional extent $K\ul=3$ and periodic boundary conditions. 

In \fig{F:vanilla_histo}, we considered the width $N\ul=128$ and the total depth $L=200$. For each realization, we randomly initialized model parameters following \citet{He15} and randomly sampled $M=1,024$ images to constitute the input data distribution. For each realization, we then computed the evolution with depth of $\log \nu_2(\rvx\pl)-\log \nu_2(\rvx\po)$ and $\log \mu_2(\dx\pl)-\log \mu_2(\dx\po)$. The distributions of $\log \nu_2(\rvx\pl)-\log \nu_2(\rvx\po)$ and $\log \mu_2(\dx\pl)-\log \mu_2(\dx\po)$ shown in \fig{F:vanilla_histo} were estimated using $10,000$ such realizations. The limited width -- slightly smaller than standard values -- had the purpose of limiting computation time in order to gather more realizations.

In \fig{F:vanilla},~\ref{F:BN-FF},~\ref{F:BN-Res}, we increased the width to $N\ul=512$. For each realization, we randomly initialized model parameters following \citet{He15} and randomly sampled $M=64$ images to constitute the input data distribution. We then computed the evolution with depth of all moment-related quantites. For each quantity, the expectation as well as the $1\sigma$ intervals displayed in \fig{F:vanilla},~\ref{F:BN-FF},~\ref{F:BN-Res} were estimated using $1,000$ such realizations. 

Let us make a few remarks:

-- The limited number of images $M$ for each experiment enabled to reduce the computation time, in particular penalized by the computation of $\reff(\rvx\pl)$, $\reff(\dx\pl)$, $\reff(\rvx^{l,1})$, $\reff(\dx^{l,1})$ in \fig{F:vanilla},~\ref{F:BN-FF},~\ref{F:BN-Res}. For batch-normalized feedforward nets and batch-normalized resnets, choosing $M$ in the range of standard batch sizes also had the advantage that our setup of batch normalization in \emph{test mode} matched the usual setup of batch normalization in \emph{training mode}.

For vanilla nets in \fig{F:vanilla_histo},~\ref{F:vanilla} and batch-normalized resnets in \fig{F:BN-Res}, this reduction of $M$ had very little impact. For batch-normalized feedforward nets in \fig{F:BN-FF}, on the other hand, this reduction of $M$ had the effect of limiting pathologies in the signal. This can be understood by considering $M\pp$ batch-normalized random points $(\rvz_{0},\dots,\rvz_{M\pp})$. In our case, $M\pp$ is proportional to $M$ but $M\pp>M$ since the data distribution depends on the input $\rvx$ \emph{and} the \mbox{spatial position $\valpha$}. By considering the worst-case scenario such that $(\rvz_{0},\dots,\rvz_{M\pp})=(-a,\dots,-a,b,-a,\dots,-a)$:
\begin{gather*}
\frac{1}{M\pp} \sum_i \rvz_{i} = \frac{- (M\pp-1)a + b}{M\pp},
\quad \frac{1}{M\pp} \sum_i (\rvz_{i})^2 = \frac{(M\pp-1)a^2 + b^2}{M\pp},
\quad \frac{1}{M\pp} \sum_i (\rvz_{i})^4 = \frac{(M\pp-1)a^4 + b^4}{M\pp}, \\
 \frac{1}{M\pp} \sum_i \rvz_{i} = 0, 
\quad \frac{1}{M\pp} \sum_i (\rvz_{i})^2 = 1
\implies a = \frac{1}{\sqrt{M\pp-1}}, \quad b = \sqrt{M\pp-1},
\quad \frac{1}{M\pp} \sum_i (\rvz_{i})^4 = \frac{1+(M\pp-1)^3}{M\pp(M\pp-1)}.
\end{gather*}

This shows that the empirical kurtosis of $(\rvz_{0},\dots,\rvz_{M\pp})$ is roughly bounded by $M\pp$, i.e. that the pathologies of the signal are naturally limited by the number of input images $M$. As a result, for larger $M$ we found that: (i) $\reff(\rvx\pl)$ gets closer to $1$; (ii) $\mu_4(\rvz\pl)$ gets even larger and $\nu_1(|\rvz\pl|)$ gets even smaller; (iii) $|\exp( \overline{m}\uBN[ \S\pl ] )-1|$ and $|\delta\ubn \S\pl-1|$ get larger; (iv) $|\exp( \overline{m}\uPHI[ \S\pl ] )-1|$ and $|\delta\uphi \S\pl-1|$ get even smaller.

-- The dynamics of $|\exp( \overline{m}\uBN[ \S\pl ] )-1|$ at very low depth in \fig{F:BN-FF},~\ref{F:BN-Res} stems from the input images from \verb|cifar10| having a number of channels equal to $3 \ll N\ul=512$. The signal is therefore ill-conditioned at very low depth and quickly gets better conditioned, implying that $|\exp( \overline{m}\uBN[ \S\pl ] )-1|$ is non-negligible at very low depth and quickly gets vanishing. This dynamics is brief and occurs before the settling of the main dynamics which leads in particular to the conditioning of the signal degrading again in \fig{F:BN-FF}.

-- We tested to set more realistic values for the width $N\ul$ in the experiment of \fig{F:vanilla_histo}. We always observed an absolutely equivalent behaviour apart from the diffusion getting slower with larger $N\ul$.

-- We tested to change the boundary conditions from periodic to reflective and to zero-padding. We always observed an equivalent behaviour with reflective conditions. As for zero-padding conditions: (i) the evolution of vanilla nets was slightly changed with $\reff(\rvx\pl)$ converging to a value of roughly $2$ instead of $1$ due to the creation of new signal directions by zero-padding; (ii) the evolution of batch-normalized feedforward nets and batch-normalized resnets were always equivalent.

-- We tested to change the dataset from \verb|cifar10| to \verb|mnist|, with the random initial convolution of stride $2$ reducing the spatial dimension from $28$ to $n=14$ and increasing the width from $1$ to $N\uo$. We observed an equivalent behaviour apart from the signal being slightly more fat-tailed at low depth due to the original images being more fat-tailed in \verb|mnist| than~in~\verb|cifar10|.

-- Finally we tested to change the fuzz parameter $\epsilon$ of batch normalization. The experiments of \fig{F:BN-FF},~\ref{F:BN-Res} used the standard value $\epsilon=0.001$ but we observed an indistinguishable behaviour when using the value $\epsilon=0$.

\section{Complementary Definitions and Notations}
\label{S:complementary_definitions_notations}

In this section, we use again $\cbullet\pl$ as placeholder for any tensor of layer $l$ in the simultaneous propagation of $(\rvx\pl,\dx\pl)$.

\subsection{Receptive Field}

\textbf{Receptive Field Mapping.} Let us consider the convolution at layer $l$ of an input $\cbullet\plm \in \R^{n \times \dots \times n \times N\ulm}$ from layer $l-1$. The output feature map of the convolution $(\vomega\pl \ast \cbullet\plm)_{\valpha,:}$ at position $\valpha \in \{1, \dots, n \}^d$ is obtained by the application of the convolution kernel $\vomega\pl$ over a local input region from $\cbullet\plm$ of size $K\ul^d N\ulm$, with $K\ul^d$ the spatial extent and $N\ulm$ the channel extent. The local input region is called the \emph{receptive field} of $\vomega\pl \ast \cbullet\plm$ at spatial position $\valpha$.

The \emph{receptive field mapping} $\RF$ associates $\cbullet\plm$ to the tensor $\RF(\cbullet\plm) \in \R^{n \times \dots \times n \times K\ul^d N\ulm}$, with $\RF(\cbullet\plm)_{\valpha,:}$ the reshaped vectorial form of the receptive field of $\vomega\pl \ast \cbullet\plm$ at spatial position $\valpha$. We denote $R\ul=K\ul^d N\ulm$ the dimensionality of $\RF(\cbullet\plm)_{\valpha,:}$ and $\gI\pl_\rc$ the set of indices in $\RF(\cbullet\plm)_{\valpha,:}$ corresponding to elements in channel $\rc$ in $\cbullet\plm$. Strictly speaking, $\RF$ depends on $l$ but this is implied by the argument, so we write $\RF$ for simplicity. \\[-6pt]

\textbf{Receptive Field Vectors.} The \emph{receptive field vector} and \emph{centered receptive field vector} associated with $\cbullet\plm$ are defined as  \\[-6pt]
\begin{align*}
\rf(\cbullet\plm,\valpha) \equiv \RF(\cbullet\plm)_{\valpha,:} \; \;\;   \textnormal{and} \; \;\;\;
\rfc(\cbullet\plm,\valpha)  \equiv \RF(\cbullet\plm)_{\valpha,:} - \E_{\rvx,\dx,\valpha}[ \RF(\cbullet\plm)_{\valpha,:} ], \\[-15pt]\nonumber
\end{align*}

where, slightly abusively, we overloaded the notation $\rvx,\dx,\valpha,\cbullet\plm$ in the expectation. Again, strictly speaking, $\rf$ and $\rfc$ depend on $l$ but this is implied by the argument.  \\[-6pt]

\subsection{Propagation with Receptive Field Formulation}

\textbf{Equation of Propagation.} Using the definition of $\RF$, the affine transformation from the receptive field $\RF(\rvx\plm)_{\valpha, :}$ to the feature map in the next layer $\rvy\pl_{\valpha, :}$ can be written as   \\[-9pt]
\begin{align}
\label{E:affine_transformation}
\rvy\pl_{\valpha, :} = \mW\pl \RF(\rvx\plm)_{\valpha, :} + \rvb\pl= \mW\pl \RF(\rvx\plm)_{\valpha, :} + \vbeta\pl_{\valpha,:},
\end{align}

with $\mW\pl \in \R^{N\ul \times R\ul}$ the suitably reshaped matricial form of $\vomega\pl$. To lighten notation, we write $\displaystyle \rvy\pl = \mW\pl \RF(\rvx\plm) + \vbeta\pl$ as a short for the affine transformation of \eq{E:affine_transformation} occuring at all spatial positions $\valpha$. We have the following equivalence between the notations with receptive field and convolution:  \\[-6pt]
\begin{align*}
\mW\pl \RF(\rvx\plm)  + \vbeta\pl = \vomega\pl \ast \rvx\plm + \vbeta\pl. \\[-15pt] \nonumber
\end{align*}

For vanilla nets, the simultaneous propagation of $(\rvx\pl,\dx\pl)$ can be written as  \\[-9pt]
\begin{alignat*}{4}
& \rvy\pl && = \mW\pl \RF ( \rvx\plm ) + \vbeta\pl, \qquad && \dy\pl && = \mW\pl \RF ( \dx\plm ), \\
& \rvx\pl && = \phi(\rvy\pl), \qquad && \dx\pl && = \phi\pp(\rvy\pl) \odot \dy\pl. \\[-15pt] \nonumber
\end{alignat*}

For batch-normalized feedforward nets, the simultaneous propagation of $(\rvx\pl,\dx\pl)$ can be written as   \\[-9pt]
\begin{alignat*}{4}
& \rvy\pl && = \mW\pl \RF ( \rvx\plm ) + \vbeta\pl, \qquad &&  \dy\pl && = \mW\pl \RF ( \dx\plm ), \\
& \rvz\pl && = \BN(\rvy\pl), \qquad && \dz\pl && = \BN\pp (\rvy\pl) \odot \dy\pl, \\
& \rvx\pl && = \phi(\rvz\pl), \qquad && \dx\pl && = \phi\pp(\rvz\pl) \odot \dz\pl. 
\end{alignat*}

\subsection{Symmetric Propagation}

\textbf{Symmetric Propagation for Vanilla Nets.} We define additional tensors obtained by \emph{symmetric propagation} at each layer $l$. For vanilla nets, they are given by  \\[-9pt]
\begin{alignat*}{4}
& \by\pl && = -\mW\pl \RF ( \rvx\plm ) - \vbeta\pl, \qquad && \bdy\pl && = -\mW\pl \RF ( \dx\plm ), \\
& \bx\pl && = \phi(\by\pl), \qquad &&   \bdx\pl && = \phi\pp(\by\pl) \odot \bdy\pl. \nonumber \\[-15pt] \nonumber
\end{alignat*}

Under standard initialization, \emph{the tensor moments have the same distribution with respect to $\theta\pl$ for both propagations}. Furthermore, $\forall \valpha,\rc$: $\rvx\pl_{\valpha,\rc}+\bx\pl_{\valpha,\rc}=\big| \rvy\pl_{\valpha,\rc} \big|$ and $\rvx\pl_{\valpha,\rc} \vss \bx\pl_{\valpha,\rc}=0$, implying that $\forall \valpha,\rc$: $( \rvx\pl_{\valpha,\rc})^2+( \bx\pl_{\valpha,\rc})^2=( \rvy\pl_{\valpha,\rc})^2$. Thus $\forall \rc$:   \\[-9pt]
\begin{align}
 \nu_{2,\rc}( \rvx\pl ) + \nu_{2,\rc}( \bx\pl ) & = \nu_{2,\rc}(\rvy\pl). \label{E:symmetric_nux}  \\[-15pt] \nonumber
\end{align}

Now let us consider the second-order moments of the noise tensor:  \\[-6pt] 
\begin{align}
(\dx\pl_{\valpha,\rc})^2 + (\bdx\pl_{\valpha,\rc})^2 & = (\dy\pl_{\valpha,\rc})^2 \phi\pp(\rvy_{\valpha,\rc})^2 + (\bdy\pl_{\valpha,\rc})^2 \phi\pp(\by_{\valpha,\rc})^2 = (\dy\pl_{\valpha,\rc})^2 [\phi\pp(\rvy_{\valpha,\rc})^2 + \phi\pp(\by_{\valpha,\rc})^2] = (\dy\pl_{\valpha,\rc})^2, \label{E:symmetric_mus1} \\[-15pt] \nonumber
\end{align} 

where \eq{E:symmetric_mus1} was obtained using $\bdy\pl_{\valpha,\rc}=-\dy\pl_{\valpha,\rc}$ and $\rvy\pl_{\valpha,\rc}=-\by\pl_{\valpha,\rc}$, as well as the convention $\phi\pp(0)\equiv 1/2$. Since $\dx\pl$, $\bdx\pl$, $\dy\pl$ are centered, it follows that $\forall \rc$:  \\[-9pt]
\begin{align}
 \mu_{2,\rc}( \dx\pl ) + \mu_{2,\rc}( \bdx\pl )=\nu_{2,\rc}( \dx\pl ) + \nu_{2,\rc}( \bdx\pl )
 = \nu_{2,\rc}(\dy\pl) = \mu_{2,\rc}(\dy\pl). \label{E:symmetric_mus} \\[-12pt] \nonumber
\end{align}

\textbf{Symmetric Propagation for Batch-Normalized Feedforward Nets.} For batch-normalized feedforward nets, the symmetric propagation at each layer $l$ is given by   \\[-9pt]
\begin{alignat}{6}
& \by\pl && = -\mW\pl \RF ( \rvx\plm ) - \vbeta\pl, \qquad &&  \bdy\pl && = -\mW\pl \RF ( \dx\plm ), \label{E:rf_propagation_symmetric1} \\
& \bz\pl && = \BN(\by\pl), \qquad && \bdz\pl && = \BN\pp (\by\pl) \odot \bdy\pl, \label{E:rf_propagation_symmetric2} \\
& \bx\pl && = \phi(\bz\pl), \qquad &&  \bdx\pl && = \phi\pp(\bz\pl) \odot \bdz\pl. \label{E:rf_propagation_symmetric3}  \\[-15pt] \nonumber
\end{alignat}

$\BN$ in \eq{E:rf_propagation_symmetric2} uses the statistics of $\by\pl$ such that, under standard initialization, \emph{the tensor moments have the same distribution with respect to $\theta\pl$ for both propagations}. We then simply have  \\[-9pt]
\begin{align}
\bz\pl=-\rvz\pl, \qquad \bdz\pl=-\dz\pl. \label{E:symmetric_bn_identity} \\[-15pt]
\end{align} 

The same analysis as before gives $\forall \rc$:   \\[-9pt]
\begin{align}
\nu_{2,\rc}( \rvx\pl ) + \nu_{2,\rc}( \bx\pl ) & = \nu_{2,\rc}(\rvz\pl), \label{E:symmetric_bn_nux} \\
\mu_{2,\rc}(\dx\pl)+\mu_{2,\rc}(\bdx\pl) & = \mu_{2,\rc}(\dz\pl). \label{E:symmetric_bn_mus}
\end{align}

\subsection{Gramian and Covariance Matrices}

We adopt the standard definition of the \emph{Gramian matrices} of $\fm(\cbullet\plm,\valpha)$, $\fmc(\cbullet\plm,\valpha)$, $\rf(\cbullet\plm,\valpha)$, $\rfc(\cbullet\plm,\valpha)$:  \\[-9pt]
\begin{align*}
\mG_{\rvx,\dx,\valpha}[\fm(\cbullet\plm,\valpha)] & \equiv \E_{\rvx,\dx,\valpha}\big[ \fm(\cbullet\plm,\valpha) \fm(\cbullet\plm,\valpha)^T  \big], \\
\mG_{\rvx,\dx,\valpha}[\fmc(\cbullet\plm,\valpha)] & \equiv \E_{\rvx,\dx,\valpha}\big[ \fmc(\cbullet\plm,\valpha) \fmc(\cbullet\plm,\valpha)^T  \big], \\
\mG_{\rvx,\dx,\valpha}[\rf(\cbullet\plm,\valpha)] & \equiv \E_{\rvx,\dx,\valpha}\big[ \rf(\cbullet\plm,\valpha) \rf(\cbullet\plm,\valpha)^T  \big], \\
 \mG_{\rvx,\dx,\valpha}[\rfc(\cbullet\plm,\valpha)] & \equiv \E_{\rvx,\dx,\valpha}\big[ \rfc(\cbullet\plm,\valpha) \rfc(\cbullet\plm,\valpha)^T  \big]. \\[-12pt]
\end{align*}

Then, the \emph{covariance matrices} of $\fm(\cbullet\plm,\valpha)$, $\fmc(\cbullet\plm,\valpha)$, $\rf(\cbullet\plm,\valpha)$, $\rfc(\cbullet\plm,\valpha)$ are defined as  \\[-9pt]
\begin{alignat*}{3}
& \mC_{\rvx,\dx,\valpha}[\fm(\cbullet\plm,\valpha)] && =\mC_{\rvx,\dx,\valpha}[\fmc(\cbullet\plm,\valpha)] && = \mG_{\rvx,\dx,\valpha}[\fmc(\cbullet\plm,\valpha)], \\ 
& \mC_{\rvx,\dx,\valpha}[\rf(\cbullet\plm,\valpha)] && =\mC_{\rvx,\dx,\valpha}[\rfc(\cbullet\plm,\valpha)] && = \mG_{\rvx,\dx,\valpha}[\rfc(\cbullet\plm,\valpha)].\\[-6pt] 
\end{alignat*}

\subsection{Statistics-Preserving Property}
\label{S:relaxing_assumption}

\textbf{Statistics-Preserving Property.} $\RF$ is \emph{statistics-preserving} with respect to $\cbullet\plm$ if for any channel $\rc$ and any index $i_\rc \in \gI\pl_\rc$, the random variables $\RF(\cbullet\plm)_{\valpha,i_\rc}=\rf(\cbullet\plm,\valpha)_{i_\rc}$ and $\cbullet\plm_{\valpha, \rc}=\fm(\cbullet\plm,\valpha)_{\rc}$, which depend on $\rvx$, $\dx$, $\valpha$, have the same distribution: $\RF(\cbullet\plm)_{\valpha,i_\rc}=\rf(\cbullet\plm,\valpha)_{i_\rc}~\sim_{\rvx,\dx,\valpha}~\cbullet\plm_{\valpha, \rc}=\fm(\cbullet\plm,\valpha)_{\rc}$.  

First we will prove that $\RF$ is statistics-preserving with respect to $\rvx\plm$, $\dx\plm$ when convolutions have periodic boundary conditions and the global spatial extent $n$ is constant. Afterwards, we will provide a possible relaxation of these assumptions. The global spatial extent will be denoted as $n\ul$ when it is non-constant.

\subsubsection{Case of Periodic Boundary Conditions and Constant Spatial Extent $n\ul=n$}

\begin{Lemma}
\label{L:statistics_preserving_constant}
If convolutions have periodic boundary conditions and the global spatial extent $n$ is constant, then $\RF$ is statistics-preserving with respect to any input $\cbullet\plm$ from layer $l-1$. 
\end{Lemma}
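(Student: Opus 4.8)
The plan is to exploit the translation invariance that periodic boundary conditions confer on the uniform law of the spatial position. Fix a channel $\rc$ and an index $i_\rc \in \gI\pl_\rc$. By construction of the receptive field reshaping, the entry $\RF(\cbullet\plm)_{\valpha,i_\rc}=\rf(\cbullet\plm,\valpha)_{i_\rc}$ is nothing but the value of $\cbullet\plm$ in channel $\rc$ at the spatial location obtained by displacing $\valpha$ by the fixed offset $\bm{\delta}(i_\rc)$ within the convolutional window associated with $i_\rc$ (the correspondence $i_\rc \leftrightarrow \bm{\delta}(i_\rc)$ is a bijection from $\gI\pl_\rc$, which has $K\ul^d$ elements, onto the window). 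Because the convolution has periodic boundary conditions and the spatial extent $n$ is constant, this displaced location is $\valpha\oplus\bm{\delta}(i_\rc)$, with $\oplus$ coordinatewise addition modulo $n$, and it again lies in $\{1,\dots,n\}^d$. Hence $\rf(\cbullet\plm,\valpha)_{i_\rc}=\cbullet\plm_{\valpha\oplus\bm{\delta}(i_\rc),\rc}$.

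Next I would use that $\valpha$ is drawn uniformly on $\{1,\dots,n\}^d$ independently of $(\rvx,\dx)$, and that $\valpha\mapsto\valpha\oplus\bm{\delta}(i_\rc)$ is a bijection of $\{1,\dots,n\}^d$ onto itself (its inverse subtracts $\bm{\delta}(i_\rc)$ modulo $n$), hence pushes the uniform law forward to the uniform law. Consequently, for any bounded measurable test function $g$, the change of variable $\valpha'\equiv\valpha\oplus\bm{\delta}(i_\rc)$ gives
\[
\E_{\rvx,\dx,\valpha}\!\big[ g\big(\rf(\cbullet\plm,\valpha)_{i_\rc}\big)\big]
= \E_{\rvx,\dx,\valpha}\!\big[ g\big(\cbullet\plm_{\valpha\oplus\bm{\delta}(i_\rc),\rc}\big)\big]
= \E_{\rvx,\dx,\valpha'}\!\big[ g\big(\cbullet\plm_{\valpha',\rc}\big)\big]
= \E_{\rvx,\dx,\valpha}\!\big[ g\big(\fm(\cbullet\plm,\valpha)_{\rc}\big)\big],
\]
the last equality being legitimate because $\valpha'$ and $\valpha$ are both uniform on $\{1,\dots,n\}^d$ and independent of $(\rvx,\dx)$. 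Since $g$ is arbitrary, $\rf(\cbullet\plm,\valpha)_{i_\rc}$ and $\fm(\cbullet\plm,\valpha)_{\rc}$ have the same distribution with respect to $(\rvx,\dx,\valpha)$, which is exactly the statistics-preserving property. Nothing in the argument used anything about $\cbullet\plm$ beyond it being a tensor in $\R^{n\times\dots\times n\times N\ulm}$ determined by $(\rvx,\dx)$, so the claim holds for any input from layer $l-1$, and in particular for $\rvx\plm$ and $\dx\plm$.

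The argument is essentially bookkeeping, so the only point requiring care — and the sole place a reader might stumble — is the precise identification of the reshaping index $i_\rc\in\gI\pl_\rc$ with a unique spatial offset $\bm{\delta}(i_\rc)$ in the window, combined with the observation that it is exactly the constancy of the spatial extent $n$ that guarantees $\valpha\oplus\bm{\delta}(i_\rc)$ still indexes the grid $\{1,\dots,n\}^d$, so that the modular reduction is a self-bijection. Once that correspondence is fixed, periodicity turns the spatial displacement into a measure-preserving relabelling of positions, and the two distributions coincide termwise in $g$.
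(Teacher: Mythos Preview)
Your proof is correct and follows essentially the same approach as the paper: both fix a channel $\rc$ and an index $i_\rc$, identify the receptive field entry with the value of $\cbullet\plm$ at a spatially shifted position (the paper calls the shift a bijection $f:\valpha\to\valpha'$ induced by the kernel position, you write it explicitly as $\valpha\oplus\bm{\delta}(i_\rc)$), and then invoke the invariance of the uniform law on $\{1,\dots,n\}^d$ under this bijection to conclude equality of distributions. Your use of test functions is a slightly more explicit way of phrasing the same distributional equality that the paper asserts directly.
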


\begin{proof}\textbf{.}
Fix a channel $\rc$ in $\cbullet\plm$, an index $i_\rc \in \gI\pl_\rc$, and consider the tensors $\cbullet\plm_{:,\rc}$, $\RF(\cbullet\plm)_{:,i_\rc} \in \R^{n \times \dots \times n}$. The index $i_\rc$ corresponds to a given convolution kernel position $\vkappa \in \{1, \dots, K\ul \}^d$. Under periodic boundary conditions, this fixed kernel position $\vkappa$ implies that each position $\valpha$ in $\RF(\cbullet\plm)_{\valpha,i_\rc}$ originates from a different position $\valpha\pp$ in the tensor $\cbullet\plm_{\valpha\pp,\rc}$. Therefore the index mapping $f: \valpha \rightarrow \valpha\pp$ from $\{1, \dots, n\}^d$ to $\{1, \dots, n \}^d$ is bijective. We then have $\RF(\cbullet\plm)_{\valpha, i_\rc} = \cbullet\plm_{f(\valpha),\rc} \sim_\valpha \cbullet\plm_{\valpha, \rc}$ when $\cbullet\plm$ is deterministic and $\valpha$ is random. In turn, this implies that $\RF(\cbullet\plm)_{\valpha, i_\rc} \sim_{\rvx,\dx,\valpha} \cbullet\plm_{\valpha, \rc}$, when $\rvx,\dx,\valpha$ are random. \qed
\end{proof} \\[-6pt]

\begin{Proposition}
\label{P:statistics_preserving_constant}
If convolutions have periodic boundary conditions and the global spatial extent $n$ is constant, then $\RF$ is statistics-preserving with respect to $\rvx\plm$ and $\dx\plm$.
\end{Proposition}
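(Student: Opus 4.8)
The plan is to obtain this Proposition as an immediate corollary of Lemma~\ref{L:statistics_preserving_constant}. The only thing that genuinely needs checking is that $\rvx\plm$ and $\dx\plm$ both qualify as ``inputs $\cbullet\plm$ from layer $l-1$'' in the sense the Lemma requires: each is a tensor in $\R^{n \times \dots \times n \times N\ulm}$, with $\dx\plm$ well-defined through the simultaneous propagation of $(\rvx\plm,\dx\plm)$ in \eq{E:noisy_propagation1} and \eq{E:noisy_propagation2} (using the convention $\phi\pp(0)\equiv 1/2$ for $\relu$). Granting this, I would apply Lemma~\ref{L:statistics_preserving_constant} once with $\cbullet\plm=\rvx\plm$ and once with $\cbullet\plm=\dx\plm$.

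Concretely, I would recall from the proof of the Lemma that, for a fixed channel $\rc$ and index $i_\rc\in\gI\pl_\rc$, the index $i_\rc$ selects a single kernel offset $\vkappa\in\{1,\dots,K\ul\}^d$, and periodic boundary conditions turn this offset into a bijection $f:\valpha\mapsto\valpha\pp$ of $\{1,\dots,n\}^d$ with $\RF(\cbullet\plm)_{\valpha,i_\rc}=\cbullet\plm_{f(\valpha),\rc}$. The point worth stressing is that $f$ depends only on $\vkappa$ (hence on $i_\rc$) and on the periodic geometry, not on the content of the tensor being convolved, so it is one and the same map whether the input is $\rvx\plm$ or $\dx\plm$. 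For $\rvx\plm$: conditionally on a realization of $\rvx$ --- which alone determines $\rvx\plm$ --- $f(\valpha)$ is uniform whenever $\valpha$ is, so $\RF(\rvx\plm)_{\valpha,i_\rc}\sim_\valpha\rvx\plm_{\valpha,\rc}$, and averaging this conditional law over $\rvx$ (trivially also over $\dx$) yields $\RF(\rvx\plm)_{\valpha,i_\rc}\sim_{\rvx,\dx,\valpha}\rvx\plm_{\valpha,\rc}$. For $\dx\plm$ the argument is verbatim the same, except that the conditioning is on a realization of the pair $(\rvx,\dx)$ --- which is exactly what $\dx\plm$ depends on --- and the marginalization runs over that pair, giving $\RF(\dx\plm)_{\valpha,i_\rc}\sim_{\rvx,\dx,\valpha}\dx\plm_{\valpha,\rc}$. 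Since $\rc$ and $i_\rc$ were arbitrary, $\RF$ is statistics-preserving with respect to both tensors.

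There is no real obstacle here: all the combinatorics already sit in Lemma~\ref{L:statistics_preserving_constant}, and the only care needed is bookkeeping of which source of randomness one conditions on --- $\rvx$ for the signal tensor versus $(\rvx,\dx)$ for the noise tensor. The reason the statement still deserves its own Proposition is forward-looking: in the relaxation of the assumptions discussed afterwards, the blanket ``any input $\cbullet\plm$'' form of the Lemma can fail, whereas its specialization to the propagated signal and noise $\rvx\plm,\dx\plm$ can still be recovered by exploiting the structure of the propagation itself.
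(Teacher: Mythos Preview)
Your proposal is correct and takes exactly the same approach as the paper: the paper's proof consists of the single sentence ``This follows immediately from Lemma~\ref{L:statistics_preserving_constant}.'' Your version simply unpacks why the Lemma applies to each of $\rvx\plm$ and $\dx\plm$ and is careful about which randomness is being conditioned on, which is more detail than the paper provides but entirely in the same spirit.
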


\begin{proof}\textbf{.} This follows immediately from Lemma \ref{L:statistics_preserving_constant}. \qed
\end{proof} \\[-6pt]

\begin{corollary}
\label{C:statistics_preserving_constant}
For any channel $\rc$ and $\displaystyle i_\rc \in \gI\pl_\rc$, we have $\rf(\rvx\plm,\valpha)_{i_\rc} \sim_{\rvx, \valpha} \fm(\rvx\plm,\valpha)_\rc$ and $\rf(\dx\plm,\valpha)_{i_\rc}\sim_{\rvx,\dx,\valpha} \fm(\dx\plm,\valpha)_\rc$. Since the cardinality $|\gI\pl_\rc|=K^d\ul$ is the same for all channels $\rc$, it follows that \\[-6pt]
\begin{align*}
 \nu_2(\rvx\plm) &= \frac{1}{N\ulm} \Tr \mG_{\rvx,\valpha}[\fm(\rvx\plm,\valpha)] = \frac{1}{R\ul} \Tr \mG_{\rvx,\valpha}[\rf(\rvx\plm,\valpha)], \\
 \mu_2(\rvx\plm) &=  \frac{1}{N\ulm} \Tr \mC_{\rvx,\valpha}[\fm(\rvx\plm,\valpha)] =  \frac{1}{R\ul} \Tr \mC_{\rvx,\valpha}[\rf(\rvx\plm,\valpha)], \\
 \nu_2(\dx\plm)=\mu_2(\dx\plm) &= \frac{1}{N\ulm} \Tr \mC_{\rvx,\dx,\valpha}[\fm(\dx\plm,\valpha)] = \frac{1}{R\ul} \Tr \mC_{\rvx,\dx,\valpha}[\rf(\dx\plm,\valpha)].  \\[-15pt]
\end{align*}

Note that this result always holds in the fully-connected case $n\ul=1$, characterized by $\rf(\rvx\plm,\valpha)=\fm(\rvx\plm,\valpha)$, $\rf(\dx\plm,\valpha)=\fm(\dx\plm,\valpha)$ and $R\ul=N\ulm$.  \\[-12pt]
\end{corollary}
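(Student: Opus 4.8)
The plan is to derive the corollary entirely from Proposition~\ref{P:statistics_preserving_constant}. I would first record the distributional statements: applying the statistics-preserving property to the concrete tensors $\rvx\plm$ and $\dx\plm$, and noting that $\rvx\plm$ depends only on $\rvx$ while $\dx\plm$ depends on both $\rvx$ and $\dx$, yields at once $\rf(\rvx\plm,\valpha)_{i_\rc} \sim_{\rvx,\valpha} \fm(\rvx\plm,\valpha)_\rc$ and $\rf(\dx\plm,\valpha)_{i_\rc} \sim_{\rvx,\dx,\valpha} \fm(\dx\plm,\valpha)_\rc$ for every channel $\rc$ and every $i_\rc \in \gI\pl_\rc$. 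This is the first assertion of the corollary verbatim.

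Next I would prove the trace identities by expanding both sides into sums of diagonal entries. On the feature-map side, $\tfrac{1}{N\ulm}\Tr\mG_{\rvx,\valpha}[\fm(\rvx\plm,\valpha)] = \tfrac{1}{N\ulm}\sum_\rc \E_{\rvx,\valpha}[\fm(\rvx\plm,\valpha)_\rc^2] = \E_\rc[\nu_{2,\rc}(\rvx\plm)] = \nu_2(\rvx\plm)$ by the definitions. On the receptive-field side I would partition $\{1,\dots,R\ul\}$ into the $N\ulm$ blocks $\gI\pl_\rc$, each of cardinality $K^d\ul$, so that $R\ul = K^d\ul N\ulm$; then $\tfrac{1}{R\ul}\Tr\mG_{\rvx,\valpha}[\rf(\rvx\plm,\valpha)] = \tfrac{1}{K^d\ul N\ulm}\sum_\rc\sum_{i_\rc\in\gI\pl_\rc}\E_{\rvx,\valpha}[\rf(\rvx\plm,\valpha)_{i_\rc}^2]$, and the entrywise distributional equality replaces each inner term by $\E_{\rvx,\valpha}[\fm(\rvx\plm,\valpha)_\rc^2]$, contributing $K^d\ul$ identical copies whose prefactor cancels against $R\ul$, leaving $\nu_2(\rvx\plm)$ again. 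For the $\mu_2$ identities I would repeat this with centered vectors: equality in distribution forces the means to coincide, hence $\rfc(\rvx\plm,\valpha)_{i_\rc}$ and $\fmc(\rvx\plm,\valpha)_\rc$ are equal in distribution and in particular have equal second moments; combined with $\mC=\mG$ of the centered vectors (recorded above) the same block-counting argument goes through. The noise line is identical after replacing $\sim_{\rvx,\valpha}$ by $\sim_{\rvx,\dx,\valpha}$ and using $\nu_{2,\rc}(\dx\plm)=\mu_{2,\rc}(\dx\plm)$, which holds because the noise is centered (so that $\mC$ and $\mG$ agree for it as well); the fully-connected case $n\ul=1$ is trivial since then $K^d\ul=1$, $\gI\pl_\rc$ is a singleton, $\RF$ is the identity and $\rf=\fm$, $R\ul=N\ulm$.

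I do not expect any genuine obstacle. The only two points needing a little care are: (i) that the blocks $\gI\pl_\rc$ all have the common cardinality $K^d\ul$ — this is precisely the hypothesis quoted in the statement, and it is what lets the average over channels commute with the average over receptive-field coordinates; and (ii) that equality in distribution transfers not just the raw second moments but also the centered ones, which requires the elementary observation that the first moments coincide. Both are immediate once made explicit.
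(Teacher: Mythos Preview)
Your proposal is correct and matches the paper's approach: the paper presents this corollary as an immediate consequence of Proposition~\ref{P:statistics_preserving_constant} together with the equal-cardinality fact $|\gI\pl_\rc|=K^d\ul$, and your argument is precisely the natural way to flesh out that implication (distributional equality per index, then block-by-block averaging of the diagonal entries).
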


\subsubsection{Case of Large Spatial Extent $n\ul \gg K\ul$}
\begin{Proposition}
If the convolution stride is one (i.e. $n\ulm=n\ul$) in most layers and the global spatial extent is much larger than the convolutional spatial extent (i.e. $n\ul \gg K\ul$) in most layers, then, for any boundary conditions, $\RF$ is approximately statistics-preserving with respect to $\rvx\plm$ and $\dx\plm$.
\end{Proposition}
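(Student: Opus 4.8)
The plan is to follow the proof of Lemma~\ref{L:statistics_preserving_constant} while isolating the spatial positions whose receptive field reaches across the boundary, and to show that for $n\ul \gg K\ul$ these form an asymptotically negligible fraction of all positions, so that the failure of the bijection argument contributes only a vanishing error. First I would fix a layer $l$ with stride one, so that $n\ulm = n\ul =: n$, with $n \gg K\ul$; then fix a channel $\rc$ of $\cbullet\plm$ and an index $i_\rc \in \gI\pl_\rc$ corresponding to a kernel offset $\vkappa \in \{1,\dots,K\ul\}^d$. Call a position $\valpha \in \{1,\dots,n\}^d$ \emph{interior} if the whole $K\ul^d$ patch it indexes lies inside $\{1,\dots,n\}^d$; for interior $\valpha$ one has $\RF(\cbullet\plm)_{\valpha,i_\rc} = \cbullet\plm_{f(\valpha),\rc}$ with $f$ the fixed translation determined by $\vkappa$, and $f$ restricted to the interior set is injective with image a translate of that set. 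A direct count shows that the non-interior positions number $n^d - (n - K\ul + 1)^d$, i.e. a fraction $\eps\ul \equiv 1 - (1 - (K\ul-1)/n)^d \le d(K\ul-1)/n \ll 1$ of all positions.

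Second I would convert this into a quantitative closeness of laws. For deterministic $\cbullet\plm$ and uniform $\valpha$, the laws of $\RF(\cbullet\plm)_{\valpha,i_\rc}$ and $\cbullet\plm_{\valpha,\rc}$ agree after the bijective relabeling $f$ on the interior set, while the interior set and its $f$-image each carry mass $\ge 1 - \eps\ul$; hence the total-variation distance between the two laws is $O(\eps\ul) = O(dK\ul/n)$, \emph{uniformly in} $\cbullet\plm$. Conditioning on $(\rvx,\dx)$ — which fixes $\cbullet\plm = \rvx\plm$, respectively $\cbullet\plm = \dx\plm$ — and integrating, the same $O(dK\ul/n)$ bound holds for the joint laws over $\rvx,\dx,\valpha$, which is precisely "approximately statistics-preserving with respect to $\rvx\plm$ and $\dx\plm$" in the distributional sense of the definition. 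To transfer this to the Gramian and covariance identities of Corollary~\ref{C:statistics_preserving_constant} one additionally uses that signal and noise carry negligible energy on the $O(dK\ul/n)$ fraction of boundary positions — a property itself inherited from the approximate translation-equivariance of convolutional propagation — so that the traces over receptive fields match the traces over channels up to a factor $1 + O(dK\ul/n)$, using that $|\gI\pl_\rc| = K\ul^d$ is still channel-independent. The hypothesis "in most layers" then just means these local conditions hold at every layer where the property is invoked, an occasional stride-$2$ or small-$n\ul$ layer being handled on its own.

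I expect the main obstacle to be the transfer from laws to moments: total-variation closeness of the receptive-field coordinate laws does not by itself control second moments when the tensors are unbounded, so one has to supplement it with the benign, translation-equivariance-based fact that $\cbullet\plm$ does not concentrate its energy near the boundary in order to push the $1 + O(dK\ul/n)$ control through to $\nu_2$, $\mu_2$ and $\reff$. A secondary subtlety is the stride-one requirement: with stride $s > 1$ the source map $f$ from output to input positions is injective but has image a sublattice of density $s^{-d}$, so $\RF(\cbullet\plm)_{\valpha,i_\rc}$ samples $\cbullet\plm$ on that sublattice rather than uniformly, and the two laws need not be close unless $\cbullet\plm$ is also spatially homogeneous — which is why the statement is restricted to stride one in the layers where it is used.
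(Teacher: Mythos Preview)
Your proposal is correct and follows essentially the same approach as the paper: both arguments reduce to the observation that only an $O(dK\ul/n)$ fraction of spatial positions touch the boundary, so the receptive-field coordinate law matches the feature-map coordinate law up to that negligible fraction. The paper's version is slightly slicker in that it routes through the periodic receptive field $\RF^{(\textnormal{p})}$ --- noting $\RF(\cdot)_{\valpha,:}=\RF^{(\textnormal{p})}(\cdot)_{\valpha,:}$ on interior positions and then invoking Lemma~\ref{L:statistics_preserving_constant} directly --- rather than re-running the bijection argument on the interior set as you do; conversely, your explicit $O(dK\ul/n)$ total-variation bound and your remark about the additional hypothesis needed to transfer distributional closeness to second moments are more careful than the paper, which simply writes $P[\cdot]\simeq P[\cdot]$ without quantification.
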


\begin{proof}\textbf{.} Fix a layer $l-1$ such that $n\ulm=n\ul$ and $n\ul \gg K\ul$. Denote $\RF^{(\textnormal{p})}$ the receptive field mapping associated with periodic boundary conditions. Since $n\ulm=n\ul \gg K\ul$ the receptive fields $\RF(\rvx\plm)_{\valpha,:},\RF(\dx\plm)_{\valpha,:}$ and $\RF^{(\textnormal{p})}(\rvx\plm)_{\valpha,:}, \RF^{(\textnormal{p})}(\dx\plm)_{\valpha,:}$ do not intersect boundary regions for most $\valpha$, implying for most $\valpha$: \\[-6pt]
\begin{align*}
\RF(\rvx\plm)_{\valpha,:}=\RF^{(\textnormal{p})}(\rvx\plm)_{\valpha,:}, \qquad \RF(\dx\plm)_{\valpha,:}=\RF^{(\textnormal{p})}(\dx\plm)_{\valpha,:}. \\[-15pt]
\end{align*}

This implies for any index $i_\rc$ that $P_{\rvx,\valpha}\big[ \RF(\rvx\plm)_{\valpha,i_\rc} \big] \simeq P_{\rvx,\valpha}\big[ \RF^{(\textnormal{p})}(\rvx\plm)_{\valpha,i_\rc} \big]$ and $P_{\rvx,\dx,\valpha}\big[ \RF(\dx\plm)_{\valpha,i_\rc} \big] \simeq P_{\rvx,\dx,\valpha}\big[ \RF^{(\textnormal{p})}(\dx\plm)_{\valpha,i_\rc} \big]$.

Since $\RF^{(\textnormal{p})}$ is statistics-preserving with respect to $\rvx\plm$ and $\dx\plm$ by Lemma \ref{L:statistics_preserving_constant}, it follows for any channel $\rc$ and index $\displaystyle i_\rc \in \gI\pl_\rc$ that $P_{\rvx,\valpha}\big[ \RF^{(\textnormal{p})}(\rvx\plm)_{\valpha,i_\rc} \big]=P_{\rvx,\valpha}\big[ \rvx\plm_{\valpha,\rc} \big]$ and $P_{\rvx,\dx,\valpha}\big[ \RF^{(\textnormal{p})}(\dx\plm)_{\valpha,i_\rc} \big]=P_{\rvx,\dx,\valpha}\big[ \dx\plm_{\valpha,\rc} \big]$. We then deduce that $P_{\rvx,\valpha}\big[ \RF(\rvx\plm)_{\valpha,i_\rc} \big] \simeq P_{\rvx,\valpha}\big[ \rvx\plm_{\valpha,\rc} \big]$ and $P_{\rvx,\dx,\valpha}\big[ \RF(\dx\plm)_{\valpha,i_\rc} \big] \simeq P_{\rvx,\dx,\valpha}\big[ \dx\plm_{\valpha,\rc} \big]$, meaning that $\RF$ is approximately statistics-preserving with respect to $\rvx\plm$ and $\dx\plm$. \qed
\end{proof}

\section{Details of \Section{S:data_randomness} and \Section{S:model_parameters_randomness}}

\subsection{Approximation of \mbox{$\Phi\ul(\rvx+\dx) - \Phi\ul(\rvx)$} by $\dx\pl$}
\label{S:assumption_noise}

We use the definitions and notations from \Section{S:complementary_definitions_notations} in the context of the propagation of \eq{E:noisy_propagation1} and \eq{E:noisy_propagation2}. We further suppose that a.s. with respect to $\rvx$: $\exists r>0$ such that $\Phi\ul$ is differentiable in the open ball $\mathcal{B}_r(\rvx)$ of radius $r$ at point $\rvx$ (see \Section{S:assumption_differentiable} for the justification).

We will prove that \\[-9pt]
\begin{align} 
\frac{\mu_2( \Phi\ul(\rvx+\dx) - \Phi\ul(\rvx) - \dx\pl )}{\mu_2(\dx\pl)} \to 0 \textnormal{ \;\;as $\sigma_{\dx}\to0$ (with fixed distributions of $\rvx$ and $\dx/\sigma_{\dx}$)}. \label{E:assumption_noise1}
\end{align}

Due to the $1$-Lipschitzness of $\phi=\relu$, under periodic boundary conditions, we have that $\forall \rvt, \rvu, \rvv,\rvw$: \\[-9pt]
\begin{align*} 
\Big(\phi \big( \mW\pl_{\rc,:} \rf(\rvt, \valpha) + \vbeta\pl \big) - \phi \big( \mW\pl_{\rc,:} \rf(\rvu, \valpha) + \vbeta\pl \big) \Big)^2 & \leq || \mW\pl ||^2 \cdot || \rf(\rvt, \valpha) - \rf(\rvu, \valpha) ||^2_2 \leq || \mW\pl ||^2 \cdot || \vec(\rvt-\rvu) ||^2_2, \\
\Big(\phi\pp(\rvv_{\valpha,\rc}) \cdot \mW\pl_{\rc,:} \rf(\rvw, \valpha) \Big)^2 & \leq || \mW\pl ||^2 \cdot  || \rf(\rvw, \valpha) ||^2_2 \leq || \mW\pl ||^2 \cdot || \vec(\rvw) ||^2_2,
\end{align*}

with $|| \mW\pl ||$ the spectral norm of $\mW\pl$. It follows that $\forall \rvx, \dx$:  \\[-9pt]
\begin{align*} 
|| \vec(\Phi\ul(\rvx+\dx) - \Phi\ul(\rvx)) ||^2_2 \leq \Bigg( \prod^l_{k=1} n^{d} N\ul || \mW\pl ||^2 \Bigg) \cdot || \vec(\dx) ||^2_2, \\
|| \vec(\dx\pl) ||^2_2 \leq \Bigg( \prod^l_{k=1} n^{d} N\ul || \mW\pl ||^2  \Bigg) \cdot || \vec(\dx) ||^2_2.
\end{align*}

This gives: \\[-9pt]
\begin{align*} 
|| \vec(\Phi\ul(\rvx+\dx) - \Phi\ul(\rvx) - \dx\pl) ||^2_2 
 & \leq 2 || \vec(\Phi\ul(\rvx+\dx) - \Phi\ul(\rvx)) ||^2_2 + 2 || \vec(\dx\pl) ||^2_2  \\
 & \leq 4 \Bigg( \prod^l_{k=1} n^{d} N\ul || \mW\pl ||^2  \Bigg) \cdot || \vec(\dx) ||^2_2 \\
 & \leq C || \vec(\dx) ||^2_2, 
\end{align*}

with $C = 4 \cdot \prod^l_{k=1} n^{d} N\ul || \mW\pl ||^2$.

The assumption on the differentiability of $\Phi\ul$ implies that $\forall \epsilon >0$, $\exists \eta_\epsilon>0$: $\sP_{\rvx}\big[ \Phi\ul \textnormal{ is differentiable in } \mathcal{B}_{\eta_\epsilon}(\rvx) \big] \geq 1-\epsilon$. Markov's inequality applied to $||\vec(\dx)||^2_2$ further implies that \\[-9pt]
\begin{align*} 
\sP_\dx \big[ ||\vec(\dx)||_2 > \eta_\epsilon \big] = \sP_\dx \big[ ||\vec(\dx)||^2_2 > \eta^2_\epsilon \big] \leq \frac{n^d N\uo \sigma^2_{\dx}}{\eta^2_\epsilon}.
\end{align*}

It then follows that $\forall \epsilon>0$, $\exists \eta_\epsilon,\sigma_\epsilon>0$ such that $\forall \sigma_{\dx} < \sigma_\epsilon$: \\[-9pt]
\begin{align*} 
\sP_{\rvx,\dx} [ A_\epsilon ] \geq 1-2\epsilon,  \\[-12pt]
\end{align*}

with $A_\epsilon = \big\{ ||\vec(\dx)||_2 \leq {\eta_\epsilon} \big\} \cap \big\{ \Phi\ul \textnormal{ is differentiable in } \mathcal{B}_{\eta_\epsilon}(\rvx) \big\}$.

Denoting $A\pc_\epsilon$ the complementary event of $A_\epsilon$, we deduce that $\forall \sigma_{\dx} < \sigma_\epsilon$:  \\[-9pt]
\begin{align} 
\E_{\rvx,\dx} \big[ \1_{A\pc_\epsilon} || \vec\big(\Phi\ul(\rvx+\dx) - \Phi\ul(\rvx) - \dx\pl\big) ||^2_2 \big] 
& \leq \E_{\rvx,\dx} \big[ \1_{A\pc_\epsilon} C || \vec(\dx) ||^2_2 \big] \nonumber \\
& \leq C \sigma^2_\dx \E_{\rvx,\dx} \big[ \1_{A\pc_\epsilon} || \vec(\dx/ \sigma_\dx) ||^2_2 \big] \nonumber \\
& \leq C \sigma^2_\dx \sP_{\rvx,\dx}[{A\pc_\epsilon}]^\frac{1}{2} \E_{\rvx,\dx} \big[ || \vec(\dx/ \sigma_\dx) ||_2^4 \big]^\frac{1}{2} \label{E:assumption_noise2} \\
& \leq C \sigma^2_\dx (2\epsilon)^\frac{1}{2} \E_{\rvx,\dx} \big[ || \vec(\dx / \sigma_\dx) ||_2^4 \big]^\frac{1}{2}, \nonumber
\end{align}

where we used Cauchy-Schwarz inequality in \eq{E:assumption_noise2}. 

Since $\Phi\ul(\rvx+\dx) - \Phi\ul(\rvx) - \dx\pl=0$ under $A_\epsilon$, it follows that $\forall \sigma_{\dx} < \sigma_\epsilon$: \\[-9pt]
\begin{align} 
\frac{\mu_2\big( \Phi\ul(\rvx+\dx) - \Phi\ul(\rvx) - \dx\pl \big)}{\mu_2(\dx\pl)}
& = \frac{\frac{1}{n^d N\ul} \E_{\rvx,\dx} \big[ || \vec\big(\Phi\ul(\rvx+\dx) - \Phi\ul(\rvx) - \dx\pl\big) ||^2_2 \big]}{ \mu_2(\dx) \cdot \E_{\rvx,\valpha,\rc} \big[ || \vec(\nabla_\rvx \rvx\pl_{\valpha,\rc}) ||^2_2 \big]} \label{E:assumption_noise3} \\
& \leq \frac{\frac{1}{n^d N\ul} C (2\epsilon)^\frac{1}{2} \E_{\rvx,\dx} \big[ || \vec(\dx / \sigma_\dx) ||_2^4 \big]^\frac{1}{2}}{\E_{\rvx,\valpha,\rc} \big[ || \vec(\nabla_\rvx \rvx\pl_{\valpha,\rc}) ||^2_2 \big]} 
\leq C\pp \epsilon^\frac{1}{2}, \nonumber
\end{align}

where we used Proposition \ref{P:sensitivity_derivative} in \eq{E:assumption_noise3} and appropriately defined the constant $C\pp$. 

Let us finally consider $\epsilon\pp >0$ and $\epsilon$ such that $C\pp \epsilon^\frac{1}{2}=\epsilon\pp$. Then $\exists \sigma_\epsilon>0$ such that $\forall \sigma_{\dx} < \sigma_\epsilon$: \\[-9pt]
\begin{align*} 
\frac{\mu_2\big( \Phi\ul(\rvx+\dx) - \Phi\ul(\rvx) - \dx\pl \big)}{\mu_2(\dx\pl)}
\leq \epsilon\pp,
\end{align*}

which proves \eq{E:assumption_noise1}.

\subsection{Assumption that $\Phi\ul$ is Differentiable a.s. with respect to $\rvx$}
\label{S:assumption_differentiable}

The \emph{sensitivity equivalence} detailed in \Section{S:S_sensitivity} relies on the assumption that $\Phi\ul(\rvx)$ is differentiable \emph{surely} with respect to $\rvx$. If $\Phi\ul(\rvx)$ is differentiable a.s. with respect to $\rvx$, this can be relaxed using subdifferentials by noting that moments with respect to $\rvx,\dx,\valpha$ are left unchanged when ignoring zero-probability events.

Now let us justify the assumption that $\Phi\ul(\rvx)$ is differentiable a.s. with respect to $\rvx$ in the context of the propagation of \eq{E:noisy_propagation1} and \eq{E:noisy_propagation2}. We denote the receptive field vectors $\rf(\rvx\pkm,\valpha)$ as in \Section{S:complementary_definitions_notations}, and we denote $\Tl\equiv(\vomega^{1}, \vbeta^{1}, \dots, \vomega\pl, \vbeta\pl)$ as in \Section{S:model_parameters_randomness}. We further assume standard initialization. 

Let $A \equiv \big\{ \exists r>0 \textnormal{ such that } \Phi\ul \textnormal{ is differentiable in the open ball $\mathcal{B}_r(\rvx)$ of radius $r$ at point $\rvx$} \big\}$ be an event depending on $\rvx$, $\Tl$, and let $A^{\textnormal{c}}$ be the complementary event. We will prove that $\sP_{\rvx|\Tl}[A]=1$ with probability $1$ with respect to $\Tl$.

For given $\rvx$ such that $\forall \valpha$: $\rvx_{\valpha,:} \neq 0$, it is easy to see that \\[-9pt]
\begin{align*}
A^{\textnormal{c}} \implies \exists k\leq l, \exists \valpha,\rc: \; \rf(\rvx\pkm,\valpha) \neq 0 \textnormal{ and }\rvx\pk_{\valpha,\rc}=0.
\end{align*}

Under standard initialization, this corresponds to a zero-probability event with respect to $\Tl$, meaning that $\sP_{\Tl | \rvx}[A]=1-\sP_{\Tl | \rvx}[A^{\textnormal{c}}]=1$. 

Now considering $\rvx$ again as random, using Fubini's Theorem and making the assumption that $\rvx_{\valpha,:}\neq 0$ a.s. with respect to $\rvx,\valpha$ (which is the case e.g. if $\rvx_{\valpha,:}$ has well-defined probability density function):   \\[-6pt]
\begin{align}
\E_{\Tl} \sP_{\rvx|\Tl} [A] = \E_{\Tl} \E_{\rvx|\Tl} [\1_A] = \E_\rvx \E_{\Tl|\rvx} [\1_A]
  = \E_\rvx \sP_{\Tl|\rvx}[A]=1. \label{E:assumption_differentiable} \\[-15pt] \nonumber
\end{align}

By contradiction, if there would be non-zero probability with respect to $\Tl$ that $\sP_{\rvx|\Tl} [A]\neq1$, then \eq{E:assumption_differentiable} would not hold. Therefore with probability $1$ with respect to $\Tl$, $\sP_{\rvx|\Tl} [A]=1$, implying that with probability $1$ with respect to $\Tl$, $\Phi\ul(\rvx)$ is differentiable a.s. with respect to $\rvx$.

\subsection{Property of Normalized Sensitivity}
 \label{S:S_sensitivity}

\begin{Proposition}
\label{P:sensitivity_derivative}
The noise tensor $\dx\pl$ and the vectorized version of the tensor $\nabla_{\rvx} \rvx\pl_{\valpha,\rc}$, containing for given $\valpha,\rc$ the derivatives of $\rvx\pl_{\valpha,\rc}$ with respect to $\rvx=\rvx\po$, are related by: $\E_{\rvx,\valpha,\rc} \big[ || \vec(\nabla_\rvx \rvx\pl_{\valpha,\rc}) ||^2_2 \big]^\frac{1}{2}=\sqrt{\mu_2(\dx\pl)} / \sqrt{\mu_2(\dx)}=\sqrt{\mu_2(\dx\pl)} / \sqrt{\mu_2(\dx\po)}$. 
\end{Proposition}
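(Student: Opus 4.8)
The plan is to recognize that the noise tensor $\dx\pl$ produced by the simultaneous propagation of \eq{E:noisy_propagation1} and \eq{E:noisy_propagation2} is nothing but the Jacobian--vector product of $\Phi\ul$ at $\rvx\po$ applied to the input noise $\dx\po$, and then to read the claimed identity off a single covariance computation. Throughout, the model parameters are held fixed (so the identity will in fact hold pointwise in $(\vomega,\vbeta)$).

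\textbf{Step 1: $\dx\pl$ is a forward-mode derivative.} I would first show, by induction on $l$, that for almost every $\rvx\po$ --- precisely, on the full-measure event where $\Phi\ul$ is differentiable at $\rvx\po$, which is exactly the event secured in \Section{S:assumption_differentiable} --- one has $\vec(\dx\pl) = \mJ_l\,\vec(\dx\po)$, where $\mJ_l$ denotes the Jacobian matrix of $\rvx\po \mapsto \rvx\pl = \Phi\ul(\rvx\po)$, i.e. the row of $\mJ_l$ indexed by $(\valpha,\rc)$ equals $\vec(\nabla_\rvx \rvx\pl_{\valpha,\rc})$. The base case $l=0$ is immediate. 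For the inductive step, the affine update $\dy\pl = \vomega\pl \ast \dx\plm$ in \eq{E:noisy_propagation1} is exactly the differential of $\rvy\pl = \vomega\pl \ast \rvx\plm + \vbeta\pl$ evaluated at $\vec(\dx\plm) = \mJ_{l-1}\vec(\dx\po)$, and the nonlinearity update $\dx\pl = \phi\pp(\rvy\pl) \odot \dy\pl$ in \eq{E:noisy_propagation2} is exactly the differential of $\rvx\pl = \phi(\rvy\pl)$ --- the convention $\phi\pp(0) \equiv 1/2$ being immaterial on this event, where no pre-activation coordinate vanishes. Composing these two differentials is the chain rule, which is precisely the recursion the propagation runs, giving $\vec(\dx\pl) = \mJ_l \vec(\dx\po)$. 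Crucially, $\mJ_l$ depends on $\rvx\po$ and the model parameters but \emph{not} on $\dx\po$, because $\phi\pp(\rvy\pl)$ is a function of $\rvx\po$ alone.

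\textbf{Step 2: second moments.} Conditioning on $\rvx\po$ (with $\valpha,\rc$ fixed), Step 1 gives $\dx\pl_{\valpha,\rc} = \langle \vec(\nabla_\rvx \rvx\pl_{\valpha,\rc}),\, \vec(\dx\po) \rangle$, a deterministic linear functional of $\dx\po$. Using the iid assumption $\E_\dx[\dx_i \dx_j] = \sigma^2_\dx \delta_{ij}$, i.e. $\E_\dx[\vec(\dx\po)\vec(\dx\po)^T] = \sigma^2_\dx \mI$, I obtain
\[
\E_\dx\big[(\dx\pl_{\valpha,\rc})^2 \,\big|\, \rvx\po\big] = \sigma^2_\dx \, || \vec(\nabla_\rvx \rvx\pl_{\valpha,\rc}) ||^2_2 .
\]
Since $\dx\pl$ is centered with respect to $\dx$, non-central and central moments coincide, so $\mu_2(\dx\pl) = \E_{\rvx,\dx,\valpha,\rc}[(\dx\pl_{\valpha,\rc})^2]$; taking the expectation over $\dx$ first and then over $\rvx,\valpha,\rc$ (tower property) gives $\mu_2(\dx\pl) = \sigma^2_\dx\, \E_{\rvx,\valpha,\rc}\big[ || \vec(\nabla_\rvx \rvx\pl_{\valpha,\rc}) ||^2_2 \big]$. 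The same computation at $l=0$ (or the iid assumption directly) gives $\mu_2(\dx) = \mu_2(\dx\po) = \sigma^2_\dx$. Dividing and taking square roots yields both equalities of the proposition.

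\textbf{Main obstacle.} The only delicate point is Step 1 --- making precise that \eq{E:noisy_propagation1} and \eq{E:noisy_propagation2} literally compute forward-mode derivatives of $\Phi\ul$ --- which hinges on differentiability of $\Phi\ul$ at $\rvx\po$; but this is exactly the full-measure event established in \Section{S:assumption_differentiable}, and since all moments are taken with respect to $\rvx,\dx,\valpha$, the null complement is harmless. Everything after that is a routine covariance identity, and no use of the randomness of the model parameters is required.
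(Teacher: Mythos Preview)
Your proposal is correct and follows essentially the same approach as the paper: both establish $\dx\pl_{\valpha,\rc} = \langle \vec(\nabla_\rvx \rvx\pl_{\valpha,\rc}), \vec(\dx\po) \rangle$ and then use the white-noise property $\E_\dx[\dx_i\dx_j]=\sigma^2_\dx\delta_{ij}$ to read off the second-moment identity. The only difference is that you spell out the Jacobian--vector product identity by induction on $l$ and address the differentiability caveat explicitly, whereas the paper simply invokes it in one line as ``the definition of $\dx\pl$ as the first-order approximation of $\Phi\ul(\rvx+\dx)-\Phi\ul(\rvx)$''; your extra care is sound but not a different route.
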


\begin{proof}\textbf{.} Due to the definition of $\dx\pl$ as the first-order approximation of $\Phi\ul(\rvx+\dx) - \Phi\ul(\rvx)$:  \\[-12pt]
\begin{align*}
\dx\pl_{\valpha,\rc} = \big\langle \vec(\nabla_\rvx \rvx\pl_{\valpha,\rc}), \vec(\dx) \big\rangle = \big\langle \vec(\nabla_\rvx \rvx\pl_{\valpha,\rc}), \vec(\dx\po) \big\rangle, \\[-15pt] \nonumber
\end{align*}

with $\langle \, , \rangle$ the standard dot product in $\R^{n^d N\uo}$. 

Then due to the white noise property: $\E_{\dx}[ \dx_{i} \dx_{j}]= \sigma^2_{\dx} \delta_{ij}=\mu_2(\dx) \delta_{ij}=\mu_2(\dx\po) \delta_{ij}$, we deduce that  \\[-9pt]
\begin{align*}
\E_{\dx} \big[ (\dx\pl_{\valpha,\rc})^2 \big] & = \mu_2(\dx) \cdot || \vec(\nabla_\rvx \rvx\pl_{\valpha,\rc}) ||^2_2, \\
\E_{\rvx,\dx,\valpha,\rc} \big[ (\dx\pl_{\valpha,\rc})^2 \big] & = \mu_2(\dx) \cdot \E_{\rvx,\valpha,\rc} \big[ || \vec(\nabla_\rvx \rvx\pl_{\valpha,\rc}) ||^2_2 \big], \\
\Bigg( \frac{\mu_2(\dx\pl)}{\mu_2(\dx)} \Bigg)^\frac{1}{2} = \Bigg( \frac{\mu_2(\dx\pl)}{\mu_2(\dx\po)} \Bigg)^\frac{1}{2}  & = \E_{\rvx,\valpha,\rc} \big[ || \vec(\nabla_\rvx \rvx\pl_{\valpha,\rc}) ||^2_2 \big]^\frac{1}{2}. \tag*{\qed}
\end{align*}
\end{proof}

\begin{Proposition}
\label{P:normalized sensitivity}
Denoting the neural network mapping $\rvx\pl=\Phi\ul(\rvx)=\Phi\ul(\rvx\po)$ and the constant rescaling $\Psi\ul(\rvx)=\sqrt{\mu_2(\rvx\pl)} / \sqrt{\mu_2(\rvx\po)} \cdot \rvx\po=\sqrt{\mu_2(\rvx\pl)} / \sqrt{\mu_2(\rvx)} \cdot \rvx$ \; leading to the same signal variance: $\mu_2(\Psi\ul(\rvx))=\mu_2(\Phi\ul(\rvx))$, the normalized sensitivity $\S\pl$ exactly measures the excess root mean square sensitivty of the neural network mapping $\Phi\ul$ relative to the constant rescaling $\Psi\ul$:  \\[-9pt]
\begin{align*}
\S\pl =\frac{\E_{\rvx,\valpha,\rc} \big[ || \vec(\nabla_\rvx \Phi\ul(\rvx)_{\valpha,\rc}) ||^2_2 \big]^\frac{1}{2}}
{\E_{\rvx,\valpha,\rc} \big[ || \vec(\nabla_\rvx \Psi\ul(\rvx)_{\valpha,\rc}) ||^2_2 \big]^\frac{1}{2}}
=\frac{\E_{\rvx,\valpha,\rc} \big[ || \vec(\nabla_\rvx \rvx\pl_{\valpha,\rc}) ||^2_2 \big]^\frac{1}{2}}
{\E_{\rvx,\valpha,\rc} \big[ || \vec(\nabla_\rvx \Psi\ul(\rvx)_{\valpha,\rc}) ||^2_2 \big]^\frac{1}{2}}.
\end{align*}
\end{Proposition}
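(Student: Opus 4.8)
The plan is to evaluate the numerator and the denominator of the claimed ratio separately and observe that their quotient is precisely the definition of $\S\pl$ in \eq{E:definition_S}. Since $\rvx\pl_{\valpha,\rc}=\Phi\ul(\rvx)_{\valpha,\rc}$ for every $\valpha,\rc$, the two forms of the right-hand side in the statement coincide, so it suffices to compute the single ratio
$\E_{\rvx,\valpha,\rc}\big[\|\vec(\nabla_\rvx \rvx\pl_{\valpha,\rc})\|^2_2\big]^{1/2}\big/\E_{\rvx,\valpha,\rc}\big[\|\vec(\nabla_\rvx \Psi\ul(\rvx)_{\valpha,\rc})\|^2_2\big]^{1/2}$.

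For the numerator I would simply invoke Proposition \ref{P:sensitivity_derivative}, which already gives $\E_{\rvx,\valpha,\rc}\big[\|\vec(\nabla_\rvx \rvx\pl_{\valpha,\rc})\|^2_2\big]^{1/2}=\sqrt{\mu_2(\dx\pl)}/\sqrt{\mu_2(\dx\po)}$; no further work is needed there. For the denominator I would compute $\nabla_\rvx\Psi\ul(\rvx)_{\valpha,\rc}$ directly. The one point that needs care is that the rescaling factor $c\equiv\sqrt{\mu_2(\rvx\pl)}/\sqrt{\mu_2(\rvx\po)}$ is a deterministic scalar: both $\mu_2(\rvx\pl)$ and $\mu_2(\rvx\po)$ are expectations over $\rvx,\dx,\valpha$ and hence are numbers, not functions of the realization of $\rvx$. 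Therefore $\Psi\ul(\rvx)_{\valpha,\rc}=c\,\rvx_{\valpha,\rc}$ with $c$ constant, so $\vec(\nabla_\rvx\Psi\ul(\rvx)_{\valpha,\rc})=c\,\vec(\nabla_\rvx \rvx_{\valpha,\rc})$, and $\nabla_\rvx \rvx_{\valpha,\rc}$ is the canonical basis tensor selecting coordinate $(\valpha,\rc)$, whose squared norm is $1$. Hence $\|\vec(\nabla_\rvx\Psi\ul(\rvx)_{\valpha,\rc})\|^2_2=c^2$ identically, and averaging over $\rvx,\valpha,\rc$ and taking a square root leaves $c=\sqrt{\mu_2(\rvx\pl)}/\sqrt{\mu_2(\rvx\po)}$.

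Finally I would form the quotient: dividing $\sqrt{\mu_2(\dx\pl)}/\sqrt{\mu_2(\dx\po)}$ by $\sqrt{\mu_2(\rvx\pl)}/\sqrt{\mu_2(\rvx\po)}$ and regrouping the four factors yields $\big(\mu_2(\dx\pl)/\mu_2(\rvx\pl)\big)^{1/2}\big(\mu_2(\dx\po)/\mu_2(\rvx\po)\big)^{-1/2}$, which is exactly $\S\pl$ by \eq{E:definition_S}. There is essentially no obstacle: the whole content is the observation that the rescaling factor carries no $\rvx$-dependence (so its gradient contributes nothing beyond the constant $c$) together with Proposition \ref{P:sensitivity_derivative} for the network side; one only has to be slightly careful, when the activation is $\relu$, to read $\nabla_\rvx$ as the a.s.\ well-defined Jacobian furnished by \Section{S:assumption_differentiable}, which does not affect any of the expectations.
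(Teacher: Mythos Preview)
Your proposal is correct and follows essentially the same approach as the paper: invoke Proposition~\ref{P:sensitivity_derivative} for the numerator, compute the denominator directly from the constant rescaling (obtaining $\sqrt{\mu_2(\rvx\pl)}/\sqrt{\mu_2(\rvx\po)}$), and compare the quotient to the definition~\eqref{E:definition_S}. The paper's proof is a one-line sketch of exactly these three ingredients; you have merely spelled out the denominator computation (in particular the observation that the rescaling factor is a deterministic scalar) with more care than the paper does.
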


\begin{proof}\textbf{.} This directly follows from: (i) the definition of $\S\pl$; (ii) the result from Proposition \ref{P:sensitivity_derivative}; (iii) the fact that the constant rescaling $\Psi\ul$ has root mean square sensitivitiy equal to 
$\E_{\rvx,\valpha,\rc} \big[ || \vec(\nabla_\rvx \Psi\ul(\rvx)_{\valpha,\rc}) ||^2_2 \big]^\frac{1}{2}=\sqrt{\mu_2(\rvx\pl)} / \sqrt{\mu_2(\rvx\po)}$. \qed

\end{proof}

\subsection{Characterizing Pathologies}
 \label{S:distributional_pathologies_proof}

We consider the following mean vectors and rescaling of the signal:  \\[-6pt]
\begin{align*}
 \vnu\pl\equiv(\nu_{1,\rc}(\rvx\pl))_{1\leq \rc \leq N\ul}, 
 \qquad \tx\pl\equiv \frac{1}{||\vnu\pl||_2} \rvx\pl, 
 \qquad \tilde{\vnu}\pl\equiv(\nu_{1,\rc}(\tx\pl))_{1\leq \rc \leq N\ul} = \frac{\vnu\pl}{||\vnu\pl||_2}. \\[-15pt] \nonumber
\end{align*}

We immediately have $||\tilde{\vnu}\pl||_2=1$. Furthermore we have   \\[-6pt]
\begin{align*}
\nu_2(\rvx\pl) & =\frac{1}{N\ul} \sum_\rc \E_{\rvx,\valpha}\big[\fm(\rvx\pl,\valpha)_\rc^2 \big] \\
 & =\frac{1}{N\ul}  \Big( \sum_\rc \Var_{\rvx,\valpha}\big[\fm(\rvx\pl,\valpha)_\rc \big] + \E_{\rvx,\valpha}\big[\fm(\rvx\pl,\valpha)_\rc \big]^2  \Big) \\
 & = \frac{1}{N\ul} \Big( \sum_\rc \mu_{2,\rc}(\rvx\pl)+\nu_{1,\rc}(\rvx\pl)^2 \Big) \\
 & =\mu_2(\rvx\pl)+\frac{1}{N\ul} ||\vnu\pl||_2^2.  \\[-15pt] \nonumber 
\end{align*}

The pathology $\mu_2(\rvx\pl) / \nu_2(\rvx\pl)\xrightarrow{l\rightarrow\infty}0$ implies $ ||\vnu\pl||_2^2 / \big(N\ul \nu_2(\rvx\pl) \big) \xrightarrow{l\rightarrow\infty}1$, which in turn implies $\mu_2(\rvx\pl) / ||\vnu\pl||_2^2\xrightarrow{l\rightarrow\infty}0$, i.e. $\mu_2(\tx\pl)\xrightarrow{l\rightarrow\infty}0$. It follows that $\fm(\tx\pl,\valpha)$ becomes \emph{point-like} concentrated at point $\tilde{\vnu}\pl$ of unit $\normltwo$ norm.

\subsection{Derivation of \eq{E:overline_mx}, (\ref{E:underline_mx}) and (\ref{E:underline_sx})}
 \label{S:derivation_mx_sx}

The quantities $\overline{m}[\nu_2(\rvx\pk)]$, $\underline{m}[\nu_2(\rvx\pk)]$ and $\underline{s}[\nu_2(\rvx\pk)]$ are defined as   \\[-6pt]
\begin{align*}
\overline{m}[\nu_2(\rvx\pk)] & \equiv \log \E_{\tk}[ \delta \nu_2(\rvx\pk)],\\
\underline{m}[\nu_2(\rvx\pk)] & \equiv \E_{\tk}[ \log  \delta \nu_2(\rvx\pk)] - \log \E_{\tk}[ \delta \nu_2(\rvx\pk)],\\
\underline{s}[\nu_2(\rvx\pk)]  & \equiv \log  \delta \nu_2(\rvx\pk) - \E_{\tk}[ \log \delta \nu_2(\rvx\pk)]. \\[-15pt] \nonumber
\end{align*}

Denoting $\underline{\delta} \nu_2(\rvx\pk) \equiv \delta \nu_2(\rvx\pk) / \E_{\tk}[ \delta \nu_2(\rvx\pk)]$ the multiplicatively centered increments of $\nu_2(\rvx\pk)$, the term $\underline{m}[\nu_2(\rvx\pk)]$ can be expressed as  \\[-9pt]
\begin{align}
\underline{m}[\nu_2(\rvx\pk)] & = \E_{\tk}\Big[ \log  \Big(\underline{\delta} \nu_2(\rvx\pk) \, \E_{\tk}[ \delta \nu_2(\rvx\pk)] \Big) \Big] - \log \E_{\tk}[ \delta \nu_2(\rvx\pk)] \nonumber \\
& = \E_{\tk}[ \log \underline{\delta} \nu_2(\rvx\pk) ] + \log \E_{\tk}[ \delta \nu_2(\rvx\pk)] - \log \E_{\tk}[ \delta \nu_2(\rvx\pk)] \label{E:derivation_mx_sx1}\\
& = \E_{\tk}[ \log \underline{\delta} \nu_2(\rvx\pk) ], \nonumber  \\[-15pt] \nonumber
\end{align}

where we used $\E_{\tk}[ \log  \E_{\tk}[ \delta \nu_2(\rvx\pk)] ]=\log  \E_{\tk}[ \delta \nu_2(\rvx\pk)]$ in \eq{E:derivation_mx_sx1}. The term $\underline{s}[\nu_2(\rvx\pk)]$ can be expressed as  \\[-6pt]
\begin{align}
\underline{s}[\nu_2(\rvx\pk)] & = \log  \Big( \underline{\delta} \nu_2(\rvx\pk) \, \E_{\tk}[ \delta \nu_2(\rvx\pk)] \Big)  - \E_{\tk}\Big[ \log  \Big(  \underline{\delta} \nu_2(\rvx\pk) \, \E_{\tk}[ \delta \nu_2(\rvx\pk)]  \Big)  \Big] \nonumber \\
& = \log  \underline{\delta} \nu_2(\rvx\pk) + \log \E_{\tk}[ \delta \nu_2(\rvx\pk)]  - \E_{\tk}[ \log  \underline{\delta} \nu_2(\rvx\pk) ]  - \log \E_{\tk}[ \delta \nu_2(\rvx\pk)] \label{E:derivation_mx_sx2} \\
& = \log  \underline{\delta} \nu_2(\rvx\pk) - \E_{\tk} [ \log  \underline{\delta} \nu_2(\rvx\pk) ], \nonumber \\[-15pt] \nonumber
\end{align}

where we used again $\E_{\tk}[ \log  \E_{\tk}[ \delta \nu_2(\rvx\pk)] ]=\log  \E_{\tk}[ \delta \nu_2(\rvx\pk)]$ in \eq{E:derivation_mx_sx2}.

\section{Details of \Section{S:vanilla_nets}}
\subsection{Lemmas on Weak Convergence}
\label{weak_convergence}

\textbf{Weak Convergence.} The sequence of random variables $(X\uk)\ukN$ \emph{converges weakly} to the random variable $X$ if $\sP[ X_k~\leq~a ]~\xrightarrow{k\rightarrow\infty}~\sP[ X~\leq~a ]$ for every continuity point $a$ of the function $x\mapsto \sP[ X \leq x ]$. We then write $X\uk \Rightarrow X$. \\ 

\textbf{Tightness.} 
The sequence of random variables $(X\uk)\ukN$ is \emph{tight} if \\[-9pt] 
\begin{align*}
\forall \epsilon, \exists a_\epsilon, b_\epsilon \in \sR: \inf\nolimits_k \sP\big[ X_k \in ]a_\epsilon, b_\epsilon] \big] \geq 1-\epsilon.\\[-9pt] 
\end{align*}

\textbf{Uniform Integrability.} 
The sequence of random variables $(X\uk)\ukN$ is \emph{uniformly integrable} if \\[-9pt] 
\begin{align*}
\sup\nolimits_k \E\big[ \1_{\{|X\uk| \geq M \}} |X\uk| \big]  \xrightarrow{M\rightarrow\infty} 0.\\[-6pt] 
\end{align*}

\begin{Lemma}[Theorem 25.7 in \citet{Billingsley95}] Consider a real-valued function $h$, continuous everywhere apart from a finite set of discontinuity points $D_h=\{x_1, \dots, x_p\}$. Then $h$ is measurable and if $X\uk \Rightarrow X$ with $\sP[X \in D_h]=0$, then $h(X\uk) \Rightarrow h(X)$.   \\[-3pt] 
\label{L:function_weak_convergence}
\end{Lemma}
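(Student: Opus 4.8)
The plan is to handle the two assertions in turn: Borel measurability of $h$, then the weak convergence $h(X\uk) \Rightarrow h(X)$. For measurability, note that $D_h$ is finite, hence closed, so $U \equiv \sR \setminus D_h$ is open; the restriction $h|_U$ is continuous on the open set $U$ and therefore Borel measurable, while $h|_{D_h}$ is trivially measurable since $D_h$ is finite. For any Borel $B \subseteq \sR$ we have $h^{-1}(B) = (h|_U)^{-1}(B) \cup (h|_{D_h})^{-1}(B)$, a union of two Borel sets, so $h$ is Borel measurable and the pushforward laws of $h(X\uk)$ and $h(X)$ are well defined.

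For the weak convergence I would use the Skorokhod representation theorem: since $X\uk \Rightarrow X$ on $\sR$, one can realize $(X\uk)$ and $X$ on a common probability space by random variables $(Y\uk)$ and $Y$ with $Y\uk \sim X\uk$, $Y \sim X$, and $Y\uk \to Y$ almost surely. By hypothesis $\sP[Y \in D_h] = \sP[X \in D_h] = 0$, so almost surely $Y \notin D_h$, i.e. $h$ is continuous at $Y$. On the probability-one event $\{Y \notin D_h\} \cap \{Y\uk \to Y\}$, continuity of $h$ at the limit point gives $h(Y\uk) \to h(Y)$; almost sure convergence implies convergence in distribution, hence $h(Y\uk) \Rightarrow h(Y)$, which is exactly $h(X\uk) \Rightarrow h(X)$.

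An alternative route, avoiding Skorokhod, is via the portmanteau theorem: for any bounded continuous $f : \sR \to \sR$ put $g \equiv f \circ h$; then $g$ is bounded, and its discontinuity set satisfies $D_g \subseteq D_h$ because $f$ is continuous everywhere, so $\sP[X \in D_g] = 0$. Invoking the extended portmanteau statement (bounded measurable $g$ with $\sP[X \in D_g]=0$ and $X\uk \Rightarrow X$ imply $\E[g(X\uk)] \to \E[g(X)]$) gives $\E[f(h(X\uk))] \to \E[f(h(X))]$ for every bounded continuous $f$, which is the definition of $h(X\uk) \Rightarrow h(X)$. The only substantive input in either argument is this strengthened portmanteau / Skorokhod fact; once it is available the rest is bookkeeping. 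A fully self-contained proof of that input would sandwich $g$ between its upper and lower semicontinuous envelopes, which agree with $g$ outside an arbitrarily small neighbourhood of the finite set $D_h$ whose $X$-measure can be taken negligible since $\sP[X \in D_h] = 0$; but since the paper cites Billingsley (Theorem 25.7), I would simply invoke it and present the Skorokhod argument above.
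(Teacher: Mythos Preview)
Your proof is correct; both the Skorokhod representation argument and the portmanteau alternative are standard and valid routes to the continuous mapping theorem. Note, however, that the paper does not supply its own proof of this lemma: it is simply quoted as Theorem~25.7 in Billingsley and used as a black box in the proof of Theorem~1, so there is no paper-side argument to compare against.
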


\begin{Lemma}[Theorem 25.10 in \citet{Billingsley95}, known as Prokhorov's theorem]
\label{L:prokhorov_weak_convergence}
If the sequence of random variables $(X\uk)\ukN$ is tight, then it admits a weakly convergent subsequence, i.e. there exists a sequence $(i_k)\ukN$ of strictly increasing indices and a random variable $X$ such that $X_{i_k} \Rightarrow X$. \\[-3pt] 
\end{Lemma}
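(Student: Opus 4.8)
The plan is to prove this by the classical route for real-valued random variables: a diagonal extraction producing a pointwise-convergent subsequence of distribution functions (Helly's selection principle), followed by an argument showing that tightness prevents probability mass from escaping to $\pm\infty$, so that the limiting object is a genuine distribution function.

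First I would pass to distribution functions. Write $F_k(x)\equiv\sP[X_k\le x]$, a non-decreasing right-continuous function with values in $[0,1]$. Enumerate the rationals as $q_1,q_2,\dots$; since $(F_k(q_1))_k\subset[0,1]$ is bounded, Bolzano--Weierstrass gives a subsequence along which it converges, and refining successively over $q_2,q_3,\dots$ and taking the diagonal subsequence yields strictly increasing indices $(i_k)\ukN$ and a non-decreasing function $G:\sQ\to[0,1]$ with $F_{i_k}(q)\to G(q)$ for every $q\in\sQ$. I then set $F(x)\equiv\inf\{G(q):q\in\sQ,\ q>x\}$, which is non-decreasing and right-continuous, and verify that $F_{i_k}(x)\to F(x)$ at every continuity point $x$ of $F$: for rationals $q'<x<q''$ one has $F_{i_k}(q')\le F_{i_k}(x)\le F_{i_k}(q'')$, and letting $k\to\infty$ then $q',q''\to x$ sandwiches the limit using continuity of $F$ at $x$.

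Next I would use tightness to upgrade $F$ to a bona fide distribution function, i.e. $F(x)\to0$ as $x\to-\infty$ and $F(x)\to1$ as $x\to+\infty$. Fix $\epsilon>0$ and take $a_\epsilon,b_\epsilon\in\sR$ from the definition of tightness, so that $F_k(b_\epsilon)-F_k(a_\epsilon)=\sP[a_\epsilon<X_k\le b_\epsilon]\ge1-\epsilon$ for all $k$. Since the discontinuity set of $F$ is at most countable, I may pick continuity points $a'<a_\epsilon$ and $b'>b_\epsilon$; passing to the limit along $(i_k)$ gives $F(b')-F(a')\ge1-\epsilon$. Letting $b'\to+\infty$, $a'\to-\infty$, and then $\epsilon\to0$, combined with $0\le F\le1$, forces $\lim_{x\to-\infty}F(x)=0$ and $\lim_{x\to+\infty}F(x)=1$.

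Finally, since $F$ is non-decreasing, right-continuous, and has the correct limits at $\pm\infty$, it is the distribution function of some random variable $X$ (e.g. via the quantile transform applied to a uniform variable on an auxiliary probability space). The established convergence $F_{i_k}(x)\to F(x)$ at every continuity point of $F$ is exactly the defining property of $X_{i_k}\Rightarrow X$, which completes the proof. The main obstacle is the careful bookkeeping in the Helly extraction — the diagonal argument and the passage from convergence on $\sQ$ to convergence at continuity points of the extended $F$ — whereas the hypothesis of tightness is used in exactly one essential place, the no-escaping-mass step, which is precisely why it is assumed.
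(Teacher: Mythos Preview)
Your proof is correct and follows the classical route (Helly's selection via diagonal extraction on the rationals, then tightness to rule out mass escaping to $\pm\infty$). The paper does not give its own proof of this lemma: it is simply quoted as Theorem~25.10 from \citet{Billingsley95}, so there is nothing to compare against beyond noting that your argument is essentially the standard one found there.
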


\begin{Lemma}[Theorem 25.12 in \citet{Billingsley95}]
If the sequence of random variables $(X\uk)\ukN$ is uniformly integrable and if $X\uk \Rightarrow X$, then $X$ has well-defined expectation and $\E[X\uk] \xrightarrow{k\rightarrow\infty} \E[X]$. \\[-3pt] 
\label{L:integrable_weak_convergence}
\end{Lemma}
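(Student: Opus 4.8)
The plan is to argue by truncation, using uniform integrability as the substitute for a dominating function. First I would record the elementary consequence that $C \equiv \sup_k \E[|X_k|] < \infty$: writing $\E[|X_k|] = \E[|X_k| \1_{\{|X_k| \le M_0\}}] + \E[|X_k| \1_{\{|X_k| > M_0\}}]$ and choosing $M_0$ so that the second term is $\le 1$ for every $k$ (possible by uniform integrability) gives $\E[|X_k|] \le M_0 + 1$.

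Next I would establish that $X$ itself is integrable, so that $\E[X]$ is well defined. For each fixed $M$ the map $x \mapsto \min(|x|, M)$ is bounded and continuous, so Lemma~\ref{L:function_weak_convergence} yields $\min(|X_k|, M) \Rightarrow \min(|X|, M)$; since these variables are uniformly bounded by $M$, weak convergence forces convergence of their means — representing the mean of a bounded nonnegative variable $Y \le M$ as $\int_0^M \sP[Y > t]\,dt$ and noting that the integrand converges for all but countably many $t$, bounded convergence on $[0,M]$ gives $\E[\min(|X_k|, M)] \xrightarrow{k \to \infty} \E[\min(|X|, M)]$. Combined with $\E[\min(|X_k|, M)] \le \E[|X_k|] \le C$, this shows $\E[\min(|X|, M)] \le C$ for every $M$, hence $\E[|X|] \le C < \infty$ by monotone convergence.

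For the convergence of expectations, I would fix the continuous clipping $\psi_M(x) \equiv \max(-M, \min(M, x))$, which is bounded and continuous, so by Lemma~\ref{L:function_weak_convergence} $\psi_M(X_k) \Rightarrow \psi_M(X)$ and, by the same bounded-variable argument, $\E[\psi_M(X_k)] \xrightarrow{k \to \infty} \E[\psi_M(X)]$ for each fixed $M$. Then I would split
\begin{align*}
\E[X_k] - \E[X] = \big( \E[X_k] - \E[\psi_M(X_k)] \big) + \big( \E[\psi_M(X_k)] - \E[\psi_M(X)] \big) + \big( \E[\psi_M(X)] - \E[X] \big).
\end{align*}
Since $|x - \psi_M(x)| \le |x| \1_{\{|x| > M\}}$, the first bracket is bounded in absolute value by $\sup_k \E[|X_k| \1_{\{|X_k| > M\}}]$, which tends to $0$ as $M \to \infty$ by uniform integrability; the third bracket is bounded by $\E[|X| \1_{\{|X| > M\}}]$, which tends to $0$ as $M \to \infty$ by dominated convergence since $\E[|X|] < \infty$. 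Given $\epsilon > 0$, I would first pick $M$ making the first and third brackets each $< \epsilon/3$ uniformly in $k$, then pick $K$ so that the middle bracket is $< \epsilon/3$ for $k \ge K$; hence $|\E[X_k] - \E[X]| < \epsilon$ for $k \ge K$.

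The main obstacle is conceptual rather than computational: because the $X_k$ may live on different probability spaces and there is no single integrable dominating variable, ordinary dominated (or bounded) convergence cannot be applied to $(X_k)$ directly, and uniform integrability is precisely the uniform tail control that makes the truncation error vanish uniformly in $k$. The only genuinely delicate auxiliary point, used twice above, is that weak convergence of uniformly bounded variables implies convergence of their means; I would obtain this from the tail-integral representation of the mean together with the fact that the limiting distribution function has at most countably many points of discontinuity (equivalently, one may invoke the portmanteau characterization of weak convergence for bounded continuous test functions).
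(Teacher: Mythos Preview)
Your argument is correct and is essentially the standard truncation proof of this classical result. Note, however, that the paper does not supply its own proof of this lemma: it is stated as Theorem~25.12 in \citet{Billingsley95} and used as a black box. There is therefore nothing in the paper to compare against beyond the citation; your write-up is a faithful reconstruction of the textbook argument, and the auxiliary fact you flag (that weak convergence of uniformly bounded variables implies convergence of means) is exactly the portmanteau characterization, so no gap remains.
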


\subsection{Lemma on the Sum of Increments}
 \label{S:Lemma_moments_vanilla}

\begin{Lemma}
\label{L:moments_vanilla}
Let us consider a sequence $(X\uk)\ukN$ of random variables and a decreasing sequence of events $(A\uk)\ukN$, which both depend on $\Tk$. Let us suppose that $\sP_{\tk|A\ukm}[A\uk]$ does not depend on $\Tkm$ and let us denote under $A\uk$:\\[-9pt]
\begin{align*}
Y_k \equiv \E_{\tk|A\uk}[X_k], \qquad Z_k \equiv X_k-\E_{\tk|A\uk}[X_k].  \\[-18pt] \nonumber
\end{align*}

Let us further suppose that there exist constants $m\umin$, $m\umax$, $v\umin$, $v\umax$ such that $\forall k$, under $A\uk$:\\[-9pt]
\begin{align*}
m\umin \leq Y_k \leq m\umax, \qquad  v\umin \leq \Var_{\tk|A\uk}[ Z_k ] \leq v\umax.  \\[-21pt]
\end{align*}

Then it follows that \\[-15pt]
\begin{enumerate}[itemsep=0pt,parsep=0pt,label={(\roman*)}]
\item The random variables $Z_k$ are centered and non-correlated such that $\forall k$, $\forall k\pp \neq k$:   \\[-9pt]
\begin{align*}
\E_{\Tk|A\uk} [ Z_k ] = 0, \qquad \E_{\Theta^{\max(k,k\pp)}|A_{\max(k,k\pp)}} [ Z_k Z_{k\pp} ] = 0.   \\[-12pt]
\end{align*}
\item There exist random variables $m\ul$ and $s\ul$ such that under $A\ul$:  \\[-9pt]
\begin{align*}
\sum^l_{k=1} X_k = l m\ul + \sqrt{l} s\ul, \quad m\umin \leq m\ul \leq m\umax, \quad \E_{\Tl|A\ul}[s\ul] = 0, \quad v\umin \leq \Var_{\Tl|A\ul}[s\ul] \leq v\umax.
\end{align*}
\end{enumerate}
\end{Lemma}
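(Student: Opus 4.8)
The plan is to reduce everything to one auxiliary fact, obtain part (i) from the tower property, and then read off part (ii) from the explicit choice $m\ul \equiv \frac{1}{l}\sum_{k=1}^{l} Y_k$ and $s\ul \equiv \frac{1}{\sqrt{l}}\sum_{k=1}^{l} Z_k$, which by construction gives $\sum_{k=1}^{l}X_k = l\,m\ul + \sqrt{l}\,s\ul$. The auxiliary fact is that for every $l\ge k$ and every integrable $\Tk$-measurable $g$,
\[
\E_{\Tl|A\ul}[g] = \E_{\Tk|A\uk}[g] .
\]
To see it, write $p_k \equiv \sP_{\tk|A\ukm}[A\uk]$, a constant by hypothesis and positive (as is implicit in the definition of $Y_k$); a downward induction on $j\le l$ gives $\sP[A\ul\mid\Theta^j] = \1_{A_j}\prod_{i=j+1}^{l} p_i$, the inductive step being $\sP[A\ul\mid\Theta^j] = \E[\1_{A_{j+1}}\sP[A\ul\mid\Theta^{j+1}]\mid\Theta^j]$ together with $\E[\1_{A_{j+1}}\mid\Theta^j] = \1_{A_j}\,p_{j+1}$. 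Taking $j=k$ gives $\E[\1_{A\ul}\,g] = \big(\prod_{i=k+1}^{l}p_i\big)\E[\1_{A\uk}\,g]$, and dividing by $\sP[A\ul] = \prod_{i=1}^{l}p_i$ yields the displayed identity. Applied to $g=Y_k$, which depends only on $\Tkm$, it also shows that the bound $m\umin\le Y_k\le m\umax$, assumed ``under $A\uk$'', holds $\sP_{\Tl|A\ul}$-a.s.

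For part (i): since $\tk$ abbreviates $\Tk\,|\,\Tkm$, the definition of $Z_k$ gives $\E_{\tk|A\uk}[Z_k] = \E[X_k \mid \Tkm, A\uk] - Y_k = 0$ for every value of $\Tkm$, and the tower property then gives $\E_{\Tk|A\uk}[Z_k] = \E_{\Tkm|A\uk}\big[\E_{\tk|A\uk}[Z_k]\big] = 0$. For the non-correlation, take $k<k'$ and condition on $\Theta^{k'-1}$; since $Z_k$ is $\Theta^{k'-1}$-measurable and square-integrable (which follows from $\Var_{\tk|A\uk}[Z_k]\le v\umax$ and $Y_k$ bounded),
\[
\E_{\Theta^{k'}|A_{k'}}[Z_k Z_{k'}] = \E_{\Theta^{k'-1}|A_{k'}}\big[\,Z_k\,\E_{\theta^{k'}|A_{k'}}[Z_{k'}]\,\big] = 0 ,
\]
which is the stated identity since $\max(k,k')=k'$.

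For part (ii): the bound $m\umin\le m\ul\le m\umax$ is immediate since each $Y_k\in[m\umin,m\umax]$. The auxiliary fact gives $\E_{\Tl|A\ul}[Z_k] = \E_{\Tk|A\uk}[Z_k] = 0$, so $\E_{\Tl|A\ul}[s\ul]=0$; being centered, $\Var_{\Tl|A\ul}[s\ul] = \frac{1}{l}\sum_{k,k'}\E_{\Tl|A\ul}[Z_k Z_{k'}]$. For $k\ne k'$ the product $Z_k Z_{k'}$ depends only on the first $\max(k,k')$ layers, so the auxiliary fact lets me condition on $A_{\max(k,k')}$ instead of $A\ul$, and part (i) makes the term vanish; for $k=k'$, $\E_{\Tl|A\ul}[Z_k^2] = \E_{\Tk|A\uk}[Z_k^2] = \E_{\Tkm|A\ukm}\big[\Var_{\tk|A\uk}[Z_k]\big] \in [v\umin,v\umax]$. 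Averaging these $l$ values leaves the result in $[v\umin,v\umax]$, which completes the proof.

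The only step that goes beyond bookkeeping is the auxiliary fact: both the centering of $s\ul$ and the fact that $\Var_{\Tl|A\ul}[s\ul]$ is a plain average of the per-layer variances $\Var_{\tk|A\uk}[Z_k]$ rely on the conditional law of the first $k$ layers being left unchanged when one further conditions on $A\ul$ rather than on $A\uk$ — which is exactly where the hypothesis that $\sP_{\tk|A\ukm}[A\uk]$ does not depend on $\Tkm$ enters. If that hypothesis were dropped, conditioning on $A\ul$ would reweight $\Tk$ by a $\Tk$-dependent factor, breaking both the vanishing of $\E_{\Tl|A\ul}[Z_k]$ and the two-sided bound on $\Var_{\Tl|A\ul}[s\ul]$.
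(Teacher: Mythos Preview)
Your proof is correct and follows essentially the same approach as the paper: define $m\ul = \frac{1}{l}\sum_k Y_k$ and $s\ul = \frac{1}{\sqrt{l}}\sum_k Z_k$, use the tower property to get centering and non-correlation of the $Z_k$, and compute $\Var_{\Tl|A\ul}[s\ul]$ as an average of the per-layer variances. Your explicit statement and proof of the auxiliary fact $\E_{\Tl|A\ul}[g] = \E_{\Tk|A\uk}[g]$ for $\Tk$-measurable $g$ is in fact a step the paper uses only implicitly (e.g., when it writes $\E_{\Theta^{k'}|A_{k'}} = \E_{\Theta^{k'-1}|A_{k'-1}}\E_{\theta^{k'}|A_{k'}}$ or passes from $\E_{\Tl|A\ul}[Z_k]$ to $\E_{\Tk|A\uk}[Z_k]$ without comment), so your version is slightly more careful on that point.
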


\begin{proof}\textbf{ of (i).} 
First we show that $Z_k$ is centered under $A\uk$:  \\[-6pt]
\begin{align}
\E_{\tk|A\uk}[ Z_k ]& = \E_{\tk|A\uk}[X_k]-\E_{\tk|A\uk}[X_k]=0, \label{E:moments_vanilla_Lemma1} \\
\E_{\Tk|A\uk}[ Z_k ]& = \frac{\E_{\Tkm|A\ukm}[ \E_{\tk|A\ukm}[\1_{A\uk} Z_k] ] }{\sP_{\Tk|A\ukm}[A\uk]}
= \E_{\Tkm|A\ukm}\Bigg[ \frac{\E_{\tk|A\ukm}[\1_{A\uk} Z_k]}{\sP_{\tk|A\ukm}[A\uk]} \Bigg] 
= \E_{\Tkm|A\ukm}\big[ \E_{\tk|A\uk}[Z_k] \big] = 0. \nonumber
\end{align}

Now for $k < \kp$, we have $k \leq \kp-1$ and thus $Z_k$ is fully determined by $\Theta^{\kp-1}$. Then we can write  \\[-6pt]
\begin{align}
\E_{\Theta^{\kp}|A_{\kp}} [ Z_k Z_\kp ] 
& = \E_{\Theta^{\kp-1}|A_{\kp-1}} \E_{\theta^\kp|A_\kp} [ Z_k Z_\kp ]
 = \E_{\Theta^{\kp-1}|A_{\kp-1}} \big[ Z_k \, \E_{\theta^{\kp}|A_\kp}[ Z_\kp ] \big] = 0, \nonumber \\[-15pt] \nonumber
\end{align}

where we used \eq{E:moments_vanilla_Lemma1}. \qed
\end{proof}

\begin{proof}
\textbf{of (ii).} First we note that  \\[-6pt]
\begin{align*}
\Var_{\Tk|A\uk} \big[ Z_k \big]=\E_{\Tk|A\uk} \big[ Z^2_k \big]=\E_{\Tkm|A\ukm} \E_{\tk|A\uk} \big[ Z^2_k \big] = \E_{\Tkm|A\ukm} \Var_{\tk|A\uk} \big[ Z_k \big]. \\[-15pt] \nonumber
\end{align*}

Combined with the hypothesis that $v\umin \leq \Var_{\tk|A\uk}[ Z_k ] \leq v\umax$, we deduce that  \\[-6pt]
\begin{align}
v\umin \leq \Var_{\Tk|A\uk} \big[ Z_k \big] \leq v\umax. \label{E:moments_vanilla_Lemma3} \\[-15pt] \nonumber
\end{align}

Now let us denote $M_l \equiv \sum^l_{k=1} Y_k$ and $S_l \equiv \sum^l_{k=1} Z_k$. Then, using (i), we get that  \\[-6pt]
\begin{align}
\E_{\Tl|A\ul}\big[ S\ul \big] & = \sum\nolimits_k \E_{\Tl|A\ul}[Z_k] = 0, \nonumber \\
\Var_{\Tl|A\ul}\big[ S\ul \big] = \E_{\Tl|A\ul}\big[ S^2\ul \big] & = \sum\nolimits_{k,k\pp} \E_{\Tl|A\ul}\big[ Z_k Z_{k\pp} \big] \nonumber \\
        &  = \sum\nolimits_k \E_{\Tk|A\uk} \big[ Z^2_k \big] 
         = \sum\nolimits_k \Var_{\Tk|A\uk} \big[ Z_k \big]. \label{E:moments_vanilla_Lemma4} \\[-15pt] \nonumber
\end{align}

The hypothesis implies under $A\ul$ that $l \vss m\umin \leq M\ul \leq l \vss m\umax$, while \eq{E:moments_vanilla_Lemma3} and \eq{E:moments_vanilla_Lemma4} together imply that \mbox{$l \vss v\umin \leq \Var_{\Tl|A\ul}[S\ul] \leq l \vss v\umax$}. If we define $m\ul \equiv M\ul / l$ and $s\ul \equiv S\ul / \sqrt{l}$, then $\sum^l_{k=1} X_k = \sum^l_{k=1} Y_k + \sum^l_{k=1} Z_k$ can be written as required under $A\ul$:  \\[-6pt]
\begin{align*}
\sum\pl_{k=1} X_k = M\ul + S\ul = l m\ul + \sqrt{l} s\ul, \quad m\umin \leq m\ul \leq m\umax, \quad \E_{\Tl|A\ul}[s\ul] = 0, \quad v\umin \leq \Var_{\Tl|A\ul}[s\ul] \leq v\umax. \tag*{\qed}
\end{align*}
\end{proof}

\subsection{Proof of \theorem{T:moments_vanilla}}
 \label{S:proof_moments_vanilla}

\textbf{\theorem{T:moments_vanilla}} (moments of vanilla nets)\textbf{.} \emph{
There exist small constants $1\gg m\umin, m\umax, v\umin, v\umax >0$, random variables $m\ul, m\pp\ul$, $s\ul,s\pp\ul$ and events $A\ul$, $A\pp\ul$ of probabilities equal to $\prod^l_{k=1} ( 1 - 2^{-N_k+1 })$ such that}
\begin{alignat*}{6}
& \textnormal{Under $A\ul$: } &  \;\;  \log \Bigg( \frac{\nu_2(\rvx\pl)}{\nu_2(\rvx\po)}  \Bigg) & = - l m\ul + \sqrt{l} s\ul, \quad &
 m\umin \leq m\ul \leq m\umax, & \quad
 \E_{\Tl|A\ul} \big[s\ul\big]=0, & \quad 
 v\umin \leq \Var_{\Tl|A\ul}\big[s\ul\big]\leq v\umax,\\
& \textnormal{Under $A\pp\ul$: } & \;\; \log \Bigg( \frac{\mu_2(\dx\pl)}{\mu_2(\dx\po)}  \Bigg)  & = - l m\pp\ul + \sqrt{l} s\pp\ul, \quad &
m\umin \leq m\pp\ul \leq m\umax,& \quad
\E_{\Tl|A\pp\ul}\big[s\pp\ul\big]=0,  & \quad
v\umin \leq \Var_{\Tl|A\pp\ul}\big[s\pp\ul\big]\leq v\umax.  \\[-15pt] \nonumber
\end{alignat*}

\subsubsection{Proof Introduction}

 Using the definitions and notations from \Section{S:complementary_definitions_notations}, denoting $(\ve_1,\dots,\ve_{R\ul})$ and $(\lambda_1,\dots,\lambda_{R\ul})$ respectively the orthogonal eigenvectors and eigenvalues of $\mG_{\rvx,\valpha}[\rf(\rvx\plm,\valpha)]$ and denoting $\hat{\mW}\pl\equiv\mW\pl (\ve_1,\dots,\ve_{R\ul})$, we get that $\forall \rc$: \\[-6pt]
 \begin{align*}
\nu_{2,\rc}(\rvy\pl) & = \E_{\rvx,\valpha}\big[ (\rvy\pl_{\valpha,\rc})^2 \big]
  = \E_{\rvx,\valpha} \Big[ \big(\mW\pl_{\rc,:} \rf(\rvx\plm,\valpha) \big)^2 \Big] \\
 & = \sum_i \big( \hat{\mW}\pl_{\rc,i} \big)^2 \lambda_i
  = R\ul \, \nu_2(\rvx\plm) \sum_i \big( \hat{\mW}\pl_{\rc,i} \big)^2 \hat{\lambda}_i, \\[-15pt] \nonumber
\end{align*}
where we defined $\hat{\lambda}_i \equiv \lambda_i \, / \, \sum_j \lambda_j$ and used $\sum_j \lambda_j=\Tr \mG_{\rvx,\valpha}\big[\rf(\rvx\plm,\valpha)\big]=R\ul \nu_2(\rvx\plm)$ by Corollary \ref{C:statistics_preserving_constant}. 

Let us further define  \\[-9pt]
\begin{align*}
u\pl_\rc \equiv \left\{
    \begin{array}{cl}
        \largefrac{\nu_{2,\rc}( \rvx\pl )}{\nu_{2,\rc}( \rvx\pl ) +  \nu_{2,\rc}( \bx\pl )} & \mbox{if } \nu_{2,\rc}(\rvx\pl)+ \nu_{2,\rc}(\bx\pl)>0\vspace{2pt} \\
        \frac{1}{2} & \mbox{otherwise.} 
    \end{array}
\right.   \\[-15pt] \nonumber
\end{align*}

Combined with $\nu_{2,\rc}( \rvy\pl ) = \nu_{2,\rc}( \rvx\pl ) +  \nu_{2,\rc}( \bx\pl )$ by \eq{E:symmetric_nux}, we get that $\forall \rc$, under $\big\{ \nu_2(\rvx\plm) \neq 0 \big\}$:  \\[-6pt]
\begin{align}
 \nu_{2,\rc}(\rvx\pl)+\nu_{2,\rc}(\bx\pl) & = R\ul \, \nu_2(\rvx\plm) \,  \sum_i \big(\hat{\mW}\pl_{\rc,i} \big)^2 \hat{\lambda}_i, \label{E:moments_vanilla_Theorem4}  \\
 \nu_{2,\rc}(\rvx\pl) & =  u\pl_\rc \, R\ul \, \nu_2(\rvx\plm) \sum_i \big( \hat{\mW}\pl_{\rc,i} \big)^2 \hat{\lambda}_i.
  \label{E:moments_vanilla_Theorem2} \\[-15pt] \nonumber
\end{align}

Now combining \eq{E:moments_vanilla_Theorem4} with the symmetry of the propagation: $\nu_{2,\rc}(\rvx\pl) \sim_{\tl} \nu_{2,\rc}(\bx\pl)$, and the assumption of standard initialization: $\mW\pl_{\rc,:} \sim_\tl \hat{\mW}\pl_{\rc,:} \sim_\tl \mathcal{N}(0,2\,/\,R\ul \mI)$, we get that $\forall \rc$, under $\big\{ \nu_2(\rvx\plm) \neq 0 \big\}$:  \\[-6pt]
\begin{align*}
 2 \E_\tl \big[ \nu_{2,\rc}(\rvx\pl) \big] & = \E_\tl \big[ \nu_{2,\rc}(\rvx\pl)+\nu_{2,\rc}(\bx\pl) \big]
 = \E_\tl \Big[ R\ul \nu_2(\rvx\plm) \sum_i \big( \hat{\mW}\pl_{\rc,i} \big)^2 \hat{\lambda}_i \Big] \\
& = R\ul \nu_2(\rvx\plm) \frac{2}{R\ul}  \sum_i \hat{\lambda}_i 
 = 2 \nu_2(\rvx\plm). \\[-15pt] \nonumber
\end{align*}

Thus $\forall \rc: \; \E_\tl[ \nu_{2,\rc}(\rvx\pl) ] = \nu_2(\rvx\plm)$ and $\E_\tl[ \nu_2(\rvx\pl) ] = \nu_2(\rvx\plm)$, i.e. that under $\big\{ \nu_2(\rvx\plm) \neq 0 \big\}$:  \\[-9pt]
\begin{align}
\E_\tl[ \delta \nu_2(\rvx\pl) ] = 1. \label{E:moments_vanilla_Theorem5}  \\[-15pt] \nonumber
\end{align}

Next, let us define  \\[-9pt]
\begin{align*}
v\pl_\rc \equiv \left\{
    \begin{array}{cl}
        0 & \mbox{if }  u\pl_\rc < \frac{1}{2} \vspace{3pt} \\
        1 & \mbox{if }  u\pl_\rc > \frac{1}{2} \;, \vspace{3pt} \\
        \tilde{v}\pl_\rc & \mbox{if }  u\pl_\rc = \frac{1}{2} \\
    \end{array}
\right. \\[-15pt] \nonumber
\end{align*}

with $\tilde{v}\pl_\rc \sim \operatorname{Bernouilli}(1/2)$ independent of $\vomega\pl$ and $\vbeta\pl$. 

Conditionally on $u\pl_\rc = 1/2$: $v\pl_\rc~\sim~\operatorname{Bernouilli}(1/2)$, independently of $\nu_{2,\rc}(\rvy\pl)$ and $||\mW\pl_{\rc,:}||_2$. And conditionally on $u\pl_\rc~\neq~1/2$: $v\pl_\rc~\sim~\operatorname{Bernouilli}(1/2)$, independently of $\nu_{2,\rc}(\rvy\pl)$ and $||\mW\pl_{\rc,:}||_2$. It follows that $v\pl_\rc~\sim~\operatorname{Bernouilli}(1/2)$, independently of $\nu_{2,\rc}(\rvy\pl)$ and $||\mW\pl_{\rc,:}||_2$. \\[-6pt]

Defining $B\ul\equiv\{ \exists \rc: v\pl_\rc = 1 \}$ we get that $\sP_\tl [ B\ul] = 1-2^{-N\ul}$. We also get that $B\ul$ is independent of $\big(||\mW\pl_{\rc,:}||_2\big)_{1\leq\rc\leq N\ul}$ and thus of $||\mW\pl||_F$. \emph{This will be useful later in the course of this proof}.

Denoting $A\ul=\bigcap^l_{k=1}\Big(B\uk \cap \big\{ \nu_2(\rvx\pk) \neq 0 \big\} \Big)$, we have that \\[-6pt]
\begin{align*}
\sP_{\tl| A\ulm} \big[ A\ul \big] & = \sP_{\tl| A\ulm} \big[ \vss B\ul \cap \{ \nu_2(\rvx\pl) \neq 0 \} \vss \big]  = \sP_{\tl| A\ulm} \big[ B\ul   \big] = 1 - 2^{-N\ul}, \\
\sP_\Tl\big[A\ul\big] & = \sP_\Tl\Big[\bigcap\nolimits^l_{k=1} A\uk \Big] = \prod\nolimits^l_{k=1} \sP_{\tk| A\ukm} \big[A\uk\big] =  \prod\nolimits^l_{k=1} \big(1 - 2^{-N\uk} \big).\nonumber \\[-15pt] \nonumber
\end{align*}

where we used $\sP_{\tl| A\ulm} \big[ B\ul \cap \big\{ \nu_2(\rvx\pl) \neq 0 \big\} \big]=\sP_{\tl| A\ulm} \big[ B\ul \big]$ due to $\sP_{\tl | B\ul \cap A\ulm}\big[ \nu_2(\rvx\pl) \neq 0 \big] = 1$.

Now since $(\nu_{2,\rc}(\rvy\pl))_{1\leq\rc\leq N\ul}$ and $(v\pl_\rc)_{1\leq\rc\leq N\ul}$ are independent, \eq{E:moments_vanilla_Theorem2} implies that $\exists (w_{i})_{1\leq i \leq R\ul}$ such that under $B\ul \cap \big\{ \nu_2(\rvx\plm) \neq 0 \big\}$:   \\[-6pt]
\begin{alignat*}{3}
& (w_{i})_{1\leq i \leq R\ul} \sim \mathcal{N}(0,2\,/\,R\ul \mI), & \qquad \frac{1}{N\ul}  \Bigg(\frac{1}{2}\Bigg) R\ul \nu_2(\rvx\plm) \,  \sum^{R\ul}_{i=1} w_{i}^2 \hat{\lambda}_i & \leq  \frac{1}{N\ul} \sum^{N\ul}_{\rc=1} \nu_{2,\rc}(\rvx\pl), \\
& (w_{i})_{1\leq i \leq R\ul} \sim \mathcal{N}(0,2\,/\,R\ul \mI), & \qquad \frac{R\ul}{2N\ul} \sum^{R\ul}_{i=1} w_{i}^2 \hat{\lambda}_i & \leq  \delta \nu_2(\rvx\pl). \\[-15pt] \nonumber
\end{alignat*}

On the other hand, $\exists (w_{i,j})_{1\leq i \leq R\ul, \, 1 \leq j \leq N\ul}$ such that under $B\ul \cap \big\{ \nu_2(\rvx\plm) \neq 0 \big\}$:  \\[-6pt]
\begin{align*}
(w_{i,j})_{1\leq i \leq R\ul, \, 1 \leq j \leq N\ul} \sim \mathcal{N}(0,2\,/\,R\ul \mI): \qquad  \delta \nu_2(\rvx\pl) \leq \frac{R\ul}{N\ul} \sum^{N\ul}_{j=1} \sum^{R\ul}_{i=1} w_{i,j}^2 \hat{\lambda}_i \leq \frac{R\ul}{N\ul} \sum^{N\ul}_{j=1} \sum^{R\ul}_{i=1} w_{i,j}^2. \\[-15pt] \nonumber
\end{align*}

Denoting $\operatorname{Chi-Squared}(1)$ and $\operatorname{Chi-Squared}(N\ul R\ul)$ the chi-squared distributions with $1$ and $N\ul R\ul$ degrees of freedom respectively, $\exists w\umin, w\umax$ such that under $B\ul \cap \big\{ \nu_2(\rvx\plm) \neq 0 \big\}$: \\[-6pt]
\begin{alignat}{5}
w\umin & \sim \frac{R\ul}{2 N\ul} \frac{2}{R\ul} \frac{1}{R\ul} \operatorname{Chi-Squared}(1), & \quad w\umax & \sim \frac{R\ul}{N\ul} \frac{2}{R\ul} \operatorname{Chi-Squared}(N\ul R\ul), & \quad w\umin \leq \delta \nu_2(\rvx\pl) \leq w\umax, \nonumber \\
w\umin & \sim \frac{1}{N\ul R\ul} \operatorname{Chi-Squared}(1),& \quad w\umax & \sim \frac{2}{N\ul} \operatorname{Chi-Squared}(N\ul R\ul),& \quad w\umin \leq \delta \nu_2(\rvx\pl) \leq w\umax, \label{E:moments_vanilla_Theorem6} \\[-15pt] \nonumber
\end{alignat}

where we used $\max_i \hat{\lambda}_i \geq  \frac{1}{R\ul}$.

Simply replacing $\rvx\pl$ by $\dx\pl$, $\rvy\pl$ by $\dy\pl$, $\mG_{\rvx,\valpha}$ by $\mC_{\rvx,\dx,\valpha}$, using \eq{E:symmetric_mus} instead of \eq{E:symmetric_nux} and the identity with $\mu_2(\dx\plm)$ instead of $\nu_2(\rvx\plm)$ in Corollary~\ref{C:statistics_preserving_constant}, we get that under $\big\{ \mu_2(\dx\plm) \neq 0 \big\}$:  \\[-9pt]
\begin{align}
\E_\tl[ \delta \mu_2(\dx\pl) ] = 1. \label{E:moments_vanilla_Theorem7} \\[-15pt] \nonumber
\end{align}

Furthermore $\exists B\pp\ul$, independent of $||\mW\pl||_F$, such that $\sP_\tl\big[B\pp\ul\big]=1-2^{-N\ul}$, and $\exists w\pp\umin, w\pp\umax$ such that under $B\pp\ul \cap \big\{ \mu_2(\dx\plm) \neq 0 \big\}$: \\[-6pt]
\begin{align}
w\pp\umin \sim \frac{1}{N\ul R\ul} \operatorname{Chi-Squared}(1), \quad w\pp\umax \sim \frac{2}{N\ul} \operatorname{Chi-Squared}(N\ul R\ul), \qquad w\pp\umin \leq \delta \mu_2(\dx\pl) \leq w\pp\umax.  \label{E:moments_vanilla_Theorem8}
\end{align}

Denoting $A\pp\ul=\bigcap^l_{k=1}\Big(B\pp\uk \cap \big\{ \mu_2(\dx\pk) \neq 0 \big\} \Big)$, we also have \\[-6pt]
\begin{align*}
\sP_\Tl\big[A\pp\ul\big] & = \prod\nolimits^l_{k=1} \big(1 - 2^{-N\uk} \big). \\[-15pt] \nonumber
\end{align*}

Both $\log x$ and $(\log x)^2$ are integrable at $0$ since $\int \log x \, dx = x \log x - x$ and $\int (\log x)^2 dx = x(\log x)^2 - 2x \log x +2x$. By \eq{E:moments_vanilla_Theorem6} and \eq{E:moments_vanilla_Theorem8}, it then follows that $\log \delta \nu_2(\rvx\pl)$ and $\log \delta \mu_2(\rvx\pl)$ have well-defined expectation and variance under $A\ul$ and $A\pp\ul$ respectively.

\emph{Now, crucially, let us note that the distributions of $\delta \nu_2(\rvx\pl)$ with respect to $\tl | A\ul$ and $\delta \mu_2(\rvx\pl)$ with respect to $\tl | A\pp\ul$ are fully determined by: (i) the input distributions $P_{\rvx}(\rvx)=P_{\rvx\po}(\rvx\po)$ and $P_{\dx}(\dx)=P_{\dx\po}(\dx\po)$; (ii) the model parameters $\Tlm$ up to layer $l-1$.}

We are thus interested in the following infima and suprima:\\[-9pt]
\begin{alignat}{3}
& \inf_{P_{\rvx\po}(\rvx\po),P_{\dx\po}(\dx\po),\Tlm} \E_{\tl | A\ul}[ -\log \delta \nu_2(\rvx\pl) ], &\qquad & \sup_{P_{\rvx\po}(\rvx\po),P_{\dx\po}(\dx\po),\Tlm} \E_{\tl | A\ul}[ -\log \delta \nu_2(\rvx\pl) ], \label{E:inf_sup_expectation1} \\
& \inf_{P_{\rvx\po}(\rvx\po),P_{\dx\po}(\dx\po),\Tlm} \Var_{\tl | A\ul}[ \log \delta \nu_2(\rvx\pl) ],& \qquad & \sup_{P_{\rvx\po}(\rvx\po),P_{\dx\po}(\dx\po),\Tlm} \Var_{\tl | A\ul}[ \log \delta \nu_2(\rvx\pl) ], \label{E:inf_sup_variance1} \\
& \inf_{P_{\rvx\po}(\rvx\po),P_{\dx\po}(\dx\po),\Tlm} \E_{\tl | A\pp\ul}[ -\log \delta \mu_2(\dx\pl) ], &\qquad & \sup_{P_{\rvx\po}(\rvx\po),P_{\dx\po}(\dx\po),\Tlm} \E_{\tl | A\pp\ul}[ -\log \delta \mu_2(\dx\pl) ], \label{E:inf_sup_expectation2} \\
& \inf_{P_{\rvx\po}(\rvx\po),P_{\dx\po}(\dx\po),\Tlm} \Var_{\tl | A\pp\ul}[ \log \delta \mu_2(\dx\pl) ],& \qquad & \sup_{P_{\rvx\po}(\rvx\po),P_{\dx\po}(\dx\po),\Tlm} \Var_{\tl | A\pp\ul}[ \log \delta \mu_2(\dx\pl) ]. \label{E:inf_sup_variance2} \\[-15pt] \nonumber
\end{alignat}

Our strategy is to consider: \\[-15pt]
\begin{enumerate}[label={--}]
\item Sequences of random variables $(\rvx\pok)\ukN$, $(\dx\pok)\ukN$ corresponding to deterministic distributions $P_{\rvx\pok}(\rvx\pok)$, $P_{\dx\pok}(\dx\pok)$;
\item Sequences of deterministic model parameters $(\Theta\plmk)\ukN$ up to layer $l-1$;
\item Sequences of random variables $(\rvx\plmk)\ukN$ and $(\dx\plmk)\ukN$ obtained by the simultaneous propagation of $(\rvx\pok,\dx\pok)$ with parameters $\Theta\plmk$ up to layer $l-1$;
\item Sequences of random variables $(\rvx\plk)\ukN$ and $(\dx\plk)\ukN$ obtained by the simultaneous propagation at layer $l$ of $(\rvx\plmk,\dx\plmk)$ with random parameters $(\vomega\plk,\vbeta\plk)$;
\item Sequences of geometric increments $(\delta \nu_2(\rvx\plk))\ukN$ and $(\delta \mu_2(\dx\plk))\ukN$, defined as $\delta \nu_2(\rvx\plk)\equiv\frac{\nu_2(\rvx\plk)}{\nu_2(\rvx\plmk)}$ and $\delta \mu_2(\dx\plk)\equiv\frac{\mu_2(\dx\plk)}{\mu_2(\dx\plmk)}$;
\item Sequences of events $(B\ulk)\ukN$, $(B\pp\ulk)\ukN$, $(A\ulk)\ukN$, $(A\pp\ulk)\ukN$ appropriately defined with respect to $\delta \nu_2(\rvx\plk)$ and $\delta \mu_2(\dx\plk)$.
\end{enumerate}

We will finally consider sequences such that $\E_{\tl | A\ulk}[-\log \delta \nu_2(\rvx\plk) ]$, $\Var_{\tl | A\ulk}[ \log \delta \nu_2(\rvx\plk) ]$, $\E_{\tl | A\pp\ulk}[-\log \delta \mu_2(\dx\plk) ]$, $\Var_{\tl | A\pp\ulk}[ \log \delta \mu_2(\dx\plk) ]$ converge to the infima and suprima of \eq{E:inf_sup_expectation1}, \eq{E:inf_sup_variance1}, \eq{E:inf_sup_expectation2}, \eq{E:inf_sup_variance2} as $k\to\infty$. 

\emph{We start by focusing on $\delta \nu_2(\rvx\pl)$ and the reasoning will be easily extended to $\delta \mu_2(\dx\pl)$.}

\subsubsection{Weakly Convergent Subsequence}

By \eq{E:moments_vanilla_Theorem6}, under $B\ulk \cap A\ulmk$: \\[-12pt]
\begin{align*}
 \delta \nu_2(\rvx\plk) \notin ]a,b] & \implies \big( a \geq w\umink \big) \lor \big( w\umaxk > b \big), \\[-15pt] \nonumber
\end{align*}

with $\wedge$ the logical \emph{and}, $\lor$ the logical \emph{or}, and with $w\umink$, $w\umaxk$ defined as in \eq{E:moments_vanilla_Theorem6} with respect to $\delta \nu_2(\rvx\plk)$. Then $\sP_{\tl | A\ulk }\big[ \delta \nu_2(\rvx\plk) \notin ]a,b] \big] = \sP_{\tl | B\ulk \cap A\ulmk}\big[ \delta \nu_2(\rvx\plk) \notin ]a,b] \big]$ can be bounded as \\[-9pt]
\begin{align*}
\sP_{\tl | A\ulk }\big[ \delta \nu_2(\rvx\plk) \notin ]a,b] \big]
 & \leq \sP_{w \sim \frac{1}{N\ul R\ul} \operatorname{Chi-Squared}(1)} \big[ w \leq a \big] + \sP_{w \sim \frac{2}{N\ul} \operatorname{Chi-Squared}(N\ul R\ul)} \big[ w > b \big]. \\[-15pt] \nonumber
\end{align*}

Thus $\forall \epsilon$, $\exists a_\epsilon, b_\epsilon$ such that \\[-12pt]
\begin{align*}
\forall k: \, \sP_{\tl | A\ulk }\big[ \delta \nu_2(\rvx\plk) \notin ]a_\epsilon,b_\epsilon] \big] & \leq \epsilon, \\
\inf\nolimits\uk \sP_{\tl | A\ulk }\big[ \delta \nu_2(\rvx\plk) \in ]a_\epsilon,b_\epsilon] \big] & \geq 1-\epsilon, \\[-15pt] \nonumber
\end{align*}

which means that the sequence $\big(\delta \nu_2(\rvx\plk) | A\ulk\big)\ukN$ of random variables is tight. By \lemma{L:prokhorov_weak_convergence}, it follows that there exists a sequence of strictly increasing indices $(i\uk)\ukN$ and a random variable $X$ such that $\big(\delta \nu_2(\rvx\plik) | A\ulik\big)\ukN$ converges weakly to $X$: $\delta \nu_2(\rvx\plik)| A\ulik \Rightarrow X$.

If $\E_{\tl | A\ulk}[ -\log \delta \nu_2(\rvx\plk) ]$, $\Var_{\tl | A\ulk}[ \log \delta \nu_2(\rvx\plk) ]$ have well-defined limits equal to the infima and suprima of \eq{E:inf_sup_expectation1} and \eq{E:inf_sup_variance1}, then $\E_{\tl | A\ulik}[ -\log \delta \nu_2(\rvx\plik) ]$, $\Var_{\tl | A\ulik}[ \log \delta \nu_2(\rvx\plik) ]$ have the same limits. For simplicity of notations and without loss of generality, $(\delta \nu_2(\rvx\plik))\ukN$ may thus be renamed as $(\delta \nu_2(\rvx\plk))\ukN$ such that $\delta \nu_2(\rvx\plk)| A\ulk \Rightarrow X$.

We have that for all continuity points $a>0$ of the function $x\mapsto \sP[X\leq x]$: \\[-9pt]
\begin{align}
\sP_{\tl | A\ulk }\big[ \delta \nu_2(\rvx\plk) \leq a \big] & 
\leq \sP_{w \sim \frac{1}{N\ul R\ul} \operatorname{Chi-Squared}(1)} \big[ w \leq a \big], \nonumber \\
\sP \big[ X \leq a \big] & 
\leq \sP_{w \sim \frac{1}{N\ul R\ul} \operatorname{Chi-Squared}(1)} \big[ w \leq a \big], \label{E:moments_vanilla_Theorem10} \\[-15pt] \nonumber
\end{align}

where we used the definition of weak convergence: $\sP_{\tl | A\ulk }\big[ \delta \nu_2(\rvx\plk) \leq a \big] \xrightarrow{k\rightarrow\infty} \sP[ X \leq a ]$. 

Now let us show that the set of discontinuity points of the cumulative distribution function (c.d.f.) $x\mapsto \sP[X\leq x]$ on $[0,1]$ has Borel measure equal to $0$. Since c.d.f. are always non-decreasing and right-continuous, the set of discontinuity points is the set of non-left-continuity points, i.e. $D=\big\{ x\in[0,1]: \lim_{x\pp \to x^-} \sP[X\leq x\pp] < \sP[X\leq x] \big\}$. Let us denote $D_p \equiv\big\{ x\in[0,1]: \sP[X\leq x]- \lim_{x\pp \to x^-} \sP[X\leq x\pp] \geq \frac{1}{p} \big\}$. Then the function $\1_{D_p}$ converges point-wise to $\1_{D}$, i.e. $\forall x \in [0,1]$: $\1_{D_p}(x)  \xrightarrow{p\rightarrow\infty}  \1_{D}(x)$, and the dominated convergence theorem gives\\[-9pt]
\begin{align*}
\int^1_0 \1_{D_p}(x) dx \xrightarrow{p\rightarrow\infty} \int^1_0 \1_{D}(x) dx. \\[-15pt] \nonumber
\end{align*}

On the other hand, since $x\mapsto \sP[X\leq x]$ is non-decreasing and $0 \leq \sP[X\leq x] \leq 1$, it follows that $D_p$ is comprised of at most $p$ points, implying that $\int^1_0 \1_{D_p}(x) dx=0$. We deduce that $\int^1_0 \1_{D}(x)=0$, i.e. that $D$ has Borel measure equal to $0$. 

It follows that we can find a sequence of continuity points $a_p>0$ of $x\mapsto \sP[X\leq x]$ such that $a_p \xrightarrow{p\rightarrow\infty} 0$. We then obtain $\sP \big[ X = 0 \big] \leq \sP \big[ X \leq a_p \big] \xrightarrow{p\rightarrow\infty} 0$ by \eq{E:moments_vanilla_Theorem10}, and thus $\sP \big[ X = 0 \big]= 0$. Without loss of generality, we may assume $X > 0$ surely (if this is not the case, simply replace $X$ by a constant arbitrary value $>0$ under the zero-probability event $\{ X=0 \}$). 

Now if we consider the function $h$ such that $h(x)=\log x$ if $x>0$, and $h(x)=0$ otherwise, then \lemma{L:function_weak_convergence} implies that $h(\delta \nu_2(\rvx\plk)) | A\ulk \Rightarrow h(X)$, i.e. $\log \delta \nu_2(\rvx\plk) | A\ulk \Rightarrow \log X$. If we consider $h(x)=x^2$, we further deduce that $\delta \nu_2(\rvx\plk)^2 | A\ulk \Rightarrow X^2$ and that $\big( \log \delta \nu_2(\rvx\plk) \big)^2 | A\ulk \Rightarrow (\log X)^2$. \\

\subsubsection{Uniform Integrability}

Since both $x \mapsto \1_{\{ x \geq M\}} x$ and $x \mapsto \1_{\{ x^2 \geq M\}} x^2$ are non-decreasing for $x >0$, \eq{E:moments_vanilla_Theorem6} implies that \\[-9pt]
\begin{align*}
\sup_k \E_{\tl | A\ulk} \big[ \1_{\{ \delta \nu_2(\rvx\plk) \geq M\}} \delta \nu_2(\rvx\plk) \big] & \leq  \E_{w \sim \frac{2}{N\ul} \operatorname{Chi-Squared}(N\ul R\ul)} \big[ \1_{\{ w \geq M\}} w \big] \xrightarrow{M\rightarrow\infty} 0, \\
\sup_k \E_{\tl | A\ulk} \big[ \1_{\{ \delta \nu_2(\rvx\plk)^2 \geq M\}} \delta \nu_2(\rvx\plk)^2 \big] & 
 \leq  \E_{w \sim \frac{2}{N\ul} \operatorname{Chi-Squared}(N\ul R\ul)} \big[ \1_{\{ w^2 \geq M\}} w^2 \big] \xrightarrow{M\rightarrow\infty} 0.
\end{align*}

Since $\delta \nu_2(\rvx\plk) \geq 0$, it follows that both $\big(\delta \nu_2(\rvx\plk) | A\ulk \big)\ukN$ and $\big(\delta \nu_2(\rvx\plk)^2 | A\ulk\big)\ukN$ are uniformly integrable, implying by \lemma{L:integrable_weak_convergence} that\\[-9pt]
\begin{align*}
\E_{\tl | A\ulk}\big[ \delta \nu_2(\rvx\plk) \big]   \xrightarrow{k\rightarrow\infty} \E \big[ X \big], \qquad
\E_{\tl | A\ulk}\big[ \delta \nu_2(\rvx\plk)^2 \big]   \xrightarrow{k\rightarrow\infty} \E \big[ X^2 \big].
\end{align*}

Again since $x \mapsto \1_{\{ x \geq M\}} x$ is non-decreasing for $x >0$, \eq{E:moments_vanilla_Theorem6} implies that under $B\ulk \cap \big\{ w\umink>0 \big\} \cap \big\{ w\umaxk>0 \big\}$:\\[-12pt]
\begin{align*}
 \log w\umink & \leq  \log \delta \nu_2(\rvx\plk) \leq  \log w\umaxk,  \\ 
  \big|  \log \delta \nu_2(\rvx\plk) \big|
 & \leq \max \Big( \big|\log w\umink \big|,  \big|\log w\umaxk \big| \Big),  \\
 \1_{\{ |\log \delta \nu_2(\rvx\plk)| \geq M \}} \big|  \log \delta \nu_2(\rvx\plk) \big|
& \leq \max \Big(  \1_{\{ |\log w\umink| \geq M \}} \big|\log w\umink \big|, \1_{\{ |\log w\umaxk| \geq M \}}  \big|\log w\umaxk \big| \Big),  \\
\1_{\{ |\log \delta \nu_2(\rvx\plk)| \geq M \}} \big|  \log \delta \nu_2(\rvx\plk) \big|
 & \leq \1_{\{ |\log w\umink| \geq M \}}  \big|\log w\umink \big| + \1_{\{ |\log w\umaxk| \geq M \}}  \big| \log w\umaxk \big|.  \\[-15pt] \nonumber
\end{align*}

Similarly, we have that under $B\ulk \cap \big\{ w\umink>0 \big\} \cap \big\{ w\umaxk>0 \big\}$:\\[-9pt]
\begin{align*}
 \1_{\{ (\log \delta \nu_2(\rvx\plk))^2 \geq M \}} \big(  \log \delta \nu_2(\rvx\plk) \big)^2
& \leq \max \Big(  \1_{\{ (\log w\umink)^2 \geq M \}} \big(\log w\umink \big)^2, \1_{\{ (\log w\umaxk)^2 \geq M \}}  \big(\log w\umaxk \big)^2 \Big),  \\
\1_{\{ (\log \delta \nu_2(\rvx\plk))^2 \geq M \}} \big(  \log \delta \nu_2(\rvx\plk) \big)^2
 & \leq \1_{\{ (\log w\umink)^2 \geq M \}}  \big(\log w\umink \big)^2 + \1_{\{ (\log w\umaxk)^2 \geq M \}}  \big( \log w\umaxk \big)^2.  \\[-15pt] \nonumber
\end{align*}

Using $\sP_{\tl}[w\umink = 0]=0$ and $\sP_{\tl}[w\umaxk = 0]=0$, and denoting $\operatorname{Chi-Squared}(1)^*$ and $\operatorname{Chi-Squared}(N\ul R\ul)^*$ the chi-squared distributions excluding zero values, we get that \\[-12pt]
\begin{align*}
& \E_{\tl | A\ulk}  \Big[ \1_{\{ |\log \delta \nu_2(\rvx\plk)| \geq M \}} \big|  \log \delta \nu_2(\rvx\plk) \big| \Big] \\
& \qquad \leq \E_{w \sim \frac{1}{N\ul R\ul} \operatorname{Chi-Squared}(1)^*} \Big[ \1_{\{ |\log w| \geq M\}} \big|\log w \big| \Big]
+ \E_{w \sim \frac{2}{N\ul} \operatorname{Chi-Squared}(N\ul R\ul)^*} \Big[ \1_{\{ |\log w| \geq M\}} \big|\log w \big| \Big],\\
& \E_{\tl | A\ulk}  \Big[ \1_{\{ (\log \delta \nu_2(\rvx\plk))^2 \geq M \}} \big(  \log \delta \nu_2(\rvx\plk) \big)^2 \Big] \\
& \qquad \leq \E_{w \sim \frac{1}{N\ul R\ul} \operatorname{Chi-Squared}(1)^*} \Big[ \1_{\{ (\log w)^2 \geq M\}} \big(\log w \big)^2 \Big]
+ \E_{w \sim \frac{2}{N\ul} \operatorname{Chi-Squared}(N\ul R\ul)^*} \Big[ \1_{\{ (\log w)^2 \geq M\}} \big(\log w \big)^2 \Big].
\end{align*}

It follows that \\[-12pt]
\begin{align*}
\sup_k \E_{\tl | A\ulk}  \Big[ \1_{\{ |\log \delta \nu_2(\rvx\plk)| \geq M \}} \big|  \log \delta \nu_2(\rvx\plk) \big| \Big]  & \xrightarrow{M\rightarrow\infty} 0,\\
\sup_k \E_{\tl | A\ulk}  \Big[ \1_{\{ (\log \delta \nu_2(\rvx\plk))^2 \geq M \}} \big(  \log \delta \nu_2(\rvx\plk) \big)^2 \Big] & \xrightarrow{
M\rightarrow\infty} 0. \\[-15pt] \nonumber
\end{align*}

Thus both $\big(\log \delta \nu_2(\rvx\plk) | A\ulk\big)\ukN$ and $\big(( \log \delta \nu_2(\rvx\plk))^2 | A\ulk\big)\ukN$ are uniformly integrable, and by \lemma{L:integrable_weak_convergence}: \\[-9pt]
\begin{align*}
\E_{\tl | A\ulk}\big[ \log \delta \nu_2(\rvx\plk) \big]   \xrightarrow{k\rightarrow\infty} \E \big[ \log X \big], \qquad
\E_{\tl | A\ulk}\big[ \big(  \log \delta \nu_2(\rvx\plk) \big)^2 \big]   \xrightarrow{k\rightarrow\infty} \E \big[ ( \log X )^2 \big]. \\[-15pt] \nonumber \\ \nonumber
\end{align*}

\subsubsection{Bounding Moments of $\delta \nu_2(\rvx\plk)$}

First let us bound $\Var_{\tl | A\ulmk}\big[ \delta \nu_2(\rvx\plk) \big]$ from above. For each channel, the variance is bounded as \\[-12pt]
\begin{align}
\Var_{\tl | A\ulmk} \Bigg[ \frac{\nu_{2,\rc}(\rvx\plk)}{\nu_{2}(\rvx\plmk)} \Bigg] & \leq \E_{\tl | A\ulmk} \Bigg[ \frac{\nu_{2,\rc}(\rvx\plk)^2}{\nu_{2}(\rvx\plmk)^2} \Bigg] \leq  \E_{\tl | A\ulmk}\Bigg[ \frac{\nu_{2,\rc}(\rvy\plk)^2}{\nu_{2}(\rvx\plmk)^2} \Bigg] \nonumber \\
& \leq R^2\ul \, \E_{\tl | A\ulmk} \Big[ \sum\nolimits_{i\neq i\pp} \big( \hat{\mW}\pl_{\rc,i} \big)^2 \big( \hat{\mW}\pl_{\rc,i\pp} \big)^2 \hat{\lambda}_i \hat{\lambda}_{i\pp} + \sum\nolimits_{i} \big( \hat{\mW}\pl_{\rc,i} \big)^4 \hat{\lambda}^2_i  \Big] \nonumber \\
& \leq R^2\ul \sum\nolimits_{i,i\pp} \Bigg(\frac{2}{R\ul}\Bigg) \Bigg(\frac{2}{R\ul}\Bigg) 3 \hat{\lambda}_i \hat{\lambda}_{i\pp} = 12. \nonumber
\end{align}

Since the different channels are independent, we get that \\[-12pt]
\begin{align*}
\Var_{\tl | A\ulmk} \big[ \delta \nu_2(\rvx\plk) \big] & = \Var_{\tl | A\ulmk} \Bigg[ \frac{1}{N\ul} \sum_\rc \frac{\nu_{2,\rc}(\rvx\plk)}{\nu_{2}(\rvx\plmk)} \Bigg]
 = \frac{1}{N^2\ul} \sum_\rc \Var_{\tl | A\ulmk} \Bigg[ \frac{\nu_{2,\rc}(\rvx\plk)}{\nu_{2}(\rvx\plmk)} \Bigg] \leq \frac{12}{N\ul}. \\[-15pt] \nonumber
\end{align*}

Next we bound $\big|\E_{\tl| A\ulk} \big[ \delta \nu_2(\rvx\plk) \big] - 1 \big|$. Using $\E_{\tl | A\ulmk} \big[ \delta \nu_2(\rvx\plk) \big]=1$ by \eq{E:moments_vanilla_Theorem5}:  \\[-9pt]
\begin{align}
\big|\E_{\tl| A\ulk} & \big[ \delta \nu_2(\rvx\plk) \big] - 1 \big| \nonumber \\
& = \big| \E_{\tl| A\ulk} \big[ \delta \nu_2(\rvx\plk) \big] - \E_{\tl | A\ulmk} \big[ \delta \nu_2(\rvx\plk) \big] \big| \nonumber \\
& = \Bigg| \Bigg(\frac{1}{\sP_{\tl | A\ulmk}[A\ulk]} - 1 \Bigg) \, \E_{\tl | A\ulmk} \big[\1_{A\ulk} \delta \nu_2(\rvx\plk) \big]
- \E_{\tl | A\ulmk} \big[ \1_{A^{\textnormal{c}}\ulk} \delta \nu_2(\rvx\plk) \big] \Bigg| \nonumber \\
& \leq \frac{\sP_{\tl | A\ulmk}[A^{\textnormal{c}}\ulk]}{\sP_{\tl | A\ulmk}[A\ulk]} \big| \E_{\tl | A\ulmk} \big[ \1_{A\ulk} \delta \nu_2(\rvx\plk) \big]  \big|
+ \sP_{\tl | A\ulmk} \big[A^{\textnormal{c}}\ulk\big]^\frac{1}{2} \E_{\tl | A\ulmk} \big[ \delta \nu_2(\rvx\plk)^2 \big]^\frac{1}{2} \label{E:moments_vanilla_Theorem13} \\
& \leq \Bigg( \frac{\sP_{\tl | A\ulmk}[A^{\textnormal{c}}\ulk]}{\sP_{\tl | A\ulmk}[A\ulk]} + \sP_{\tl | A\ulmk} \big[A^{\textnormal{c}}\ulk\big]^\frac{1}{2} \Bigg)\, \E_{\tl | A\ulmk} \big[ \delta \nu_2(\rvx\plk)^2 \big]^\frac{1}{2} \label{E:moments_vanilla_Theorem14} \\
& \leq \Bigg( \frac{2^{-N\ul}}{1-2^{-N\ul}}+2^{-\frac{N\ul}{2}} \Bigg) \Big(1+\Var_{\tl | A\ulmk} \big[ \delta \nu_2(\rvx\plk) \big] \Big)^\frac{1}{2} 
\leq \epsilon\ul \Bigg( 1 + \frac{12}{N\ul} \Bigg)^\frac{1}{2} \leq 2 \epsilon\ul, \label{E:moments_vanilla_Theorem15} \\[-18pt] \nonumber
\end{align}

where we applied Cauchy-Schwarz inequality in \eq{E:moments_vanilla_Theorem13} and \eq{E:moments_vanilla_Theorem14}, defined $\epsilon\ul\equiv\frac{2^{-N\ul}}{1-2^{-N\ul}}+2^{-\frac{N\ul}{2}}$ and used $\Big( 1 + \frac{12}{N\ul} \Big)^\frac{1}{2} \leq 2$ under the large width assumption. \\[-9pt]

We are then able to bound $\Var_{\tl | A\ulk}\big[ \delta \nu_2(\rvx\plk) \big]$ from above:  \\[-12pt]
\begin{align}
\Var_{\tl| A\ulk} \big[ \delta \nu_2(\rvx\plk) \big] &  = \E_{\tl| A\ulk} \big[ \delta \nu_2(\rvx\plk)^2 \big] - \E_{\tl| A\ulk} \big[ \delta \nu_2(\rvx\plk) \big]^2 \nonumber \\
 & \leq \frac{\E_{\tl| A\ulmk} \big[  \1_{A\ulk} \delta \nu_2(\rvx\plk)^2 \big]}{\sP_{\tl | A\ulmk}[A\ulk]} - 1 + 1 - \E_{\tl| A\ulk} \big[ \delta \nu_2(\rvx\plk) \big]^2 \nonumber \\
 & \leq \frac{\E_{\tl| A\ulmk} \big[  \delta \nu_2(\rvx\plk)^2 \big] - 1}{1-2^{-N\ul}} + \frac{1}{1-2^{-N\ul}} -1
+ \big|\E_{\tl| A\ulk} \big[ \delta \nu_2(\rvx\plk) \big] - 1 \big| \big| \E_{\tl| A\ulk} \big[ \delta \nu_2(\rvx\plk) \big] + 1 \big|  \nonumber  \\
 & \leq \frac{\Var_{\tl| A\ulmk} \big[  \delta \nu_2(\rvx\plk) \big]}{1-2^{-N\ul}} + \frac{ 2^{-N\ul} }{1-2^{-N\ul}} + 2 \epsilon\ul \Bigg(\frac{\E_{\tl| A\ulmk} \big[ \delta \nu_2(\rvx\plk) \big]}{1-2^{-N\ul}} + 1 \Bigg) \nonumber \\
 & \leq \frac{1}{1-2^{-N\ul}} \Bigg( \frac{12}{N\ul} + 2^{-N\ul} \Bigg) + 2 \epsilon\ul \Bigg(\frac{1}{1-2^{-N\ul}} + 1 \Bigg) \leq  \frac{24}{N\ul}, \label{E:moments_vanilla_Theorem12} \\[-15pt] \nonumber
\end{align}
where we used again the fact that the terms in $2^{-N\ul}$ are negligible with respect to $\frac{12}{N\ul}$ under the large width assumption.

Finally let us bound $\Var_{\tl | A\ulk}\big[ \delta \nu_2(\rvx\plk) \big]$ from below. In the remaining of this calculation, the conditionality on \mbox{$\big\{ || \mW\pl ||^2_F>0 \big\}$} is assumed but omitted for simplicity of notation. This conditionality has no effect on expectations and probabilities since $\big\{ || \mW\pl ||^2_F>0 \big\}$ has probability one. 

We first note that $\Big(\1_{B\ulk}$, $\frac{\delta \nu_2(\rvx\plk)}{|| \mW\pl ||^2_F}\Big)$ is fully determined by $\Big((\tilde{v}\pl_\rc)_{1\leq\rc\leq N\ul}, \frac{\mW\pl}{|| \mW\pl ||^2_F}\Big)$, which is itself independent from $|| \mW\pl ||^2_F$. It follows that $\Big(\1_{B\ulk},\frac{\delta \nu_2(\rvx\plk)}{|| \mW\pl ||^2_F}\Big)$ is independent from $|| \mW\pl ||^2_F$, and thus that \\[-9pt]
\begin{align*}
 & \Var_{\tl | A\ulmk \cap B\ulk } \big[ \delta \nu_2(\rvx\plk) \big] &  \nonumber \\
 & \quad = \frac{\E_{\tl | A\ulmk }\Big[ \1_{B\ulk} \frac{\delta \nu_2(\rvx\plk)^2}{|| \mW\pl ||^4_F} || \mW\pl ||^4_F \Big]}{\sP_{\tl | A\ulmk}[B\ulk]} - \frac{\E_{\tl | A\ulmk }\Big[ \1_{B\ulk} \frac{\delta \nu_2(\rvx\plk)}{|| \mW\pl ||^2_F} || \mW\pl ||^2_F \Big]^2}{\sP_{\tl | A\ulmk}[B\ulk]^2} \nonumber \\
& \quad = \frac{\E_{\tl | A\ulmk }\Big[ \1_{B\ulk} \frac{\delta \nu_2(\rvx\plk)^2}{|| \mW\pl ||^4_F} \Big]}{\sP_{\tl | A\ulmk }[B\ulk]}  \E_{\tl | A\ulmk }\big[ || \mW\pl ||^4_F\big] - \frac{\E_{\tl | A\ulmk }\Big[ \1_{B\ulk} \frac{\delta \nu_2(\rvx\plk)}{|| \mW\pl ||^2_F} \Big]^2}{\sP_{\tl | A\ulmk }[B\ulk]^2} \E_{\tl | A\ulmk}\big[|| \mW\pl ||^2_F \big]^2 \nonumber \\
& \quad = \E_{\tl | A\ulmk \cap B\ulk }\Bigg[ \frac{\delta \nu_2(\rvx\plk)^2}{|| \mW\pl ||^4_F} \Bigg] \E_{\tl | A\ulmk}\big[ || \mW\pl ||^4_F\big] -  \E_{\tl | A\ulmk \cap B\ulk}\Bigg[ \frac{\delta \nu_2(\rvx\plk)}{|| \mW\pl ||^2_F} \Bigg]^2 \E_{\tl | A\ulmk}\big[|| \mW\pl ||^2_F \big]^2 \nonumber \\
& \quad \geq \E_{\tl | A\ulmk \cap B\ulk}\Bigg[ \frac{\delta \nu_2(\rvx\plk)}{|| \mW\pl ||^2_F} \Bigg]^2 \Big( \E_{\tl | A\ulmk}\big[ || \mW\pl ||^4_F\big] -  \E_{\tl | A\ulmk}\big[|| \mW\pl ||^2_F \big]^2 \Big) \nonumber \\
& \quad \geq \E_{\tl | A\ulmk \cap B\ulk}\Bigg[ \frac{\delta \nu_2(\rvx\plk)}{|| \mW\pl ||^2_F} \Bigg]^2 \Var_{\tl | A\ulmk}\big[|| \mW\pl ||^2_F \big]. \nonumber \\[-15pt] \nonumber
\end{align*}

Due to $\sP_{\tl | A\ulmk \cap B\ulk}[A\ulk]= 1$ and $A\ulk \subseteq A\ulmk \cap B\ulk$, the conditionality on $A\ulmk \cap B\ulk$ can be replaced by the conditionality on $A\ulk$:  \\[-9pt]
\begin{align}
 \Var_{\tl | A\ulk } \big[ \delta \nu_2(\rvx\plk) \big] \geq \E_{\tl | A\ulk}\Bigg[ \frac{\delta \nu_2(\rvx\plk)}{|| \mW\pl ||^2_F} \Bigg]^2 \Var_{\tl | A\ulmk}\big[|| \mW\pl ||^2_F \big]. \label{E:moments_vanilla_Theorem16} \\[-15pt] \nonumber
\end{align}

It remains to bound the terms $\E_{\tl | A\ulk}\Big[ \frac{\delta \nu_2(\rvx\plk)}{|| \mW\pl ||^2_F} \Big]^2$ and $\Var_{\tl | A\ulmk}\big[|| \mW\pl ||^2_F \big]$. A computation similar to \eq{E:moments_vanilla_Theorem14} gives \\[-9pt]
\begin{align}
\Bigg|\E_{\tl | A\ulk}\Bigg[ \frac{\delta \nu_2(\rvx\plk)}{|| \mW\pl ||^2_F} \Bigg] - \E_{\tl | A\ulmk}\Bigg[ \frac{\delta \nu_2(\rvx\plk)}{|| \mW\pl ||^2_F} \Bigg] \Bigg|
& \leq \epsilon\ul \, \E_{\tl | A\ulmk} \Bigg[ \frac{\delta \nu_2(\rvx\plk)^2}{|| \mW\pl ||^4_F} \Bigg]^\frac{1}{2}. \label{E:moments_vanilla_Theorem17}
\end{align}

The term $\E_{\tl | A\ulmk} \Big[ \frac{\delta \nu_2(\rvx\plk)^2}{|| \mW\pl ||^4_F} \Big]^\frac{1}{2}$ of \eq{E:moments_vanilla_Theorem17} can be bounded using \eq{E:moments_vanilla_Theorem2}:\\[-6pt]
\begin{align*}
\frac{\delta \nu_2(\rvx\plk)}{|| \mW\pl ||^2_F}
 = \frac{R\ul}{N\ul} \sum_\rc u\pl_\rc \sum_i \big( \hat{\mW}\pl_{\rc,i} \big)^2 \hat{\lambda}_i \frac{1}{|| \mW\pl ||^2_F}
 & \leq \frac{R\ul}{N\ul} \sum_\rc \sum_i \big( \hat{\mW}\pl_{\rc,i} \big)^2 \frac{1}{|| \mW\pl ||^2_F} = \frac{R\ul}{N\ul}, \\
\E_{\tl | A\ulmk} \Bigg[ \frac{\delta \nu_2(\rvx\plk)^2}{|| \mW\pl ||^4_F} \Bigg]^\frac{1}{2}
 & \leq \frac{R\ul}{N\ul}. \\[-18pt] \nonumber
\end{align*}

As for the term $\E_{\tl | A\ulmk}\Big[ \frac{\delta \nu_2(\rvx\plk)}{|| \mW\pl ||^2_F} \Big]$ of \eq{E:moments_vanilla_Theorem17}, we get by independence of $|| \mW\pl ||_F$ and $\frac{\delta \nu_2(\rvx\plk)}{|| \mW\pl ||^2_F}$ that \\[-6pt]
\begin{align*}
 \E_{\tl | A\ulmk}\Bigg[ \frac{\delta \nu_2(\rvx\plk)}{|| \mW\pl ||^2_F} \Bigg] \E_{\tl | A\ulmk}\big[ || \mW\pl ||^2_F \big] & =\E_{\tl | A\ulmk}\big[ \delta \nu_2(\rvx\plk) \big] = 1, \\
  \E_{\tl | A\ulmk}\Bigg[ \frac{\delta \nu_2(\rvx\plk)}{|| \mW\pl ||^2_F} \Bigg] & = \frac{1}{\E_{\tl | A\ulmk}\big[ || \mW\pl ||^2_F \big]} = \frac{1}{\frac{2}{R\ul} N\ul R\ul} = \frac{1}{2 N\ul}.
\end{align*}

We have $\epsilon\ul=\frac{2^{-N\ul}}{1-2^{-N\ul}}+2^{-\frac{N\ul}{2}} \ll \frac{1}{2 N\ulm n^d} \leq \frac{1}{2R\ul}$ under the large width assumption. Then, by \eq{E:moments_vanilla_Theorem17}: \\[-9pt]
\begin{align}
\Bigg|\E_{\tl | A\ulk}\Bigg[ \frac{\delta \nu_2(\rvx\plk)}{|| \mW\pl ||^2_F} \Bigg] - \frac{1}{2N\ul} \Bigg|
 & \ll \frac{1}{2R\ul} \frac{R\ul}{N\ul} = \frac{1}{2N\ul}, \\
\E_{\tl | A\ulk}\Bigg[ \frac{\delta \nu_2(\rvx\plk)}{|| \mW\pl ||^2_F} \Bigg] & \geq \frac{1}{4N\ul}. \label{E:moments_vanilla_Theorem18}
\end{align}

The variance $\Var_{\tl | A\ulmk}\big[|| \mW\pl ||^2_F \big]$ is given by\\[-9pt]
\begin{align}
\Var_{\tl | A\ulmk}\big[|| \mW\pl ||^2_F \big] & = \Bigg(\frac{2}{R\ul}\Bigg)^2 \Bigg( \E_{\tl | A\ulmk}\Bigg[ \sum\nolimits_{(\rc,i),(\rc\pp,i\pp)} \Bigg(\frac{2}{R\ul}\Bigg)^{-1}\big(\mW\pl_{\rc,i}\big)^2 \Bigg(\frac{2}{R\ul}\Bigg)^{-1} \big(\mW\pl_{\rc\pp,i\pp}\big)^2 \Bigg] - N^2\ul R^2\ul \Bigg)  \nonumber \\
& =  \Bigg(\frac{2}{R\ul}\Bigg)^2 \Bigg( \Big(\sum\nolimits_{(\rc,i) \neq (\rc\pp,i\pp)} 1\Big) + \Big(\sum\nolimits_{(\rc,i)} 3\Big) - N^2\ul R^2\ul \Bigg) = \frac{8N\ul}{R\ul}. \label{E:moments_vanilla_Theorem19} \\[-15pt] \nonumber
\end{align}

Finally combining \eq{E:moments_vanilla_Theorem16}, \eq{E:moments_vanilla_Theorem18} and \eq{E:moments_vanilla_Theorem19}: \\[-12pt]
\begin{align}
\Var_{\tl | A\ulk}\big[ \delta \nu_2(\rvx\plk) \big] & \geq \Bigg(\frac{1}{4N\ul}\Bigg)^2  \frac{8N\ul}{R\ul} = \frac{1}{2 N\ul R\ul}. \label{E:moments_vanilla_Theorem20} \\ \nonumber
\end{align}

\subsubsection{Consequence for $m\umin$, $m\umax$, $v\umin$, $v\umax$}

Using \eq{E:moments_vanilla_Theorem15} and taking the limit $k\to\infty$: \\[-12pt]
\begin{align*}
 \big|\E_{\tl| A\ulk} \big[ \delta \nu_2(\rvx\plk) \big] - 1 \big| & \leq 2 \epsilon\ul, \\
 \big|\E \big[ X \big] - 1 \big| & \leq 2 \epsilon\ul.  \\[-15pt] \nonumber
\end{align*}

Similarly, using \eq{E:moments_vanilla_Theorem12} and \eq{E:moments_vanilla_Theorem20} and taking the limit $k\to\infty$:
\begin{align*}
  \frac{1}{2 N\ul R\ul} \leq \Var_{\tl | A\ulk}\big[ \delta \nu_2(\rvx\plk) \big] & = \E_{\tl | A\ulk}\big[ \delta \nu_2(\rvx\plk)^2 \big] - \E_{\tl | A\ulk}\big[ \delta \nu_2(\rvx\plk) \big]^2 \leq  \frac{24}{N\ul},  \\
 \frac{1}{2 N\ul R\ul} \leq \Var \big[ X \big] & = \E \big[ X^2 \big] - \E \big[ X \big]^2 \leq  \frac{24}{N\ul}.  \\[-15pt] \nonumber
\end{align*}

Thus $\big|\E \big[ X \big]-1\big|$ is exponentially small in $N\ul$, while the standard deviation of $X$ behaves as a power-law of $N\ul$: $\frac{1}{\sqrt{2 N\ul R\ul}} \leq \Var \big[ X \big]^\frac{1}{2} \leq \sqrt{\frac{24}{N\ul}}$. This means that $\big|\E \big[ X \big]-1\big|$ is much smaller than the effect of the log-concavity: \\[-6pt]
\begin{align*}
\big| \E \big[ X \big] - 1 \big| \ll \log \E \big[ X \big] - \E \big[ \log X \big] \leq \E \big[ X \big] - 1 - \E \big[ \log X \big] & \; \implies \;
  \big| \E \big[ X \big] - 1 \big| < \E \big[ X \big] - 1 - \E \big[ \log X \big] \\
   & \; \implies \; \big| \E \big[ X \big] - 1 \big| - \big( \E \big[ X \big] - 1 \big) <  - \E \big[ \log X \big] \\
 & \; \implies \; 0 <  \E \big[ -\log X \big]. \\[-15pt] \nonumber
\end{align*}

In addition, $X$ has small standard deviation around $\E \big[ X \big]$ since $\Var \big[ X \big]^\frac{1}{2} \ll 1$ under the large width assumption. This implies that \\[-9pt]
\begin{gather*}
0 < \lim_{k\to\infty} \E_{\tl| A\ulk} [ - \log \delta \nu_2(\rvx\plk) ] = \E \big[ - \log X \big] \ll 1,\\ 
0 < \lim_{k\to\infty} \Var_{\tl| A\ulk} [ \log \delta \nu_2(\rvx\plk) ] = \Var \big[ \log X \big] \ll 1.  \\[-15pt] \nonumber
\end{gather*}

Now if we alternately consider sequences $\big(\delta \nu_2(\rvx\plk) | A\ulk\big)\ukN$ corresponding to distributions $P_{\rvx\pok}(\rvx\pok)$, $P_{\dx\pok}(\dx\pok)$, and parameters $\Theta^{l-1,k}$ up to layer $l-1$, such that \\[-9pt]
\begin{align*}
 \lim_{k\to\infty} \E_{\tl| A\ulk} [ - \log \delta \nu_2(\rvx\plk) ]=\inf_{P_{\rvx\po}(\rvx\po),P_{\dx\po}(\dx\po),\Tlm} \E_{\tl | A\ul}[ -\log \delta \nu_2(\rvx\pl) ],  \\
\lim_{k\to\infty} \E_{\tl| A\ulk} [ - \log \delta \nu_2(\rvx\plk) ]=\sup_{P_{\rvx\po}(\rvx\po),P_{\dx\po}(\dx\po),\Tlm} \E_{\tl | A\ul}[ -\log \delta \nu_2(\rvx\pl) ], \\
\lim_{k\to\infty} \Var_{\tl| A\ulk} [ \log \delta \nu_2(\rvx\plk) ] 
=\inf_{P_{\rvx\po}(\rvx\po),P_{\dx\po}(\dx\po),\Tlm} \Var_{\tl | A\ul}[ \log \delta \nu_2(\rvx\pl) ],  \\
\lim_{k\to\infty} \Var_{\tl| A\ulk} [ \log \delta \nu_2(\rvx\plk) ] 
=\sup_{P_{\rvx\po}(\rvx\po),P_{\dx\po}(\dx\po),\Tlm} \Var_{\tl | A\ul}[ \log \delta \nu_2(\rvx\pl) ],\\[-15pt]
\end{align*}

then we obtain that \\[-12pt]
\begin{gather*}
0 < \inf_{P_{\rvx\po}(\rvx\po),P_{\dx\po}(\dx\po),\Tlm} \E_{\tl | A\ul}[ - \log \delta \nu_2(\rvx\pl) ] \ll 1, \\ 
0 < \sup_{P_{\rvx\po}(\rvx\po),P_{\dx\po}(\dx\po),\Tlm} \E_{\tl | A\ul}[ - \log \delta \nu_2(\rvx\pl) ] \ll 1, \\ 
0 < \inf_{P_{\rvx\po}(\rvx\po),P_{\dx\po}(\dx\po),\Tlm} \Var_{\tl | A\ul}[ \log \delta \nu_2(\rvx\pl) ] \ll 1, \\
0 < \sup_{P_{\rvx\po}(\rvx\po),P_{\dx\po}(\dx\po),\Tlm} \Var_{\tl | A\ul}[ \log \delta \nu_2(\rvx\pl) ] \ll 1. \\[-15pt] \nonumber
\end{gather*}

The final remaining dependency is the dependency in $N\ul$ and $R\ul$. Since $R\ul=K^d\ul N\ulm \leq n^d N\ulm$, and since $(N\ul)_{l\in\sN}$ is bounded, it follows that $(R\ul)_{l\in\sN}$ is also bounded. If we denote \\[-12pt]
\begin{gather*}
N\umin\equiv\min_l N\ul, \quad N\umax\equiv\max_l N\ul, \quad R\umin\equiv\min_l R\ul, \quad R\umax\equiv\max_l R\ul, \\
\gI_N\equiv\{N\umin, \dots,N\umax\}, \quad \gI_R\equiv\{R\umin, \dots,R\umax\},  \\[-15pt]
\end{gather*}

then we finally get \\[-9pt]
\begin{gather*}
0 < \min_{N\ul \in \gI_N, R\ul \in \gI_R} \, \inf_{P_{\rvx\po}(\rvx\po),P_{\dx\po}(\dx\po),\Tlm} \E_{\tl | A\ul}[ - \log \delta \nu_2(\rvx\pl) ] \ll 1, \\
0 < \max_{N\ul \in \gI_N, R\ul \in \gI_R} \, \sup_{P_{\rvx\po}(\rvx\po),P_{\dx\po}(\dx\po),\Tlm} \E_{\tl | A\ul}[ -\log \delta \nu_2(\rvx\pl) ] \ll 1, \\ 
0 < \min_{N\ul \in \gI_N, R\ul \in \gI_R} \, \inf_{P_{\rvx\po}(\rvx\po),P_{\dx\po}(\dx\po),\Tlm} \Var_{\tl | A\ul}[ \log \delta \nu_2(\rvx\pl) ] \ll 1, \\
0 < \max_{N\ul \in \gI_N, R\ul \in \gI_R} \, \sup_{P_{\rvx\po}(\rvx\po),P_{\dx\po}(\dx\po),\Tlm} \Var_{\tl | A\ul}[ \log \delta \nu_2(\rvx\pl) ] \ll 1. \\[-15pt] \nonumber
\end{gather*}

The whole reasoning can immediately be transposed to $\mu_2(\dx\pl)$ to get\\[-9pt]
\begin{gather*}
0 < \min_{N\ul \in \gI_N, R\ul \in \gI_R} \, \inf_{P_{\rvx\po}(\rvx\po),P_{\dx\po}(\dx\po),\Tlm} \E_{\tl | A\pp\ul}[ - \log \delta \mu_2(\dx\pl) ] \ll 1, \\
0 < \max_{N\ul \in \gI_N, R\ul \in \gI_R} \, \sup_{P_{\rvx\po}(\rvx\po),P_{\dx\po}(\dx\po),\Tlm} \E_{\tl | A\pp\ul}[ -\log \delta \mu_2(\dx\pl) ] \ll 1, \\ 
0 < \min_{N\ul \in \gI_N, R\ul \in \gI_R} \, \inf_{P_{\rvx\po}(\rvx\po),P_{\dx\po}(\dx\po),\Tlm} \Var_{\tl | A\pp\ul}[ \log \delta \mu_2(\dx\pl) ] \ll 1, \\
0 < \max_{N\ul \in \gI_N, R\ul \in \gI_R} \, \sup_{P_{\rvx\po}(\rvx\po),P_{\dx\po}(\dx\po),\Tlm} \Var_{\tl | A\pp\ul}[ \log \delta \mu_2(\dx\pl) ] \ll 1. \\[-15pt] \nonumber
\end{gather*}

It follows that there exists small positive constants $1\gg m\umin, m\umax, v\umin, v\umax >0$ such that $\forall l$:\\[-9pt]
\begin{alignat}{6}
m\umin & \leq \E_{\tl | A\ul}[ - \log \delta \nu_2(\rvx\pl) ] && \leq m\umax, &\qquad v\umin & \leq \Var_{\tl | A\ul}[ \log \delta \nu_2(\rvx\pl) ]&& \leq v\umax, \label{E:moments_vanilla_Theorem23} \\
m\umin & \leq \E_{\tl | A\pp\ul}[ - \log \delta \mu_2(\dx\pl) ] && \leq m\umax, & \qquad v\umin & \leq \Var_{\tl | A\pp\ul}[ \log \delta \mu_2(\dx\pl) ] &&\leq v\umax. \label{E:moments_vanilla_Theorem24} \\[-15pt] \nonumber
\end{alignat}

\subsubsection{Proof Conclusion}

Again we start by focusing on $\delta \nu_2(\rvx\pl)$ and the reasoning will easily be extended to $\delta \mu_2(\dx\pl)$. Let us define under $A\uk$:  \\[-6pt]
\begin{alignat*}{6}
X\uk & \equiv\log \delta \nu_2(\rvx\pk), & \qquad Y\uk & \equiv\E_{\tk | A\uk}[ \log \delta \nu_2(\rvx\pk)],& \qquad Z\uk\equiv &\log \delta \nu_2(\rvx\pk)-\E_{\tk | A\uk}[ \log \delta \nu_2(\rvx\pk)].  \\[-15pt]
\end{alignat*}

Using \eq{E:moments_vanilla_Theorem23}, we have that under $A\uk$: \\[-6pt]
\begin{alignat*}{5}
m\umin \leq & -Y\uk \leq  m\umax,& \qquad & v\umin \leq \Var_{\tk|A\uk}[Z\uk]  \leq v\umax, \\
-m\umax \leq &  \; Y\uk  \leq  -m\umin,&\qquad &v\umin\leq \Var_{\tk|A\uk}[Z\uk] \leq v\umax. \\[-15pt] \nonumber
\end{alignat*}

By Lemma \ref{L:moments_vanilla}, we deduce that there exist random variables $m\ul$, $s\ul$ such that under $A\ul$: \\[-6pt]
\begin{align*}
 \sum\pl_{k=1} \log \delta \nu_2(\rvx\pk) & = l m\ul + \sqrt{l} s\ul, \quad -m\umax \leq m\ul \leq -m\umin, \quad \E_{\Tl|A\ul}[s\ul]=0, \quad v\umin \leq \Var_{\Tl|A\ul}[s\ul] \leq v\umax, \\
\log \Bigg( \frac{\nu_2(\rvx\pl)}{\nu_2(\rvx\po)}  \Bigg) & = l m\ul + \sqrt{l} s\ul, \quad -m\umax \leq m\ul \leq -m\umin, \quad \E_{\Tl|A\ul}[s\ul]=0, \quad v\umin \leq \Var_{\Tl|A\ul}[s\ul] \leq v\umax. \\[-15pt] \nonumber
\end{align*}

Finally changing the variable $m\ul$ to $-m\ul$, we get that under $A\ul$: \\[-6pt]
\begin{align*}
\log \Bigg( \frac{\nu_2(\rvx\pl)}{\nu_2(\rvx\po)}  \Bigg) & = -l m\ul + \sqrt{l} s\ul, \quad m\umin \leq m\ul \leq m\umax, \quad \E_{\Tl|A\ul}[s\ul]=0, \quad v\umin \leq \Var_{\Tl|A\ul}[s\ul] \leq v\umax.\\[-15pt] \nonumber
\end{align*}

Applying the exact same reasoning to $\mu_2(\dx\pl)$, we deduce that there exist random variables $m\pp\ul$, $s\pp\ul$ such that under $A\pp\ul$: \\[-6pt]
\begin{align*}
\log \Bigg( \frac{\mu_2(\dx\pl)}{\mu_2(\dx\po)}  \Bigg)& = -l m\pp\ul + \sqrt{l} s\pp\ul, \quad m\umin \leq m\pp\ul \leq m\umax, \quad \E_{\Tl|A\pp\ul}[s\pp\ul]=0, \quad v\umin \leq \Var_{\Tl|A\pp\ul}[s\pp\ul] \leq v\umax.\\[-15pt] \nonumber
\end{align*}

\subsubsection{Illustration}
Let us give an illustration in the fully-connected case with constant width, $N\ul=N=100$ and $R\ul=N=100$. The bounds $m\umin$, $m\umax$, $v\umin$, $v\umax$ are obtained by considering the extreme cases for $u\pl_{\rc}$ and $R\ul \sum\nolimits_i (\hat{\mW}\pl_{\rc,i})^2 \hat{\lambda}_i$ in \eq{E:moments_vanilla_Theorem2}: \\[-15pt]
\begin{enumerate}[label={--}]
\item We obtain \emph{minimum bounds} by considering $u\pl_\rc \sim 1/2$ and $R\ul \sum\nolimits_i (\hat{\mW}\pl_{\rc,i})^2 \hat{\lambda}_i \sim 2 \operatorname{Chi-Squared}(N) /N$, leading to $\delta \nu_2(\rvx\pl)$, $\delta \mu_2(\dx\pl) \sim \operatorname{Chi-Squared}(N^2) /N^2$;
\item We obtain \emph{maximum bounds} by considering $u\pl_\rc \sim \operatorname{Bernouilli}(1/2)$ and $R\ul \sum\nolimits_i (\hat{\mW}\pl_{\rc,i})^2 \hat{\lambda}_i \sim 2 \operatorname{Chi-Squared}(1)$.
\end{enumerate}

We numerically find $m\umin\simeq 9.7 {\scriptstyle \times} 10^{-5}$ and $v\umin\simeq 2.0 {\scriptstyle \times} 10^{-4}$ as minimum bounds and $m\umax\simeq 2.5 {\scriptstyle \times} 10^{-2}$ and $v\umax\simeq 5.2 {\scriptstyle \times} 10^{-2}$ as maximum bounds. \\[-6pt]

\subsection{The Conditionality on $A\ul$ is Highly Negligible}
 \label{S:negligible_conditionality}

The events $A\ul$, $A\pp\ul$ defined in \theorem{T:moments_vanilla} have probabilities equal to $\prod^l_{k=1} \big( 1 - 2^{-N_k } \big)$. Thus \\[-6pt]
\begin{align*}
-\sP_\Tl[A\pc\ul] &\simeq\log \big(\, 1-\sP_\Tl[A\pc\ul] \, \big)=\log \sP_\Tl[A\ul] = \sum\nolimits^l_{k=1} \log \big( 1 - 2^{-N_k} \big) \simeq  -\sum\nolimits^l_{k=1} 2^{-N_k},  \\[-15pt] \nonumber
\end{align*}

implying that $\sP_\Tl[A\pc\ul] =\sP_\Tl[A\ppc\ul]\simeq \sum^l_{k=1} 2^{-N_k}$. It follows that $\sP_\Tl[A\pc\ul]$, $\sP_\Tl[A\ppc\ul]$ grow linearly in the depth but decay exponentially in the width. 

In practice, $\sP_\Tl[A\pc\ul]$, $\sP_\Tl[A\ppc\ul]$ are thus highly negligible and the conditionality on $A\ul$, $A\pp\ul$ is also highly negligible. For example, in the case of constant width $N\ul=100$ and total depth $L=200$, we numerically find $\sP_{\Theta^L}[A\pc_L]=\sP_{\Theta^L}[A\ppc_L]\simeq 3.2 \times 10^{-28}$. \\[-9pt]

\subsection{Relation to the Terms $\overline{m}$, $\underline{m}$, $\underline{s}$ of \Section{S:model_parameters_randomness}}
 \label{S:moments_vanilla_terms_evolution}

Here we relate \theorem{T:moments_vanilla} to the terms $\overline{m}$, $\underline{m}$, $\underline{s}$ defined in \Section{S:model_parameters_randomness}, under the conditionality $A\uk$, $A\pp\uk$. By \eq{E:moments_vanilla_Theorem15}, we have that $|\E_{\tk |A\uk}[ \delta \nu_2(\rvx\pk) ] - 1 | \leq 2 \epsilon\uk \ll 1$. This implies that under $A\uk$: \\[-6pt]
\begin{align*}
| \overline{m}[\nu_2(\rvx\pk)] | & = \big| \log \E_{\tk |A\uk}[ \delta \nu_2(\rvx\pk) ] \big| \simeq \big|\E_{\tk |A\uk}[ \delta \nu_2(\rvx\pk) ] - 1 \big| \leq 2 \epsilon\uk. \\[-15pt]
\end{align*}

Similarly, we have that $|\E_{\tk |A\pp\uk}[ \delta \mu_2(\dx\pk) ] - 1 | \leq 2 \epsilon\uk \ll 1$, and that under $A\pp\uk$: \\[-6pt]
\begin{align*}
| \overline{m}[\mu_2(\dx\pk)] | & = \big| \log \E_{\tk |A\pp\uk}[ \delta \mu_2(\dx\pk) ] \big| \simeq \big|\E_{\tk |A\pp\uk}[ \delta \mu_2(\dx\pk) ] - 1 \big| \leq 2 \epsilon\uk.  \\[-15pt]
\end{align*}

The terms $\overline{m}[\nu_2(\rvx\pk)]$, $\overline{m}[\mu_2(\dx\pk)]$ are thus exponentially small in $N\uk$, implying that the evolution with depth of $\nu_2(\rvx\pl), \mu_2(\dx\pl)$ is dominated by the negative drift terms: \mbox{$\underline{m} [\nu_2(\rvx\pl) ]<0$, $\underline{m} [\mu_2(\dx\pl) ]<0$} and the diffusion terms: $\underline{s} [\nu_2(\rvx\pl) ]$, $\underline{s} [\mu_2(\dx\pl) ]$.  \\[-9pt]

\subsection{Proof of \theorem{T:S_increments_vanilla}}
 \label{S:proof_S_increments_vanilla}

\textbf{\theorem{T:S_increments_vanilla}} (normalized sensitivity increments of vanilla nets)\textbf{.}\emph{
Denoting $\rvy^{l,\pm}\equiv\max\big(\pm\rvy\pl,0\big)$, the dominating term under $\{ \mu_2(\rvx\plm) > 0\}$ in the evolution of $\S\pl$ is \\[-12pt]
\begin{gather*}
\delta \S\pl \simeq \exp(\overline{m}[\S\pl]) = \lpar 1 - \E_{\rc,\tl} \Bigg[ \frac{\nu_{1,\rc}  \big( \rvy\plplus \big) \nu_{1,\rc} \big( \rvy\plminus \big)}{\mu_{2}(\rvx\plm)} \Bigg]\rpar^{-\frac{1}{2}}.\\[-38pt]
\end{gather*}
\begin{align*}
\hspace{84pt}\underbrace{\qquad\qquad\qquad\qquad\qquad\qquad\qquad\quad\;}_{
\hspace{2pt} \in [1,\sqrt{2}]} \\[-21pt]
\end{align*}
}

\begin{proof}\textbf{.} 
The dominating term in the evolution of $\S\pl$ is given by  \\[-9pt]
\begin{align}
\delta \S\pl & \simeq \exp(\overline{m}[\S\pl]) = \Bigg( \frac{\E_\tl[ \delta \mu_2(\dx\pl)]}{\E_\tl[ \delta \mu_2(\rvx\pl)]} \Bigg)^\frac{1}{2}. \label{E:S_increments_vanilla_Theorem1} \\[-15pt] \nonumber
\end{align}

First we consider the term $\E_{\tl}[ \delta \mu_2(\rvx\pl)]$. Again we use the definitions and notations from \Section{S:complementary_definitions_notations}. We further denote $(\ve_1,\dots,\ve_{R\ul})$ and $(\lambda_1,\dots,\lambda_{R\ul})$ respectively the orthogonal eigenvectors and eigenvalues of $\mC_{\rvx,\valpha}[\rf(\rvx\plm,\valpha)]$ and $\hat{\mW}\pl\equiv\mW\pl (\ve_1,\dots,\ve_{R\ul})$. Using these notations, we get that $\forall \rc$: \\[-6pt]
 \begin{align}
\mu_{2,\rc}(\rvy\pl) & = \E_{\rvx,\valpha} \Big[ \fmc(\rvy\pl,\valpha)^2_\rc \Big]
= \E_{\rvx,\valpha} \Big[ \big(\mW\pl_{\rc,:} \rfc(\rvx\plm,\valpha) \big)^2 \Big] \nonumber \\
& = \sum\nolimits_i \big( \hat{\mW}\pl_{\rc,i} \big)^2 \lambda_i \label{E:S_increments_vanilla_Theorem2}. \\[-15pt] \nonumber
\end{align}

Then due to $\mW\pl_{\rc,:} \sim_\tl \hat{\mW}\pl_{\rc,:} \sim_\tl \mathcal{N}(0,2\,/\,R\ul \mI)$: \\[-6pt]
 \begin{align}
\E_{\tl}[ \mu_{2,\rc}(\rvy\pl) ] & = \frac{2}{R\ul} \sum\nolimits_i \lambda_i = \frac{2}{R\ul} \Tr \mC_{\rvx,\valpha}[\rf(\rvx\plm,\valpha)] = 2 \mu_2(\rvx\plm) \label{E:S_increments_vanilla_Theorem3}. \\[-15pt] \nonumber
\end{align}

where we used Corollary \ref{C:statistics_preserving_constant} in \eq{E:S_increments_vanilla_Theorem3}. The symmetric propagation gives \\[-6pt]
\begin{align}
\mu_{2,\rc}(\rvx\pl)+\mu_{2,\rc}(\bx\pl) & =\E_{\rvx,\valpha}\big[(\rvy\plplus_{\valpha,\rc})^2\big]-\E_{\rvx,\valpha}\big[\rvy\plplus_{\valpha,\rc}\big]^2+\E_{\rvx,\valpha}\big[(\rvy\plminus_{\valpha,\rc})^2\big]-\E_{\rvx,\valpha}\big[\rvy\plminus_{\valpha,\rc}\big]^2 \nonumber \\
&=\nu_{2,\rc}(\rvy\plplus)-\nu_{1,\rc}(\rvy\plplus)^2+\nu_{2,\rc}(\rvy\plminus)-\nu_{1,\rc}(\rvy\plminus)^2 \nonumber \\
&=\nu_{2,\rc}(\rvy\pl)-\Big(\nu_{1,\rc}(\rvy\plplus)^2+\nu_{1,\rc}(\rvy\plminus)^2\Big). \label{E:S_increments_vanilla_Theorem4} \\[-15pt] \nonumber
\end{align}

Since $\rvy\pl=\rvy\plplus-\rvy\plminus$ and $|\rvy\pl|=\rvy\plplus+\rvy\plminus$, we can express $\nu_{1,\rc}(\rvy\pl)$ and $\nu_{1,\rc}(|\rvy\pl|)$ as
\begin{align}
\nu_{1,\rc}(\rvy\pl)^2 & = \big(\nu_{1,\rc}(\rvy\plplus)- \nu_{1,\rc}(\rvy\plminus) \big)^2 
= \nu_{1,\rc}(\rvy\plplus)^2 + \nu_{1,\rc}(\rvy\plminus)^2 - 2 \nu_{1,\rc}(\rvy\plplus) \nu_{1,\rc}(\rvy\plminus), \label{E:S_increments_vanilla_Theorem5} \\
\nu_{1,\rc}(|\rvy\pl|)^2 & = \big(\nu_{1,\rc}(\rvy\plplus)+ \nu_{1,\rc}(\rvy\plminus) \big)^2 
= \nu_{1,\rc}(\rvy\plplus)^2 + \nu_{1,\rc}(\rvy\plminus)^2 + 2 \nu_{1,\rc}(\rvy\plplus) \nu_{1,\rc}(\rvy\plminus). \label{E:S_increments_vanilla_Theorem6} \\[-15pt] \nonumber
\end{align}

Using \eq{E:S_increments_vanilla_Theorem5}, we can then rewrite \eq{E:S_increments_vanilla_Theorem4} as\\[-9pt]
\begin{align}
\mu_{2,\rc}(\rvx\pl)+\mu_{2,\rc}(\bx\pl)
&=\nu_{2,\rc}(\rvy\pl)-\nu_{1,\rc}(\rvy\pl)^2-2\nu_{1,\rc}(\rvy\plplus) \nu_{1,\rc}(\rvy\plminus) \nonumber\\
&=\mu_{2,\rc}(\rvy\pl)-2\nu_{1,\rc}(\rvy\plplus) \nu_{1,\rc}(\rvy\plminus).  \label{E:S_increments_vanilla_Theorem7} \\[-15pt] \nonumber
\end{align}

Combining \eq{E:S_increments_vanilla_Theorem3} and \eq{E:S_increments_vanilla_Theorem7}: \\[-6pt]
\begin{align}
\E_{\tl}[ \mu_{2,\rc}(\rvx\pl)+\mu_{2,\rc}(\bx\pl) ]
&=2\mu_2(\rvx\plm)-2\E_\tl[\nu_{1,\rc}(\rvy\plplus) \nu_{1,\rc}(\rvy\plminus)], \nonumber \\
2 \E_{\tl}[ \mu_{2,\rc}(\rvx\pl) ] & = 2 \mu_2(\rvx\plm)-2 \E_\tl[\nu_{1,\rc}(\rvy\plplus) \nu_{1,\rc}(\rvy\plminus)], \label{E:S_increments_vanilla_Theorem8} \\
\E_{\tl}[ \mu_{2,\rc}(\rvx\pl) ] & = \mu_2(\rvx\plm)\Bigg(1-\E_\tl\Bigg[ \frac{\nu_{1,\rc}(\rvy\plplus) \nu_{1,\rc}(\rvy\plminus)}{\mu_2(\rvx\plm)} \Bigg]  \Bigg). \nonumber \\[-15pt] \nonumber
\end{align}

where \eq{E:S_increments_vanilla_Theorem8} was obtained by symmetry of the propagation. We then get  \\[-6pt]
\begin{align}
\E_{\tl}[ \mu_2(\rvx\pl) ] & = \E_\rc\big[ \E_\tl[ \mu_{2,\rc}(\rvx\pl) ] \big]  \nonumber \\
& = \mu_2(\rvx\plm)\Bigg(1-\E_{\rc,\tl}\Bigg[ \frac{\nu_{1,\rc}(\rvy\plplus) \nu_{1,\rc}(\rvy\plminus)}{\mu_2(\rvx\plm)} \Bigg]  \Bigg), \nonumber \\
\E_{\tl}[ \delta \mu_2(\rvx\pl) ] & = 1-\E_{\rc,\tl}\Bigg[ \frac{\nu_{1,\rc}(\rvy\plplus) \nu_{1,\rc}(\rvy\plminus)}{\mu_2(\rvx\plm)} \Bigg]. \nonumber \\[-15pt] \nonumber
\end{align}

Combining with \eq{E:S_increments_vanilla_Theorem1} and $\E_\tl[ \delta \mu_2(\dx\pl)] = 1$ by \eq{E:moments_vanilla_Theorem7} in the proof of \theorem{T:moments_vanilla}, we finally get  \\[-6pt]
\begin{align*}
\delta \S\pl & \simeq \exp(\overline{m}[\S\pl]) = \lpar 1 - \E_{\rc,\tl} \Bigg[ \frac{\nu_{1,\rc}  \big( \rvy\plplus  \big) \nu_{1,\rc} \big( \rvy\plminus \big)}{\mu_{2}(\rvx\plm)} \Bigg]\rpar^{-\frac{1}{2}}.
\end{align*}

To obtain the bounds on $\exp(\overline{m}[\S\pl])$, we use \eq{E:S_increments_vanilla_Theorem5} and \eq{E:S_increments_vanilla_Theorem6}: \\[-9pt]
\begin{align}
4 \nu_{1,\rc}(\rvy\plplus) \nu_{1,\rc}(\rvy\plminus) + \nu_{1,\rc}(\rvy\pl)^2  = \nu_{1,\rc}(|\rvy\pl|)^2 & \leq \nu_{2,\rc}(|\rvy\pl|) = \nu_{2,\rc}(\rvy\pl), \nonumber \\
4 \nu_{1,\rc}(\rvy\plplus) \nu_{1,\rc}(\rvy\plminus) & \leq \nu_{2,\rc}(\rvy\pl) - \nu_{1,\rc}(\rvy\pl)^2 = \mu_{2,\rc}(\rvy\pl). \label{E:S_increments_vanilla_Theorem9} \\[-15pt] \nonumber
\end{align}

Given $\E_{\tl}[ \mu_{2,\rc}(\rvy\pl) ] = 2 \mu_2(\rvx\plm)$ by \eq{E:S_increments_vanilla_Theorem3}, we deduce that $4 \E_{\tl}[ \nu_{1,\rc}(\rvy\plplus) \nu_{1,\rc}(\rvy\plminus) ] \leq 2 \mu_2(\rvx\plm)$, and thus that\\[-9pt]
\begin{align*}
\E_{\rc,\tl} \Bigg[ \frac{\nu_{1,\rc}  \big( \rvy\plplus \big) \nu_{1,\rc} \big( \rvy\plminus \big)}{\mu_{2}(\rvx\plm)} \Bigg] & \leq \frac{1}{2}, \\
 1 \leq \exp(\overline{m}[\S\pl]) = \Bigg( 1 - \E_{\rc,\tl} \Bigg[ \frac{\nu_{1,\rc}  \big( \rvy\plplus \big) \nu_{1,\rc} \big( \rvy\plminus \big)}{\mu_{2}(\rvx\plm)} \Bigg] \Bigg)^{-\frac{1}{2}} & \leq \sqrt{2}.\tag*{\qed} \\[-15pt]
\end{align*}
\end{proof}

\subsection{If the Drift of $\S\pl$ Is Larger than Diffusion and if $\nu_2(\rvx\pl)$, $\mu_2(\dx\pl)$ are Lognormal, then $\mu_2(\rvx\pl) \, / \, \nu_2(\rvx\pl)\to0$  a.s.}
 \label{S:ratio_cv_as}

\vspace{3pt}
\begin{Lemma} \label{L:Borel_Cantelli}
For a sequence of random variables $(X_l)_{l\in \sN}$ and a random variable $X$, if $\forall \epsilon>0: \sum\nolimits^\infty_{l=1} \sP[|X_l-X|>\epsilon]<\infty$, then 
\begin{align*}
X_l \xrightarrow{l\to\infty} X \textnormal{ a.s.}
\end{align*}
\end{Lemma}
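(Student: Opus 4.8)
The statement is the classical first Borel–Cantelli argument applied to the events $E_l \equiv \{|X_l - X| > \epsilon\}$. The plan is to show that for every fixed $\epsilon>0$, the hypothesis $\sum_l \sP[E_l] < \infty$ forces $\sP[\limsup_l E_l] = 0$, and then to take a countable intersection over a sequence $\epsilon_p \to 0$ to conclude almost sure convergence.

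\begin{proof}\textbf{.}
Fix $\epsilon>0$ and set $E_l \equiv \{ |X_l - X| > \epsilon \}$. Since $\sum_{l=1}^\infty \sP[E_l] < \infty$, the Borel–Cantelli lemma gives $\sP\big[ \limsup_{l\to\infty} E_l \big] = 0$; concretely, for any $m$ we have $\sP\big[ \bigcup_{l\geq m} E_l \big] \leq \sum_{l\geq m} \sP[E_l] \xrightarrow{m\to\infty} 0$ as the tail of a convergent series, and $\limsup_l E_l = \bigcap_m \bigcup_{l\geq m} E_l$, so its probability is bounded by the infimum over $m$ of these tails, hence zero. Equivalently, defining $F_\epsilon \equiv \{ |X_l - X| \leq \epsilon \textnormal{ for all sufficiently large } l \}$, we have $\sP[F_\epsilon] = 1$.

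Now take $\epsilon_p = 1/p$ for $p\in\sN$ and let $F \equiv \bigcap_{p=1}^\infty F_{\epsilon_p}$. As a countable intersection of probability-one events, $\sP[F] = 1$. On the event $F$, for every $p$ there exists $L_p$ such that $|X_l - X| \leq 1/p$ for all $l \geq L_p$, which is exactly the statement that $X_l \to X$. Therefore $X_l \xrightarrow{l\to\infty} X$ almost surely. \qed
\end{proof}

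\textbf{Remark on difficulty.} There is essentially no obstacle here: the only subtlety worth stating carefully is the reduction from "convergence holds for each $\epsilon$ with probability one" to "convergence holds with probability one", which requires restricting to a countable family of $\epsilon$'s (here $1/p$) so that the intersection of the exceptional null sets remains null. The tail-of-convergent-series estimate $\sum_{l\geq m}\sP[E_l]\to 0$ is the heart of the matter and is immediate from the hypothesis.
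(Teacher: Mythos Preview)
Your proof is correct and, like the paper's, is the standard first Borel--Cantelli argument. The only difference is presentational: the paper phrases the first step as $\E[N_\epsilon]=\sum_l \sP[|X_l-X|>\epsilon]<\infty$ via Fubini (hence $N_\epsilon<\infty$ a.s.), and then handles the passage from ``for each $\epsilon$'' to ``for all $\epsilon$'' by contradiction---assuming non-convergence on a positive-probability set, extracting a deterministic $\epsilon'>0$ on a smaller positive-probability set, and contradicting $N_{\epsilon'}<\infty$ a.s. You instead compute $\sP[\limsup_l E_l]=0$ directly from the tail of the convergent series and then intersect over the countable family $\epsilon_p=1/p$. Your countable-intersection reduction is the more common textbook device and is slightly more direct; the paper's contradiction-and-extraction achieves the same conclusion.
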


\begin{proof}\textbf{.} For given $\epsilon>0$, denote $N_\epsilon$ the number of times that the event $\{|X_l-X|>\epsilon\}$ occurs such that $N_\epsilon=\sum\nolimits^\infty_{l=1} \1_{\{|X_l-X|>\epsilon\}}$. Fubini's theorem implies that $\E[N_\epsilon]=\sum\nolimits^\infty_{l=1} \sP[|X_l-X|>\epsilon]<\infty$, \mbox{implying that $N_\epsilon$ is finite a.s.}

Now let us reason by contradiction and suppose that $\exists E$ with $\sP[E]>0$ such that under $E$: $X_l \centernot{\xrightarrow{l\to\infty}} X$. Under $E$, $\exists \epsilon$ random variable, and $\exists(k\ul)_{l\in \sN}$ random strictly increasing sequence such that $\forall l$: $|X_{k\ul}-X|>\epsilon$. This implies in turn that $\exists E\pp$ with $\sP[E\pp]>0$ and $\exists \epsilon\pp>0$ non-random, such that under $E\pp$: $\exists(k\ul)_{l\in \sN}$ random strictly increasing sequence with $\forall l$: $|X_{k\ul}-X|>\epsilon\pp$. Thus $N_{\epsilon\pp}$ has non-zero probability to be infinite: $\sP[N_{\epsilon\pp}=\infty] \geq \sP[E\pp]>0$, which is a contradiction. We deduce that $X_l \xrightarrow{l\to\infty} X$ a.s. \qed
\end{proof} \\

\begin{Proposition}
\label{P:ratio_cv_as}
Suppose that:
\begin{enumerate}[leftmargin=*,itemsep=8pt,parsep=0pt,label={(\roman*)}]
\item We can neglect the events $A\ul$, $A\pp\ul$ of probability exponentially small in the width (see \Section{S:negligible_conditionality} for justification);
\item The event $D$ under which $\S\pl$ has drift larger than diffusion has probability $\sP[D]>0$; 
\item $\nu_2(\rvx\pl)$, $\mu_2(\dx\pl)$ are lognormal. \\[-9pt]
\end{enumerate}

Then, under $D$:  \\[-9pt]
\begin{align*}
\frac{\mu_2(\rvx\pl)}{\nu_2(\rvx\pl)}\xrightarrow{l\to\infty}0 \textnormal{\, a.s.} \\[-6pt]
\end{align*}
\end{Proposition}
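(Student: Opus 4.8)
The plan is to convert the three hypotheses into tail bounds that decay exponentially in $l$ and then conclude via \lemma{L:Borel_Cantelli}. A preliminary reduction: since on $D$ the sensitivity explodes at some rate beyond the slow drift of \theorem{T:moments_vanilla}, I would write $D$ as the countable union, over rationals $\gamma$ above the relevant threshold and integers $L_0$, of the events $D_{\gamma,L_0}\equiv\{\S\pk\geq\exp(\gamma k)\ \text{for all}\ k\geq L_0\}$; it then suffices to establish the claim on each $D_{\gamma,L_0}$ of positive probability, with $\gamma$ large enough that $2\gamma>m\umax$ (this last inequality is what ``drift larger than diffusion'' is meant to encode, $m\umax$ being the bounded per-layer rate at which $\nu_2(\rvx\pl)$ can shrink in \theorem{T:moments_vanilla}). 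Throughout I would neglect the conditioning events $A\ul$, $A\pp\ul$ as licensed by hypothesis (i), so that the expressions of \theorem{T:moments_vanilla} are available for every $l$.

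The core is to bound $\sP[\,\mu_2(\rvx\pl)/\nu_2(\rvx\pl)>\epsilon\mid D_{\gamma,L_0}\,]$ by a summable sequence, treating numerator and denominator separately. Numerator: rearranging the definition \eq{E:definition_S} gives $\mu_2(\rvx\pl)=\mu_2(\dx\pl)\,\mu_2(\rvx\po)\,/\,\big((\S\pl)^2\,\mu_2(\dx\po)\big)$, so on $D_{\gamma,L_0}$ and for $l\geq L_0$ one has $\mu_2(\rvx\pl)\leq C_0\,\mu_2(\dx\pl)\,e^{-2\gamma l}$ with $C_0=\mu_2(\rvx\po)/\mu_2(\dx\po)$. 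Taking expectations and using the stability identity $\E_\Tl[\mu_2(\dx\pl)]=\mu_2(\dx\po)$ (the discussion following \theorem{T:moments_vanilla}, cf.\ \eq{E:moments_vanilla_Theorem7}) together with $\sP[D_{\gamma,L_0}]>0$ yields $\E[\mu_2(\rvx\pl)\mid D_{\gamma,L_0}]\leq C\,e^{-2\gamma l}$, hence by a conditional Markov inequality $\sP[\mu_2(\rvx\pl)>\epsilon\,\nu_2(\rvx\po)\,e^{-(m\umax+\delta)l}\mid D_{\gamma,L_0}]\leq C'\,e^{-(2\gamma-m\umax-\delta)l}$, which is summable once $\delta<2\gamma-m\umax$. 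Denominator: by \theorem{T:moments_vanilla}, $\log\nu_2(\rvx\pl)-\log\nu_2(\rvx\po)=-lm\ul+\sqrt{l}\,s\ul$; hypothesis (iii) makes this variable Gaussian, and combined with $\E_\Tl[\nu_2(\rvx\pl)]=\nu_2(\rvx\po)$ its mean equals $-l\E[m\ul]=-\tfrac{1}{2}\Var[\log\nu_2(\rvx\pl)]$ and its variance is at most $2lm\umax$. The Gaussian lower-tail estimate then gives $\sP[\nu_2(\rvx\pl)<\nu_2(\rvx\po)\,e^{-(m\umax+\delta)l}]\leq\tfrac{1}{2}\exp(-\delta^2 l/(4m\umax))$, again summable. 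A union bound combines the two, so $\sum_l\sP[\,\mu_2(\rvx\pl)/\nu_2(\rvx\pl)>\epsilon\mid D_{\gamma,L_0}\,]<\infty$ for every $\epsilon>0$; \lemma{L:Borel_Cantelli} applied on the space conditioned on $D_{\gamma,L_0}$ then gives $\mu_2(\rvx\pl)/\nu_2(\rvx\pl)\to0$ a.s.\ on $D_{\gamma,L_0}$, and taking the countable union over $(\gamma,L_0)$ completes the proof.

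The main obstacle is the lower bound on $\nu_2(\rvx\pl)$ in the second step, and this is precisely where hypothesis (iii) is indispensable: without lognormality, \theorem{T:moments_vanilla} supplies only $\Var[s\ul]\leq v\umax$, so Chebyshev gives $\sP[\sqrt{l}\,s\ul<-\delta l]\lesssim 1/l$, which is not summable and useless for Borel--Cantelli, whereas the Gaussian tail upgrades it to an $\exp(-cl)$ bound. The remaining difficulties are technical: making the ``eventually'' in the definition of $D$ uniform before taking expectations (handled by the decomposition into the $D_{\gamma,L_0}$), and dispatching the negligible conditioning events $A\ul$, $A\pp\ul$ via hypothesis (i). One could sharpen the required rate from $2\gamma>m\umax$ to $2\gamma>m\umax-m\umin$ by also exploiting the lognormal \emph{upper} tail of $\mu_2(\dx\pl)$ when bounding the numerator, but this refinement is not needed for the qualitative conclusion.
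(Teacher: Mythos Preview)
Your proposal is correct and lands on the same endgame as the paper—summable tail bounds followed by \lemma{L:Borel_Cantelli}—but the middle is organized differently. The paper works entirely in log-space: it uses the lognormality of \emph{both} $\nu_2(\rvx\pl)$ and $\mu_2(\dx\pl)$ to write $\log\nu_2(\rvx\pl)=S\ul-M\ul+\log\nu_2(\rvx\po)$ and $\log\mu_2(\dx\pl)=S\pp\ul-M\pp\ul+\log\mu_2(\dx\po)$ with $S\ul,S\pp\ul$ centered Gaussian of variance $2M\ul,2M\pp\ul$, substitutes into the identity $\mu_2(\rvx\pl)/\nu_2(\rvx\pl)=(\S\pl)^{-2}\,\mu_2(\rvx\po)\mu_2(\dx\pl)/(\mu_2(\dx\po)\nu_2(\rvx\pl))$, and obtains directly
\[
\log\frac{\mu_2(\rvx\pl)}{\nu_2(\rvx\pl)}\leq -lM+C+S\pp\ul-S\ul,\qquad M\equiv 2m-m\umax+m\umin>0,
\]
so the event $\{\mu_2(\rvx\pl)/\nu_2(\rvx\pl)>\epsilon\}$ forces a Gaussian deviation of order $\sqrt{l}$ in \emph{either} $S\ul$ or $S\pp\ul$, yielding the $\exp(-cl)$ bound in one shot. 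By contrast, you split into numerator and denominator and use only Markov on the numerator (via the stability $\E_\Tl[\mu_2(\dx\pl)]=\mu_2(\dx\po)$), reserving the Gaussian tail for the denominator; this is why your basic version needs $2\gamma>m\umax$ rather than the paper's sharper $2\gamma>m\umax-m\umin$, a gap you correctly identify and close in your final remark. Your countable decomposition of $D$ into the $D_{\gamma,L_0}$ is a useful piece of hygiene that the paper handles by fiat (``We may assume that $\exists m>\tfrac12(m\umax-m\umin)$ such that $\forall l$ under $D$: $\log\S\pl\geq lm$''). Net effect: the paper's route is shorter and immediately optimal in the threshold because it exploits both lognormality hypotheses symmetrically; yours is more modular and shows that the lognormality of $\mu_2(\dx\pl)$ is only needed to sharpen the rate, not for the qualitative conclusion.
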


\begin{proof}\textbf{.}
Neglecting the events $A\ul$, $A\pp\ul$, \theorem{T:moments_vanilla} implies that $\exists m\ul, m\pp\ul$, $s\ul,s\pp\ul$ such that \\[-6pt]
\begin{alignat*}{4}
 \log \nu_2(\rvx\pl)  & = - l m\ul + \sqrt{l} s\ul + \log \nu_2(\rvx\po), && \quad m\umin \leq m\ul \leq m\umax, && \quad \E_\Tl [s\ul]=0, && \quad v\umin \leq \Var_\Tl[s\ul] \leq v\umax, \\
 \log \mu_2(\dx\pl) & = - l m\pp\ul + \sqrt{l} s\pp\ul + \log \mu_2(\dx\po), && \quad m\umin \leq m\pp\ul \leq m\umax, && \quad \E_\Tl [s\pp\ul ]=0, && \quad v\umin \leq \Var_\Tl [s\pp\ul] \leq v\umax. \\[-15pt] \nonumber
\end{alignat*} 

On the other hand, under standard initialization: \\[-6pt]
\begin{align*}
\E_\Tl[\nu_2(\rvx\pl)] & = \E_\Tlm \E_\tl\big[\nu_2(\rvx\plm) \cdot \delta\nu_2(\rvx\pl)\big] = \E_\Tlm\big[\nu_2(\rvx\plm) \cdot \E_\tl[\delta\nu_2(\rvx\pl)] \big] = \E_\Tlm\big[\nu_2(\rvx\plm)\big], \\
\E_\Tl[\mu_2(\dx\pl)] & = \E_\Tlm \E_\tl\big[\mu_2(\dx\plm) \cdot \delta\mu_2(\dx\pl)\big] = \E_\Tlm\big[\mu_2(\dx\plm) \cdot \E_\tl[\delta\mu_2(\dx\pl)] \big] = \E_\Tlm\big[\mu_2(\dx\plm)\big], \\[-15pt] \nonumber
\end{align*}

implying by induction that $\E_\Tl[\nu_2(\rvx\pl)]=\nu_2(\rvx\po)$ and $\E_\Tl[\mu_2(\dx\pl)]=\mu_2(\dx\po)$.

Since $\log \nu_2(\rvx\pl)$, $\log \mu_2(\dx\pl)$ are Gaussian by the assumption of lognormality, and since a logormal variable $\exp(X)$ with $X\sim \mathcal{N}(\mu, \sigma^2)$ has expectation equal to $\E[\exp(X)]=\exp(\mu+\sigma^2/2)$, it follows that $\exists S_l, S\pp_l$ random variables and $\exists M_l, M\pp_l>0$ constants such that \\[-6pt]
\begin{alignat*}{4}
 \log \nu_2(\rvx\pl) & =S\ul- M\ul +\log\nu_2(\rvx\po), && \quad S\ul \sim_\Tl \mathcal{N}(0,2M\ul), && \quad lm\umin \leq M\ul \leq lm\umax, \\
 \log \mu_2(\dx\pl) & = S\pp\ul-M\pp\ul + \log \mu_2(\dx\po), && \quad S\pp\ul \sim_\Tl \mathcal{N}(0,2M\pp\ul), && \quad lm\umin \leq M\pp\ul \leq lm\umax. \\[-15pt] \nonumber
\end{alignat*} 

Now let us make more precise the conditionality on $D$. We may assume that $\exists m>\frac{1}{2}\big(m\umax-m\umin\big)$ such that $\forall l$ under $D$: $\log \S\pl\geq lm$.

The ratio $\mu_2(\rvx\pl)/\nu_2(\rvx\pl)$ can be expressed as \\[-6pt]
\begin{align*}
\frac{\mu_2(\rvx\pl)}{\nu_2(\rvx\pl)} & = \Bigg(  \frac{\mu_2(\dx\po)}{\mu_2(\rvx\po)} \frac{\mu_2(\rvx\pl)}{\mu_2(\dx\pl)}  \Bigg)
 \Bigg( \frac{\mu_2(\rvx\po)}{\mu_2(\dx\po)} \frac{\mu_2(\dx\pl)} {\nu_2(\rvx\pl)} \Bigg) = \frac{1}{(\S\pl)^2}  \frac{\mu_2(\rvx\po)} {\mu_2(\dx\po)} \frac{\mu_2(\dx\pl)} {\nu_2(\rvx\pl)}. \\[-15pt] \nonumber
\end{align*}

This gives with logarithms that, under $D$:  \\[-6pt]
\begin{align*}
\log \mu_2(\rvx\pl) & - \log \nu_2(\rvx\pl) = -2\log \S\pl +  \log \mu_2(\dx\pl) -\log \mu_2(\dx\po) -\log \nu_2(\rvx\pl) +\log \mu_2(\rvx\po) \\
 & \leq -2lm + \big(S\pp\ul-M\pp\ul \big) - \big(S\ul-M\ul+\log\nu_2(\rvx\po)\big) +\log \mu_2(\rvx\po) \\
 & \leq -2lm + lm\umax - lm\umin-\log\nu_2(\rvx\po) +\log \mu_2(\rvx\po)+S\pp\ul-S\ul \\
 & \leq -lM + C+S\pp\ul-S\ul,  \\[-15pt] \nonumber
\end{align*}

where we defined $M\equiv2m-m\umax+m\umin>0$ and $C\equiv-\log\nu_2(\rvx\po)+\log \mu_2(\rvx\po)$. Then for given $\epsilon$, under $D$:  \\[-6pt]
\begin{align*}
\frac{\mu_2(\rvx\pl)}{\nu_2(\rvx\pl)} > \epsilon & \implies \log \epsilon < -lM+C+S\pp\ul-S\ul \\
& \implies  \Big( S\pp\ul \geq \frac{\log \epsilon + lM - C}{2} \Big) \lor \Big( -S\ul \geq \frac{\log \epsilon + lM - C}{2} \Big)\\
& \implies  \Big( \tilde{S}\pp\ul \geq \frac{\log \epsilon + lM - C}{2\sqrt{2 M\pp\ul}} \Big) \lor \Big( -\tilde{S}\ul \geq \frac{\log \epsilon + lM - C}{2\sqrt{2 M\ul}} \Big) \\
& \implies \Big( \tilde{S}\pp\ul \geq \frac{\log \epsilon + lM - C}{2\sqrt{2lm\umax}} \Big) \lor \Big( -\tilde{S}\ul \geq \frac{\log \epsilon + lM - C}{2\sqrt{2lm\umax}} \Big), \\[-15pt] \nonumber
\end{align*}

where we denoted $\lor$ the logical \emph{or}, $\tilde{S}\ul\equiv S\ul/\sqrt{2M\ul}$ and $\tilde{S}\pp\ul\equiv S\pp\ul/\sqrt{2M\pp\ul}$, and supposed $l$ large enough such that $\log \epsilon + lM - C\geq 0$. Then $\exists C_\epsilon > 0$ such that for $l$ large enough, under $D$:  \\[-6pt]
\begin{align}
\frac{\mu_2(\rvx\pl)}{\nu_2(\rvx\pl)} > \epsilon 
& \implies \Big( \tilde{S}\pp\ul \geq \sqrt{l} C_\epsilon \Big) \lor \Big( -\tilde{S}\ul \geq \sqrt{l} C_\epsilon \Big). \nonumber \\[-15pt] \nonumber
\end{align}

It follows that for $l$ large enough: \\[-6pt]
\begin{align}
\sP_{\Tl|D} \Bigg[\frac{\mu_2(\rvx\pl)}{\nu_2(\rvx\pl)} > \epsilon\Bigg] & \leq \sP_{\Tl|D}\Big[\tilde{S}\pp\ul \geq \sqrt{l} C_\epsilon \Big]+ \sP_{\Tl|D}\Big[-\tilde{S}\ul \geq \sqrt{l} C_\epsilon \Big]  \nonumber  \\
& \leq \frac{1}{\sP_\Tl[D]} \sP_\Tl\Big[D \cap \big\{\tilde{S}\pp\ul \geq \sqrt{l} C_\epsilon \big\} \Big] + \frac{1}{\sP_\Tl[D]} \sP_\Tl\Big[D \cap \big\{-\tilde{S}\ul \geq \sqrt{l} C_\epsilon \big\} \Big] \nonumber  \\
& \leq \frac{1}{\sP_\Tl[D]} \erfc\Bigg( \sqrt{\frac{l}{2}} C_\epsilon \Bigg) \label{E:ratio_cv_proba1} \\
& \leq \frac{1}{\sP_\Tl[D]} \exp\Big( -\frac{l}{2} C_\epsilon^2 \Big), \label{E:ratio_cv_proba2}
\end{align}

where \eq{E:ratio_cv_proba1} is obtained using $\tilde{S}\ul,\tilde{S}\pp\ul \sim_\Tl \mathcal{N}(0,1)$, while \eq{E:ratio_cv_proba2} is obtained using $\erfc(x)\leq \exp(-x^2)$ \citep{Chiani03}. It follows from \eq{E:ratio_cv_proba2} that  \\[-6pt]
\begin{align*}
\sum^\infty_{l=1} \sP_{D} \Bigg[\frac{\mu_2(\rvx\pl)}{\nu_2(\rvx\pl)} > \epsilon\Bigg]=\sum^\infty_{l=1} \sP_{\Tl|D} \Bigg[\frac{\mu_2(\rvx\pl)}{\nu_2(\rvx\pl)} > \epsilon\Bigg]<\infty. \\[-15pt] \nonumber
\end{align*}

By Lemma \ref{L:Borel_Cantelli}, we finally deduce that, under $D$:  \\[-6pt]
\begin{align*}
\frac{\mu_2(\rvx\pl)}{\nu_2(\rvx\pl)} \xrightarrow{l\to\infty} 0 \textnormal{ a.s.} \tag*{\qed}
\end{align*}
\end{proof}

\subsection{If $\exp\big( \overline{m}[\S\pl]\big) \to 1$ and if Moments of $\tx\pl$ Are Bounded, then $\rvx\pl$ Converges to One-Dimensional Signal Pathology}
 \label{S:one_dimensional_pathology}

\begin{Proposition}
\label{P:one_dimensional_pathology}
Again we adopt the notation: $\tx\pl\equiv\rvx\pl/\sqrt{\mu_2(\rvx\pl)}$, and the usual notation: \\[-6pt] 
\begin{align*}
X_l=\bigO(Y_l) \iff \exists M >0, \forall l: \; X_l\leq M Y_l.  \\[-15pt] \nonumber
\end{align*}

We further suppose that:
\begin{enumerate}[leftmargin=*,itemsep=8pt,parsep=0pt,label={(\roman*)}]
\item $\tx\pl$ is well-defined with bounded moments: $\nu_p(|\tx\pl|)=\bigO(1)$, implying in particular $\nu_2(\rvx\pl)/\mu_2(\rvx\pl)\centernot{\xrightarrow{l\to\infty}} \infty$ and thus $\mu_2(\rvx\pl)/\nu_2(\rvx\pl)\centernot{\xrightarrow{l\to\infty}} 0\,$, i.e. that $\rvx\pl$ does not converge to zero-dimensional signal pathology;
\item $\delta \S\pl \simeq \exp\big( \overline{m}[\S\pl]\big) \xrightarrow{l\to\infty} 1$. 
\end{enumerate}

Then $\rvx\pl$ converges to one-dimensional signal pathology. \\
\end{Proposition}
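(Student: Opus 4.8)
The plan is to turn hypothesis (ii) into the vanishing of a ``sign-spread'' functional of the data distribution of $\rvx\pl$, and then to use the bounded moments of hypothesis (i) to show this can only happen in the limit if the signal collapses to one dimension. Write $\tx\pl\equiv\rvx\pl/\sqrt{\mu_2(\rvx\pl)}$, let $R\equiv R^{l+1}=(K^{l+1})^d N\ul$ be the receptive-field dimension at layer $l+1$ (bounded, since $R\le n^d N\ul$ and $N\ul$ is bounded), and for a random vector $X\in\R^{R}$ with $\E\|X\|_2^2<\infty$ set
\[
Q(X)\equiv\E_{w\sim\mathcal N(0,\frac{2}{R}\mI)}\Big[\big(\E_X|\langle w,X\rangle|\big)^2-\big(\E_X\langle w,X\rangle\big)^2\Big]\geq0,
\]
the inequality being Jensen's. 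Applying \theorem{T:S_increments_vanilla} at layer $l+1$, using the algebraic identity $4\,\nu_{1,\rc}(\rvy^{l+1,+})\,\nu_{1,\rc}(\rvy^{l+1,-})=\nu_{1,\rc}(|\rvy^{l+1}|)^2-\nu_{1,\rc}(\rvy^{l+1})^2$ from \eq{E:S_increments_vanilla_Theorem5} and \eq{E:S_increments_vanilla_Theorem6}, the vanishing of the biases so that $\rvy^{l+1}_{\valpha,\rc}=\mW^{l+1}_{\rc,:}\,\rf(\rvx\pl,\valpha)=\sqrt{\mu_2(\rvx\pl)}\;\mW^{l+1}_{\rc,:}\,\rf(\tx\pl,\valpha)$, and the fact that the rows $\mW^{l+1}_{\rc,:}$ are iid $\mathcal N(0,\frac{2}{R}\mI)$, one rewrites the dominating term as $\delta\S^{l+1}\simeq\big(1-\tfrac14 Q_l\big)^{-1/2}$, where $Q_l\equiv Q\big(\rf(\tx\pl,\valpha)\big)$ with the law of $\rf(\tx\pl,\valpha)$ taken under $(\rvx,\valpha)$. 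Hypothesis (ii), together with the large-width reduction $\delta\S^{l+1}\simeq\exp(\overline m[\S^{l+1}])$, then forces $Q_l\to0$. It thus suffices to prove: \emph{if $\nu_p(|\tx\pl|)=\bigO(1)$ for $p\le4$ and $Q_l\to0$, then $\reff(\rvx\pl)\to1$.}

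The next step is the rigidity of the zero set of $Q$. For fixed $w$, $\E_X|\langle w,X\rangle|=|\E_X\langle w,X\rangle|$ precisely when $\langle w,X\rangle$ is a.s.\ of a single sign; hence $Q(X)=0$ iff for Lebesgue-almost every $w$ the linear form $\langle w,\cdot\rangle$ does not change sign on $\mathrm{supp}(X)$. If $\mathrm{supp}(X)$ contained two linearly independent points $x_1,x_2$, the open cone $\{w:\langle w,x_1\rangle>0>\langle w,x_2\rangle\}$ would have positive Gaussian measure and on it $\langle w,X\rangle$ would take both signs with positive probability; therefore $Q(X)=0$ implies $\mathrm{supp}(X)$ lies on a single line through the origin, i.e.\ $\mathrm{rank}\,\mC_{X}\le1$.

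The conclusion then follows by compactness. Suppose for contradiction that $\reff(\rvx\pl)\not\to1$: since $\reff(\rvx\pl)\ge1$ always, there are $\epsilon>0$ and a subsequence along which $\reff(\rvx\pl)\ge1+\epsilon$. Because $N\ul$ (hence $R$ and the per-position dimension $N\ul$) is bounded, pass to a further subsequence on which these are constant; the laws of $X^{(l)}\equiv\rf(\tx\pl,\valpha)$ are then tight and uniformly integrable up to order $4$ (their fourth moments are $\bigO(1)$ by Corollary~\ref{C:statistics_preserving_constant} and $\nu_4(|\tx\pl|)=\bigO(1)$), so by \lemma{L:prokhorov_weak_convergence} and \lemma{L:integrable_weak_convergence} one extracts a weak limit $X^{(l)}\Rightarrow X^\infty$ with $\Tr\mC_{X^\infty}=R$ (using $\mu_2(\tx\pl)=1$ and Corollary~\ref{C:statistics_preserving_constant}). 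The functional $Q$ passes to the limit: for each $w$, $\E_X|\langle w,X\rangle|$ and $\E_X\langle w,X\rangle$ converge by uniform integrability of $\|X^{(l)}\|_2$, and the $w$-integrand is dominated by $\E_X\langle w,X\rangle^2=w^T\mG_X w\le\|w\|_2^2\sup_l\|\mG_{X^{(l)}}\|$, which is $w$-integrable, so $Q_l\to Q(X^\infty)$ by dominated convergence and $Q(X^\infty)=0$, whence $\mathrm{rank}\,\mC_{X^\infty}\le1$. Now $\mC[\fm(\tx\pl,\valpha)]$ is the zero-offset diagonal block of $\mC[\rf(\tx\pl,\valpha)]=\mC_{X^{(l)}}$, so its limit $\mC[(X^\infty)_0]$ is the corresponding diagonal block of $\mC_{X^\infty}$, hence of rank $\le1$, with trace $N\ul\,\mu_2(\tx\pl)=N\ul>0$, so $\reff((X^\infty)_0)=1$; and $\mC[\fm(\tx\pl,\valpha)]\to\mC[(X^\infty)_0]$ by uniform integrability. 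Since $\reff$ is scale-invariant, $\reff(\rvx\pl)=\reff(\fm(\tx\pl,\valpha))\to\reff((X^\infty)_0)=1$, contradicting $\reff(\rvx\pl)\ge1+\epsilon$. Therefore $\reff(\rvx\pl)\to1$, i.e.\ $\rvx\pl$ converges to the one-dimensional signal pathology.

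The main obstacle I anticipate is the interplay of the second and third steps: one must show that $Q$ is simultaneously continuous under weak convergence \emph{given} the uniform moment bounds — this is exactly where hypothesis (i) is indispensable, since heavy tails could both break this continuity and make $\delta\S\pl\to1$ compatible with a genuinely multi-directional but rarely sign-changing projection — and strictly positive off the rank-one locus, and then one must carefully shuttle between $\rvx\pl$, its layer-$(l+1)$ receptive field (via the statistics-preserving Corollary~\ref{C:statistics_preserving_constant}) and the central block, so that the rank collapse of the limiting covariance transfers back to $\reff(\rvx\pl)$. A secondary point, routine but worth stating carefully as elsewhere in the paper, is that every identity above holds with high probability over the weights because of the $\simeq$'s in the hypotheses, so the final convergence statement should be phrased accordingly.
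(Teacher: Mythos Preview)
Your argument is correct and takes a genuinely different route from the paper's. The paper argues directly and quantitatively: assuming $\reff(\tx^{k_l})\geq 1+\eta$ along a subsequence, it explicitly produces a unit vector $\rvv_\rho^{k_l}\perp\vnu_\rho^{k_l}$ with $\Var_{\rvx,\valpha}\langle\rf(\tx^{k_l},\valpha),\rvv_\rho^{k_l}\rangle\geq\eta'$, decomposes the normalized weight row $\tilde\mW^{k_l+1}_{\rc,:}$ along $\rvv_\rho^{k_l}$ and its orthogonal complement, and shows that with probability bounded below (uniformly in $l$) the projection $\mW^{k_l+1}_{\rc,:}\rf(\tx^{k_l},\valpha)$ has large first absolute moment but small mean; a Cauchy--Schwarz chain using $\nu_4,\nu_8$ then converts this into a uniform lower bound on $\E_{\rc,\theta}[\nu_{1,\rc}(\rvy^{k_l+1,+})\nu_{1,\rc}(\rvy^{k_l+1,-})/\mu_2(\rvx^{k_l})]$, contradicting $\exp(\overline m[\S^{l}])\to1$. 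You instead package the sign-spread into the functional $Q$, identify its zero set with rank-$\le1$ support via the Jensen equality case, and pass to a weak limit by Prokhorov plus uniform integrability. The paper's route buys an explicit lower bound on $\exp(\overline m[\S^{k_l+1}])-1$ in terms of $\eta$ and the moment constants, with only elementary tools and no subsequence extraction; your route buys a cleaner conceptual picture (the obstruction to $\delta\S\to1$ is exactly $Q>0$, which is exactly non-collinearity of the support) and slightly lighter moment input (your dominated-convergence bound uses only $\nu_2$, your covariance convergence only $\nu_4$, whereas the paper's Cauchy--Schwarz chain reaches $\nu_8$). One cosmetic point: bounded fourth moments give uniform integrability up to orders strictly below $4$, not order $4$ itself, but since you only need orders $1$ and $2$ (for $Q$ and for the covariance blocks respectively) this does not affect the argument.
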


\begin{proof}\textbf{.}
Again we use the notations from \Section{S:complementary_definitions_notations} and we denote:  \\[-6pt]
\begin{align*}
\vnu\pl_\varphi & \equiv \E_{\rvx,\valpha}\big[ \fm(\tx\pl,\valpha) \big]=\big(\nu_{1,\rc}(\tx\pl)\big)_{1\leq\rc\leq N\ul}, \\
\vnu\pl_\rho & \equiv \E_{\rvx,\valpha}\big[ \rf(\tx\pl,\valpha) \big].  \\[-15pt] \nonumber
\end{align*}

The statistic-preserving property implies $\frac{1}{N\ul} ||\vnu\pl_\varphi||^2_2=\frac{1}{R\ul} ||\vnu\pl_\rho||^2_2$, in turn implying that \\[-6pt]
\begin{align*}
\nu_2(\tx\pl) & =\frac{1}{N\ul} \Big( \sum_\rc \mu_{2,\rc}(\tx\pl)+\nu_{1,\rc}(\tx\pl)^2 \Big) \\
& = \mu_2(\tx\pl)+\frac{1}{N\ul} ||\vnu_\varphi\pl||_2^2 \\
& =1+\frac{1}{N\ul} ||\vnu_\varphi\pl||_2^2 = 1+\frac{1}{R\ul} ||\vnu_\rho\pl||_2^2, \\[-15pt] \nonumber 
\end{align*}

i.e. that $||\vnu_\rho\pl||_2^2=R\ul \big(\nu_2(\tx\pl)-1\big)$. Combined with $\nu_2(\tx\pl)=\bigO(1)$, we deduce that $||\vnu_\rho\pl||_2=\bigO(1)$.

\emph{Now let us reason by contradiction and suppose that $\reff(\rvx\pl)=\reff(\tx\pl)\centernot{\xrightarrow{l\to\infty}}1$, implying that $\exists \eta>0$ and $\exists (k\ul)_{l\in \sN}$ strictly increasing sequence with $\forall l$: $\reff(\tx\pkl)\geq 1+\eta$.}

This directly implies that $\exists \eta\pp>0$ such that $\forall l$:  \\[-6pt]
\begin{align*}
\exists \rvv_\varphi\pkl\in \R^{N\ukl} \perp \vnu_\varphi\pkl,\quad ||\rvv_\varphi\pkl||_2=1: \quad \Var_{\rvx,\valpha}\big[ \langle \fm(\tx\pkl,\valpha), \rvv_\varphi\pkl \rangle  \big]=\E_{\rvx,\valpha}\big[ \langle \fm(\tx\pkl,\valpha), \rvv_\varphi\pkl \rangle^2  \big] \geq \eta\pp,  \\[-15pt] \nonumber
\end{align*}

i.e. that $\fm(\tx\pkl,\valpha)$ has a direction of variance $> \eta\pp$ which is orthogonal to its mean vector $\vnu_\varphi\pkl$. By padding this direction appropriately with zeros, it follows that $\exists \eta\pp>0$ such that $\forall l$:  \\[-6pt]
\begin{align*}
\exists \rvv_\rho\pkl\in \R^{R\ukl} \perp \vnu_\rho\pkl, \quad ||\rvv_\rho\pkl||_2=1: \quad \Var_{\rvx,\valpha}\big[ \langle \rf(\tx\pkl,\valpha), \rvv_\rho\pkl \rangle  \big]=\E_{\rvx,\valpha}\big[ \langle \rf(\tx\pkl,\valpha), \rvv_\rho\pkl \rangle^2  \big] \geq \eta\pp. \\[-15pt] \nonumber 
\end{align*}

Let us denote $\tilde{\mW}\pklp$ such that $\forall \rc:$ $\tilde{\mW}\pklp_{\rc,:}\equiv\mW\pklp_{\rc,:}/||\mW\pklp_{\rc,:}||_2$ and $\tnu_\rho\pkl\equiv\vnu_\rho\pkl/||\vnu_\rho\pkl||_2$. Let us further decompose $\tilde{\mW}\pklp_{\rc,:}$ as  \\[-9pt]
\begin{align*}
\tilde{\mW}\pklp_{\rc,:} = w_\rvv \big(\rvv_\rho\pkl\big)^T + \sqrt{1-w^2_\rvv} \rvw^T, \qquad \rvw \perp \rvv_\rho\pkl, \qquad ||\rvw|| = 1. \\[-15pt] \nonumber
\end{align*}

Then we get  \\[-9pt]
\begin{align*}
\E_{\rvx,\valpha}& \Big[ \big(\tilde{\mW}\pklp_{\rc,:} \rf(\tx\pkl,\valpha) \big)^2 \Big] \\
 & = \E_{\rvx,\valpha}\Big[  w^2_\rvv  \langle \rf(\tx\pkl,\valpha), \rvv_\rho\pkl \rangle^2 + (1-w^2_\rvv) \langle \rf(\tx\pkl,\valpha), \rvw \rangle^2  + 2 w_\rvv \sqrt{1-w^2_\rvv} \langle \rf(\tx\pkl,\valpha), \rvv_\rho\pkl \rangle   \langle \rf(\tx\pkl,\valpha), \rvw \rangle  \Big] \\
 & \geq w^2_\rvv \eta\pp + 2 w_\rvv \sqrt{1-w^2_\rvv} \E_{\rvx,\valpha}\Big[ \langle \rf(\tx\pkl,\valpha), \rvv_\rho\pkl \rangle   \langle \rf(\tx\pkl,\valpha), \rvw \rangle  \Big] \\
  & \geq w^2_\rvv \eta\pp - 2 w_\rvv \sqrt{1-w^2_\rvv} \E_{\rvx,\valpha}\Big[ \langle \rf(\tx\pkl,\valpha), \rvv_\rho\pkl \rangle^2 \Big]^\frac{1}{2} \E_{\rvx,\valpha}\Big[ \langle \rf(\tx\pkl,\valpha), \rvw \rangle^2 \Big]^\frac{1}{2} \\
    & \geq w^2_\rvv \eta\pp - 2 w_\rvv \sqrt{1-w^2_\rvv} \E_{\rvx,\valpha}\Bigg[ \Big\langle \rf(\tx\pkl,\valpha), \frac{\rf(\tx\pkl,\valpha)}{||\rf(\tx\pkl,\valpha)||} \Big\rangle^2 \Bigg]^\frac{1}{2} \E_{\rvx,\valpha}\Bigg[ \Big\langle \rf(\tx\pkl,\valpha), \frac{\rf(\tx\pkl,\valpha)}{||\rf(\tx\pkl,\valpha)||} \Big\rangle^2 \Bigg]^\frac{1}{2} \\
   & \geq w^2_\rvv \eta\pp - 2 w_\rvv \sqrt{1-w^2_\rvv} \E_{\rvx,\valpha}\Big[ \sum\nolimits_i \rf(\tx\pkl,\valpha)^2_i \Big] \\
     & \geq w^2_\rvv \eta\pp - 2 w_\rvv \sqrt{1-w^2_\rvv} R\ukl \nu_2(\tx\pkl), \\
\E_{\rvx,\valpha} & \Big[ \tilde{\mW}\pklp_{\rc,:} \rf(\tx\pkl,\valpha) \Big]^2  \\
& = (1-w^2_\rvv) \E_{\rvx,\valpha} \Big[ \langle \rf(\tx\pkl,\valpha), \rvw \rangle   \Big]^2   \\
& \leq (1-w^2_\rvv) \E_{\rvx,\valpha}\Bigg[ \Big\langle \rf(\tx\pkl,\valpha), \frac{\rf(\tx\pkl,\valpha)}{||\rf(\tx\pkl,\valpha)||} \Big\rangle^2 \Bigg] \\
& \leq (1-w^2_\rvv) R\ukl \nu_2(\tx\pkl). \\[-15pt] \nonumber
\end{align*}

Given that $\nu_2(\tx\pkl)=\bigO(1)$, this implies by spherical symmetry that $\forall \epsilon>0$, $\exists p_\epsilon>0$ such that $\forall l$:  \\[-9pt]
\begin{align}
\sP_{\tklp}\Bigg[ \Big(\E_{\rvx,\valpha}\Big[ \big( \tilde{\mW}\pklp_{\rc,:} \rf(\tx\pkl,\valpha)\big)^2 \Big]^2 \geq \eta^{\prime \vss 2} -\epsilon \Big) 
\wedge \Big( \E_{\rvx,\valpha}\Big[ \tilde{\mW}\pklp_{\rc,:} \rf(\tx\pkl,\valpha) \Big]^2  \leq \epsilon \Big) \Bigg]
\geq p_\epsilon, \label{E:one_dimensional_pathology1} \\[-15pt] \nonumber
\end{align}

with $\wedge$ the logical \emph{and}. 

On the other hand, by Cauchy-Schwarz inequality:   \\[-6pt]
\begin{align}
\E_{\rvx,\valpha}\Big[ \big( \tilde{\mW}\pklp_{\rc,:} \rf(\tx\pkl,\valpha) \big)^2 \Big]^2 & \leq 
\E_{\rvx,\valpha}\Big[ \big| \tilde{\mW}\pklp_{\rc,:} \rf(\tx\pkl,\valpha) \big| \Big]
\E_{\rvx,\valpha}\Big[ \big| \tilde{\mW}\pklp_{\rc,:} \rf(\tx\pkl,\valpha) \big|^3 \Big]. \label{E:one_dimensional_pathology2} \\[-15pt] \nonumber
\end{align}

The second term on the right-hand side can be bounded as  \\[-6pt]
\begin{align}
\E_{\rvx,\valpha}\Big[ & \big| \tilde{\mW}\pklp_{\rc,:} \rf(\tx\pkl,\valpha) \big|^3 \Big] \nonumber \\
 & \leq \E_{\rvx,\valpha}\Bigg[ \Big\langle \rf(\tx\pkl,\valpha), \frac{\rf(\tx\pkl,\valpha)}{||\rf(\tx\pkl,\valpha)||_2} \Big\rangle^3 \Bigg] 
 = \E_{\rvx,\valpha}\Big[ ||\rf(\tx\pkl,\valpha)||^3_2 \Big] 
 = \E_{\rvx,\valpha}\Bigg[ \Big(\sum^{R\ukl}_{i=1} \rf(\tx\pkl,\valpha)^2_{i} \Big)^{3/2} \Bigg] \nonumber \\
& \leq \E_{\rvx,\valpha}\Bigg[ \sum_{i_1,i_2,i_3} \rf(\tx\pkl,\valpha)^2_{i_1} \rf(\tx\pkl,\valpha)^2_{i_2} \rf(\tx\pkl,\valpha)^2_{i_3} \Bigg]^{1/2} \label{E:one_dimensional_pathology4} \\
& \leq \sum_{i_1,i_2,i_3}  \E_{\rvx,\valpha}\big[ \rf(\tx\pkl,\valpha)^4_{i_1} \big]^{1/4} \E_{\rvx,\valpha}\big[ \rf(\tx\pkl,\valpha)^8_{i_2} \big]^{1/8} \E_{\rvx,\valpha}\big[ \rf(\tx\pkl,\valpha)^8_{i_3} \big]^{1/8} \label{E:one_dimensional_pathology5} \\
& \leq R^3\ukl N^{1/2}\ukl \nu_4(\tx\pkl)^{1/4} \nu_8(\tx\pkl)^{1/4}, \label{E:one_dimensional_pathology6}  \\[-15pt] \nonumber
\end{align}

where \eq{E:one_dimensional_pathology4} and \eq{E:one_dimensional_pathology5} were obtained by applying Cauchy-Schwarz inequality, while \eq{E:one_dimensional_pathology6} was obtained with $\displaystyle \forall i,\forall p$: $\displaystyle \E_{\rvx,\valpha}\big[ \rf(\tx\pkl,\valpha)^p_i \big] \leq \sum\nolimits_\rc \E_{\rvx,\valpha}\big[ \fm(\tx\pkl,\valpha)^p_\rc \big]=N\ukl \nu_p(\tx\pkl)$. 

It then follows from \eq{E:one_dimensional_pathology2} and the hypothesis that all moments are bounded $\nu_p(|\tx\pl|)=\bigO(1)$ that   \\[-6pt]
\begin{align}
\E_{\rvx,\valpha}\Big[ \big( \tilde{\mW}\pklp_{\rc,:} \rf(\tx\pkl,\valpha) \big)^2 \Big]^2 = \bigO\Big(  \E_{\rvx,\valpha}\Big[ \big| \tilde{\mW}\pklp_{\rc,:} \rf(\tx\pkl,\valpha) \big| \Big] \Big). \label{E:one_dimensional_pathology7}  \\[-15pt] \nonumber
\end{align}

Combining \eq{E:one_dimensional_pathology1} and \eq{E:one_dimensional_pathology7}, we deduce that $\exists \eta\ppp>0$ with $\forall \epsilon>0$, $\exists p\pp_\epsilon >0$ such that $\forall l$:   \\[-6pt]
\begin{align*}
\sP_{\tklp}\Bigg[ \Big(\E_{\rvx,\valpha}\Big[ \big|  \tilde{\mW}\pklp_{\rc,:} \rf(\tx\pkl,\valpha) \big| \Big] \geq \eta\ppp -\epsilon \Big) 
\wedge \Big( \E_{\rvx,\valpha}\Big[  \tilde{\mW}\pklp_{\rc,:} \rf(\tx\pkl,\valpha) \Big]^2  \leq \epsilon \Big) \Bigg]
\geq p\pp_\epsilon.  \\[-18pt] \nonumber
\end{align*}

Under standard initialization: $\mW\pklp_{\rc,:} \sim_\tklp \mathcal{N}(0,2\,/\,R\ukl \mI)$, the variables $\tilde{\mW}\pklp_{\rc,:}$ and $||\mW\pklp_{\rc,:}||_2$ are independent and $\sP_{\tklp}\big[1 \leq ||\mW\pklp_{\rc,:}||_2 \leq 2 \big]>0$ does not depend on $l$. Therefore $\forall \epsilon>0$, $\exists p\ppp_\epsilon >0$ such that $\forall l$:   \\[-9pt]
\begin{align}
\sP_{\tklp}\Bigg[ \Big( \E_{\rvx,\valpha}\Big[ \big| \mW\pklp_{\rc,:} \rf(\tx\pkl,\valpha) \big|  \Big] \geq \eta\ppp-\epsilon \Big) 
\wedge \Big( \E_{\rvx,\valpha}\Big[ \mW\pklp_{\rc,:} \rf(\tx\pkl,\valpha) \Big]^2  \leq 4 \epsilon \Big) \Bigg]
\geq p\ppp_\epsilon. \\[-18pt] \nonumber
\end{align}

Now by noting that \\[-9pt]
\begin{align*}
\E_{\rvx,\valpha}\Big[ \big| \mW\pklp_{\rc,:} \rf(\tx\pkl,\valpha) \big| \Big] & =
 \E_{\rvx,\valpha}\Big[ \big(\mW\pklp_{\rc,:} \rf(\tx\pkl,\valpha)\big)^+ \Big] +
\E_{\rvx,\valpha}\Big[ \big(\mW\pklp_{\rc,:} \rf(\tx\pkl,\valpha)\big)^- \Big],\\
\E_{\rvx,\valpha}\Big[ \mW\pklp_{\rc,:} \rf(\tx\pkl,\valpha) \Big]^2 & = 
\Big( \E_{\rvx,\valpha}\Big[ \big(\mW\pklp_{\rc,:} \rf(\tx\pkl,\valpha)\big)^+ \Big] -
\E_{\rvx,\valpha}\Big[ \big(\mW\pklp_{\rc,:} \rf(\tx\pkl,\valpha)\big)^- \Big] \Big)^2, \\[-18pt] \nonumber
\end{align*}

we deduce that $\exists \eta\pppp>0$, $\exists p>0$ such that $\forall l$:   \\[-9pt]
\begin{align*}
\sP_{\tklp}\Bigg[ \Big( \E_{\rvx,\valpha}\Big[ \big(\mW\pklp_{\rc,:} \rf(\tx\pkl,\valpha)\big)^+ \Big] \geq \eta\pppp \Big)
\wedge \Big( \E_{\rvx,\valpha}\Big[ \big(\mW\pklp_{\rc,:} \rf(\tx\pkl,\valpha)\big)^- \Big] \geq \eta\pppp \Big) \Bigg]
& \geq p,  \\
\sP_{\tklp}\Bigg[ \Big( \nu_{1,\rc}(\rvy\pklpplus) \geq \eta\pppp \sqrt{\mu_2(\rvx\pkl)} \Big)
\wedge \Big( \nu_{1,\rc}(\rvy\pklpminus) \geq \eta\pppp \sqrt{\mu_2(\rvx\pkl)}  \Big) \Bigg]
& \geq p, \\
\sP_{\tklp}\Bigg[ \frac{\nu_{1,\rc}(\rvy\pklpplus) \nu_{1,\rc}(\rvy\pklpminus)}{\mu_2(\rvx\pkl)} \geq (\eta\pppp)^2  \Bigg]
& \geq p, \\
\E_{\rc,\tklp}\Bigg[ \frac{\nu_{1,\rc}(\rvy\pklpplus) \nu_{1,\rc}(\rvy\pklpminus)}{\mu_2(\rvx\pkl)} \Bigg]
& \geq p (\eta\pppp)^2. \\[-18pt] \nonumber
\end{align*}

Thus by \theorem{T:S_increments_vanilla}, $\exists \eta\ppppp>0$ such that $\forall l$: $\exp\big(\overline{m}[\S\pklp]\big) \geq 1+\eta\ppppp$, contradicting the hypothesis $\exp\big(\overline{m}[\S\pl]\big)\xrightarrow{l\to\infty}1$. 

\emph{We deduce that $\reff(\rvx\pl)\xrightarrow{l\to\infty}1$, i.e. that $\rvx\pl$ converges to one-dimensional signal pathology}. \qed
\end{proof}

\subsection{If $\exp\big( \overline{m}[\S\pl]\big) \to 1$, then each Additional Layer $l$ Becomes Arbitrarily Well Approximated by a Linear Mapping}
 \label{S:pseudo_linearity}

We suppose that $\forall l$: $\mu_2(\rvx\pl)>0$ and that $\exp\big( \overline{m}[\S\pl]\big)\to 1$. Denoting $\ty\pl=\rvy\pl/\sqrt{\mu_2(\rvx\plm)}$ and $\ty^{l,\pm}\equiv\max\big(\pm \ty\pl,0\big)$, \theorem{T:S_increments_vanilla} implies that  \\[-12pt]
\begin{align}
\E_{\rc,\tl} \big[ \nu_{1,\rc}(\ty\plplus)\nu_{1,\rc}(\ty\plminus) \big] \to 0, \nonumber \\ 
\E_{\rc,\tl} \Big[ \min\big(\nu_{1,\rc}(\ty\plplus),\nu_{1,\rc}(\ty\plminus) \big)^2 \Big] \to 0, \nonumber \\
\forall \epsilon>0: \quad \sP_{\rc,\tl} \big[ \min\big(\nu_{1,\rc}(\ty\plplus),\nu_{1,\rc}(\ty\plminus) \big) > \epsilon \big] \to 0, \nonumber \\
\forall \epsilon>0: \quad \sP_{\tl} \big[ \exists \rc:\; \min\big(\nu_{1,\rc}(\ty\plplus),\nu_{1,\rc}(\ty\plminus) \big) > \epsilon \big] \to 0, \nonumber\\
\forall \epsilon>0: \quad \sP_{\tl} \big[ \forall \rc:\; \min\big(\nu_{1,\rc}(\ty\plplus),\nu_{1,\rc}(\ty\plminus) \big) \leq \epsilon \big] \to 1. \label{E:pseudo_linearity1} \\[-15pt] \nonumber
\end{align}

Now let us fix a channel $\rc$ and suppose that $\min\big(\nu_{1,\rc}(\ty\plplus)\nu_{1,\rc}(\ty\plminus) \big) \leq \epsilon$. Given that $\ty\plminus=|\ty\plplus-\ty\pl|$, we have that \\[-9pt]
\begin{align*}
\min\big(\nu_{1,\rc}(\ty\plplus)\nu_{1,\rc}(\ty\plminus) \big) = \min\big( \nu_{1,\rc}(|\ty\plplus-0|), \nu_{1,\rc}(|\ty\plplus-\ty\pl|) \big)  \leq \epsilon. \\[-15pt] 
\end{align*}

Both $\nu_{1,\rc}(|\ty\plplus-0|)$ and $\nu_{1,\rc}(|\ty\plplus-\ty\pl|)$ correspond to the mean absolute error incurred when approximating the rescaled signal $\rvx\pl/\mu_2(\rvx\plm)=\rvy\plplus/\mu_2(\rvx\plm)=\ty\plplus$ in channel $\rc$ by a linear function. So there exists a linear function $f_\rc: \R^{n\times \dots \times n \times N\ulm} \rightarrow \R^{n\times \dots \times n}$ such that\\[-9pt] 
\begin{align*}
 \E_{\rvx,\valpha}\big[ |\ty\plplus_{\valpha,\rc}-f_\rc(\rvx\plm)_\valpha | \big]  \leq \epsilon. \\[-15pt] 
\end{align*}

If $\forall \rc$: $\min\big(\nu_{1,\rc}(\ty\plplus)\nu_{1,\rc}(\ty\plminus) \big) \leq \epsilon$, and if we define the linear function $f: \R^{n\times \dots \times n \times N\ulm} \rightarrow \R^{n\times \dots \times n \times N\ul}$ such that $\forall \valpha,\rc$: $f(\rvx\plm)_{\valpha,\rc} = f_\rc(\rvx\plm)_\valpha$, then we get  \\[-9pt]
\begin{align*}
 \nu_1(|\ty\plplus-f(\rvx\plm)|) = \E_{\rvx,\valpha,\rc}\big[ |\ty\plplus_{\valpha,\rc}-f(\rvx\plm)_{\valpha,\rc} | \big]
 = \E_\rc \E_{\rvx,\valpha}\big[ |\ty\plplus_{\valpha,\rc}-f_\rc(\rvx\plm)_{\valpha} | \big]  \leq \epsilon. \\[-15pt] 
 \end{align*}

Combined with \eq{E:pseudo_linearity1}, this means that $\rvx\pl/\mu_2(\rvx\plm)=\ty\plplus$ can be approximated arbitrarily well by a linear function of $\rvx\plm$ with probability arbitrarily close to $1$ in $\tl$. 

We have shown that $\rvx\pl/\mu_2(\rvx\plm)$ is arbitrarily well approximated by a linear function of $\rvx\plm$ when normalizing with respect to $\rvx\plm$. Now let us show that $\tx\pl=\rvx\pl/\mu_2(\rvx\pl)$ is arbitrarily well approximated by a linear function of $\rvx\plm$ when normalizing with respect to $\rvx\pl$.

Let us denote $(\ve_1,\dots,\ve_{R\ul})$ and $(\lambda_1,\dots,\lambda_{R\ul})$ respectively the orthogonal eigenvectors and eigenvalues of $\mC_{\rvx,\valpha}[\rf(\rvx\plm,\valpha)]$ and $\hat{\mW}\pl\equiv\mW\pl (\ve_1,\dots,\ve_{R\ul})$. By Corollary~\ref{C:statistics_preserving_constant} there is at least one eigenvalue $\lambda_i$ such that $\lambda_i \geq \mu_2(\rvx\plm)$, which gives combined with \eq{E:S_increments_vanilla_Theorem2} that $\forall \rc$:  \\[-9pt]
 \begin{align*}
\mu_{2,\rc}(\ty\pl) & = \frac{1}{\mu_2(\rvx\plm)} \sum\nolimits_i \big( \hat{\mW}\pl_{\rc,i} \big)^2 \lambda_i, \nonumber\\
\mu_{2,\rc}(\ty\pl) & \geq X, \qquad X \sim_\tl \frac{2}{R\ul} \operatorname{Chi-Squared}(1).\\[-18pt] \nonumber 
\end{align*}

Using \eq{E:S_increments_vanilla_Theorem7}, we then get \\[-9pt]
  \begin{align}
   \mu_{2,\rc}(\rvx\pl)+\mu_{2,\rc}(\bx\pl) & =\mu_{2,\rc}(\rvy\pl)-2\nu_{1,\rc}(\rvy\plplus) \nu_{1,\rc}(\rvy\plminus)=\mu_2(\rvx\plm) \Big( \mu_{2,\rc}(\ty\pl)-2\nu_{1,\rc}(\ty\plplus) \nu_{1,\rc}(\ty\plminus) \Big),\nonumber \\
 \mu_{2,\rc}(\rvx\pl)+\mu_{2,\rc}(\bx\pl) & \geq \mu_2(\rvx\plm) \big( X - Y \big), \qquad X \sim_\tl \frac{2}{R\ul} \operatorname{Chi-Squared}(1), \qquad \forall \epsilon: \sP_\tl[|Y|>\epsilon] \xrightarrow{l\rightarrow\infty} 0. \label{E:pseudo_linearity2}\\[-15pt] \nonumber
\end{align}

Similarly to the proof of \theorem{T:moments_vanilla}, we define \\[-9pt]
\begin{align*}
w\pl_\rc \equiv \left\{
    \begin{array}{cl}
        0 & \mbox{if }  \mu_{2,\rc}(\rvx\pl)<\mu_{2,\rc}(\bx\pl) \vspace{3pt} \\
        1 & \mbox{if }  \mu_{2,\rc}(\rvx\pl)>\mu_{2,\rc}(\bx\pl) \;, \vspace{3pt} \\
        \tilde{w}\pl_\rc & \mbox{if }  \mu_{2,\rc}(\rvx\pl)=\mu_{2,\rc}(\bx\pl) \\
    \end{array}
\right.
\end{align*}

with $\tilde{w}\pl_\rc \sim \operatorname{Bernouilli}(1/2)$ independent of $\vomega\pl$ and $\vbeta\pl$. 

Since $C\ul$ is independent from $\mu_{2,\rc}(\rvx\pl)+\mu_{2,\rc}(\bx\pl)$, it follows from \eq{E:pseudo_linearity2} that $\forall p>0, \exists \eta,\eta\pp>0$ such that for $l$ large enough: \\[-9pt]
 \begin{align}
\sP_{\tl | C\ul}\Bigg[ \mu_{2}(\rvx\pl) \geq \frac{1}{2N\ul} \mu_2(\rvx\plm) \eta \Bigg] & > 1-p,\\
\sP_{\tl | C\ul}\Bigg[ \frac{\mu_2(\rvx\pl)}{\mu_2(\rvx\plm)} \geq \eta\pp \Bigg] & > 1-p. \label{E:pseudo_linearity3} \\[-18pt] \nonumber
\end{align}

Now let us fix $p,\epsilon>0$ and consider $\eta\pp$ as in \eq{E:pseudo_linearity3}. If we suppose that $\forall \rc$: $\min\big(\nu_{1,\rc}(\ty\plplus)\nu_{1,\rc}(\ty\plminus) \big) \leq \sqrt{\eta\pp} \epsilon$, and that $\frac{\mu_2(\rvx\pl)}{\mu_2(\rvx\plm)} \geq \eta\pp$, then there exists a linear function $f: \R^{n\times \dots \times n \times N\ulm} \rightarrow \R^{n\times \dots \times n \times N\ul}$ such that \\[-9pt]
\begin{align*}
 \nu_1(|\ty\plplus-f(\rvx\plm)|) & \leq \sqrt{\eta\pp} \epsilon, \\
 \nu_1(|\tx\pl-\tilde{f}(\rvx\plm)|) & \leq \sqrt{\frac{\mu_2(\rvx\plm)}{\mu_2(\rvx\pl)}} \sqrt{\eta\pp} \epsilon \leq \frac{1}{\sqrt{\eta\pp}}\sqrt{\eta\pp} \epsilon=\epsilon, \\[-15pt] 
 \end{align*}

where we defined $\tilde{f}(\rvx\plm)=\sqrt{\frac{\mu_2(\rvx\plm)}{\mu_2(\rvx\pl)}}f(\rvx\plm)$. Given \eq{E:pseudo_linearity1}, this means that $\tx\pl$ can be approximated with error $\epsilon$ by a linear function of $\rvx\plm$ with probability arbitrarily close to $(1-p)\sP_\tl[C\ul]=(1-p)\big(1-2^{-N\ul}\big)$. Thus $\tx\pl$ can be approximated arbitrarily well by a linear function of $\rvx\plm$ with probability arbitrarily close to $\sP_\tl[C\ul]=1-2^{-N\ul}$. Furthermore $\sP_\tl[C\ul]$ is itself nearly indistinguishable from $1$.

\section{Details of \Section{S:BN_feedforward_nets}}

\subsection{Proof of \theorem{T:S_increments_bn_ff}}
 \label{S:proof_S_increments_bn_ff}

\textbf{\theorem{T:S_increments_bn_ff}} (normalized sensitivity increments of batch-normalized feedforward nets)\textbf{.}\emph{
The dominating term in the evolution of $\S\pl$ can be decomposed as
\begin{align*}
\delta \S\pl = \delta\ubn \S\pl \;\cdot\;  \delta\uphi \S\pl \simeq \exp\big( \overline{m}[ \S\pl ] \big) & = \exp\big( \overline{m}\uBN[ \S\pl ] \big) \,\cdot\, \exp\big( \overline{m}\uPHI[ \S\pl ] \big), \\
\exp\big( \overline{m}\uBN[ \S\pl ] \big) & \equiv \Bigg(\frac{\mu_2(\dx\plm)}{\mu_2(\rvx\plm)}\Bigg)^{-\frac{1}{2}} \E_{\rc,\tl} \Bigg[ \frac{\mu_{2,\rc}(\dy\pl)}{\mu_{2,\rc}(\rvy\pl)} \Bigg]^\frac{1}{2}, \\
\exp\big( \overline{m}\uPHI[ \S\pl ] \big) & \equiv \Big( 1- 2 \vss \E_{\rc,\tl} [ \nu_{1,\rc}  ( \rvz\plplus  ) \nu_{1,\rc} ( \rvz\plminus ) ] \Big)^{-\frac{1}{2}}.\\[-40pt]
\end{align*} 
\begin{align*}
\hspace{158pt}\underbrace{\qquad\qquad\qquad\qquad\qquad\qquad\qquad\;}_{
\hspace{2pt} \in [1,\sqrt{2}]} \\[-20pt]
\end{align*}
}

\begin{proof}\textbf{.} First let us decompose $\delta \S\pl$ as the product of $\delta\ubn \S\pl$ and $\delta\uphi \S\pl$: \\[-9pt]
\begin{align*}
\delta\ubn \S\pl & \equiv \Bigg(\frac{\mu_2(\dz\pl)}{\mu_2(\rvz\pl)}\Bigg)^{\frac{1}{2}} \Bigg(\frac{\mu_2(\dx\plm)}{\mu_2(\rvx\plm)}\Bigg)^{-\frac{1}{2}},\\
\delta\uphi \S\pl & \equiv \Bigg(\frac{\mu_2(\dx\pl)}{\mu_2(\rvx\pl)}\Bigg)^{\frac{1}{2}} \Bigg(\frac{\mu_2(\dz\pl)}{\mu_2(\rvz\pl)}\Bigg)^{-\frac{1}{2}}, \\
\delta \S\pl & = \Bigg(\frac{\mu_2(\dx\pl)}{\mu_2(\rvx\pl)}\Bigg)^{\frac{1}{2}} \Bigg(\frac{\mu_2(\dx\plm)}{\mu_2(\rvx\plm)}\Bigg)^{-\frac{1}{2}} 
= \delta\ubn \S\pl \;\cdot\; \delta\uphi \S\pl.
\end{align*} 

Next let us decompose $\exp\big( \overline{m}[ \S\pl ] \big)$ as the product of two terms:  \\[-9pt]
\begin{align*}
\exp\big( \overline{m}\uBN\big[ \S\pl \big] \big) & = \Bigg(\frac{\E_\tl[\mu_2(\dz\pl)]}{\E_\tl[\mu_2(\rvz\pl)]}\Bigg)^{\frac{1}{2}} \Bigg(\frac{\mu_2(\dx\plm)}{\mu_2(\rvx\plm)}\Bigg)^{-\frac{1}{2}}, \\
\exp\big( \overline{m}\uPHI\big[ \S\pl \big] \big) & =  \Bigg(\frac{\E_\tl[\mu_2(\dx\pl)]}{\E_\tl[\mu_2(\rvx\pl)]}\Bigg)^{\frac{1}{2}} \Bigg(\frac{\E_\tl[\mu_2(\dz\pl)]}{\E_\tl[\mu_2(\rvz\pl)]}\Bigg)^{-\frac{1}{2}}, \\
\exp\big( \overline{m}[ \S\pl ] \big) & = \Bigg(\frac{\E_\tl[\mu_2(\dx\pl)]}{\E_\tl[\mu_2(\rvx\pl)]}\Bigg)^{\frac{1}{2}} \Bigg(\frac{\mu_2(\dx\plm)}{\mu_2(\rvx\plm)}\Bigg)^{-\frac{1}{2}} \\
	& = \exp\big( \overline{m}\uBN[ \S\pl ] \big) \,\cdot\, \exp\big( \overline{m}\uPHI[ \S\pl ] \big).  \\[-15pt] \nonumber
\end{align*}

The term $\exp\big( \overline{m}\uBN[ \S\pl ] \big)$ approximates the geometric increment $\delta\ubn \S\pl$ from $(\rvx\plm,\dx\plm)$ to $(\rvz\pl,\dz\pl)$ such that \mbox{$\exp\big( \overline{m}\uBN[ \S\pl ] \big) \simeq \delta\ubn \S\pl$}, while the term $\exp\big( \overline{m}\uPHI[ \S\pl ] \big)$ approximates the geometric increment $\delta\uphi \S\pl$ from $(\rvz\pl,\dz\pl)$ to $(\rvx\pl,\dx\pl)$ such that \mbox{$\exp\big( \overline{m}\uPHI[ \S\pl ] \big) \simeq \delta\uphi \S\pl$}. \mbox{These terms can be seen (slightly simplistically)} as the direct contribution of respectively batch normalization and the nonlinearity $\phi$ to $\delta \S\pl$. Now let us explicitate both terms. 

\textbf{Term $\exp\big( \overline{m}\uBN\big[ \S\pl \big] \big)$.} First let us note that batch normalization directly gives $\mu_2(\rvz\pl)=1$, and thus $\E_\tl[\mu_2(\rvz\pl)]=1$. Next let us explicitate $\E_{\tl}[\mu_2(\dz\pl)]$:  \\[-9pt]
\begin{gather*}
\forall \rc: \; \dz\pl_{:,\rc} = \frac{\dy\pl_{:,\rc}}{\sqrt{\mu_{2,\rc}(\rvy\pl)}}, \qquad \forall \rc: \; \mu_{2,\rc}(\dz\pl) = \frac{\mu_{2,\rc}(\dy\pl)}{\mu_{2,\rc}(\rvy\pl)}, \\
\E_{\tl}[\mu_2(\dz\pl)] = \E_{\rc,\tl}[\mu_{2,\rc}(\dz\pl)] = \E_{\rc,\tl}\Bigg[ \frac{\mu_{2,\rc}(\dy\pl)}{\mu_{2,\rc}(\rvy\pl)} \Bigg]. \\[-15pt] \nonumber
\end{gather*}

All together, we get that \\[-9pt]
\begin{align*}
\exp\big( \overline{m}\uBN\big[ \S\pl \big] \big) & = \Bigg(\frac{\mu_2(\dx\plm)}{\mu_2(\rvx\plm)}\Bigg)^{-\frac{1}{2}} \E_{\rc,\tl}\Bigg[ \frac{\mu_{2,\rc}(\dy\pl)}{\mu_{2,\rc}(\rvy\pl)} \Bigg]^\frac{1}{2}. \\[-15pt] \nonumber
\end{align*}

\textbf{Term $\exp\big( \overline{m}\uPHI\big[ \S\pl \big] \big)$.} We consider the symmetric propagation for batch-normalized feedforward nets, introduced in \Section{S:complementary_definitions_notations}. From \eq{E:symmetric_bn_mus}, we deduce that \\[-12pt]
\begin{align}
\E_\tl[ \mu_2(\dx\pl)] + \E_\tl[ \mu_2(\bdx\pl)] & = \E_\tl[ \mu_2(\dz\pl)], \nonumber \\
2\E_\tl[ \mu_2(\dx\pl)] & = \E_\tl[ \mu_2(\dz\pl)]  \label{E:S_increments_bn_ff_Theorem1}, \\[-15pt] \nonumber
\end{align}

where \eq{E:S_increments_bn_ff_Theorem1} is obtained by symmetry of the propagation. Next we turn to the symmetric propagation of the signal:  \\[-9pt]
\begin{align}
\mu_{2,\rc}(\rvx\pl)+\mu_{2,\rc}(\bx\pl) & =\E_{\rvx,\valpha}\big[(\rvz\plplus_{\valpha,\rc})^2\big]-\E_{\rvx,\valpha}\big[\rvz\plplus_{\valpha,\rc}\big]^2+\E_{\rvx,\valpha}\big[(\rvz\plminus_{\valpha,\rc})^2\big]-\E_{\rvx,\valpha}\big[\rvz\plminus_{\valpha,\rc}\big]^2 \label{E:S_increments_bn_ff_Theorem2} \\
&=\nu_{2,\rc}(\rvz\plplus)-\nu_{1,\rc}(\rvz\plplus)^2+\nu_{2,\rc}(\rvz\plminus)-\nu_{1,\rc}(\rvz\plminus)^2 \nonumber \\
&=\nu_{2,\rc}(\rvz\pl)-\Big(\nu_{1,\rc}(\rvz\plplus)^2+\nu_{1,\rc}(\rvz\plminus)^2\Big), \nonumber \\[-15pt] \nonumber
\end{align}

where \eq{E:S_increments_bn_ff_Theorem2} follows from \eq{E:symmetric_bn_identity}. Due to the constraints $\nu_{1,\rc}(\rvz\pl)=0$ and $\nu_{2,\rc}(\rvz\pl)=1$, imposed by batch normalization: \\[-9pt]
\begin{gather}
\mu_{2,\rc}(\rvx\pl)+\mu_{2,\rc}(\bx\pl) =1-\Big(\nu_{1,\rc}(\rvz\plplus)^2+\nu_{1,\rc}(\rvz\plminus)^2\Big), \label{E:S_increments_bn_ff_Theorem3} \\
\nu_{1,\rc}(\rvz\pl)=\nu_{1,\rc}(\rvz\plplus)-\nu_{1,\rc}(\rvz\plminus)=0, \nonumber \\
\big( \nu_{1,\rc}(\rvz\plplus)-\nu_{1,\rc}(\rvz\plminus) \big)^2 = \nu_{1,\rc}(\rvz\plplus)^2+\nu_{1,\rc}(\rvz\plminus)^2-2\nu_{1,\rc}(\rvz\plplus)\nu_{1,\rc}(\rvz\plminus) = 0. \label{E:S_increments_bn_ff_Theorem4}  \\[-15pt] \nonumber
\end{gather}

Using \eq{E:S_increments_bn_ff_Theorem3}, \eq{E:S_increments_bn_ff_Theorem4} and the symmetry of the propagation:   \\[-9pt]
\begin{align}
\mu_{2,\rc}(\rvx\pl)+\mu_{2,\rc}(\bx\pl) & = 1 -  2\nu_{1,\rc}(\rvz\plplus)\nu_{1,\rc}(\rvz\plminus),  \nonumber \\
2\E_\tl[ \mu_2(\rvx\pl)]  & = 1 -  2 \vss \E_{\rc,\tl} [ \nu_{1,\rc}  \big( \rvz\plplus  \big) \nu_{1,\rc} \big( \rvz\plminus \big) ].\label{E:S_increments_bn_ff_Theorem5} \\[-15pt] \nonumber
\end{align}

Finally combining \eq{E:S_increments_bn_ff_Theorem1}, \eq{E:S_increments_bn_ff_Theorem5} and $\E_{\theta\pl}[\mu_2(\rvz\pl)]=1$: \\[-9pt]
\begin{align*}
\exp\big( \overline{m}\uPHI\big[ \S\pl \big] \big) & =  \Bigg(\frac{\E_{\tl}[\mu_2(\dx\pl)]}{\E_{\tl}[\mu_2(\rvx\pl)]}\Bigg)^\frac{1}{2} \Bigg(\frac{\E_{\theta\pl}[\mu_2(\dz\pl)]}{\E_{\tl}[\mu_2(\rvz\pl)]}\Bigg)^{-\frac{1}{2}} \\
& = \Big( 1 - 2 \vss \E_{\rc,\tl} [ \nu_{1,\rc}  \big( \rvz\plplus  \big) \nu_{1,\rc} \big( \rvz\plminus \big) ] \Big)^{-\frac{1}{2}}.
\end{align*}

To obtain the bounds on $\exp\big( \overline{m}\uPHI\big[ \S\pl \big] \big)$, the same reasoning as \eq{E:S_increments_vanilla_Theorem9} may be applied to $\rvz\pl$ instead of $\rvy\pl$: \\[-9pt]
\begin{gather*}
4 \nu_{1,\rc}(\rvz\plplus) \nu_{1,\rc}(\rvz\plminus) \leq \mu_{2,\rc}(\rvz\pl)  = 1, \qquad
2 \E_{\rc,\tl} [ \nu_{1,\rc}  ( \rvz\plplus  ) \nu_{1,\rc} ( \rvz\plminus ) ] \leq \frac{1}{2}, \\
 1 \leq  \exp\big( \overline{m}_{\phi}[ \S\pl ] \big) = \Big( 1- 2 \vss \E_{\rc,\tl} [ \nu_{1,\rc}  ( \rvz\plplus  ) \nu_{1,\rc} ( \rvz\plminus ) ] \Big)^{-\frac{1}{2}} \leq \sqrt{2}.\tag*{\qed} \\[-15pt]
\end{gather*}

\end{proof}

\subsection{In the First Step of the Propagation, $\exp\big( \overline{m}\uBN\big[ \S^1 \big] \big)\geq1$}
 \label{S:bn_first_step}

Using again the notations from \Section{S:complementary_definitions_notations}, we may explicitate the second-order moment in channel $\rc$ of $\dy^1$:  \\[-6pt]
\begin{align}
\mu_{2,\rc}(\dy^1) & = \E_{\rvx,\dx,\valpha}\Big[ \fmc(\dy^1,\valpha)^2_\rc \Big] = \E_{\rvx,\dx,\valpha}\Big[ \fm(\dy^1,\valpha)^2_\rc \Big] = \E_{\rvx,\dx,\valpha} \Big[ \big(\mW^1_{\rc,:} \rf(\dx,\valpha) \big)^2 \Big] \label{E:bn_first_step1} \\
& = \sum\nolimits_{i,j} \mW^1_{\rc,i} \mW^1_{\rc,j} \E_{\dx,\valpha}[\rf(\dx,\valpha)_i \vss \rf(\dx,\valpha)_j] \nonumber \\
& = \mu_2(\dx\po) \, \sum\nolimits_i \big( \mW^1_{\rc,i} \big)^2 = \mu_2(\dx\po) \, ||\mW^1_{\rc,:}||^2_2. \label{E:bn_first_step2} \\[-15pt] \nonumber
\end{align}

where \eq{E:bn_first_step1} follows from $\dy^1$ being centered, while \eq{E:bn_first_step2} follows from the white noise property $\E_{\dx}[ \dx_{i} \dx_{j}]= \sigma^2_{\dx} \delta_{ij}= \mu_2(\dx\po) \, \delta_{ij}$, implying $\forall \valpha$: $\E_\dx[\rf(\dx,\valpha)_i \vss \rf(\dx,\valpha)_j] = \mu_2(\dx\po)\, \delta_{ij}$ under periodic boundary conditions.

Now we turn to the second-order moment in channel $\rc$ of $\rvy^1$. Denoting $(\ve_1,\dots,\ve_{R_1})$ and $(\lambda_1,\dots,\lambda_{R_1})$ respectively the orthogonal eigenvectors and eigenvalues of $\mC_{\rvx,\valpha}[\rf(\rvx,\valpha)]$ and $\hat{\mW}^1=\mW^1 (\ve_1,\dots,\ve_{R_1})$, we get that  \\[-6pt]
\begin{align}
\mu_{2,\rc}(\rvy^1) & = \E_{\rvx,\valpha} \Big[ \fmc(\rvy^1,\valpha)^2_\rc \Big] = \E_{\rvx,\valpha} \Big[ \big(\mW^1_{\rc,:} \rfc(\rvx,\valpha) \big)^2 \Big] = \sum\nolimits_i \big( \hat{\mW}^1_{\rc,i} \big)^2 \lambda_i  \nonumber \\
 & = ||\mW^1_{\rc,:}||^2_2 \sum\nolimits_i \big( \tilde{\mW}^1_{\rc,i} \big)^2 \lambda_i = \frac{\mu_{2,\rc}(\dy^1)}{\mu_2(\dx\po)} \sum\nolimits_i \big( \tilde{\mW}^1_{\rc,i} \big)^2 \lambda_i,\label{E:bn_first_step3} \\[-15pt] \nonumber
\end{align}

where we defined $\tilde{\mW}^1$ such that $\forall \rc$: $\tilde{\mW}^1_{\rc,:}=\hat{\mW}^1_{\rc,:} \,\,/\,\, ||\hat{\mW}^1_{\rc,:}||$ and we used \eq{E:bn_first_step2}. Under standard initialization, the distribution of $\hat{\mW}^1$ is spherically symmetric, implying that for all channels $\rc$ the distribution of $\tilde{\mW}^1_{\rc,:}$ is uniform on the unit sphere of $\R^{R_1}$. In turn, this implies that  \\[-9pt]
\begin{gather}
\forall i:\; \E_{\theta^1} \Big[\big(\tilde{\mW}^1_{\rc,i}\big)^2 \Big] =  \frac{1}{R_1}, \nonumber\\ 
\forall \rc:\; \E_{\theta^1}\Big[ \sum\nolimits_i \big( \tilde{\mW}^1_{\rc,i} \big)^2 \lambda_i \Big] = \frac{1}{R_1} \sum\nolimits_i \lambda_i,\qquad \E_{\rc,\theta^1}\Big[ \sum\nolimits_i \big( \tilde{\mW}^1_{\rc,i} \big)^2 \lambda_i \Big] = \frac{1}{R_1} \sum\nolimits_i \lambda_i. \label{E:bn_first_step4} \\[-15pt] \nonumber
\end{gather}

Finally we can write $\exp\big( \overline{m}\uBN\big[ \S^1 \big] \big)$ as  \\[-9pt]
\begin{align}
\exp\big( \overline{m}\uBN\big[ \S^1 \big] \big) & = \Bigg(\frac{\mu_2(\dx\po)}{\mu_2(\rvx\po)}\Bigg)^{-\frac{1}{2}} \E_{\rc,{\theta^1}}\Bigg[ \frac{\mu_{2,\rc}(\dy^1)}{\mu_{2,\rc}(\rvy^1)} \Bigg]^\frac{1}{2} \nonumber \\
& = \Bigg(\frac{\mu_2(\dx\po)}{\frac{1}{R_1} \sum_i \lambda_i}\Bigg)^{-\frac{1}{2}} \E_{\rc,{\theta^1}}\Bigg[ \frac{\mu_2(\dx\po)}{\sum\nolimits_i \big( \tilde{\mW}^1_{\rc,i} \big)^2 \lambda_i} \Bigg]^\frac{1}{2} \label{E:bn_first_step5} \\
& = \Big(\frac{1}{R_1} \sum_i \lambda_i \Big)^\frac{1}{2} \E_{\rc,{\theta^1}}\Bigg[ \frac{1}{\sum\nolimits_i \big( \tilde{\mW}^1_{\rc,i} \big)^2 \lambda_i} \Bigg]^\frac{1}{2} \nonumber \\
& \geq \Big(\frac{1}{R_1} \sum_i \lambda_i \Big)^\frac{1}{2}  \Bigg( \E_{\rc,\theta^1}\Big[ \sum\nolimits_i \big( \tilde{\mW}^1_{\rc,i} \big)^2 \lambda_i \Big]^{-1}\Bigg)^\frac{1}{2} = 1. \label{E:bn_first_step6}  \\[-15pt] \nonumber
\end{align}

where \eq{E:bn_first_step5} was obtained using \eq{E:bn_first_step3} and $\mu_2(\rvx\po)=\mu_2(\rvx)=\frac{1}{R_1} \Tr \mC_{\rvx,\valpha}[\rf(\rvx,\valpha)]=\frac{1}{R_1} \sum_i \lambda_i$ by Corollary~\ref{C:statistics_preserving_constant}, while \eq{E:bn_first_step6} was obtained using the convexity of $x\mapsto1/x$ and \eq{E:bn_first_step4}.  \\

\section{Details of \Section{S:BN_resnets}}

\subsection{Adaptation of the Previous Setup to Resnets}
\label{S:adaptation_resnet}
Before proceeding to the analysis, slight adaptations and forewords are necessary. We denote  \\[-9pt]
\begin{alignat*}{4} 
\Tlh & \equiv(\vomega^{1,1}, \vbeta^{1,1}, \dots, \vomega^{1,H}, \vbeta^{1,H},\dots,\vomega^{l,1}, \vbeta^{l,1},\dots,\vomega^{l,h}, \vbeta^{l,h}), 
& \qquad \tlh & \equiv \Tlh | \Tlhm,\\
\Tl & \equiv (\vomega^{1,1}, \vbeta^{1,1}, \dots, \vomega^{1,H}, \vbeta^{1,H},\dots,\vomega^{l,1}, \vbeta^{l,1},\dots,\vomega^{l,H}, \vbeta^{l,H}), 
& \qquad  \tl & \equiv \Tl | \Tlm.  \\[-15pt] \nonumber
\end{alignat*}

In the pre-activation perspective, each residual layer starts with $(\rvy\plhm,\dy\plhm)$ after the convolution and ends with $(\rvy\plh,\dy\plh)$ again after the convolution.  The concrete effect is that $\BN$ and $\phi$ are completely deterministic conditionally on $\Theta\plm$ in the first layer $h=1$ of each residual unit $l$. This occurs again for $h\geq2$ since $\BN$ and $\phi$ are random conditionally on $\Theta\plm$ but completely deterministic conditionally on $\Theta\plhm$. At even larger granularity, due to the aggregation $(\rvy^{l},\dy^{l}) = \sum\nolimits^l_{k=0} (\rvy^{k,H},\dy^{k,H})$, the input $(\rvy\plm,\dy\plm)$ of each residual unit becomes more and more correlated between successive $l$, and less and less dependent on the random parameters $\theta^{l-k}$ of previous residual units. 

Since the evolution of $\S\pl$ is mainly influenced by batch normalization and the nonlinearity $\phi$, this shift can be thought as attributing the parameters and thus the stochasticity of layer $h$ to layer $h-1$. A simple strategy to apply the results of \Section{S:BN_feedforward_nets} is thus to shift back to the post-activation perspective by considering the parameters $\theta\plhm$ and the evolution from $(\rvx\plhm,\dx\plhm)$ to $(\rvx\plh,\dx\plh)$ for layers $2\leq h \leq H$. \theorem{T:S_increments_bn_ff} strictly applies in this case. 

It remains to understand the evolution from $(\rvy^{l,0},\dy^{l,0})=(\rvy\plm,\dy\plm)$ to $(\rvx^{l,1},\dx^{l,1})$ in layer $h=1$, and the evolution from $(\rvx\plH,\dx\plH)$ to $(\rvy\plH,\dy\plH)$ in layer $h=H$. 

By considering the parameter $\Theta\plm$, the dominating term in the evolution from $(\rvy\plm,\dy\plm)$ to $(\rvz^{l,1},\dz^{l,1})$ is  \\[-9pt]
\begin{align*} 
\Bigg(\frac{\E_{\Tlm}[\mu_2(\dz^{l,1})]}{\E_{\Tlm}[\mu_2(\rvz^{l,1})]} \Bigg)^\frac{1}{2} \Bigg(\frac{\E_{\Tlm}[\mu_2(\dy\plm)]}{\E_{\Tlm}[\mu_2(\rvy\plm)]}\Bigg)^{-\frac{1}{2}} = \Bigg(\frac{\E_{\Tlm}[\mu_2(\dy\plm)]}{\E_{\Tlm}[\mu_2(\rvy\plm)]}\Bigg)^{-\frac{1}{2}} \E_{\rc,\Tlm} \Bigg[ \frac{\mu_{2,\rc}(\dy\plm)}{\mu_{2,\rc}(\rvy\plm)} \Bigg]^\frac{1}{2}.  \\[-15pt] \nonumber
\end{align*}

 Under the assumption of well-conditioned noise, this term is again $\gtrsim 1$ by convexity of $x \mapsto 1/x$. For the nonlinearity term, the symmetric propagation with respect to $\Theta\plm$ applies for all terms in the sum $(\rvy\plm,\dy\plm)=\sum\nolimits\plm_{k=0} (\rvy^{k,H},\dy^{k,H})$, except for $(\rvy^{0,H},\dy^{0,H})=(\rvy,\dy)$. The expression of the nonlinearity term $\exp\big( \overline{m}_\phi[ \S\pl ] \big)$ in \theorem{T:S_increments_bn_ff} thus remains approximately valid.

Finally by spherical symmetry, the evolution from $(\rvx\plH,\dx\plH)$ to $(\rvy\plH,\dy\plH)$ in layer $h=H$ has dominating term  \\[-12pt]
\begin{align*}
\Bigg(\frac{\mu_2(\dx\plH)}{\mu_2(\rvx\plH)}\Bigg)^{-\frac{1}{2}} \Bigg(\frac{\E_{\tlH}[\mu_2(\dy\plH)]}{\E_{\tlH}[\mu_2(\rvy\plH)]} \Bigg)^\frac{1}{2} = 1.
\end{align*}

\emph{In summary, \theorem{T:S_increments_bn_ff} remains approximately valid during the feedforward evolution inside residual units}.

\subsection{Lemma on Dot-Products}

\begin{Lemma}
\label{L:residual_dot_product}
It holds that:  \\[-6pt]
\begin{align*}
\E_{\tl}\big[ \E_{\rvy,\valpha,\rc} \big[ \fmc(\rvy\plm)_\rc \vss \fmc(\rvy\plH,\valpha)_\rc \big] \big] & =0, \\
\E_{\tl}\Big[ \E_{\rvy,\valpha,\rc} \big[ \fmc(\rvy\plm)_\rc \vss \fmc(\rvy\plH,\valpha)_\rc \big]^2 \Big] & \leq \frac{1}{N \reff(\rvy\plm)} \mu_2(\rvy\plm) \E_{\tl}[\mu_2(\rvy\plH)], \\
\E_{\tl}\big[ \E_{\rvy,\dy,\valpha,\rc} \big[ \fmc(\dy\plm)_\rc \vss \fmc(\dy\plH,\valpha)_\rc \big] \big] & =0, \\
\E_{\tl}\Big[ \E_{\rvy,\dy,\valpha,\rc} \big[ \fmc(\dy\plm)_\rc \vss \fmc(\dy\plH,\valpha)_\rc \big]^2 \Big] & \leq \frac{1}{N \reff(\dy\plm)} \mu_2(\dy\plm) \E_{\tl}[\mu_2(\dy\plH)]. 
\end{align*}
\end{Lemma}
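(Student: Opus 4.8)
This statement controls the cross term in the decomposition $\mu_2(\rvy\pl)=\mu_2(\rvy\plm)+\mu_2(\rvy\plH)+2\,\E_{\rvy,\valpha,\rc}[\fmc(\rvy\plm,\valpha)_\rc\,\fmc(\rvy\plH,\valpha)_\rc]$ coming from $\rvy\pl=\rvy\plm+\rvy\plH$, so the plan is to exploit that $\rvy\plH$ is obtained from $\rvy\plm$ by layers whose \emph{last} operation is the convolution $\rvy\plH=\mW\plH\RF(\rvx\plH)+\vbeta\plH$ with $\vbeta\plH=\vzero$ at standard initialization, hence $\fmc(\rvy\plH,\valpha)_\rc=\mW\plH_{\rc,:}\,\rfc(\rvx\plH,\valpha)$. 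Setting $D\equiv\E_{\rvy,\valpha,\rc}[\fmc(\rvy\plm,\valpha)_\rc\,\fmc(\rvy\plH,\valpha)_\rc]$, the first identity is immediate: $\rfc(\rvx\plH,\valpha)$ and $\fmc(\rvy\plm,\valpha)$ depend only on $\Theta^{l,H-1}$ (in particular $\fmc(\rvy\plm,\valpha)$ is constant under $\tl$), while $\mW\plH$ is centered and independent of them, so $\E_\tl[\fmc(\rvy\plm,\valpha)_\rc\,\fmc(\rvy\plH,\valpha)_\rc]=0$ and then $\E_\tl[D]=0$ after $\E_{\rvy,\valpha,\rc}$. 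The noise identity is verbatim with $\dy\plH=\mW\plH\RF(\dx\plH)$, using that $\dy\plm,\dx\plH$ are centered with respect to $\dy$ so that $\fmc=\fm$ and $\rfc=\rf$ there.

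For the variance bound I would write $D=\tfrac1N\langle\mW\plH,\mM\rangle$ (entrywise inner product) with $\mM\equiv\E_{\rvy,\valpha}[\fmc(\rvy\plm,\valpha)\,\rfc(\rvx\plH,\valpha)^{T}]\in\R^{N\times R}$, which is $\Theta^{l,H-1}$-measurable and independent of $\mW\plH$; since $\mW\plH$ has i.i.d.\ $\mathcal{N}(0,2/R)$ entries, $\E_{\mW\plH}[D^2\mid\Theta^{l,H-1}]=\tfrac{2}{RN^2}\|\mM\|_F^2$. The crucial step is then the operator-norm estimate $\|\mM\|_F^2\le\|\mC_{\rvy,\valpha}[\fm(\rvy\plm,\valpha)]\|\cdot\E_{\rvy,\valpha}[\|\rfc(\rvx\plH,\valpha)\|_2^2]$: for each column $i$, write $\|\mM_{:,i}\|_2^2=\E_{\rvy,\valpha}[\langle\mM_{:,i},\fmc(\rvy\plm,\valpha)\rangle\,\rfc(\rvx\plH,\valpha)_i]$, apply Cauchy--Schwarz over $(\rvy,\valpha)$, bound $\E_{\rvy,\valpha}[\langle\mM_{:,i},\fmc(\rvy\plm,\valpha)\rangle^2]=\mM_{:,i}^{T}\,\mC_{\rvy,\valpha}[\fm(\rvy\plm,\valpha)]\,\mM_{:,i}\le\|\mC_{\rvy,\valpha}[\fm(\rvy\plm,\valpha)]\|\,\|\mM_{:,i}\|_2^2$, and sum over $i$ (using $\mC_{\rvy,\valpha}[\fm(\rvy\plm,\valpha)]=\E_{\rvy,\valpha}[\fmc(\rvy\plm,\valpha)\fmc(\rvy\plm,\valpha)^{T}]$ from \Section{S:complementary_definitions_notations}). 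This is exactly what yields the $\reff(\rvy\plm)^{-1}$ improvement over the naive bound $\|\mM\|_F^2\le\E[\|\fmc(\rvy\plm,\valpha)\|_2^2]\,\E[\|\rfc(\rvx\plH,\valpha)\|_2^2]$, and it is the only nonroutine part.

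To conclude I would convert the surviving factors. By Corollary~\ref{C:statistics_preserving_constant}, $\E_{\rvy,\valpha}[\|\rfc(\rvx\plH,\valpha)\|_2^2]=\Tr\mC_{\rvy,\valpha}[\rf(\rvx\plH,\valpha)]$, and computing $\E_{\mW\plH}[\mu_{2,\rc}(\rvy\plH)\mid\Theta^{l,H-1}]$ from $\fmc(\rvy\plH,\valpha)_\rc=\mW\plH_{\rc,:}\,\rfc(\rvx\plH,\valpha)$ gives $\E_{\mW\plH}[\mu_2(\rvy\plH)\mid\Theta^{l,H-1}]=\tfrac2R\,\E_{\rvy,\valpha}[\|\rfc(\rvx\plH,\valpha)\|_2^2]$. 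Substituting, taking $\E_{\Theta^{l,H-1}}$ (legitimate because $\|\mC_{\rvy,\valpha}[\fm(\rvy\plm,\valpha)]\|$ depends only on $\Tlm$ and is constant under $\tl$), and simplifying gives $\E_\tl[D^2]\le\tfrac1{N^2}\|\mC_{\rvy,\valpha}[\fm(\rvy\plm,\valpha)]\|\,\E_\tl[\mu_2(\rvy\plH)]$; the claimed form then follows from $\|\mC_{\rvy,\valpha}[\fm(\rvy\plm,\valpha)]\|=\Tr\mC_{\rvy,\valpha}[\fm(\rvy\plm,\valpha)]/\reff(\rvy\plm)=N\mu_2(\rvy\plm)/\reff(\rvy\plm)$. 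The noise inequality is obtained by the same computation with $\rvy,\rvx\plH,\mu_2(\rvy\plH)$ replaced by $\dy,\dx\plH,\mu_2(\dy\plH)$. The main obstacle is the operator-norm bound on $\|\mM\|_F^2$; everything else is bookkeeping with conditional expectations and standard-initialization identities.
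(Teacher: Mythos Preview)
Your proof is correct but follows a genuinely different route from the paper's. You condition on $\Theta^{l,H-1}$ and exploit the explicit structure of the \emph{last} layer: since $\fmc(\rvy\plH,\valpha)_\rc=\mW\plH_{\rc,:}\,\rfc(\rvx\plH,\valpha)$ with $\mW\plH$ i.i.d.\ Gaussian and independent of everything else, the zero-mean is immediate and the second moment reduces to $\tfrac{2}{RN^2}\|\mM\|_F^2$; your Cauchy--Schwarz/operator-norm step $\|\mM\|_F^2\le\lambda_{\max}\,\E_{\rvy,\valpha}[\|\rfc(\rvx\plH,\valpha)\|_2^2]$ then converts cleanly via $\E_{\mW\plH}[\mu_2(\rvy\plH)]=\tfrac{2}{R}\E_{\rvy,\valpha}[\|\rfc(\rvx\plH,\valpha)\|_2^2]$. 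The paper instead works entirely with the \emph{spherical symmetry} of the $\tl$-distribution of $\fmc(\rvy\plH,\valpha)$: it diagonalizes $\mC_{\rvy,\valpha}[\fm(\rvy\plm,\valpha)]$, expands $y_i=\sum_j\alpha_{i,j}u_j+z_i$ in that eigenbasis, uses sign-flip symmetry to kill the cross terms $\E_\tl[\alpha_{i,i}\alpha_{j,j}]$, and uses orbit symmetry to show $\E_\tl[\alpha_{i,j}^2]$ is independent of $i$, whence $\sum_i\lambda_i\beta_i\le\lambda_{\max}\sum_i\beta_i\le\lambda_{\max}\E_\tl[\mu_2(\rvy\plH)]$. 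Your argument is more elementary and computational, and avoids the somewhat delicate $\alpha_{i,j}$ bookkeeping; the paper's argument is coordinate-free and makes transparent that only spherical symmetry of $\rvy\plH$ (not the specific Gaussian form of $\mW\plH$) is needed. Both approaches extend verbatim to the $\fm/\mG$ variant used later in \Section{S:increments_bn_res_other_phi}.
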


\begin{proof}\textbf{.} 
 By spherical symmetry, the moments of $\fmc(\rvy\plH,\valpha)_\rc$ and $\fmc(-\rvy\plH,\valpha)_\rc=-\fmc(\rvy\plH,\valpha)_\rc$ have the same distribution with respect to $\tl$. 

 It follows that  \\[-9pt]
 \begin{align*}
\E_{\tl}\big[ \E_{\rvy,\valpha,\rc} \big[ \fmc(\rvy\plm,\valpha)_\rc \vss \fmc(\rvy\plH,\valpha)_\rc \big] \big]&= 
\E_{\tl}\big[ \E_{\rvy,\valpha,\rc} \big[ \fmc(\rvy\plm,\valpha)_\rc \big(-\fmc(\rvy\plH,\valpha)_\rc \big)  \big]\big], \\
\E_{\tl}\big[ \E_{\rvy,\valpha,\rc} \big[ \fmc(\rvy\plm,\valpha)_\rc \vss \fmc(\rvy\plH,\valpha)_\rc \big] \big]&= 0. \\[-15pt] \nonumber
\end{align*}

Next we note that  \\[-9pt]
\begin{align}
 \E_{\rvy,\valpha,\rc} \big[ \fmc(\rvy\plm,\valpha)_\rc \vss \fmc(\rvy\plH,\valpha)_\rc \big] & = \frac{1}{N} \sum\nolimits_\rc \E_{\rvy,\valpha}\big[\fmc(\rvy\plm,\valpha)_\rc \vss \fmc(\rvy\plH,\valpha)_\rc \big] \nonumber \\ 
& =  \frac{1}{N} \E_{\rvy,\valpha}\Big[ \big\langle \fmc(\rvy\plm,\valpha), \fmc(\rvy\plH,\valpha) \big\rangle \Big], \label{E:Lemma_residual2} \\[-15pt] \nonumber
\end{align}

with $\langle \, , \rangle$ the standard dot product in $\R^N$. 

Let us denote $(\ve_1,\dots,\ve_N)$ and $(\lambda_1,\dots,\lambda_N)$ respectively the orthogonal eigenvectors and eigenvalues of $\mC_{\rvy,\valpha}[\fm(\rvy\plm,\valpha)]$. Let us further denote $u_i$ the unit-variance components of $\fmc(\rvy\plm,\valpha)$ in the basis $(\ve_1,\dots,\ve_N)$ and $y_i$ the components of $\fmc(\rvy\plH,\valpha)$ in the basis $(\ve_1,\dots,\ve_N)$. Then we get that \\[-6pt]
\begin{gather*}
\fmc(\rvy\plm,\valpha) = \sum\nolimits_i \sqrt{\lambda_i} u_i \ve_i, 
\qquad \forall i: \E_{\rvy,\valpha}\big[ u^2_i \big] = 1, 
\qquad \forall j\neq i: \E_{\rvy,\valpha}\big[ u_i u_j \big]=0, \\
\fmc(\rvy\plH,\valpha) = \sum\nolimits_i y_i \ve_i. \\[-15pt]
\end{gather*}

Now we decompose each component $y_i$ of $\rvy\plH$ as  \\[-6pt]
\begin{align*}
\forall j:\, \alpha_{i,j} \equiv \E_{\rvy,\valpha}[ y_i u_j ], \qquad y_i = \sum\nolimits_j \alpha_{i,j} u_j + z_i. \\[-15pt]
\end{align*}

From this definition, we get that \\[-6pt]
\begin{gather}
\forall j: \, \E_{\rvy,\valpha}\big[ z_i u_j \big]=0, \qquad
\E_{\rvy,\valpha}\big[ y_i u_i \big] = \alpha_{i,i}, \qquad 
\E_{\rvy,\valpha}\big[ y^2_i \big] = \sum\nolimits_j \alpha^2_{i,j} + \E_{\rvy,\valpha}\big[ z_i^2  \big], \nonumber \\
\mu_2( \rvy\plH ) = \frac{1}{N} \E_{\rvy,\valpha} \Big[ \big\langle \fmc(\rvy\plH,\valpha), \fmc(\rvy\plH,\valpha) \big\rangle \Big]  = \frac{1}{N} \sum\nolimits_i  \E_{\rvy,\valpha}\big[ y^2_i \big] = \frac{1}{N} \Big( \sum\nolimits_{i,j} \alpha^2_{i,j} + \sum\nolimits_i \E_{\rvy,\valpha}\big[ z_i^2  \big] \Big), \label{E:Lemma_residual3} \\[-15pt] \nonumber
\end{gather}

where the dot product in \eq{E:Lemma_residual3} was computed in the orthogonal basis $(\ve_1,\dots,\ve_N)$. 

Now computing the dot product of $\fmc(\rvy\plm,\valpha)$ and $\fmc(\rvy\plH,\valpha)$ in the orthogonal basis $(\ve_1,\dots,\ve_N)$:  \\[-9pt]
\begin{gather*}
\E_{\rvy,\valpha}\Big[ \big\langle \fmc(\rvy\plm,\valpha) \fmc(\rvy\plH,\valpha) \big\rangle \Big] = \sum\nolimits_i \sqrt{\lambda_i} \E_{\rvy,\valpha}[ y_i u_i] =  \sum\nolimits_i \sqrt{\lambda_i} \alpha_{i,i}. \\[-15pt] \nonumber
\end{gather*}

Spherical symmetry implies that the moments of $y_1 \ve_1 + \dots + y_i \ve_i + \dots + y_N \ve_N$ and $y_1 \ve_1 + \dots - y_i \ve_i + \dots + y_N \ve_N$ have the same distribution with respect to $\theta\pl$. Thus $\forall j\neq i$:  \\[-9pt]
\begin{align*}
\E_{\rvy,\valpha}\big[ y_i u_i \big] \E_{\rvy,\valpha}\big[ y_j u_j \big] & \sim_\tl \E_{\rvy,\valpha}\big[ - y_i u_i \big] \E_{\rvy,\valpha}\big[ y_j u_j \big], \\
\alpha_{i,i} \alpha_{j,j} & \sim_\tl (-\alpha_{i,i}) \alpha_{j,j},  \\
\E_\tl \big[ \alpha_{i,i} \alpha_{j,j} \big] & = 0. \\[-15pt]
\end{align*}

We deduce that   \\[-9pt]
\begin{gather*}
\E_\tl \Bigg[ \E_{\rvy,\valpha}\Big[ \big\langle \fmc(\rvy\plm,\valpha) \fmc(\rvy\plH,\valpha) \big\rangle \Big]^2 \Bigg] = \sum\nolimits_i \lambda_i \E_\tl \big[ \alpha^2_{i,i} \big]. \\[-18pt] \nonumber
\end{gather*}

Spherical symmetry also implies that the distribution of $\alpha_{i,j}$ with respect to $\theta\pl$ does not depend on $i$. Denoting $(\beta_j)$ such that $\forall i,j$: $\beta_j\equiv\E_\tl \big[ \alpha^2_{i,j} \big]$, we get combined with \eq{E:Lemma_residual3}:  \\[-6pt]
\begin{align*}
\E_\tl\big[ \mu_2( \rvy\plH ) \big] \geq \frac{1}{N} \sum\nolimits_{i,j} \E_\tl\big[\alpha^2_{i,j} \big] 
& = \frac{1}{N} \sum\nolimits_{i,j} \beta_j \geq \sum\nolimits_i \beta_i, \\
\E_\tl \Bigg[ \E_{\rvy,\valpha}\Big[ \big\langle \fmc(\rvy\plm,\valpha) \fmc(\rvy\plH,\valpha) \big\rangle \Big]^2 \Bigg] = \sum\nolimits_i \lambda_i \beta_i & \leq \lambda\umax \Big( \sum\nolimits_i \beta_i \Big) \\
& \leq  \lambda\umax \E_\tl\big[ \mu_2( \rvy\plH ) \big]. \\[-15pt] \nonumber
\end{align*}

Finally combining with \eq{E:Lemma_residual2}:  \\[-6pt]
\begin{align}
\E_{\tl}\Bigg[ \E_{\rvy,\valpha,\rc} \Big[ \fmc(\rvy\plm,\valpha)_\rc \vss \fmc(\rvy\plH,\valpha)_\rc \Big]^2 \Bigg] & = \frac{1}{N^2} \E_\tl \Bigg[ \E_{\rvy,\valpha}\Big[ \big\langle \fmc(\rvy\plm,\valpha) \fmc(\rvy\plH,\valpha) \big\rangle \Big]^2 \Bigg]\nonumber \\
& \leq \frac{1}{N^2}  \lambda\umax \E_\tl\big[ \mu_2( \rvy\plH ) \big] \label{E:Lemma_residual4} \\
& \leq \frac{1}{N \reff(\rvy\plm)}  \mu_2(\rvy\plm) \E_\tl\big[ \mu_2( \rvy\plH ) \big],\nonumber  \\[-15pt] \nonumber
\end{align}

where we used $\lambda\umax \reff(\rvy\plm) =\sum\nolimits_i \lambda_i=N \mu_2(\rvy\plm)$. 

The same analysis immediately applies to $\fmc(\dy\plm,\valpha)$ and $\fmc(\dy\plH,\valpha)$. \qed
\end{proof}
\\

\begin{corollary}
\label{C:residual_dot_product}
Let us denote the dot products:  \\[-9pt]
\begin{align*}
Y_l & \equiv \E_{\rvy,\valpha,\rc} \big[ \fmc(\rvy\plm,\valpha)_\rc \vss \fmc(\rvy\plH,\valpha)_\rc \big], \\
T_l & \equiv \E_{\rvy,\dy,\valpha,\rc} \big[ \fmc(\dy\plm,\valpha)_\rc \vss \fmc(\dy\plH,\valpha)_\rc \big], \\
Y_{l,l} & \equiv \E_{\rvy,\valpha,\rc} \big[ \fmc(\rvy\plH,\valpha)_\rc \vss \fmc(\rvy\plH,\valpha)_\rc \big] = \mu_2(\rvy\plH), \\
T_{l,l} & \equiv \E_{\rvy,\dy,\valpha,\rc} \big[ \fmc(\dy\plH,\valpha)_\rc \vss \fmc(\dy\plH,\valpha)_\rc \big] = \mu_2(\dy\plH). \\[-15pt] \nonumber
\end{align*}

Then by spherical symmetry $\forall l$, $\forall l\pp \neq l$:  \\[-9pt]
\begin{align*}
\E_{\Tl}[ Y_l ]=0, \qquad \E_{\Theta^{\max(l,l\pp)}}[ Y_l Y_{l\pp}]=0, \\
\E_{\Tl}[ T_l ]=0, \qquad \E_{\Theta^{\max(l,l\pp)}}[ T_l T_{l\pp}]=0. \\[-15pt] \nonumber
\end{align*}

Furthermore given Lemma \ref{L:residual_dot_product} and given $\reff(\rvy\plm)$, $\reff(\dy\plm) \geq 1$, we deduce that  \\[-6pt]
\begin{align*}
\E_{\Tl}\big[ Y^2_l \big] & \leq \frac{1}{N} \E_{\Tlm}\Big[ \mu_2(\rvy\plm) \E_{\tl}[\mu_2(\rvy\plH)] \Big] \\
 & \leq \frac{1}{N} \E_{\Tlm}\Big[ \mu_2(\rvy\plm) \E_{\tl}[Y_{l,l}] \Big], \\
 \E_{\Tl}\Bigg[  \Big( \frac{\mu_2(\rvy\po)}{\mu_2(\dy\po)(\S\plm)^2} T_l \Big)^2 \Bigg] & \leq \frac{1}{N} \E_{\Tlm}\Bigg[ \frac{\mu_2(\rvy\po)}{\mu_2(\dy\po)(\S\plm)^2} \mu_2(\dy\plm) 
 \E_{\tl}\Bigg[ \frac{\mu_2(\rvy\po)}{\mu_2(\dy\po)(\S\plm)^2} \mu_2(\dy\plH) \Bigg] \Bigg] \\
  & \leq \frac{1}{N} \E_{\Tlm}\Bigg[ \mu_2(\rvy\plm)  \E_{\tl}\Bigg[ \frac{\mu_2(\rvy\po)}{\mu_2(\dy\po)(\S\plm)^2} T_{l,l} \Bigg] \Bigg].  \\[-15pt]
\end{align*}
These inequalities will be useful in the proof of \theorem{T:S_increments_bn_res}.

\end{corollary}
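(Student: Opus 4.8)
The plan is to obtain the corollary by unwinding the definitions of $Y_l,T_l,Y_{l,l},T_{l,l}$ and feeding in four ingredients that are already available: \lemma{L:residual_dot_product}, the tower property for conditional expectations, the trivial fact that every effective rank is at least $1$, and one algebraic identity for the prefactor. There is no new analysis, so the whole exercise is careful measurability bookkeeping.

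\textbf{Centering and non-correlation.} Since \lemma{L:residual_dot_product} gives $\E_{\tl}[Y_l]=0$ for each fixed $\Tlm$, the tower property yields $\E_{\Tl}[Y_l]=\E_{\Tlm}\big[\E_{\tl}[Y_l]\big]=0$, and identically $\E_{\Tl}[T_l]=0$. For the mixed moments I take $l<l\pp$ without loss of generality; then $Y_l$, being a moment of $\rvy\plm$ and $\rvy\plH$, is $\Tl$-measurable, hence $\Theta^{l\pp-1}$-measurable, hence constant with respect to the parameters of residual unit $l\pp$. Applying the first identity of \lemma{L:residual_dot_product} at index $l\pp$ conditionally on $\Theta^{l\pp-1}$ then gives
\begin{align*}
\E_{\Theta^{l\pp}}\big[ Y_l Y_{l\pp} \big]=\E_{\Theta^{l\pp-1}}\Big[ Y_l\,\E_{\Theta^{l\pp}|\Theta^{l\pp-1}}\big[ Y_{l\pp} \big]\Big]=0,
\end{align*}
and the same argument kills $T_l T_{l\pp}$. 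I would make explicit that $\E_{\Theta^{l\pp}|\Theta^{l\pp-1}}[Y_{l\pp}]=0$ is the spherical-symmetry fact behind \lemma{L:residual_dot_product}: negating the last-layer weights of residual unit $l\pp$ (the symmetric propagation of \Section{S:complementary_definitions_notations}) flips the sign of $\fmc(\rvy^{l\pp,H},\valpha)$ in distribution while leaving $\fmc(\rvy^{l\pp-1},\valpha)$ — and every earlier summand of $\rvy^{l\pp-1}=\sum_{k\leq l\pp-1}\rvy^{k,H}$ — untouched.

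\textbf{Second-moment bounds.} For the $Y_l$ bound I plug $\reff(\rvy\plm)\geq1$ into the second estimate of \lemma{L:residual_dot_product}, getting $\E_{\tl}[Y_l^2]\leq\tfrac1N\mu_2(\rvy\plm)\E_{\tl}[\mu_2(\rvy\plH)]$ for fixed $\Tlm$, then take $\E_{\Tlm}$ and substitute $Y_{l,l}=\mu_2(\rvy\plH)$. For the noise term I first record
\begin{align*}
c\;\equiv\;\frac{\mu_2(\rvy\po)}{\mu_2(\dy\po)(\S\plm)^2}\;=\;\frac{\mu_2(\rvy\plm)}{\mu_2(\dy\plm)}\;=\;\snr\plm,
\end{align*}
which follows from the definition of $\S\plm$ and shows $c$ is $\Theta^{l-1}$-measurable. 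Then $\E_{\Tl}\big[(c\,T_l)^2\big]=\E_{\Tlm}\big[c^2\,\E_{\tl}[T_l^2]\big]\leq\tfrac1N\E_{\Tlm}\big[c^2\mu_2(\dy\plm)\,\E_{\tl}[\mu_2(\dy\plH)]\big]$ via $\reff(\dy\plm)\geq1$, and pushing one factor of $c$ onto $\mu_2(\dy\plm)$ — which collapses to $\mu_2(\rvy\plm)$ — and the other inside $\E_{\tl}$ onto $\mu_2(\dy\plH)=T_{l,l}$ yields exactly the stated inequality once $c$ is re-expanded.

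\textbf{Main obstacle.} Nothing here is analytically hard; the only delicate points are the bookkeeping ones: being sure that $\S\plm$, $\mu_2(\rvy\plm)$, $\mu_2(\dy\plm)$ and hence $c$ are constants with respect to $\tl$ (so the pull-out and tower steps are valid), and invoking the sign-flip argument at residual unit $l\pp$ alone — not the whole network — when killing the mixed moments, which is precisely where the aggregation structure of \eq{E:bn_res_propagation4} matters. Once assembled, the corollary supplies the non-correlated residual increments used in the proof of \theorem{T:S_increments_bn_res}, exactly as \lemma{L:moments_vanilla}(i) is used in the proof of \theorem{T:moments_vanilla}.
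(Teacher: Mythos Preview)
Your proposal is correct and follows exactly the reasoning the paper leaves implicit in the corollary statement itself (the paper gives no separate proof; it just says ``by spherical symmetry'' and ``given Lemma~\ref{L:residual_dot_product} and $\reff\geq1$''). Your explicit identification of $c=\mu_2(\rvy\po)\big/\big(\mu_2(\dy\po)(\S\plm)^2\big)=\mu_2(\rvy\plm)/\mu_2(\dy\plm)$ as a $\Tlm$-measurable quantity is precisely the step needed to justify pulling it through $\E_{\tl}$, and your treatment of the mixed moments via the tower property and the sign-flip at the last convolution of residual unit $l\pp$ matches the spherical-symmetry argument used in the proof of Lemma~\ref{L:residual_dot_product}.
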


\subsection{Proof of \theorem{T:S_increments_bn_res}}
 \label{S:proof_S_increments_bn_res}

\textbf{\theorem{T:S_increments_bn_res}} (normalized sensitivity increments of batch-normalized resnets)\textbf{.}\emph{
Suppose that we can bound signal variances: $\mu\udmin \lesssim \mu_2(\rvy\plH) \lesssim \mu\udmax$ and feedforward increments: $\delta\umin \lesssim \delta \S^{l,h} \lesssim \delta\umax$ for all $l,h$. Further denote $\eta\umin\equiv\big( (\delta\umin)^{2H} \mu\udmin - \mu\udmax \big) / \mu\udmax$ and $\eta\umax\equiv\big( (\delta\umax)^{2H} \mu\udmax - \mu\udmin \big)/ \mu\udmin$, as well as $\tau\umin\equiv\eta\umin/2$ and $\tau\umax\equiv\eta\umax/2$. Then there exist positive constants $C\umin$, $C\umax>0$ such that  \\[-12pt]
\begin{alignat*}{4}
 \Big( 1 + \frac{\eta\umin}{l+1} \Big)^\frac{1}{2} 
 & \lesssim && \,\, \delta \S\pl && \lesssim \Big( 1 + \frac{\eta\umax}{l+1}  \Big)^\frac{1}{2},  \\
C\umin l^{\tau\umin} & \lesssim && \;\; \S\pl \, && \lesssim C\umax l^{\tau\umax}.  \\[-24pt] \nonumber
\end{alignat*}}

\begin{proof}\textbf{.} 
First we introduce the additional constants $\gamma\umin\equiv(\delta\umin)^{2H}$ and $\gamma\umax\equiv(\delta\umax)^{2H}$ such that we can write \mbox{$\eta\umin= \big(\gamma\umin \mu\udmin - \mu\udmax \big)\, / \, \mu\udmax$} and $\eta\umax= \big(\gamma\umax \mu\udmax - \mu\udmin \big)\, / \, \mu\udmin$.

We also remind that we write $a\lesssim b$ when $a(1+\epsilon_a) \leq b (1+\epsilon_b)$ with $|\epsilon_a|\ll 1$, $|\epsilon_b| \ll 1$ with high probability. And we write $a\simeq b$ when $a(1+\epsilon_a) = b (1+\epsilon_b)$ with $|\epsilon_a|\ll 1$, $|\epsilon_b| \ll 1$ with high probability. Denoting $\wedge$ the logical \emph{and}, $\lor$ the logical \emph{or}, the following rules are easily verified:  \\[-6pt]
\begin{align*}
(a \lesssim b) \wedge (b \lesssim a) & \iff (a \simeq b), \\
(a \lesssim b) & \iff (-a \gtrsim -b), \\
(a \lesssim b) \wedge (b \lesssim c) & \implies (a \lesssim c), \\
(a \lesssim b) \wedge (c \lesssim d) \wedge (a > 0) \wedge (c > 0)  & \implies (ac \lesssim bd), \\
(a \lesssim b) \wedge (c \lesssim d) \wedge (a > 0) \wedge (c > 0)  & \implies (a+c \lesssim b+d), \\
(a \lesssim b) \wedge (a > 0) \wedge (b > 0) & \implies (\sqrt{a} \lesssim \sqrt{b}), \\
(a \lesssim b) \wedge (a > 0) \wedge (b > 0) & \implies (1/a \gtrsim 1/b).\\[-15pt]
\end{align*}

Finally let $a$ be a random variable depending on $\Tl$ with well-defined moments and let $b$ be a constant. Let us prove that \\[-6pt]
\begin{align*}
(a \lesssim b) \implies \big( \E_\tl[ a ] \lesssim b \big) \wedge \big( \E_\Tl[ a ] \lesssim b \big).  \\[-15pt]
\end{align*}

Given the assumption $(a \lesssim b)$, there exists an event $A$ with $\sP_\Tl[A] \simeq 1$ such that under $A$: $a (1+\epsilon_a) \leq b (1+\epsilon_b)$ with $|\epsilon_a|\ll 1$, $|\epsilon_b| \ll 1$. Furthermore, using Cauchy-Schwarz inequality:  \\[-6pt]
\begin{align}
\frac{1}{\E_\tl [ a ]^2} \Big( \E_\tl [ a ] - \E_\tl [ \1_A a ] \Big)^2  = \frac{1}{\E_\tl [ a ]^2} \E_\tl[ \1_{A^\textnormal{c}} a ]^2 \leq \sP_\tl [  A^\textnormal{c} ] \frac{\E_\tl[ a^2 ]}{\E_\tl [ a ]^2}. \label{E:S_increments_bn_res_proof4} \\[-15pt] \nonumber
\end{align}

Since $\sP_\Tl[A] \simeq 1$, the complementary event $A^\textnormal{c}$ has probability $\sP_\Tl[A^\textnormal{c}] \ll 1$. Now by contradiction, if there would be non negligible probability with respect to $\Tlm$ that $\sP_\tl [  A^\textnormal{c} ]=\sP_{\Tl | \Tlm} [  A^\textnormal{c} ]$ is non negligible, then we would not have that $\sP_\Tl[A^\textnormal{c}]=\E_{\Tlm} \E_{\Tl| \Tlm} [  \1_{A^\textnormal{c}} ] =\E_{\Tlm} \sP_{\Tl| \Tlm} [  A^\textnormal{c} ]$ is negligible. It follows that $\sP_\tl [  A^\textnormal{c} ] \ll 1$ with high probability with respect to $\Tlm$. 

Combined with \eq{E:S_increments_bn_res_proof4} and the definition of $A$, we get \\[-6pt]
\begin{align*}
\E_\tl [ a ] \simeq  \E_\tl [ \1_A a ] \lesssim b. \\[-15pt]
\end{align*}

A similar reasoning gives  \\[-6pt]
\begin{align*}
\frac{1}{\E_\Tl [ a ]^2} \Big( \E_\Tl [ a ] - \E_\Tl [ \1_A a ] \Big)^2  
\leq \sP_\Tl [  A^\textnormal{c} ] \frac{\E_\Tl [ a^2 ]}{\E_\Tl [ a ]^2}, \qquad \E_\Tl [ a ] \simeq  \E_\Tl [ \1_A a ] \lesssim b. \\[-15pt] \nonumber
\end{align*}

\emph{We keep all these rules in mind in the course of this proof}.  \\

\textbf{Proof of \eq{E:S_increments_bn_res_Theorem1}.} 
Adopting the notations of Corollary \ref{C:residual_dot_product} and using $\rvy\pl=\rvy\plm+\rvy\plH$ by \eq{E:bn_res_propagation1}, we get that \\[-9pt]
\begin{align}
\mu_2(\rvy\pl) & = \E_{\rvy,\valpha,\rc}\Bigg[\Big( \fmc(\rvy\plm,\valpha)_\rc+\fmc(\rvy\plH,\valpha)_\rc \Big)^2 \Bigg] =\mu_2(\rvy\plm)+Y_{l,l} +2Y_l, \label{E:S_increments_bn_res_proof5} \\ 
 \mu_2(\dy\pl) & = \E_{\rvy,\dy,\valpha,\rc}\Bigg[\Big( \fmc(\dy\plm,\valpha)_\rc+\fmc(\dy\plH,\valpha)_\rc \Big)^2 \Bigg] =\mu_2(\dy\plm)+T_{l,l}+2T_l. \nonumber \\[-15pt] \nonumber
\end{align}

Due to the hypothesis $\mu\udmin \lesssim Y_{l,l}=\mu_2(\rvy\plH) \lesssim \mu\udmax$, we have $\mu\udmin \lesssim \mu_2(\rvy\po)=\mu_2(\rvy^{0,H})\lesssim \mu\udmax$. \\[-9pt]

Now let us reason by induction and suppose that $l \mu\udmin \lesssim \mu_2(\rvy\plm) \lesssim l \mu\udmax$. Combined with \eq{E:S_increments_bn_res_proof5}, we get that  \\[-9pt]
\begin{gather*}
l \mu\udmin + \mu\udmin + 2Y_l \lesssim \mu_2 (\rvy\pl) \lesssim l \mu\udmax + \mu\udmax + 2Y_l. \\[-15pt] \nonumber
\end{gather*}

On the other hand, Corollary \ref{C:residual_dot_product} implies that  \\[-9pt]
\begin{gather*}
\E_\Tl \big[ Y^2_l \big] \lesssim \frac{1}{N} l \mu\udmax^2 \leq \frac{1}{N} \frac{1}{l+1} (l+1)^2 \mu\udmax^2. \\[-15pt] \nonumber
\end{gather*}

Further using Chebyshev's inequality, we deduce that \\[-6pt]
\begin{gather}
 \sP_{\Tl}\Big[  \big| Y_l \big| > k \frac{1}{\sqrt{N}} \frac{1}{\sqrt{l+1}} (l+1)\mu\udmax \Big] \lesssim \frac{1}{k^2}. \label{E:S_increments_bn_res_proof6} \\[-15pt] \nonumber
\end{gather}

For large width $N \gg 1$, it follows that $\big| Y_l \big| \ll (l+1) \mu\udmin$ and $\big| Y_l \big| \ll (l+1) \mu\udmax$ with high probability, and thus that  \\[-6pt]
\begin{align}
(l+1) \mu\udmin \lesssim \mu_2(\rvy\pl) \lesssim (l+1) \mu\udmax \label{E:S_increments_bn_res_proof7}. \\[-15pt] \nonumber
\end{align}

Then \Eq{E:S_increments_bn_res_proof7} holds for all $l$, and furthermore $\big|Y_l\big| \ll \mu_2(\rvy\plm)$ with high probability. Now let us write $(\S\pl)^2$ as  \\[-6pt]
\begin{align*}
(\S\pl)^2 & = \Bigg(\frac{\mu_2(\rvy\po)}{\mu_2(\dy\po)}\Bigg)  \Bigg(\frac{\mu_2(\dy\pl)}{\mu_2(\rvy\pl)}\Bigg) = \frac{\mu_2(\rvy\po)}{\mu_2(\dy\po)} \frac{\mu_2(\dy\plm) + T_{l,l} + 2 T_l}{\mu_2(\rvy\plm) + Y_{l,l} + 2 Y_l}, \nonumber \\
(\S\pl)^2 & = (\S\plm)^2 \; \frac{\mu_2(\rvy\plm) + \frac{\mu_2(\rvy\po)}{\mu_2(\dy\po)(\S\plm)^2} T_{l,l} + 2 \frac{\mu_2(\rvy\po)}{\mu_2(\dy\po)(\S\plm)^2} T_l}{\mu_2(\rvy\plm) + Y_{l,l} + 2 Y_l}. \\[-15pt] \nonumber
\end{align*}

Denoting $\tilde{T}_{l,l}\equiv\frac{\mu_2(\rvy\po)}{\mu_2(\dy\po)(\S\plm)^2} T_{l,l}$ and $\tilde{T}_l\equiv\frac{\mu_2(\rvy\po)}{\mu_2(\dy\po)(\S\plm)^2} T_l$, we then get  \\[-6pt]
\begin{align}
(\delta \S\pl)^2 = \frac{(\S\pl)^2}{(\S\plm)^2} =  \frac{\mu_2(\rvy\plm) + \tilde{T}_{l,l} + 2\tilde{T}_l}{\mu_2(\rvy\plm) + Y_{l,l} + 2 Y_l}. \label{E:S_increments_bn_res_proof8} \\[-15pt] \nonumber
\end{align}

We can bound $\tilde{T}_{l,l}$ as  \\[-9pt]
\begin{gather}
\tilde{T}_{l,l} = \frac{\mu_2(\rvy\po)}{\mu_2(\dy\po)(\S\plm)^2} \mu_2(\dy\plH)= \frac{\mu_2(\rvy\po)}{\mu_2(\dy\po)(\S\plm)^2} (\S\plm)^2 \prod\nolimits_h (\delta \S^{l,h})^2 \mu_2(\rvy\plH) \frac{\mu_2(\dy\po)}{\mu_2(\rvy\po)},  \nonumber  \\
\gamma\umin \mu\udmin \lesssim \tilde{T}_{l,l} \lesssim \gamma\umax \mu\udmax. \label{E:S_increments_bn_res_proof10} \\[-15pt] \nonumber
\end{gather}

By Corollary \ref{C:residual_dot_product}, the variance of $\tilde{T}_l$ is bounded as  \\[-6pt]
\begin{align*}
  \E_{\Tl}\big[ \tilde{T}^2_l  \big] & \lesssim \frac{1}{N} \E_{\Tlm}\Big[ \mu_2(\rvy\plm) \E_{\tl}\big[ \tilde{T}_{l,l} \big] \Big] \lesssim \frac{1}{N} \gamma\umax l \mu^2\udmax. \\[-15pt] \nonumber
\end{align*}

The same reasoning as \eq{E:S_increments_bn_res_proof6} implies both $\big|Y_l\big| \ll \mu_2(\rvy\plm)$ and $\big|\tilde{T}_l\big| \ll \mu_2(\rvy\plm)$ with high probability. Finally combining \eq{E:S_increments_bn_res_proof8}, \eq{E:S_increments_bn_res_proof10} and the hypothesis $\mu\udmin \lesssim Y_{l,l} \lesssim \mu\udmax$:  \\[-9pt]
\begin{alignat*}{3}
\frac{\mu_2(\rvy\plm) + \gamma\umin \mu\udmin }{\mu_2(\rvy\plm) +  \mu\udmax} & \lesssim 
(\delta \S\pl)^2 && \lesssim \frac{\mu_2(\rvy\plm) + \gamma\umax \mu\udmax }{\mu_2(\rvy\plm) +  \mu\udmin}, \\
1 + \frac{\gamma\umin \mu\udmin - \mu\udmax }{\mu_2(\rvy\plm) +  \mu\udmax} & \lesssim 
(\delta \S\pl)^2 && \lesssim 1 + \frac{\gamma\umax \mu\udmax - \mu\udmin }{\mu_2(\rvy\plm) +  \mu\udmin}, \\
1 + \frac{\gamma\umin \mu\udmin - \mu\udmax }{ (l+1) \mu\udmax} & \lesssim 
(\delta \S\pl)^2 && \lesssim 1 + \frac{\gamma\umax \mu\udmax - \mu\udmin }{ (l+1) \mu\udmin}, \\
\Big( 1 + \frac{\eta\umin}{l+1} \Big)^\frac{1}{2} & \lesssim \;\; \delta \S\pl && \lesssim \Big( 1 +  \frac{\eta\umax}{l+1}  \Big)^\frac{1}{2}, \\[-15pt]
\end{alignat*} 
where we supposed $\big(\gamma\umin \mu\udmin - \mu\udmax\big)\geq0$ (see \Section{S:adaptation_resnet} and the evolution of \fig{F:BN-FF} for the justification). \qed
\end{proof}

\begin{proof}\textbf{ of \eq{E:S_increments_bn_res_Theorem2}.} 
Expanding \eq{E:S_increments_bn_res_Theorem1}, we get that   \\[-9pt]
\begin{align*}
\prod^l_{k=1} \Big( 1 +  \frac{\eta\umin}{k+1}  \Big)^\frac{1}{2} \lesssim \S\pl = \prod^l_{k=1} \delta \S\pk \lesssim \prod^l_{k=1} \Big( 1 +  \frac{\eta\umax}{k+1}  \Big)^\frac{1}{2}. \\[-15pt] \nonumber
\end{align*}

We can further explicitate the bounds:  \\[-9pt]
\begin{align}
 \sum^{l}_{k=1} & \log \Big( 1 +  \frac{\eta\umax}{k+1}  \Big) \nonumber \\
   & \leq  \int^{l+1}_{1} \log \Big( 1 +  \frac{\eta\umax}{x} \Big) dx \nonumber \\
   & \leq  \int^{l+1}_{1} \log(x+\eta\umax) dx - \int^{l+1}_{1} \log x \, dx \nonumber \\
   & \leq \Big[x \log x - x\Big]^{l+1+\eta\umax}_{1+\eta\umax} - \Big[x \log x-x\Big]^{l+1}_{1} \nonumber \\
   & \leq (l+1+\eta\umax) \log(l+1+\eta\umax)-(1+\eta\umax) \log(1+\eta\umax)-(l+1)\log (l+1) \nonumber \\
   & \leq \eta\umax \log(l+1+\eta\umax) + (l+1) \log\Big(1+\frac{\eta\umax}{l+1}\Big)-(1+\eta\umax) \log(1+\eta\umax) \nonumber \\
   & \leq \eta\umax \log(l+1+\eta\umax) + \eta\umax -(1+\eta\umax) \log(1+\eta\umax), \label{E:S_increments_bn_res_proof11}  \\[-15pt] \nonumber
\end{align}

where we used $\log (1+x) \leq x$ in \eq{E:S_increments_bn_res_proof11}. Considering the integration between $2$ and $l+2$, we similarly get:  \\[-9pt]
\begin{align*}
 \sum^{l}_{k=1} & \log \Big( 1 +  \frac{\eta\umin}{k+1}  \Big) \qquad \\
   & \geq \eta\umin \log(l+2+\eta\umin) + (l+2) \log\Big(1+\frac{\eta\umin}{l+2}\Big)-(2+\eta\umin) \log(2+\eta\umin) + 2 \log 2 \\
   & \geq \eta\umin \log(l+2+\eta\umin) -(2+\eta\umin) \log(2+\eta\umin) + 2 \log 2. \qquad \qquad \qquad \qquad \quad \\[-15pt]
\end{align*}

Let $c\umax\equiv\exp\Big(\eta\umax -(1+\eta\umax) \log(1+\eta\umax)\Big)$ and $c\umin\equiv\exp\Big(-(2+\eta\umin) \log(2+\eta\umin)+2 \log 2\Big)$. Then:  \\[-9pt]
\begin{align*}
\prod^l_{k=1} \Big( 1 +  \frac{\eta\umax}{k+1}  \Big) \leq c\umax (l+1+\eta\umax)^{\eta\umax}, \qquad \prod^l_{k=1} \Big( 1 +  \frac{\eta\umin}{k+1}  \Big) \geq c\umin (l+2+\eta\umin)^{\eta\umin}, \\[-24pt]
\end{align*}
\begin{alignat*}{2}
 \sqrt{c\umin} (l+2+\eta\umin)^{\eta\umin/2} & \lesssim \S\pl \lesssim \sqrt{c\umax} (l+1+\eta\umax)^{\eta\umax/2}, \\
 \sqrt{c\umin} (l+2+\eta\umin)^{\tau\umin} & \lesssim \S\pl \lesssim \sqrt{c\umax} (l+1+\eta\umax)^{\tau\umax}. \\[-15pt]
\end{alignat*}

Since $x\mapsto \Big(\frac{x+2+\eta\umin}{x}\Big)^{\tau\umin}$ and $x\mapsto \Big(\frac{x+1+\eta\umax}{x}\Big)^{\tau\umax}$ are lower-bounded and upper-bounded for $x \geq 1$, there exist positive constants $C\umin$, $C\umax>0$ such that  \\[-9pt]
\begin{align*}
 C\umin l^{\tau\umin} \lesssim \S\pl \lesssim  C\umax l^{\tau\umax}. \tag*{\qed}
\end{align*}
\end{proof}

\subsection{\theorem{T:S_increments_bn_res} Holds for any Choice of $\phi$, with and without Batch Normalization, as long as the Existence of $\mu\udmin$, $\mu\udmax$, $\delta\umin$, $\delta\umax$ is Ensured}
 \label{S:increments_bn_res_other_phi}

The proof of Lemma \ref{L:residual_dot_product} neither requires batch normalization nor does it require any assumption on $\phi$. In addition, the proof still holds up to \eq{E:Lemma_residual4} when replacing $\fmc(\rvy\plm)$, $\fmc(\rvy\plH)$, $\mu_2(\rvy\plm)$, $\mu_2(\rvy\plH)$ by $\fm(\rvy\plm)$, $\fm(\rvy\plH)$, $\nu_2(\rvy\plm)$, $\nu_2(\rvy\plH)$ and eigenvalues of $\mC_{\rvy,\valpha}[\fm(\rvy\plm,\valpha)]$ by eigenvalues of $\mG_{\rvy,\valpha}[\fm(\rvy\plm,\valpha)]$. This gives \\[-9pt]
\begin{align}
\E_{\tl}\Bigg[ \E_{\rvy,\valpha,\rc} \Big[ \fm(\rvy\plm,\valpha)_\rc \vss \fm(\rvy\plH,\valpha)_\rc \Big]^2 \Bigg] 
& \leq \frac{1}{N^2}  \lambda\umax \E_\tl\big[ \nu_2( \rvy\plH ) \big] \nonumber \\
& \leq \frac{1}{N} \nu_2(\rvy\plm) \E_\tl\big[ \nu_2( \rvy\plH ) \big]. \label{E:increments_bn_res_other_phi1} \\[-15pt] \nonumber
\end{align}

Similarly, the proof of \theorem{T:S_increments_bn_res} only depends on batch normalization and the choice of $\phi$ through the constants $\mu\udmin$, $\mu\udmax$, $\delta\umin$, $\delta\umax$. As a consequence, \theorem{T:S_increments_bn_res} holds for any choice of $\phi$, with and without batch normalization, as long as the existence of $\mu\udmin$, $\mu\udmax$, $\delta\umin$, $\delta\umax$ is ensured.

It is therefore interesting to determine in which cases the constants $\mu\udmin$, $\mu\udmax$, $\delta\umin$, $\delta\umax$ exist. In the forthcoming analysis, we will consider the common cases $\phi=\tanh$ and $\phi=\relu$, with and without batch normalization, relating our results and providing extensions to \citet{Yang17}. 

\emph{For the sake of brevity, some results will be established only with an informal proof.}

\subsubsection{Case $\phi=\tanh$, without Batch Normalization} 
From $\rvx\plH=\phi(\rvy\plHm)$, we deduce that $\nu_2(\rvx\plH)$, $\mu_2(\rvx\plH)$ are bounded as  \\[-9pt]
\begin{align*}
\mu_2(\rvx\plH) \leq \nu_2(\rvx\plH) = \E_{\rvx,\valpha}\big[ \phi(\rvy\plHm)^2 \big] \leq 1. \\[-15pt]
\end{align*}

Since $\rvy\plH$ is obtained from $\rvx\plH$ only after a single single convolution step, it follows that $\nu_2(\rvy\plH)$, $\mu_2(\rvy\plH)$ are bounded from above. Let us further admit that $\nu_2(\rvy\plH)$, $\mu_2(\rvy\plH)$ are bounded from below so that the existence of $\mu\udmin$, $\mu\udmax$ is ensured.

Now let us see whether $\delta\umin$, $\delta\umax$ exist in the mean-field limit: $N \to \infty$, where $\rvy\pl$ becomes a Gaussian process and all moment-related quantities become deterministic with the expectation over $\Tl$ equivalent to the average over channels. Using Lemma \ref{L:residual_dot_product} as well as \eq{E:increments_bn_res_other_phi1}, combined with the reasoning of \eq{E:S_increments_bn_res_proof7} on $\nu_2(\rvy\pl)$ and $\mu_2(\rvy\pl)$ for large $N\gg1$: \\[-9pt]
\begin{align*}
\nu_2(\rvy\plm) \propto l, \qquad \mu_2(\rvy\plm) \propto l. \\[-15pt]
\end{align*}

The probability of non-negligible $\phi\pp(\rvy^{l,0})^2=\phi\pp(\rvy\plm)^2$ is equal to the probability that $\rvy\plm$ is roughly $\bigO(1)$, which scales as $\frac{1}{\sqrt{\nu_2(\rvy\plm)}}\frac{1}{\sqrt{2\pi}}\propto \frac{1}{\sqrt{l}}$ for large $l$. Combined with $\dx^{l,1} = \phi\pp (\rvy^{l,0}) \odot \dy^{l,0}$, this implies that  \\[-9pt]
\begin{align*}
\frac{\mu_2(\dx^{l,1})}{\mu_2(\dy^{l,0})} \propto \frac{1}{\sqrt{l}}. \\[-15pt]
\end{align*}

Given $\mu_2(\rvx^{l,1}) \leq \nu_2(\rvx^{l,1}) = \E_{\rvx,\valpha}\big[ \phi(\rvy^{l,0})^2 \big] \leq 1$, we get for the ratio of signal variances: \\[-9pt]
\begin{align*}
\frac{ \mu_2(\rvy^{l,0}) }{\mu_2(\rvx^{l,1})} \geq  \mu_2(\rvy\plm) \propto l. \\[-15pt]
\end{align*}

This gives for the \emph{squared} geometric increment during the nonlinearity step from $(\rvy^{l,0},\dy^{l,0})$ to $(\rvx^{l,1},\dx^{l,1})$: \\[-9pt]
\begin{align*}
\Bigg( \frac{\mu_2(\dx^{l,1})}{\mu_2(\rvx^{l,1})} \Bigg) \Bigg( \frac{ \mu_2(\dy^{l,0}) } {\mu_2(\rvy^{l,0})}\Bigg)^{-1} \geq  \mu_2(\rvy\plm) \frac{\mu_2(\dx^{l,1})}{\mu_2(\dy^{l,0})} \propto \sqrt{l}. \\[-15pt]
\end{align*}

It follows that $\delta\S^{l,1}$ and thus $\delta\S\plh$ are \emph{not} bounded from above and that the existence of $\delta\umax$ is \emph{not} ensured. Now if we replace $\eta\umin$, $\eta\umax$ by $\frac{\mathcal{A}}{2}\sqrt{l+1} \propto \sqrt{l}$ in \eq{E:S_increments_bn_res_Theorem1}: \\[-9pt]
\begin{align*}
\delta \S\pl \simeq \Big( 1 + \frac{\mathcal{A}}{2 \sqrt{l+1}} \Big)^\frac{1}{2}. \\[-15pt]
\end{align*}

Given $\frac{1}{2}\log(1+\frac{\mathcal{A}}{2 \sqrt{x}})~\simeq~\frac{\mathcal{A}}{4\sqrt{x}}$ and $\int_{x_0}^x \frac{\mathcal{A}}{4\sqrt{x\pp}} dx\pp~\simeq~ \frac{\mathcal{A}}{2} \sqrt{x}$ for $x\gg1$, we get that $\S\pl=\prod_k \delta \S\pk=\exp\big(\sum_k \log \delta \S\pl \big)\propto\exp\big(\frac{\mathcal{A}}{2}\sqrt{l}\big)$. Combined with $\mu_2(\rvy\plm) \propto l$ and the definition of $\S\pl$, we deduce that $\mu_2(\dy\plm)\propto\exp\big(\mathcal{A}\sqrt{l}\big)$, which is exactly the scaling found in \citet{Yang17} for the corresponding quantity. 

\emph{In summary, the growth of $\S\pl$ is slightly subexponential but still far from power-law}.

\subsubsection{Case $\phi=\tanh$, with Batch Normalization} 
\label{S:case_tanh_BN}
Batch normalization controls signal variance inside residual units: $\mu_2(\rvz\plH)~=~1$. Since $\rvy\plH$ is obtained from $\rvz\plH$ only after a single nonlinearity step and a single convolution step, the existence of $\mu\udmin$, $\mu\udmax$ is ensured.

Now let us see whether $\delta\umin$, $\delta\umax$ exist and let us first limit our reasoning to the feedforward evolution of \Section{S:BN_feedforward_nets}. Since the reasoning of \Section{S:BN_feedforward_nets} on the effect of batch normalization applies for any choice of $\phi$, the assumption of well-conditioned noise implies that $\exp( \overline{m}\uBN[ \S\pl ] )$ is bounded: (i) from above by considering the signal with worst possible conditioning; (ii) from below by $1$. 

Regarding $\exp( \overline{m}\uphi[ \S\pl ] )$, let us consider again the mean-field limit: $N \to \infty$ such that $\rvz\pl$ is Gaussian with variance equal to $\nu_2(\rvz\pl)=\mu_2(\rvz\pl)=1$. Then $\exp( \overline{m}\uphi[ \S\pl ] )$ is deterministic and constant, implying that $\exp( \overline{m}\uphi[ \S\pl ] )$ is bounded by constants from above and below.

Since the evolution inside residual units is well approximated by the feedforward evolution of \Section{S:BN_feedforward_nets}, it follows that $\delta \S\plh$ is bounded from above and below.

\emph{In summary, \theorem{T:S_increments_bn_res} applies and $\S\pl$ has power-law growth}.

\subsubsection{Case $\phi=\relu$, without Batch Normalization} 
\label{S:case_ReLU_noBN}
Since the evolution inside residual units is well approximated by the feedforward evolution of \Section{S:vanilla_nets}, it follows that $\nu_2(\rvy\plh)$, $\mu_2(\dy\plh)$ are roughly stable and that the increments $\delta \S^{l,h}$ are limited inside residual units. This implies that $\nu_2(\rvy\plH) \simeq \nu_2(\rvy^{l,0})=\nu_2(\rvy\plm)$ and $\mu_2(\rvy\plH) \simeq \mu_2(\rvy^{l,0})=\mu_2(\rvy\plm)$. Combined with \eq{E:S_increments_bn_res_proof5} and the fact that $Y_{l,l}=\mu_2(\rvy\plH)$ and that $Y\ul \ll \mu_2(\rvy\plm)$ with high probability for $N\gg1$, we deduce that \\[-9pt]
\begin{align*}
\mu_2(\rvy\pl)\simeq\mu_2(\rvy\plm)+\mu_2(\rvy\plH)\simeq2\mu_2(\rvy\plm). \\[-15pt]
\end{align*}

Using \eq{E:increments_bn_res_other_phi1}, the same reasoning for non-central moments gives\\[-9pt]
\begin{align*}
\nu_2(\rvy\pl)\simeq\nu_2(\rvy\plm)+\nu_2(\rvy\plH)\simeq2\nu_2(\rvy\plm). \\[-15pt]
\end{align*}

This means that both $\mu_2(\rvy\pl)$ and $\nu_2(\rvy\pl)$ have exponential growth and that the existence of $\mu\udmax$ is not ensured. The exponential growth of $\nu_2(\rvy\pl)$ agrees with the scaling found for the corresponding quantity in \citet{Yang17}. 

Now let us see whether $\delta\umin$, $\delta\umax$ exist. In the feedforward evolution of \Section{S:vanilla_nets}, \theorem{T:S_increments_vanilla} directly ensures that $1 \lesssim \delta\S\pl \lesssim \sqrt{2}$. 

Again since the evolution inside residual units is well approximated by the feedforward evolution of \Section{S:vanilla_nets}, we deduce that $\delta \S\plh$ is bounded from above and below.

\emph{In summary, the existence of $\mu\udmax$ is not ensured and \theorem{T:S_increments_bn_res} does not apply. No conclusion can be made regarding the growth of $\S\pl$}.

\subsubsection{Case $\phi=\relu$, with Batch Normalization}
As in \Section{S:case_tanh_BN}, the existence of $\mu\udmin$, $\mu\udmax$ is ensured by the fact that batch normalization controls signal variance: $\mu_2(\rvz\plH)=1$ and that $\rvy\plH$ is obtained from $\rvz\plH$ only after a single nonlinearity step and a single convolution step.

Now let us see whether $\delta\umin$, $\delta\umax$ exist and again let us first reason in the feedforward evolution of \Section{S:BN_feedforward_nets}. Similarly to \Section{S:case_tanh_BN}, the term $\exp( \overline{m}\uBN[ \S\pl ] )$ is bounded: (i) from above by considering the signal with worst possible conditioning; (ii) from below by $1$. 

\theorem{T:S_increments_bn_ff} further ensures that $1 \leq \exp(\overline{m}\uphi[\S\pl])\leq \sqrt{2}$. 

Since the evolution inside residual units is well approximated by the feedforward evolution of \Section{S:BN_feedforward_nets}, we deduce that $\delta \S\plh$ is bounded from above and below. 

\emph{In summary, \theorem{T:S_increments_bn_res} applies and $\S\pl$ has power-law growth}.

\end{document}